\newcommand{\makecellnew}[1]{{\renewcommand{\arraystretch}{0.8}\begin{tabular}{c} #1 \end{tabular}}}
\definecolor{darkgreen}{rgb}{0.0,0.5,0.0}
\renewcommand*{\backrefalt}[4]{%
	\ifcase #1 \footnotesize{(Not cited.)}%
	\or        \footnotesize{(Cited on page~#2)}%
	\else      \footnotesize{(Cited on pages~#2)}%
	\fi}
\newtheorem{assumption}{Assumption}
\theoremstyle{definition}
\newtheorem{definition}{Definition}
\newtheorem{lemma}{Lemma}
\newtheorem{lemmasec}{Lemma}[section]
\newtheorem{propositionsec}{Proposition}[section]
\newtheorem{remark}{Remark}
\newcommand{\avg}{\text{avg}}
\newcommand{\E}[1]{\mathbb{E}\left[#1\right]}
\newcommand{\EE}{\mathbb{E}}
\newcommand{\cA}{\mathcal{A}}
\newcommand{\cR}{\mathcal{R}}
\newcommand{\cS}{\mathcal{S}}
\newcommand{\cD}{\mathcal{D}}
\newcommand{\cN}{\mathcal N}
\newcommand{\cO}{\mathcal O}
\newcommand{\R}{\mathbb R}
\newcommand{\wtilde}[1]{\widetilde{#1}}
\newcommand{\wtau}{\widetilde{\tau}}
\renewcommand{\empty}{\varnothing}
\newcommand{\eqdef}{\coloneqq}
\def\<#1,#2>{\langle #1,#2\rangle}
\newcommand{\squeeze}{}
\def\toptitlebar{\hrule height1pt \vskip .25in} 
\def\bottomtitlebar{\vskip .22in \hrule height1pt \vskip .3in} 
\title{
\toptitlebar
{{\center\baselineskip 18pt
                      {\Large\bf {\tt AsGrad}: A Sharp Unified Analysis of\\ Asynchronous-SGD Algorithms}}
} 
\bottomtitlebar}
\date{}
\author[1]{Rustem Islamov\thanks{This research was conducted while this author was an intern at IST Austria and a Master student at IP Paris.}}
\author[2]{Mher Safaryan}
\author[2]{Dan Alistarh}
\affil[1]{Institut Polytechnique de Paris (IP Paris)}
\affil[2]{Institute of Science and Technology Austria (IST Austria)}
\begin{document}

\maketitle

\begin{abstract}
    We analyze asynchronous-type algorithms for distributed SGD in the heterogeneous setting, where each worker has its own computation and communication speeds, as well as data distribution. In these algorithms, workers compute possibly stale and stochastic gradients associated with their local data at some iteration back in history and then return those gradients to the server without synchronizing with other workers. We present a unified convergence theory for non-convex smooth functions in the heterogeneous regime. The proposed analysis provides convergence for pure asynchronous SGD and its various modifications. Moreover, our theory explains what affects the convergence rate and what can be done to improve the performance of asynchronous algorithms. In particular, we introduce a novel asynchronous method based on worker shuffling. As a by-product of our analysis, we also demonstrate convergence guarantees for gradient-type algorithms such as SGD with random reshuffling and shuffle-once mini-batch SGD. The derived rates match the best-known results for those algorithms, highlighting the tightness of our approach. Finally, our numerical evaluations support theoretical findings and show the good practical performance of our method.
\end{abstract}

\section{Introduction}

Modern machine learning  relies heavily on effective  optimization algorithms. The stochastic gradient descent (SGD) method \cite{Robbins1951SGD} and its various modifications, such as Adam \cite{ADAM} and AdaGrad \cite{ADAGRAD} are at the core of machine learning training, due to their easy implementation together with strong practical performance. However, in recent times, both the size of state-of-the-art models and the amount of data required for training have increased significantly. Due to this growth, optimization algorithms have also been adapted to the need for efficient training of large models, and thus distributed and parallel variants of SGD have started playing a crucial role in modern machine learning \cite{bekkerman2011scaling}. In the  distributed regime, training is performed using many computational  nodes (e.g., CPUs or GPUs on a cluster) working in parallel and orchestrated by a server. Every worker computes gradients based on available data, and then a server aggregates those gradients to perform one step of an algorithm. Distributed SGD-based algorithms are also adopted to Federated Learning applications \cite{FEDLEARN, FL-big} where local data is kept private and is not seen by other workers. 

Nevertheless, distributed variants of SGD suffer from many practical challenges. For example, approaches such as communication compression \cite{alistarh2018convergence, DCGD, mishchenko2019DIANA}, performing several local steps before communication \cite{mishchenko2022proxskip, gorbunov2021local, koloskova2021topology}, decentralized communication \cite{koloskova2019Decentralized, kovalev2021lower}, or their combinations \cite{condat2023provably, zakerinia2023communicationefficient} are designed to improve the efficiency of distributed training.  

Across all these approaches, workers are synchronized, i.e., the server must wait for the slowest worker before proceeding to the next algorithm iteration. This may drastically slow down the performance of SGD if workers have significantly different computational power. 
\emph{Asynchronous} communication, which breaks this lock-step behavior, is preferable in practice, since it enables a more efficient use of resources. In asynchronous-type algorithms, workers do not wait for each other, thus a server immediately updates the current model and assigns a new job to available workers. All of the aforementioned approaches are orthogonal to asynchronicity, and can be combined with it \cite{nguyen2022fedbuff, zakerinia2023communicationefficient}.

Recently, \cite{koloskova2022sharper, mishchenko2022asynchronous} improved theoretical analysis of pure asynchronous SGD in the \emph{homogeneous data} setting, i.e., when workers have access to the same data, and make a step towards better understanding in a \emph{heterogeneous} regime. Here, we specifically focus on the more challenging heterogeneous setting, and provide a unified convergence analysis of asynchronous SGD. 

We start from the observation that asynchronous SGD can be seen as an instance of SGD with arbitrary data orderings, which include random reshuffling at each step, shuffle once (at the beginning of training) or some incremental order. The only difference between these variants is in the order defined naturally according to the computation speeds of the workers. Data ordering might improve the performance of the algorithm. For example, several works \cite{mishchenko2020rr, nguyen2021unified} show that SGD with random reshuffling and shuffle once is always better than vanilla SGD in the strongly convex case, and can outperform it for both convex and non-convex objectives if the number of epochs if sufficiently large \cite{mohtashami2022characterizing, lu2022general}. Recently, \cite{koloskova2023shuffle} analyzed SGD with arbitrary data ordering as an algorithm with linearly correlated noise \cite{koloskova2023gradient}. Here, we go further by linking asynchronous SGD and SGD using various data orderings.

\subsection{Contributions}
In this section, we summarize the key contributions of our work. 

\begin{itemize}
\item We propose a theoretical framework {\tt AsGrad} that allows us to analyze various types of asynchronous SGD in a unified manner. Besides purely-asynchronous SGD, we analyze variants of asynchronous SGD when a server waits for the first $b$ fastest workers or assigns new jobs according to some scheduling procedure. The analysis is performed for a constant stepsize schedule, without bringing any additional hyper-parameters except for stepsize, as all the parameters used in the algorithm are known during training. 

\item Moreover, this unified framework enables us to design and analyze new asynchronous algorithms. In particular, we propose a new method called {\em shuffled asynchronous SGD} and show its superiority over competitive methods both theoretically and practically. 

\item Our framework also recovers popular synchronous variants of SGD, such as SGD with random reshuffling, shuffle once, and mini-batch SGD. For these methods, we derive the best-known convergence results, without any changes in the analysis, highlighting the tightness of our framework. 

\item All of our results have a better or similar dependencies on the maximum delay, compared with existing work. With bounded gradient assumption,  we remove entirely dependencies on maximum delay used by prior works.

\end{itemize}

\section{Related Works}

\subsection{Asynchronous SGD}
The literature on asynchronous-type SGD algorithms is extremely rich, starting from Baudet, 1978~\cite{baudet1978asynchronous}, and Bertsekas and Tsitsiklis, 1989~\cite{bertsekas1989parallel}. The source of asynchrony might be caused by various factors, such as different hardware~\cite{horvath2022fjord} or message passing failures \cite{nadiradze2020elastic}.

Most existing works are devoted to the analysis of SGD-based asynchronous algorithms, concentrating on the homogeneous regime \cite{cohen2021asynchronous, aviv2021learning, stich2021critical} which typically holds only for shared memory architectures. In our work, we concentrate on the more challenging heterogeneous setting. 

The first convergence guarantees were given for constant delays \cite{stich2021errorfeedback, arjevani2020tight} which usually is not true in practice. Follow-up studies provide an analysis depending on the maximum delay \cite{stich2021critical, zakerinia2023communicationefficient, nguyen2022fedbuff}, however, it can be much larger than the average delay. \cite{alistarh2018convergence} improved the dependency on the maximum delay under the bounded gradient assumption while \cite{cohen2021asynchronous, aviv2021learning} completely removed that dependency. \cite{cohen2021asynchronous} proposed a method which sends delayed models together with computed gradients, and additional parameter tuning while \cite{aviv2021learning} derived rates involving the variance of the delays. Nonetheless, in some cases, this quantity might be proportional to the maximum delay. Recently, \cite{tyurin2023optimal} introduced a new asynchronous algorithm with optimal convergence guarantees; however, under the assumption that the computational speeds of workers are \emph{fixed} throughout the execution. This assumption is too restrictive in many practical scenarios, where computational speeds usually fluctuate  during the training.

Several works consider  asynchronous SGD in Federated Learning, where workers frequently have heterogeneous computational power.  \cite{zheng2017asynchronous, zhang2016stalenessaware} utilize delay-adaptive stepsizes to mitigate the effect of asynchrony while \cite{yan2020distributed, glasgow2021asynchronous, gu2021fast} proposed variance-reduction-based mechanisms to handle different worker availability.  \cite{nguyen2022fedbuff} proposed a method that incorporates local steps and shows its practical superiority, but the analysis is done under unrealistic assumptions. 

Recently, \cite{koloskova2022sharper, mishchenko2022asynchronous} introduced novel analysis based on perturbed iterates framework \cite{mania2017virtual} which improves the convergence guarantees for asynchronous SGD for both heterogeneous and homogeneous regimes. They propose similar delay-adaptive stepsize schedules that allow to derive maximum-delay-free rates for asynchronous SGD. Besides, \cite{koloskova2022sharper} proposed an algorithm with a special random job assigning procedure to balance the workers in the heterogeneous setting. In the same setting \cite{mishchenko2022asynchronous} derived convergence guarantees for pure asynchronous SGD. Inspired by \cite{koloskova2023shuffle}\footnote{We aware of the technical gap in the proofs in \cite{koloskova2023shuffle}. However, a proper choice of correlation period $\tau$ allows to take conditional expectations correctly. We highlight for each special case the choice of  $\tau$ that leads to correct statements.}, we propose a new virtual-iterates-based analysis and cover the results of \cite{koloskova2022sharper, mishchenko2022asynchronous} as special cases.

\subsection{SGD with Arbitrary Data Ordering}
The most practical training SGD schemes utilize a certain order (usually random) of data samples. Schemes like random reshuffling or shuffle once are considered a default choice in many real-world applications. Sampling without replacement allows to leverage of the finite-sum structure of the problem since all data points will contribute equally to the solution. However, the theoretical guarantees for these methods are much less studied than for vanilla SGD \cite{rakhlin2012making, nguyen2021unified}. The main complication comes from the fact that the gradient estimator becomes biased. Biased SGD-type methods are typically considered more challenging to analyze than their unbiased counterparts. 

In the last years, particular attention was given to federated and distributed methods with random reshuffling \cite{malinovsky2022serverside, sadiev2022federated, Yun2021CanSS, cho2023convergence}. Data shuffling might provably improve the convergence in this case as well which is important for the training of large-scale models.

Here, \cite{mishchenko2020rr} introduced novel techniques to improve the convergence guarantees in the strongly convex case, whereas \cite{nguyen2021unified, ahn2020sgd} demonstrated the same rates showing the superiority of SGD with random reshuffling over vanilla SGD. However, in convex and non-convex regimes this remains an open problem. Existing results demonstrate better performance of SGD with random reshuffling  in some special cases \cite{mishchenko2020rr} in convex and non-convex scenarios.

\section{Setup}

\begin{table*}[t]
    \centering
    \caption{Asynchronous algorithms whose convergence analysis is covered by {\tt AsGrad} framework. Here constants $\zeta^2$, $G^2$ such that $\|\nabla f_i(x) - \nabla f(x)\|^2 \le \zeta^2$ and $\|\nabla f_i(x)\|^2 \le G^2$ for all $i\in[n]$ and $x\in\R^d.$ $\tau_C$ and $\tau_{\max}$ denote the maximum number of active jobs and the maximum delay respectively. $F_0$ is the initial functional suboptimality, the definitions of other constants are given in Section~\ref{sec:conv_analysis}. For shuffled asynchronous SGD $\tau_C = n.$ \textbf{BG} = requires {\bf B}ounded {\bf G}radients assumption.}
    \label{tab:table2}
    \resizebox{\textwidth}{!}{
        \begin{tabular}{ccccc}
            \toprule
            \textbf{Method} & 
            \textbf{Alg $\#$} &
            \textbf{Citation} &
            \textbf{BG}  & 
            \textbf{Rate} ${}^{\text{(a)}}$
            \\ \toprule
            
             & 
             &
             \cite{mishchenko2022asynchronous} &
            No & 
            $\frac{LF_0\tau_{C}}{T} 
            + \left(\frac{LF_0\sigma^2}{T}\right)^{1/2} 
            + \zeta^2$ ${}^{\text{(b)}}$
            \\
            \makecellnew{Pure\\ Asynchronous SGD} &
            Alg~\ref{alg:pure_asynchronous} &
            Ours &
            No &
            $ \frac{LF_0\sqrt{\tau_{\max}\tau_C}}{T} 
            + \left(\frac{LF_0\sigma^2}{T}\right)^{1/2}   
            + \zeta^2$
            \\
            &
            &
            Ours &
            Yes &  
            $ \frac{LF_0\tau_C}{T} 
            + \left(\frac{LF_0\sigma^2}{T}\right)^{1/2} 
            + \left(\frac{LF_0G\tau_C}{T}\right)^{2/3}
            + \zeta^2$
            \\ \midrule

            \multirow{2}{*}{\makecellnew{ Pure\\ Asynchronous SGD\\with waiting}} &
            \multirow{2}{*}{\makecellnew{\vspace{-6pt}Alg~\ref{alg:pure_asynchronous_waiting} }} &
            Ours &
            No & 
            $ \frac{LF_0\sqrt{\tau_{\max}\tau_{C}}}{T\sqrt{b}} 
            + \left(\frac{LF_0\sigma^2}{Tb}\right)^{1/2}
            + \zeta^2$\\
            &
             &
             Ours &
            Yes & 
            $\frac{LF_0\tau_{C}}{Tb} 
            + \left(\frac{LF_0\sigma^2}{Tb}\right)^{1/2}
            + \left(\frac{LF_0G\tau_C}{Tb}\right)^{2/3}
            + \zeta^2$
            \\ \midrule

            &
            &
            \cite{koloskova2022sharper} &
            No & 
            
            $ \frac{LF_0\sqrt{\tau_{\max}\tau_{C}}}{T} 
            + \left(\frac{LF_0\sigma^2}{T}\right)^{1/2} 
            + \left(\frac{LF_0\zeta^2}{T}\right)^{1/2} 
            + \left(\frac{LF_0\tau_C\zeta}{T}\right)^{2/3}$\\
            \makecellnew{Random\\ Asynchronous SGD} &
            \makecellnew{\vspace{4pt}Alg~\ref{alg:random_asynchronous}} &
            \cite{koloskova2022sharper} &
            Yes &
             
            $ \frac{LF_0\tau_C}{T} 
            + \left(\frac{LF_0\sigma^2}{T}\right)^{1/2} 
            + \left(\frac{LF_0\zeta^2}{T}\right)^{1/2} 
            + \left(\frac{LF_0\tau_CG}{T}\right)^{2/3}$\\
            &
            &
            Ours &
            Yes &
            $ \frac{LF_0\tau_C}{T} 
            + \left(\frac{LF_0\sigma^2}{T}\right)^{1/2} 
            + \left(\frac{LF_0\zeta^2}{T}\right)^{1/2} 
            + \left(\frac{LF_0\tau_CG}{T}\right)^{2/3} $
            \\ \midrule

            \multirow{2}{*}{\makecellnew{Random\\\ Asynchronous SGD\\with waiting (FedBuff)}} &
            \multirow{2}{*}{\makecellnew{\vspace{-6pt}Alg~\ref{alg:random_asynchronous_waiting} }} & 
            \cite{nguyen2022fedbuff} &
            Yes & 
            
            $ \frac{LF_0}{T} 
            + \left(\frac{LF_0\sigma^2}{T}\right)^{1/2} 
            + \left(\frac{LF_0\zeta\tau_{\max}}{T}\right)^{2/3}
            + \left(\frac{LF_0G\tau_{\max}}{T}\right)^{2/3}$ ${}^{\text{(c)}}$\\
            &
             &
             Ours & 
            Yes &
            
            $ \frac{LF_0\tau_C}{T} 
            + \left(\frac{LF_0\zeta^2}{Tb}\right)^{1/2}
            + \left(\frac{LF_0\sigma^2}{Tb}\right)^{1/2}
            + \left(\frac{LF_0\tau_CG}{Tb}\right)^{2/3}$
            \\ \midrule

            \makecellnew{Shuffled \\ Asynchronous SGD \\ {\bf [NEW]}} & 
            Alg~\ref{alg:shuffled_asynchronous} &
            Ours &
            Yes & 
            
            $ \frac{LnF_0}{T} 
            + \left(\frac{LF_0\sigma^2}{T}\right)^{1/2}
            + \left(\frac{LF_0\sqrt{n}\zeta}{T}\right)^{2/3}
            + \left(\frac{LF_0Gn}{T}\right)^{2/3} $\\
            \bottomrule 
        
        \end{tabular}
        }
\begin{tablenotes}
      {\scriptsize 
        \item (a) We present the best-known rates under the same set of assumptions as we use in the analysis.
        \item (b) \cite{mishchenko2022asynchronous} uses delay adaptive stepsizes to get rid of the dependency on $\tau_{\max}.$
        \item (c) If we set $\eta_l = \frac{\gamma}{b}, \eta_g=b, Q=1$ in Theorem~$1$ \cite{nguyen2022fedbuff}. The analysis is done under the unrealistic assumption that $\{i_t\}_{t=0}^{T-1}$ are distributed uniformly at random. 
        }
    \end{tablenotes}  
\end{table*}

We consider the classical Empirical Risk Minimization (ERM) problem of the form
\begin{equation}\label{eq:problem}
\squeeze
    \min\limits_{x\in \R^d}\left[f(x) \eqdef \frac{1}{n}\sum\limits_{i=1}^nf_i(x)\right],
\end{equation}
which covers many optimization problems in machine learning. Here, $x \in \R^d$ denotes the parameters of a model we aim to train, $n$ is the number of workers participating in the distributed training, and $f_i(x)$ is the loss associated with data $\cD_i$ available to worker $i\in [n] \eqdef \{1, 2, \dots, n\}.$ For example, the local function $f_i$ can be written as an expectation $f_i(x) \eqdef \EE_{\xi \sim \cD_i}[f_i(x, \xi)]$ over dataset $\cD_i$ locally stored at worker $i$. Besides, the formulation \eqref{eq:problem} also recovers the single-node setting. In this scenario, $f$ represents a loss over a dataset of size $n$, and each $f_i$ is a loss associated with $i$-th data point. We denote the minimum of the problem \eqref{eq:problem} by $f^*,$ and assume that it is finite.

\begin{algorithm*}[t]
\caption{{\tt AsGrad} framework: General Asynchronous SGD}
\label{alg:pseudocode}
\begin{algorithmic}[1]
\State \textbf{Input:} $x_0\in \R^{d}$, stepsize $\gamma > 0$, set of assigned jobs $\cA_0 = \empty$, set of received jobs $\cR_{0} = \empty$
\State \textbf{Initialization:} for all jobs $(i, 0) \in \cA_1$, the server assigns worker $i$ to compute a stochastic gradient $g_i(x_0)$ 
    \For{$t = 0,1,2,\dots, T-1$}
		\State once worker $i_t$ finishes a job $(i_t,\pi_t) \in \cA_{t+1}$, it sends $g_{i_t}(x_{\pi_t})$ to the server
        \State server updates the current model $x_{t+1} = x_t - \gamma g_{i_t}(x_{\pi_t})$ and the set $\cR_{t+1} = \cR_{t} \cup \{(i_t, \pi_t)\}$
        \State server assigns worker $k_{t+1}$ to compute a gradient $g_{k_{t+1}}(x_{\alpha_{t+1}})$
        \State server updates the set $\cA_{t+2} = \cA_{t+1} \cup \{(k_{t+1}, \alpha_{t+1})\}$
    \EndFor
\end{algorithmic}	
\end{algorithm*}

\subsection{{\tt AsGrad}: Proposed Algorithmic Framework}
To solve the problem \eqref{eq:problem}, we apply general asynchronous SGD satisfying {\tt AsGrad} framework (Algorithm~\ref{alg:pseudocode}). The backbone of our algorithmic approach is to analyze updates of the form
\begin{equation}\label{eq:real_iterates_update}
    x_{t+1} = x_t - \gamma g_{i_t}(x_{\pi_t}),
\end{equation}
where $i_t \in [n]$ is an index of a worker whose potentially stale and stochastic gradient $g_{i_t}(x_{\pi_t})$ computed at an outdated point $x_{\pi_t}$ is applied at iteration $t$.

It is crucial to note that the order of applied gradients $g_{i_t}(x_{\pi_t})$ is not controlled by the server. Once an ongoing gradient computation is done by some worker, the gradient is immediately communicated to the server and is used to update the global model without synchronizing with other workers. In fact, the order of received gradients depends on the speeds of the workers which can change during the training. We denote by $\cR_t$ the set of all received jobs\footnote{By job we define a pair $(i,j)$ such that worker $i$ is assigned to compute $\nabla f_i(x_j)$ for a model from iteration $j$.} before iteration $t$. Hence, before training $\cR_0=\empty$, and once a new job $(i_t,\pi_t)$ is completed (i.e., the gradient $g_{i_t}(x_{\pi_t})$ is computed) we update the set $\cR_{t+1} = \cR_{t} \cup \{(i_t, \pi_t)\}$.

After receiving a completed job $(i_t,\pi_t)$ and updating the model parameters $x_{t+1}$, the server proceeds to assign a new job. This is the stage when the server can influence the training by controlling the assigned jobs. Specifically, the server is allowed to assign any worker $k_{t+1}\in[n]$ to compute a stochastic gradient at any point $x_{\alpha_{t+1}}$ in the history of model parameters. Similar to the set of received jobs, denote by $\cA_t$ the set of all assigned jobs before iteration $t$. Thus, before the training no job is assigned, i.e., $\cA_0=\empty$. Then, the training starts by the initial job assignments $\cA_1 = \{(i,0) \colon \text{ for some workers } i\in[n]\}$. Afterward, once a new job $(k_{t+1}, \alpha_{t+1})$ is assigned by the server, we update the set $\cA_{t+2} = \cA_{t+1} \cup \{(k_{t+1}, \alpha_{t+1})\}$.

By definition, $\cR_t \subseteq \cA_t$ since only assigned jobs might be finished. Besides, the set $\cA_{t+1}\setminus \cR_t$ represents the jobs that are ``in flight'' at iteration $t$. In particular, $\cA_{T+1}\setminus \cR_T$ consists of all jobs that are not finished within the optimization process.

The power of our algorithmic framework is its versatility to recover various variants of both asynchronous and synchronous SGD algorithms in one method. The strength of our approach is that the theoretical analysis for all of them is covered by unified theory, as well as the rates obtained by our theory match the best-known results for those cases or improve them. We present convergence guarantees for several synchronous and asynchronous methods covered by our analysis in Table~\ref{tab:table2} and Section~\ref{sec:conv_analysis} in more details.

\subsection{Special Cases}
Below we list some interesting special cases covered by our framework. Detailed description is provided in Sections~\ref{sec:special_cases_real_theorem1} and \ref{sec:special_cases_real_theorem4} while convergence rates are presented in Table~\ref{tab:table2}.

\paragraph{Pure Asynchronous SGD.} In the beginning of the training the server assigns jobs to all workers at $x_0$. Then, a new job (for a freshly updated model) is assigned back to the same worker which completed the previous job, i.e., $(k_{t+1}, \alpha_{t+1}) \equiv (i_t, t+1)$. We derive improved square root dependency on $\tau_{\max}$ in contrast to previous works, and remove this dependency completely with bounded gradients assumption. Our rate matches the best-known result \citep{koloskova2022sharper} in homogeneous regime (i.e., $\zeta^2=0$).

\paragraph{Pure Asynchronous SGD with waiting.} In contrast to the previous case, the server waits for $b \ge 1$ workers to finish their jobs, and then it assigns to them (i.e., $k_{t+1} = i_t$ as before) new jobs for the same updated model with $\alpha_{t+1} = \lfloor \nicefrac{t+1}{b} \rfloor b$. In comparison with previous method, waiting for $b$ leads to faster convergence while keeping the same dependency $\tau_{\max}$.

\paragraph{Random Asynchronous SGD \cite{koloskova2022sharper}.} In this version, a new job is assigned to the worker $k_{t+1}\sim\textrm{Uni}[1,2,\dots,n]$ chosen independently and uniformly at random among all workers for the latest model, i.e., $\alpha_{t+1} = t+1$. Thus, some workers might receive new jobs without completing the current one. We obtain the same rates as in \citep{koloskova2022sharper} using more general theory which indicates the sharpness of our approach.

\paragraph{Random Asynchronous SGD with waiting \cite{nguyen2022fedbuff}.} This method is a special case of the FedBuff algorithm \cite{nguyen2022fedbuff} with $Q=1$ local steps. Besides, it can be seen as a combination of the previous two special cases, namely asynchronous SGD with waiting and random assignments. The server waits for the first $b$ fastest workers and then assigns new jobs with the same model $\alpha_{t+1} = \lfloor \nicefrac{t+1}{b} \rfloor b$ to $b$ randomly chosen workers $k_{t+1}\sim\textrm{Uni}[1,\dots,n]$. Unlike \citep{nguyen2022fedbuff}, we derive convergence under realistic assumptions. Moreover, we derive $\tau_{\max}$-free rate and show the benefit from waiting for few workers as the rate improves with $b.$

\paragraph{Shuffled Asynchronous SGD {\bf [NEW]}.} In this case, we assume that all participating workers $[n]$ are active in the training. Similar to random asynchronous SGD described above, new jobs are always for the latest model, i.e., $\alpha_{t+1}=t+1$. However, new jobs are not assigned to workers independently but rather based on a random permutation of workers that can be re-sampled after each cycle or sampled once and reused throughout the training. More specifically, if $\chi$ is a random permutation of indices $[n]$, then $k_{t+1} = \chi(j)$, where $j-1 = t \;(\textrm{mod } n)$ is the remainder of $t$ when divided by $n$. In Section~\ref{sec:shuffled_acynchronous_SGD} we demonstrate that new method outperforms its random counterpart in highly heterogeneous regime $\zeta \ge \sqrt{n\varepsilon }$ which typically holds in Federated Learning.

\paragraph{Mini-batch SGD.} This is the standard variety of SGD method, and a popular method for a single-node setting. We show that the update rule of mini-batch SGD can be modified to suit update rule \eqref{eq:real_iterates_update} and derive standard convergence rate. In particular, if we treat each data point as a separate client then mini-batch SGD can be viewed as random asynchronous SGD with waiting where initial number of jobs assigned by the server is $b$.

\paragraph{SGD with Random Reshuffling \cite{nedic2001incremental}.} SGD with random reshuffling  is one the most used and sometimes a default algorithm in practice to train neural networks. At the beginning of each epoch, the dataset is randomly shuffled, and gradients are computed following that random order. Similar to the analogy described for mini-batch SGD, we can view SGD with random reshuffling as a special case of shuffled asynchronous SGD. Our rate matches the best known guarantees showing the tightness of our approach.

\section{Convergence Theory}\label{sec:conv_analysis}

\subsection{Theoretical assumptions}\label{sec:assumptions}
Below we list the assumptions we use in the theoretical analysis. All of them are standard in the  distributed non-convex optimization literature.

\begin{assumption}\label{asmp:smoothness} Local functions $f_i$ are differentiable and $L$-smooth for some positive constant $L$, namely,
\begin{equation}\label{eq:smoothness}
    \|\nabla f_i(x) - \nabla f_i(y)\| \le L\|x-y\|, \quad \forall x, y \in \R^d.
\end{equation}
\end{assumption}

For some of our results, we also need a bounded variance assumption on stochastic gradients. 

\begin{assumption}\label{asmp:bound_var} Stochastic gradients $\nabla f_i(x, \xi)$ are unbiased estimators of $\nabla f_i(x)$, i.e.,
\begin{equation}\label{eq:unbias}
    \squeeze
    \EE_{\xi\sim\cD_i}\left[\nabla f_i(x,\xi)\right] = \nabla f_i(x),  \quad \forall x \in \R^d,
\end{equation}
and have bounded variance $\sigma^2\ge0$, namely,
\begin{equation}\label{eq:bound_var}
    \squeeze
    \EE_{\xi\sim \cD_i}\left[\|\nabla f_i(x, \xi) - \nabla f_i(x)\|^2\right] \le \sigma^2, \quad \forall x \in \R^d.
\end{equation}

\end{assumption}
This is a typical assumption in the literature, and it holds, for example, when we have access to the gradients with Gaussian noise. We denote a realization of $\nabla f_i(x, \xi)$ by $g_i(x)$ for shortness. 

Next, we also assume that the bounded function heterogeneity assumption holds since in general case it is not possible to derive any convergence guarantees for asynchronous algorithms. 

\begin{assumption}\label{asmp:grad_sim} 
    Local gradients $\nabla f_i(x)$ satisfy bounded heterogeneity condition for some $\zeta^2\ge0$, i.e.,
    \begin{equation}\label{eq:grad_sim}
        \|\nabla f_i(x) - \nabla f(x)\|^2 \le \zeta^2, \quad \forall x\in \R^d.
    \end{equation}
\end{assumption}

Several results require the Lipschitzness of local loss functions. 

\begin{assumption}\label{asmp:bound_grad} Local functions $f_i(x)$  are $G$-Lipschitz, i.e. for some positive constant $G$ they satisfy
\begin{equation}
    |f_i(x) - f_i(y)| \le G\|x-y\| \quad \forall x, y \in \R^d.
\end{equation}
\end{assumption}
Note that, in the case of differentiable $f_i$, this assumption implies that local gradients are bounded, i.e., for all $x\in \R^d$ $\|\nabla f_i(x)\| \le G$ \cite{bubeck2015convex}. In contrast to \cite{mishchenko2022asynchronous}, we do not assume the boundedness of stochastic gradients. 
Practical implementations frequently resort to using clipping in the presence of Byzantine workers or stragglers. The clipping automatically bounds the norms of applied gradient, forcing the constant $G^2$ to be small.

\subsection{Notation}

Generally, we do not make any assumptions on the delays~--- gradients might be received in any random or deterministic order. We assume that the server can receive and assign jobs with delays, namely, $\pi_t \eqdef t - \tau_t$ and $\alpha_t \eqdef t - \wtau_t,$ where $\tau_t, \wtau_t \ge 0$ are corresponding delays.\footnote{If $\tau_t \equiv \wtau_t\equiv 0$, then there is no delay.} The order might be natural as in the case of pure asynchronous SGD or be pre-set as for mini-batch SGD; see Sections~\ref{sec:special_cases_real_theorem1} and \ref{sec:special_cases_real_theorem4} for more examples of how $\{i_t\}_{t=0}^{T-1}$ might look like. Besides, we introduce the notion of maximum and average delays similar to \cite{koloskova2022sharper}.

\begin{definition}\label{def:max_avg_delays}
    Let $\{\tau_t\}_{t=0}^{T-1}$ be the delays of all applied gradients. The average and maximum delays are defined as follows
    \begin{equation}
        \squeeze \tau_{\avg} \eqdef \squeeze\frac{1}{|\cA_{T+1}|}\left(\sum\limits_{t=0}^{T-1}\tau_t + \sum\limits_{(i,j) \in \cA_{T+1}\setminus \cR_T} T-j\right), \;
        \squeeze \tau_{\max} \eqdef \squeeze\max\left\{\max\limits_{0\le t < T} \tau_t, \max\limits_{(i,j) \in \cA_{T+1}\setminus \cR_T} T-j\right\} \label{eq:max_avg_delays}.
    \end{equation}
\end{definition}

Quantities $\wtau_{\avg}$ and $\wtau_{\max}$ are defined analogously with respect to the delays $\{\wtau_t\}_{t=0}^{T-1}$ related to  assigning process. Moreover, in the analysis, we use the maximum number of active jobs or concurrency $\tau_C.$ This quantity indicates the maximum number of jobs already assigned, but not yet completed (i.e., active jobs) during the optimization process.

\begin{definition}\label{def:max_active_jobs}
    The maximum number of active jobs or concurrency is defined as
    \begin{equation}\label{eq:max_active_jobs}
    \squeeze \tau_C \eqdef \max\limits_{0 \le t \le T} \left|\cA_{t+1}\setminus \cR_t\right|.
    \end{equation}
\end{definition}

To utilize available resources in a more efficient way, in practice, all workers are always busy, i.e., $\tau_C=n$. Nevertheless, it might happen that a fraction of all workers can be unavailable from time to time.

The received $\{i_t\}$ and assigned $\{k_t\}$ orders of functions define the convergence properties of Algorithm~\ref{alg:pseudocode}. Hence, we are interested in the correlation between functions within a certain correlation interval $\tau.$ The final rate depends on how much the functions within the correlation interval differ from the averaged gradient. To mathematically describe the aforesaid, we define the sequence correlation \cite{koloskova2023shuffle} below. 

\begin{definition}\label{def:variance_order}
    For any given correlation period $\tau \ge 1$, we successively split the set of received gradient indices $\{i_t\}_{t=0}^{T-1}$ into $\left\lceil\frac{T}{\tau} \right\rceil$ chunks of size $\tau$. Then, the sequence correlation of received jobs within $k$-th period is defined as
    \begin{equation}\label{eq:received_seq_cor_variance}
        \squeeze \sigma_{k, \tau}^2 \eqdef \max\limits_{0 \le j < \tau}\EE\left[\left\|\sum_{t=k\tau}^{\min\{k\tau+j, T-1\}}\nabla f_i(x_{k\tau}) - \nabla f(x_{k\tau})\right\|^2\right].
    \end{equation}
\end{definition}

Next, the magnitude of delays affects the resulting convergence rate. We measure the effect of the delays by the quantity defined below. Note that it does not involve any correlation period since it is designed to track how $\pi_t$ impacts the rate.

\begin{definition}\label{def:variance_delay} For the sequence of received gradient indices $\{i_t\}_{t=0}^{T-1}$ the delay variance is defined as
\begin{equation*}
        \squeeze \nu^2 \eqdef \sum\limits_{t=0}^{T-1}\EE\left[\left\|\sum_{j=\pi_t}^{t-1} \nabla f_{i_j}(x_{\pi_j}) - \nabla f(x_{\pi_j}) \right\|^2\right]. 
    \end{equation*}
\end{definition}

\subsection{Analysis of Gradient Receiving Process}

In the first theorem, we analyze the gradient receiving process, i.e., how the order of received gradients $\{i_t\}_{t=0}^{T-1}$ influences the convergence. We would like to highlight that this ordering can not be controlled by the server; it is built naturally depending on the speeds of the workers. 

For the analysis, we additionally construct a sequence of virtual iterates \cite{mania2017virtual} with restarts following the approach created in \cite{koloskova2023shuffle}. The sequence is defined as $\wtilde{x}_0 = x_0$ and
\begin{equation}\label{eq:real_virtual_seq}
    \squeeze \wtilde{x}_{t+1} = \begin{cases}
        \wtilde{x}_t - \gamma\nabla f(x_{t}) & \text{if } t+1 \neq 0\mod \tau, \\
        x_{t+1} & \text{if } t+1 = 0 \mod \tau.
\end{cases}
\end{equation}
In contrast to the sequence $\{x_t\}_{t=0}^{T}$, virtual iterates $\{\wtilde{x}_t\}_{t=0}^{T-1}$ are updated using full gradient always evaluated for the last model $x_t.$   Moreover, we restart the virtual iterates once in $\tau$ iterations, so we can track the progress of real iterates within one correlation period. More particularly, we use $\tau = \Theta(\frac{1}{L\gamma})$, where $L$ and $\gamma$ are the smoothness constant and  the stepsize respectively. 

\begin{restatable}{theorem}{theoremthird}
\label{th:theorem3} Let Assumptions \ref{asmp:smoothness} and \ref{asmp:bound_var} hold. Let the stepsize $\gamma$ satisfy inequalities $6L\gamma \le 1$ and $20L\gamma\sqrt{\tau_{\max}\tau_C}\le 1$, the correlation period $\tau = \left\lfloor\frac{1}{20L\gamma}\right\rfloor$, and quantities $\{\sigma_{k,\tau}^2\}_{k=0}^{\lfloor T/\tau\rfloor}$ and $\nu^2$ are finite. Then
\begin{equation}\label{eq:theorem3}
    \squeeze \E{\|\nabla f(\hat{x}_T)\|^2} 
    \le \cO\left(
    \frac{F_0}{\gamma T}
    + L\gamma\sigma^2
    + L^2\gamma^2\Phi \right),
\end{equation}
where $F_0 \eqdef f(x_0) - f^*,\; \Phi \eqdef \frac{1}{\lfloor T/\tau \rfloor}\sum\limits_{k=0}^{\lfloor \frac{T}{\tau} \rfloor} \sigma^2_{k,\tau} + \frac{1}{T}\nu^2$, and $\hat{x}_T$ is chosen uniformly at random from $\{x_1,\dots,x_T\}$.
\end{restatable}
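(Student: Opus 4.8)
The plan is to run the virtual--iterates (perturbed--iterates) argument on the auxiliary sequence $\{\wtilde{x}_t\}$ defined in \eqref{eq:real_virtual_seq}, whose whole purpose is that it performs an \emph{exact} gradient step $-\gamma\nabla f(x_t)$ anchored at the real iterate $x_t$, so that $f$ provably decreases along it up to the deviation $\|\wtilde{x}_t-x_t\|$. On the interior of a correlation block $[k\tau,(k+1)\tau)$ I would first apply $L$--smoothness (Assumption~\ref{asmp:smoothness}) to the virtual update, obtaining $f(\wtilde{x}_{t+1})\le f(\wtilde{x}_t)-\gamma\langle\nabla f(\wtilde{x}_t),\nabla f(x_t)\rangle+\tfrac{L\gamma^2}{2}\|\nabla f(x_t)\|^2$, then use the polarization identity together with $\|\nabla f(\wtilde{x}_t)-\nabla f(x_t)\|\le L\|\wtilde{x}_t-x_t\|$. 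Since $6L\gamma\le1$ this collapses to a clean per--step inequality $f(\wtilde{x}_{t+1})\le f(\wtilde{x}_t)-\tfrac{\gamma}{4}\|\nabla f(x_t)\|^2+\tfrac{\gamma L^2}{2}\|\wtilde{x}_t-x_t\|^2$, where I simply drop the nonnegative $\|\nabla f(\wtilde{x}_t)\|^2$ term.

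Next I would telescope. Because the restart in \eqref{eq:real_virtual_seq} enforces $\wtilde{x}_{k\tau}=x_{k\tau}$ at every block boundary, summing the per--step bound over a block and then over all $\lfloor T/\tau\rfloor$ blocks makes the virtual function values cancel and collapses the left side into $f(x_0)-\EE[f(x_T)]\le F_0$, up to one extra smoothness bound bridging each restart whose error is of the same deviation type handled below. This reduces the theorem to controlling the averaged deviation $\tfrac{1}{T}\sum_{t}\EE\|\wtilde{x}_t-x_t\|^2$, after which choosing $\hat{x}_T$ uniformly from $\{x_1,\dots,x_T\}$ turns $\tfrac1T\sum_t\EE\|\nabla f(x_t)\|^2$ into $\EE\|\nabla f(\hat{x}_T)\|^2$.

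The heart of the argument is bounding this deviation. Inside block $k$ both sequences start at $x_{k\tau}$, so $\wtilde{x}_t-x_t=\gamma\sum_{s=k\tau}^{t-1}\bigl(g_{i_s}(x_{\pi_s})-\nabla f(x_s)\bigr)$, and I would split the summand into (i) the stochastic noise $g_{i_s}(x_{\pi_s})-\nabla f_{i_s}(x_{\pi_s})$, (ii) the anchored heterogeneity $\nabla f_{i_s}(x_{k\tau})-\nabla f(x_{k\tau})$, and (iii) two smoothness--controlled drifts measuring how $\nabla f_{i_s}$ and $\nabla f$ move from the anchor $x_{k\tau}$ to $x_{\pi_s}$ and $x_s$. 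Expanding the square (Young and Cauchy--Schwarz over the at most $\tau$ terms): part (i) yields $\tau\sigma^2$ once the cross terms are killed by conditioning on the filtration at which $i_s,\pi_s,x_{\pi_s}$ are already measurable, which is exactly the subtlety flagged in the footnote and the reason the block length is pinned to $\tau=\lfloor 1/(20L\gamma)\rfloor$; part (ii) is exactly $\sigma_{k,\tau}^2$ by Definition~\ref{def:variance_order}; and the drifts (iii) reduce, via the real update $x_s-x_{k\tau}=-\gamma\sum_r g_{i_r}(x_{\pi_r})$, to the same noise and heterogeneity pieces, plus the heterogeneity accumulated over each delay window $[\pi_s,s-1]$, which is precisely the delay variance $\nu^2$ of Definition~\ref{def:variance_delay}, plus a residual full--gradient contribution $\gamma^2\|\sum_r\nabla f(x_{\pi_r})\|^2$.

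The main obstacle is this self--referential full--gradient term: the deviation feeds $\sum_r\|\nabla f(x_{\pi_r})\|^2$ back into the error $\tfrac{\gamma L^2}{2}\sum_t\|\wtilde{x}_t-x_t\|^2$, and it must be reabsorbed into the descent term $-\tfrac{\gamma}{4}\sum_t\|\nabla f(x_t)\|^2$ on the left. I would re-index $\sum_t\|\nabla f(x_{\pi_t})\|^2$ back to $\sum_t\|\nabla f(x_t)\|^2$ using two counting facts: each delay window has length at most $\tau_{\max}$ (the Cauchy--Schwarz factor inside a window), and each time index is the delayed preimage $\pi$ of at most $\tau_C$ iterations by the concurrency bound of Definition~\ref{def:max_active_jobs} (so every index is covered by at most $\tau_C$ windows). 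The accumulated coefficient is therefore of order $L^2\gamma^2\tau_{\max}\tau_C$, and the hypothesis $20L\gamma\sqrt{\tau_{\max}\tau_C}\le1$ is calibrated exactly so that this coefficient is at most $\tfrac14$, letting the feedback be swallowed by the left-hand side. Dividing the surviving inequality by $\gamma T/4$ and collecting the noise, correlation, and delay contributions gives $\tfrac{F_0}{\gamma T}+L\gamma\sigma^2+L^2\gamma^2\Phi$ with $\Phi=\tfrac{1}{\lfloor T/\tau\rfloor}\sum_k\sigma_{k,\tau}^2+\tfrac1T\nu^2$, which is \eqref{eq:theorem3}.
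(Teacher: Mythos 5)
Your overall strategy is the paper's: virtual iterates with restarts every $\tau=\lfloor\frac{1}{20L\gamma}\rfloor$ steps, a smoothness-based descent inequality driven by $-\gamma\nabla f(x_t)$, a decomposition of the deviation $\|\wtilde{x}_t-x_t\|^2$ into stochastic noise, anchored heterogeneity ($\sigma_{k,\tau}^2$), delay heterogeneity ($\nu^2$), and a self-referential full-gradient term reabsorbed via the $\gamma^2L^2\tau_{\max}\tau_C\le\frac{1}{400}$ calibration. Those parts track Lemmas~\ref{lem:lemmaD1}--\ref{lem:lemmaD5} faithfully.

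There is, however, a genuine gap at the restart step, and it is created precisely by your decision to ``simply drop the nonnegative $\|\nabla f(\wtilde{x}_t)\|^2$ term.'' At a block boundary the update is $\wtilde{x}_{t+1}=x_{t+1}=\wtilde{x}_t-\gamma\nabla f(x_t)+\gamma\Delta_t^t$, so smoothness produces the cross term $\gamma\langle\nabla f(\wtilde{x}_t),\Delta_t^t\rangle$, which is \emph{not} ``of the same deviation type handled below.'' By Lemma~\ref{lem:lemmaD1}, $\E{\|\Delta_t^t\|^2}$ itself contains $\frac{\tau}{24}\sum_{j=r(t)}^{t}\E{\|\nabla f(x_j)\|^2}$; since there is one restart per block, any Young weight $\kappa$ placed on $\E{\|\Delta_t^t\|^2}$ contributes $\kappa\,\frac{\tau}{24}\,B$ to the right-hand side, forcing $\kappa=\cO(L\gamma^2)$ (the paper uses $41L\gamma^2$). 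The conjugate Young weight on $\|\nabla f(\wtilde{x}_t)\|^2$ is then $\frac{\gamma^2}{4\kappa}=\Theta(\frac{1}{L})$, and a $\Theta(\frac1L)$-weighted gradient norm sitting at a \emph{single} iterate per block cannot be absorbed into the $-\frac{\gamma}{3}\|\nabla f(x_t)\|^2$ descent term at that iterate, because $\frac1L\gg\gamma$ under the assumption $6L\gamma\le 1$ (one would need $\gamma\gtrsim\frac1L$, a lower bound you do not have). The paper resolves this in \eqref{eq:theorem3_2} by smearing $\frac{1}{160L}\E{\|\nabla f(\wtilde{x}_t)\|^2}$ over all $\tau$ iterates of the block, which converts it into $\frac{\gamma}{1600}B+\frac{\gamma}{2}\sum_t\E{\|\nabla f(\wtilde{x}_t)\|^2}$; the second piece is then cancelled exactly by the accumulated $-\frac{\gamma}{2}\E{\|\nabla f(\wtilde{x}_t)\|^2}$ terms retained in the per-step descent inequality. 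Those negative virtual-gradient terms are therefore load-bearing, not disposable, and your proposal as written has no mechanism to close this loop. The rest of the argument (the filtration-based handling of the noise cross terms, the identification of $\sigma_{k,\tau}^2$ and $\nu^2$, and the $\tau_{\max}\tau_C$ double-counting via Lemma~\ref{lem:lemmaA3}) is sound and matches the paper.
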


We give a detailed proof of the theorem in Appendix, Section~\ref{sec:proof_theorem3}. We observe that the rate consists of three terms. The first one, $\frac{F_0}{\gamma T}$, is a standard optimization term that always appears for vanilla GD, and is shown to be optimal. The second term, $L\gamma\sigma^2$, appears because of the stochastic nature of the gradients. The third term, $L^2\gamma^2\Phi$, the most intriguing one, consists of two parts~--- the first one represents the effect of function ordering while the second part shows the impact of delays on the convergence. 

Since the first and second terms are standard for gradient-based algorithms, one asynchronous algorithm differs from another one with a bound on $L^2\gamma^2\Phi.$ Intuitively, our goal is to create an algorithm for which this term is as small as possible in order to guarantee better convergence. This can be achieved by properly balancing the workers' contributions. Moreover, the stepsize $\gamma$ is decreased by $\tau_{\max}^{1/2}$ to mitigate the effect of the delays.

Note that all quantities $\{\sigma_{k,\tau}^2\}_{k=0}^{\lfloor \frac{T}{\tau}\rfloor}$ and $\nu^2$ depend on $\tau$, and consequently, on the stepsize $\gamma$ as well. Hence, for the general case, the inequality~\eqref{eq:theorem3} is implicit. However, for some special cases, we are able to compute all quantities and derive convergence guarantees.

\subsection{Analysis of Gradient Assigning Process}

Now we switch to the analysis of the order $\{k_t\}_{t=1}^{T}$ which the server uses to decide the order of assigning new jobs. Recall that the server is able to control this order. That is why we can use various randomization procedures to balance the workers.

The analysis in this case is based on the virtual iterates $\{y_t\}_{t=0}^{T}$ that follow the assigning process. Formally, we define $y_0 = x_0$ and
\begin{equation}
    \squeeze y_{t+1} = \squeeze y_t - \gamma\sum_{(i,j) \in \cA_{t+1} \setminus \cA_t} g_i(x_j)
    \overset{t>0}{=} \squeeze y_t - \gamma g_{k_t}(x_{\alpha_t}) \label{eq:virtual_iterates_update}.
\end{equation}
Here we highlight that the server may decide to send a job at outdated point $x_{\alpha_t}$ with bounded by $\wtau_{\max}$ delay.\footnote{In other words, $\alpha_t = t - \wtau_t$ where $\wtau_t \ge 0.$} This enables us to investigate methods where several workers compute gradients at the same point, e.g., mini-batch SGD. Hence, it brings even more flexibility to our framework. The real and virtual iterates can not be arbitrarily far away from each other. The next Lemma reveals the connection between them.

\begin{lemma}
    Let real $\{x_t\}_{t=0}^T$ and virtual $\{y_t\}_{t=0}^T$ iterates be defined in \eqref{eq:real_iterates_update} and \eqref{eq:virtual_iterates_update} respectively. Then
    \begin{equation*}
        \squeeze x_t - y_t = \gamma\sum_{(i,j) \in \cA_t\setminus \cR_t} g_i(x_j).
    \end{equation*}
\end{lemma}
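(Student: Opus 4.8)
The plan is to prove the identity by induction on $t$, tracking the difference $x_t - y_t$ as jobs are assigned and received. The base case $t=0$ is immediate: by definition $x_0 = y_0$ and $\cA_0 = \cR_0 = \empty$, so both sides vanish. Alternatively, I would start at $t=1$, where $\cA_1$ consists of the initial assignments $\{(i,0)\}$ and $\cR_1 = \{(i_0, \pi_0)\}$; here the right-hand side sums over the initially-assigned-but-not-yet-received jobs, which is exactly what makes $x_1$ (one real step) differ from $y_1$ (which incorporates all of $\cA_1 \setminus \cA_0$ at once).

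For the inductive step, the key is to compare how each sequence updates from $t$ to $t+1$. The real iterate obeys $x_{t+1} = x_t - \gamma g_{i_t}(x_{\pi_t})$ from \eqref{eq:real_iterates_update}, so it subtracts exactly the single received gradient indexed by the newly-completed job $(i_t,\pi_t) \in \cR_{t+1}\setminus\cR_t$. The virtual iterate obeys \eqref{eq:virtual_iterates_update}, namely $y_{t+1} = y_t - \gamma\sum_{(i,j)\in\cA_{t+1}\setminus\cA_t} g_i(x_j)$, so it subtracts all gradients corresponding to jobs freshly added to $\cA$. Subtracting these two updates, I would write
\begin{equation*}
    x_{t+1} - y_{t+1} = (x_t - y_t) - \gamma g_{i_t}(x_{\pi_t}) + \gamma\sum_{(i,j)\in\cA_{t+1}\setminus\cA_t} g_i(x_j),
\end{equation*}
then apply the induction hypothesis $x_t - y_t = \gamma\sum_{(i,j)\in\cA_t\setminus\cR_t} g_i(x_j)$. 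The goal reduces to the set-bookkeeping identity
\begin{equation*}
    (\cA_t\setminus\cR_t) \;-\; \{(i_t,\pi_t)\} \;+\; (\cA_{t+1}\setminus\cA_t) \;=\; \cA_{t+1}\setminus\cR_{t+1}
\end{equation*}
at the level of the summed gradients, where I treat these as signed multiset contributions.

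The main obstacle — really the only subtle point — is making this set-level cancellation rigorous and matching the paper's indexing conventions. Two facts do the work. First, the newly completed job satisfies $(i_t,\pi_t)\in\cR_{t+1}\setminus\cR_t$ and $(i_t,\pi_t)\in\cA_t\setminus\cR_t$ (it was assigned earlier and is now received), so removing it from $\cA_t\setminus\cR_t$ accounts for the $-\gamma g_{i_t}(x_{\pi_t})$ term. Second, since $\cR_{t+1}=\cR_t\cup\{(i_t,\pi_t)\}$ and $\cA_{t+1}=\cA_t\cup\{(k_t,\alpha_t)\}$ (the new assignment), and since $\cR_{t+1}\subseteq\cA_{t+1}$, one verifies
\begin{equation*}
    \cA_{t+1}\setminus\cR_{t+1} = \big((\cA_t\setminus\cR_t)\setminus\{(i_t,\pi_t)\}\big)\cup(\cA_{t+1}\setminus\cA_t).
\end{equation*}
Here I must be careful about the off-by-one in the assignment timing: the paper's Algorithm~\ref{alg:pseudocode} updates $\cA_{t+2}=\cA_{t+1}\cup\{(k_{t+1},\alpha_{t+1})\}$, so the set $\cA_{t+1}\setminus\cA_t$ contains precisely the job assigned in the previous iteration. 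I would confirm these three disjoint-union relations hold as genuine set identities (the removed received job lies in $\cA_t\setminus\cR_t$, and the freshly assigned job lies outside $\cR_{t+1}$), after which summing the corresponding gradients gives the claimed formula for $x_{t+1}-y_{t+1}$ and closes the induction.
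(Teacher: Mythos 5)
Your proposal is correct and follows essentially the same route as the paper: induction on $t$, subtracting the two update rules, and closing the argument with the set identity $\cA_{t+1}\setminus\cR_{t+1} = \big((\cA_t\setminus\cR_t)\setminus\{(i_t,\pi_t)\}\big)\cup(\cA_{t+1}\setminus\cA_t)$. If anything, you are slightly more careful than the paper about the $t=0\to t=1$ transition, where $\cA_1\setminus\cA_0$ contains all initial assignments rather than a single job.
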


Based on the sequence $\{y_t\}_{t=0}^T$ we construct the restarting sequence $\{\wtilde{y}_t\}_{t=0}^T$ similarly to \eqref{eq:real_virtual_seq}

\begin{equation}\label{eq:virtual_virtual_seq}
    \wtilde{y}_1 = y_1, \quad
    \wtilde{y}_{t+1} =
    \begin{cases}
        \wtilde{y}_t - \gamma\nabla f(x_{t}) & \text{ if } t  \neq 0\mod \tau, \\
        y_{t + 1} & \text{ if } t = 0\mod \tau.
    \end{cases}\notag
\end{equation}
Analogously, we can define the sequence correlation and the delay variance for the sequence of assigned gradient indices $\{k_t\}_{t=1}^{T}$. We denote them by $\{\wtilde{\sigma}_{k, \tau}^2\}_{t=0}^{\lfloor\frac{T}{\tau}\rfloor}$ and $\wtilde{\nu}^2$ respectively. Based on this, we present our second theorem.

\begin{restatable}{theorem}{theoremfourth}
\label{th:theorem4} Let Assumptions \ref{asmp:smoothness}, \ref{asmp:bound_var}, and \ref{asmp:bound_grad} hold. Let the stepsize $\gamma$ satisfies inequalities $6L\gamma \le 1$ and $30L\gamma\max\{\wtau_{\max},\tau_C\}\le 1$, the correlation period $\tau = \left\lfloor\frac{1}{30L\gamma}\right\rfloor$, quantities $\{\wtilde\sigma_{k,\tau}^2\}_{k=0}^{\lfloor T/\tau\rfloor}$ and $\wtilde\nu^2$ are finite. Then
\begin{equation}\label{eq:theorem4}
    \squeeze\E{\|\nabla f(\hat{x}_{T})\|^2} 
    \le \cO\left(
    \frac{F_1}{\gamma T} 
    + L\gamma\sigma^2 +L^2\gamma^2\wtilde{\Phi}
    \squeeze +\; L^2\gamma^2(\tau_C-1)^2G^2
    \right),
\end{equation}
where $F_1 \eqdef f(y_1) - f^*,\; \wtilde{\Phi} \eqdef \frac{1}{\lfloor\nicefrac{T}{\tau}\rfloor}\sum\limits_{k=0}^{\lfloor \frac{T}{\tau}\rfloor} \wtilde{\sigma}_{k,\tau}^2 + \frac{1}{T}\wtilde{\nu}^2$, and $\hat{x}_T$ is chosen uniformly at random from $\{x_1,\dots,x_T\}$.
\end{restatable}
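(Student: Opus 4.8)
The plan is to mirror the argument behind \cref{th:theorem3}, exploiting the fact that the assigning-process virtual sequence obeys $y_{t+1}=y_t-\gamma g_{k_t}(x_{\alpha_t})$, which has \emph{exactly} the same structure as the real update \eqref{eq:real_iterates_update} with the received order $\{i_t\}$ and delays $\{\tau_t\}$ replaced by the assigned order $\{k_t\}$ and delays $\{\wtau_t\}$. The restarting sequence $\{\wtilde y_t\}$ is built from $\{y_t\}$ in the same way $\{\wtilde x_t\}$ was built from $\{x_t\}$, and on non-restart steps it descends along the full gradient $\nabla f(x_t)$ evaluated at the \emph{real} iterate. First I would apply $L$-smoothness to $\wtilde y_{t+1}=\wtilde y_t-\gamma\nabla f(x_t)$ and, after the polarization identity $\langle a,b\rangle=\tfrac12(\|a\|^2+\|b\|^2-\|a-b\|^2)$, reach a per-step inequality of the form
\[
 f(\wtilde y_{t+1}) \le f(\wtilde y_t) - \tfrac{\gamma}{2}\|\nabla f(x_t)\|^2 - \tfrac{\gamma}{2}\|\nabla f(\wtilde y_t)\|^2 + \tfrac{\gamma L^2}{2}\|\wtilde y_t - x_t\|^2 + \tfrac{L\gamma^2}{2}\|\nabla f(x_t)\|^2,
\]
in which $6L\gamma\le1$ renders the final term harmless and the drift $\|\wtilde y_t - x_t\|^2$ is the quantity that must be controlled.

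The crucial decomposition is $\|\wtilde y_t - x_t\|^2 \le 2\|\wtilde y_t - y_t\|^2 + 2\|y_t - x_t\|^2$, which separates the two sources of error. The first piece, $\|\wtilde y_t - y_t\|^2$, telescopes within a correlation period $[k\tau,(k+1)\tau)$ (using $\wtilde y_{k\tau}=y_{k\tau}$ at the restarts) into $\gamma^2\bigl\|\sum_s (g_{k_s}(x_{\alpha_s}) - \nabla f(x_s))\bigr\|^2$; splitting each summand into a stochastic part, a heterogeneity/ordering part, and a delay part, and bridging $x_{\alpha_s}\to x_{k\tau}$ and $\nabla f(x_{\alpha_s})\to\nabla f(x_s)$ through $L$-smoothness along the real iterates, reproduces verbatim the bookkeeping of \cref{th:theorem3} and yields the $\wtilde\sigma_{k,\tau}^2$ and $\wtilde\nu^2$ contributions assembled in $\wtilde\Phi$. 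The genuinely new piece is the in-flight gap $\|y_t - x_t\|^2$, which by the preceding Lemma equals $\gamma^2\bigl\|\sum_{(i,j)\in\cA_t\setminus\cR_t} g_i(x_j)\bigr\|^2$, a sum over at most $\tau_C$ assigned-but-unreceived jobs.

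To bound this gap I would write $g_i(x_j)=\nabla f_i(x_j)+(g_i(x_j)-\nabla f_i(x_j))$. The deterministic parts are each bounded in norm by $G$ via \cref{asmp:bound_grad}, so by the triangle inequality their sum contributes at most $(\tau_C-1)^2 G^2$ (the count $\tau_C-1$ rather than $\tau_C$ reflecting the job just received at step $t$); multiplied by $\tfrac{\gamma L^2}{2}$, summed over $T$ steps and renormalized by $2/(\gamma T)$, this is precisely what produces the extra term $L^2\gamma^2(\tau_C-1)^2 G^2$ in \eqref{eq:theorem4}. The stochastic parts are zero-mean and, conditioned on the points at which they are evaluated, contribute a variance of order $\tau_C\sigma^2$; after the same multiplication by $\gamma^3 L^2$ and using $30L\gamma\tau_C\le1$, this is dominated by the $L\gamma\sigma^2$ term, which is why no $\sigma^2\tau_C$ term survives.

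Finally I would telescope the per-step inequality across each period, stitch the periods together via $\wtilde y_{k\tau}=y_{k\tau}$ (so the initial value is $f(y_1)$, giving $F_1=f(y_1)-f^*$), sum over $t=1,\dots,T$, and divide by $\gamma T/2$; the choices $\tau=\lfloor 1/(30L\gamma)\rfloor$ and $30L\gamma\max\{\wtau_{\max},\tau_C\}\le1$ guarantee the drift coefficients are small enough to be absorbed, leaving \eqref{eq:theorem4}. The main obstacle, and the place demanding the most care, is the stochastic in-flight term: the set $\cA_t\setminus\cR_t$ is itself random, since it records which jobs have \emph{not} yet returned, and this can be correlated with worker speeds and hence with the noise, so the zero-mean/independence argument for $\sum_{(i,j)}(g_i(x_j)-\nabla f_i(x_j))$ cannot be taken for granted. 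This is exactly the subtlety flagged in the footnote on \cite{koloskova2023shuffle}, and the specific choice of correlation period $\tau=\Theta(1/(L\gamma))$ is what permits the conditional expectations over a period to be taken in the correct order so that this term is handled rigorously.
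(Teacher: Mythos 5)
Your proposal follows essentially the same route as the paper's proof: the descent inequality on the restarted virtual sequence $\{\wtilde y_t\}$, the split of the drift into $\|\wtilde y_t - y_t\|^2$ plus the in-flight gap $\|y_t-x_t\|^2$ bounded by $\gamma^2(\tau_C-1)^2G^2+\gamma^2(\tau_C-1)\sigma^2$ (with the $\sigma^2$ part absorbed into $L\gamma\sigma^2$ via the stepsize restriction), and the Theorem~\ref{th:theorem3} machinery for the remaining $\wtilde\Phi$ term. The only cosmetic difference is that the paper routes the delay control $x_{\alpha_j}\to x_j$ through the virtual iterates $y$ (so that $\wtau_{\max}$ governs $\|y_t-y_{\alpha_t}\|^2$ and the $x$--$y$ gaps are handled separately), rather than directly along the real iterates, but this does not change the argument.
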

The proof of Theorem~\ref{th:theorem4} can be found in  Section~\ref{sec:proof_theorem4}. It gives similar convergence behaviour as Theorem~\ref{th:theorem3}, but for the assigning sequence $\{k_t\}_{t=1}^{T}$. The difference is in the last $L^2\gamma^2(\tau_C-1)^2G^2$ term which is not present in \eqref{eq:theorem3}, and it appears because of the bound between real and virtual iterates.

We emphasize that if $\tau_C=1$, i.e., there is only one active job at each iteration, then the fourth term is \emph{zero}. For example, this holds for single-node algorithms such as SGD with random reshuffling or shuffle once. Therefore, we recover the same rates for those methods. Besides, we highlight the fact that the rate \eqref{eq:theorem4} admits the absence of the dependency on $\tau_{\max}$ in particular special cases; the delays affect the rate through averaged delays only. 

Moreover, the term $\wtilde{\Phi}$ depends on sequences $\{k_t\}$ and $\{\alpha_t\}$ which are fully under the control of the server. Therefore, the server may change its job assignment strategy if it observes that the current one leads to poor performance. In Section~\ref{sec:special_cases_real_theorem4} we demonstrate precisely which assignment strategies can be used.

\section{Experiments}

\begin{figure*}[t]
\centering
        \begin{tabular}{cccc}
            \hspace{3mm} {\tiny (a) fixed}  &
            \hspace{3mm}{\tiny (b) normal}&
            \hspace{3mm}{\tiny (c) poisson} &
            \hspace{3mm}{\tiny (d) uniform}
            \\
            \includegraphics[width=0.24\linewidth]{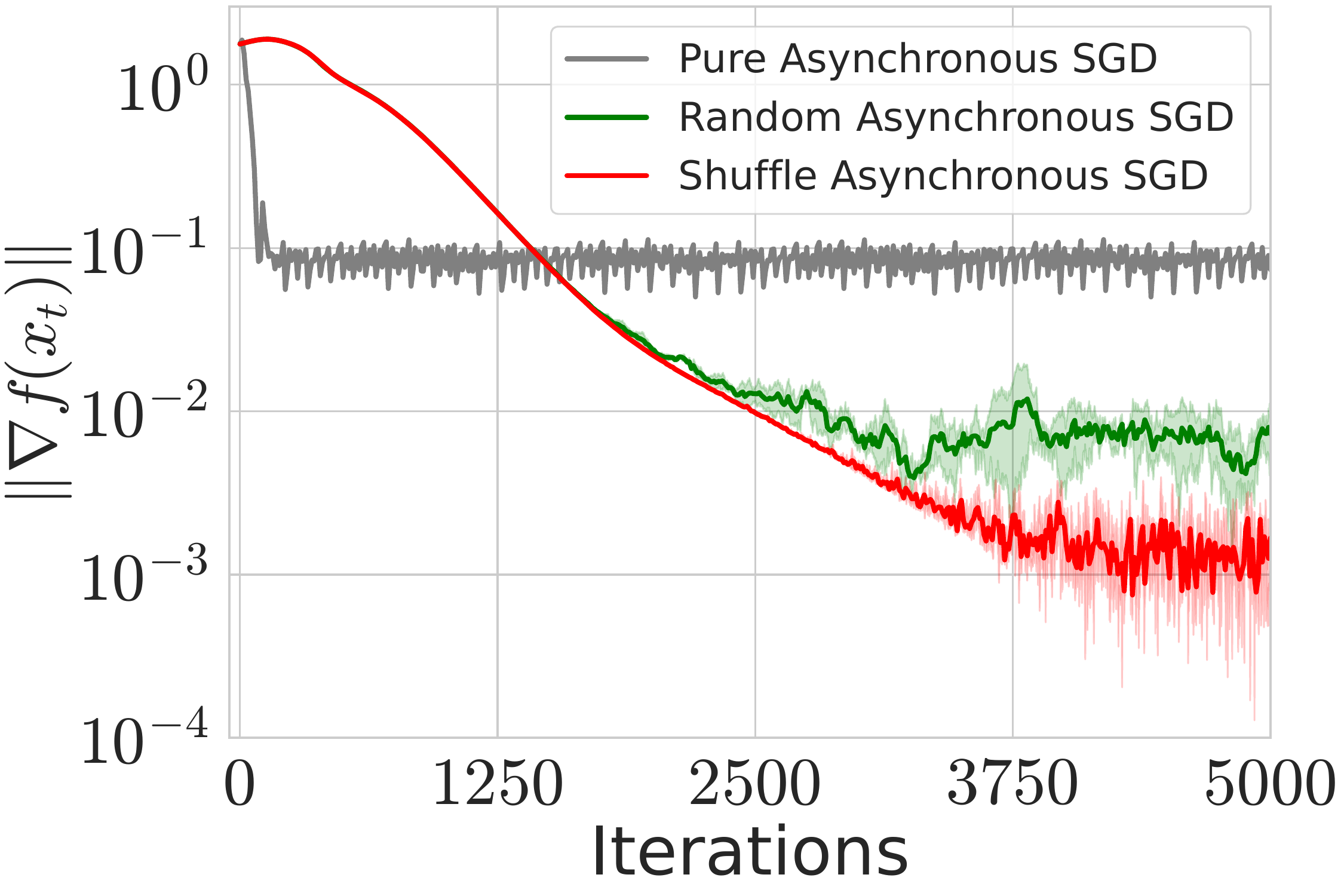} & \hspace{-5mm}
            \includegraphics[width=0.24\linewidth]{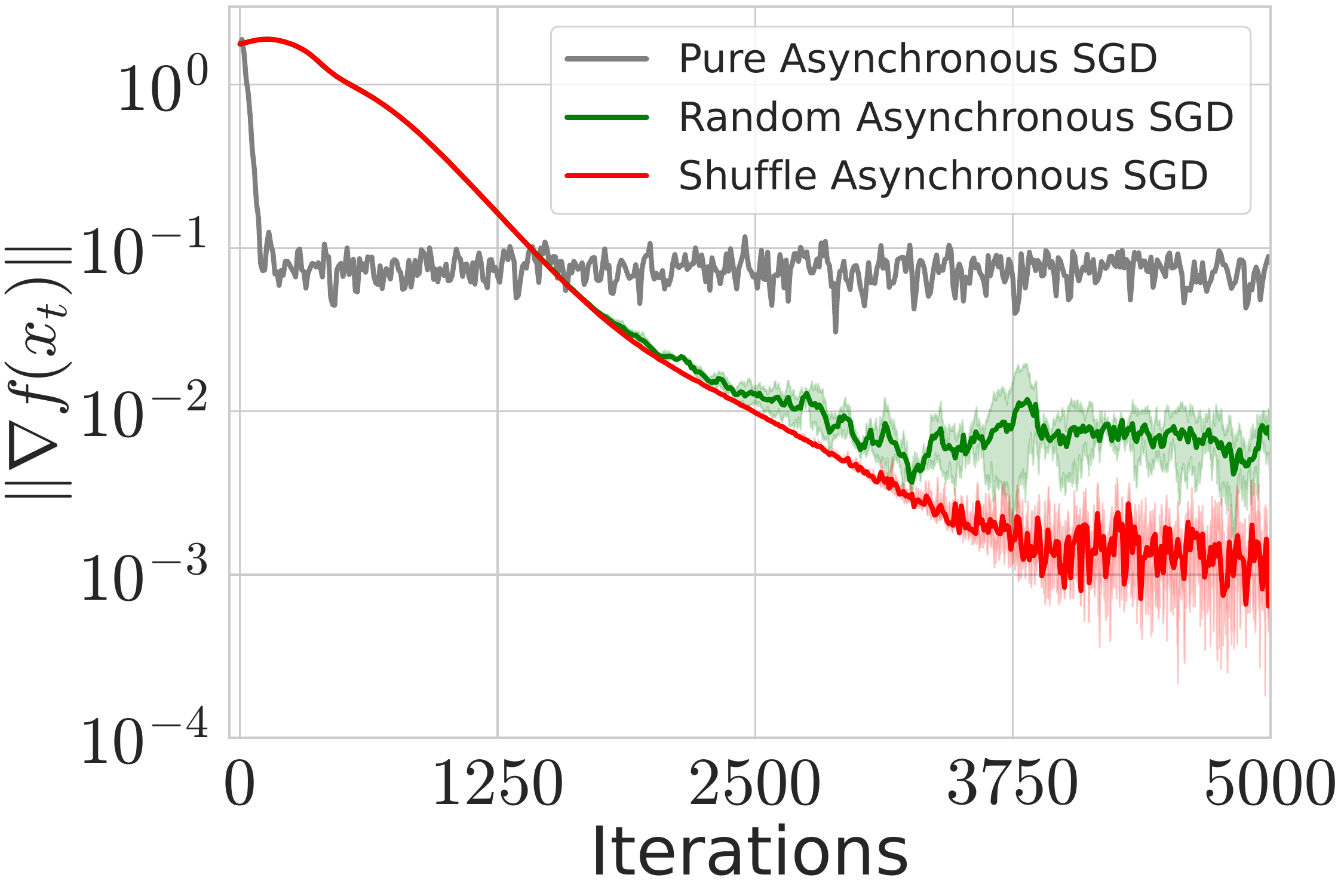} & \hspace{-5mm}
            \includegraphics[width=0.24\linewidth]
            {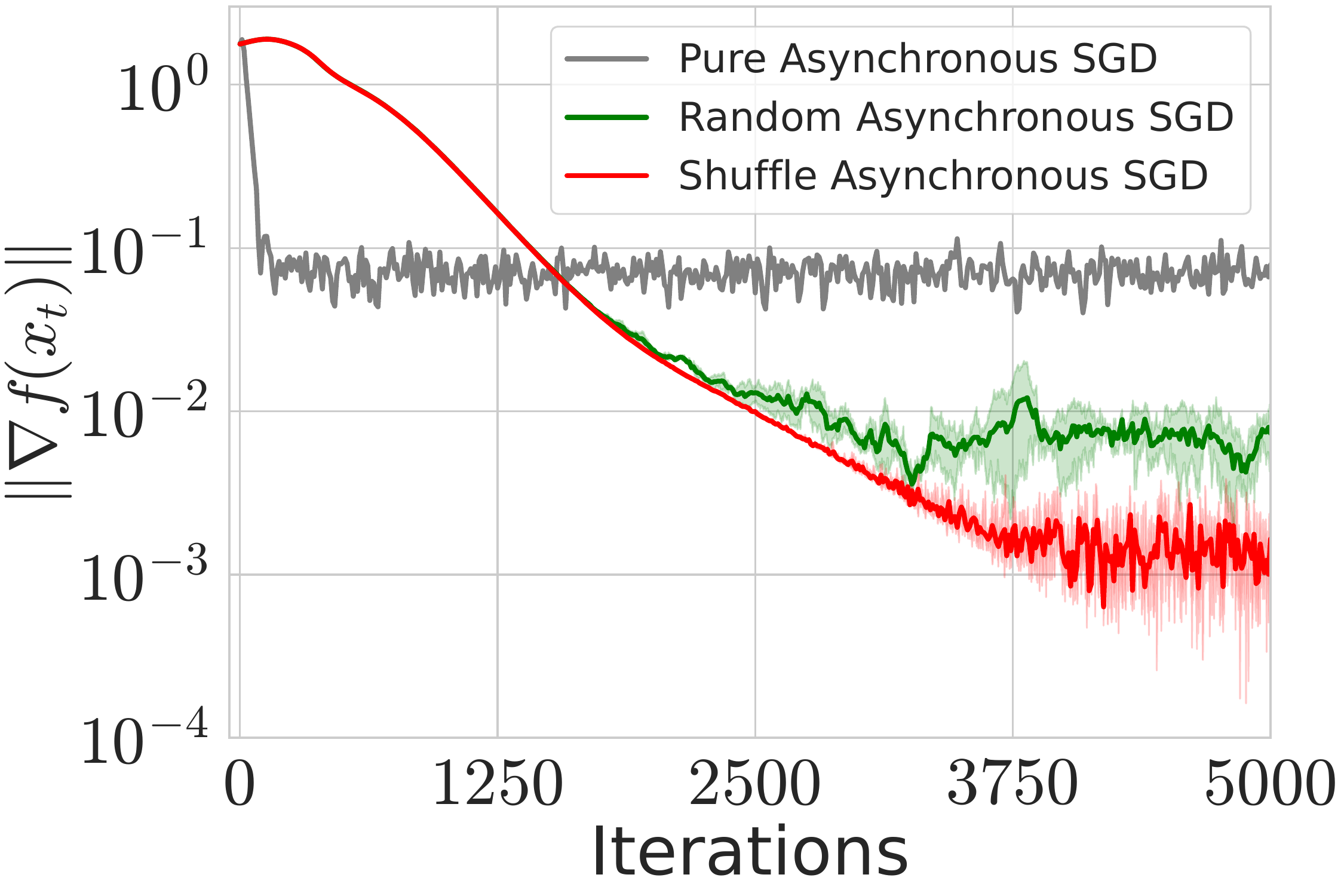} & \hspace{-5mm}
            \includegraphics[width=0.24\linewidth]{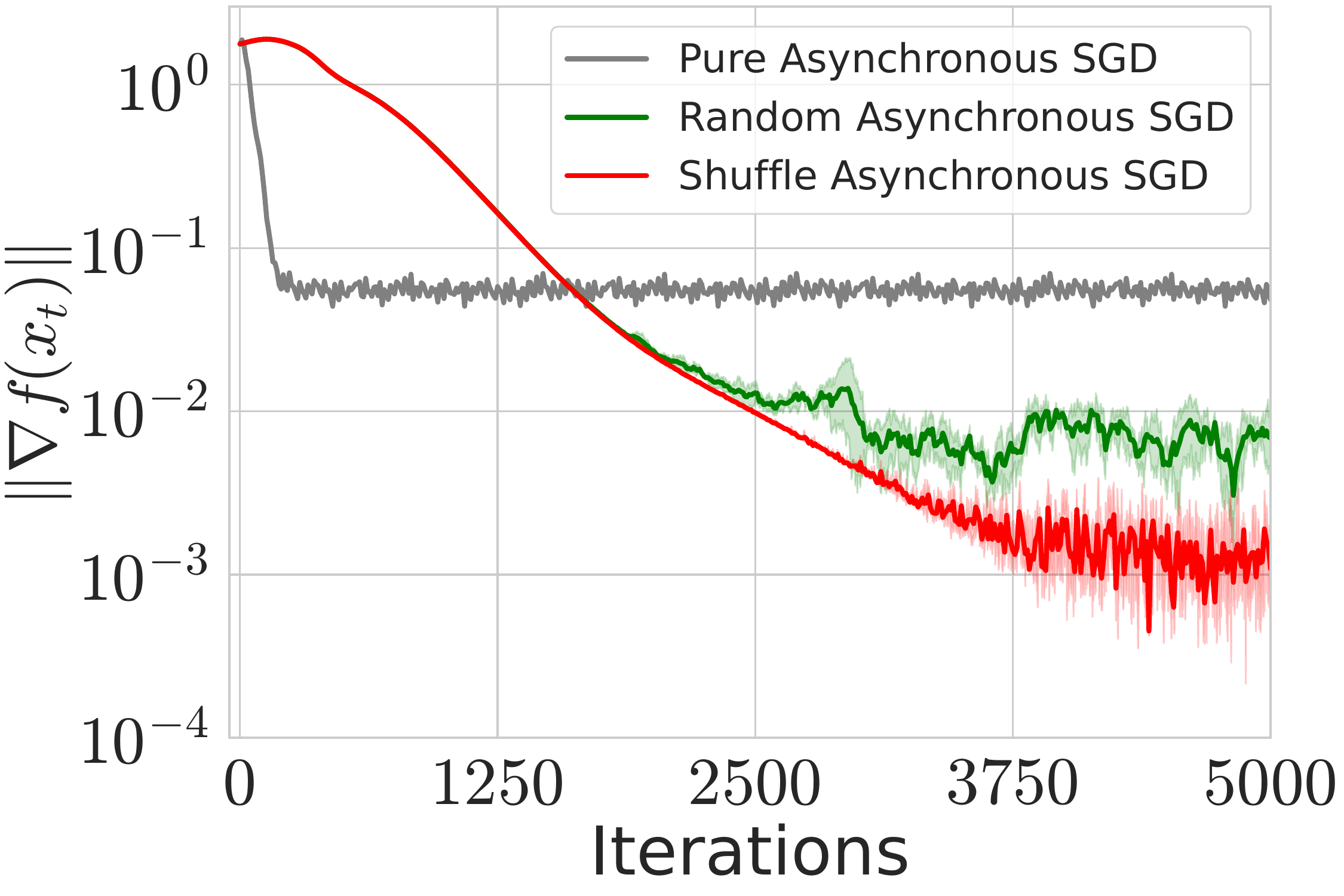} \\
            \hspace{3mm} {\tiny (a) fixed}  &
            \hspace{3mm}{\tiny (b) normal}&
            \hspace{3mm}{\tiny (c) poisson} &
            \hspace{3mm}{\tiny (d) uniform}
            \\
            \includegraphics[width=0.24\linewidth]{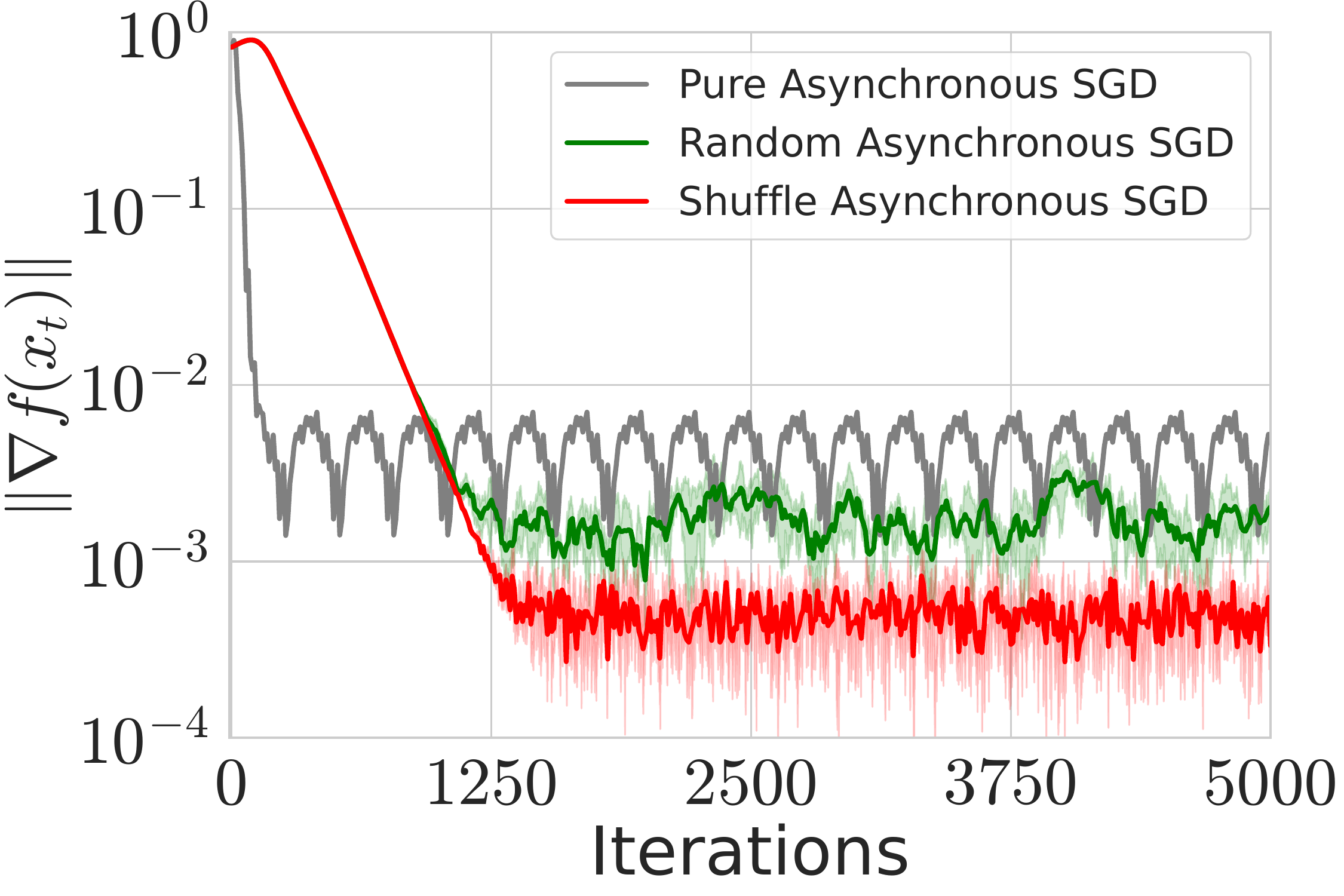} & \hspace{-5mm}
            \includegraphics[width=0.24\linewidth]{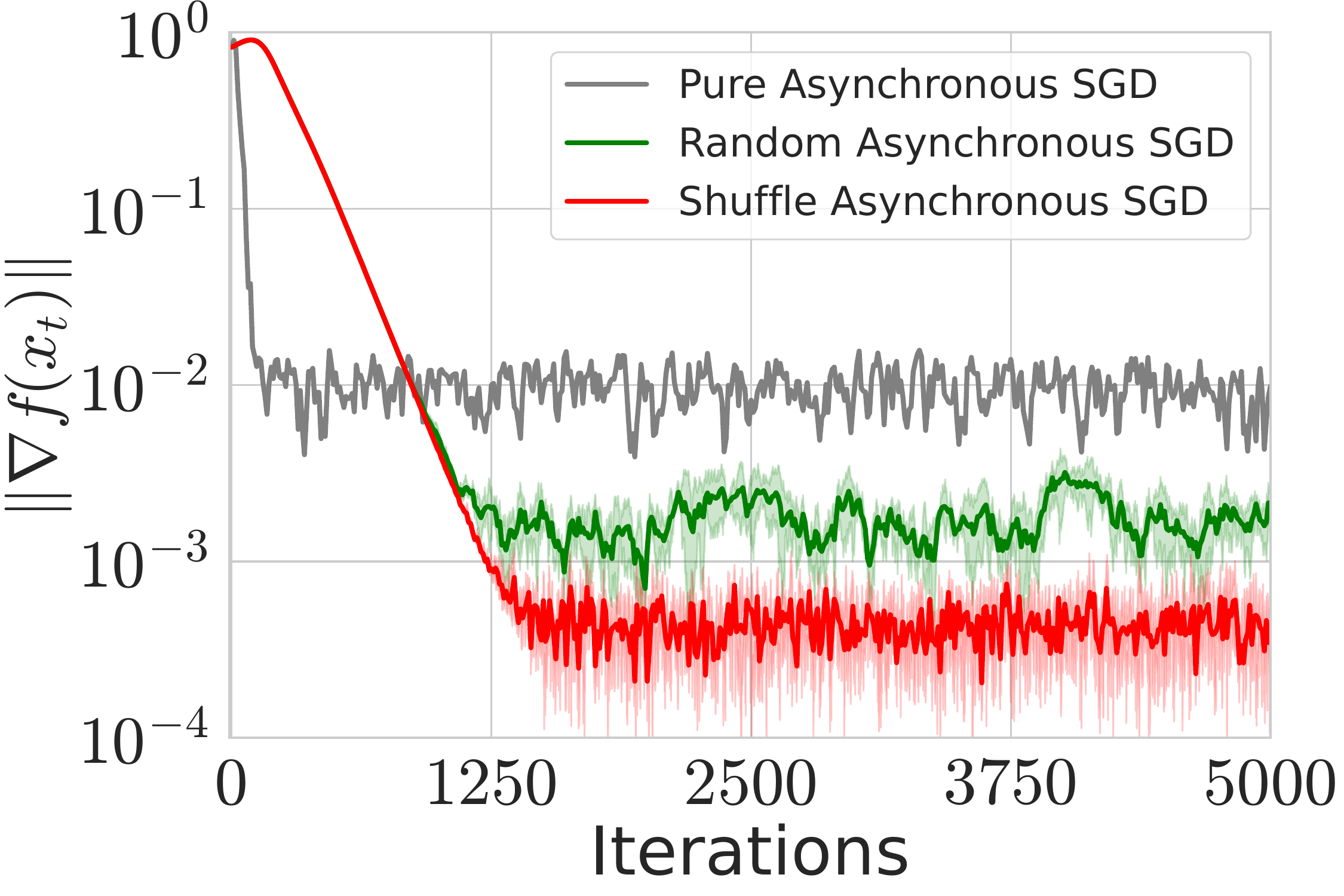} & \hspace{-5mm}
            \includegraphics[width=0.24\linewidth]
            {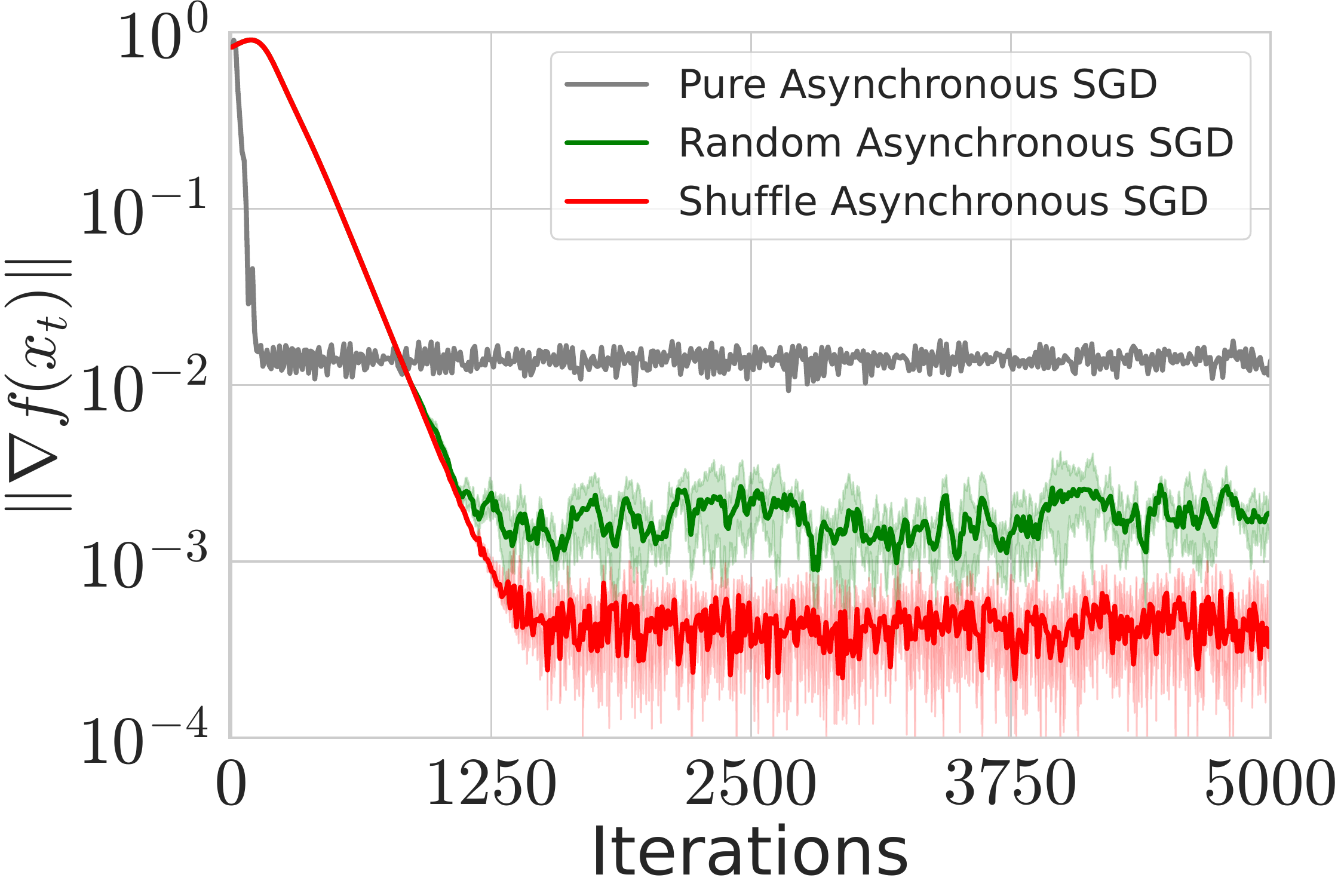} & \hspace{-5mm}
            \includegraphics[width=0.24\linewidth]{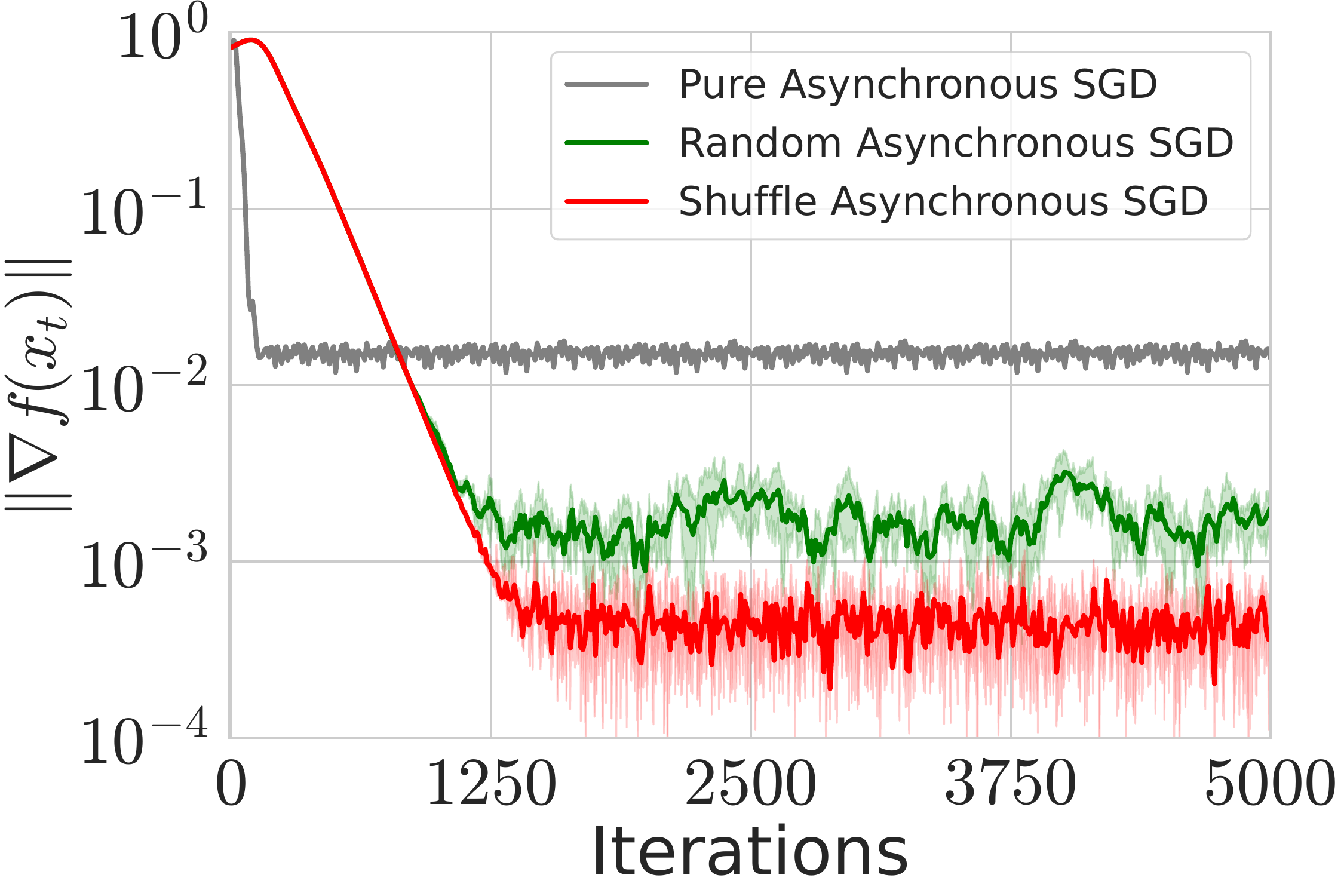} \\
        \end{tabular}             
    \caption{Comparison of pure, random, and shuffled asynchronous SGD with tuned stepsizes and full gradient computation on {\tt w7a} (first row) and {\tt phishing} (second row) datasets with various delay patterns. Here $n=10$, $\lambda=0.1$, $d=300,$ $m=2505$ for {\tt w7a} dataset and $n=10, \lambda=0.1$, $d=68,$ $m=1105$ for {\tt phishing} dataset.}
    \label{fig:w7a}
    \vspace{-10pt}
\end{figure*}

\begin{figure*}[t]
\centering
        \begin{tabular}{ccc}
            \qquad {\tiny fixed, $\text{Syn}(0.5,0.5)$}  &
            {\tiny fixed, $\text{Syn}(1,1)$}&
            {\tiny fixed, $\text{Syn}(1.5,1.5)$}\\
            \includegraphics[width=0.3\linewidth]{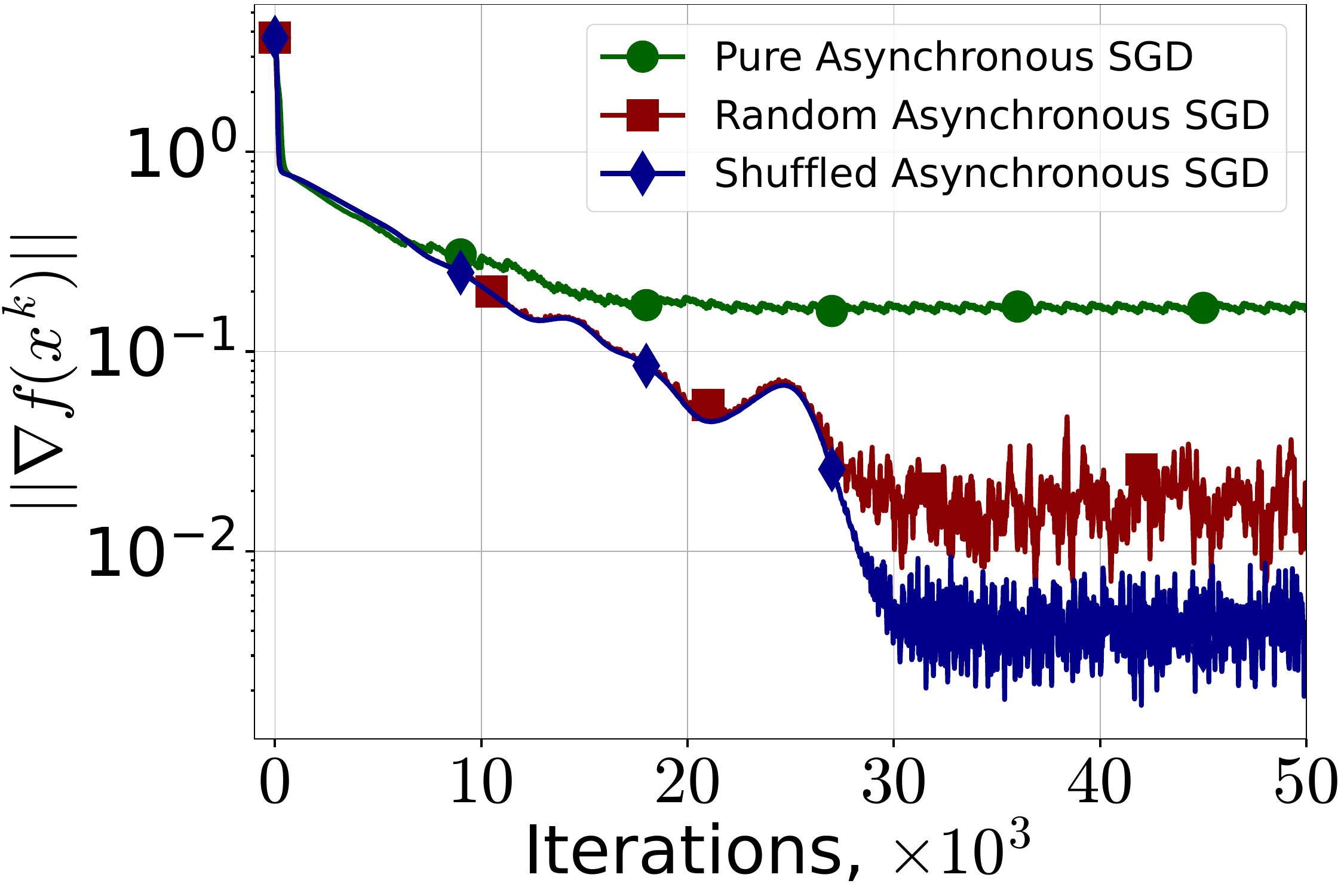} & \hspace{-5mm}
            \includegraphics[width=0.3\linewidth]{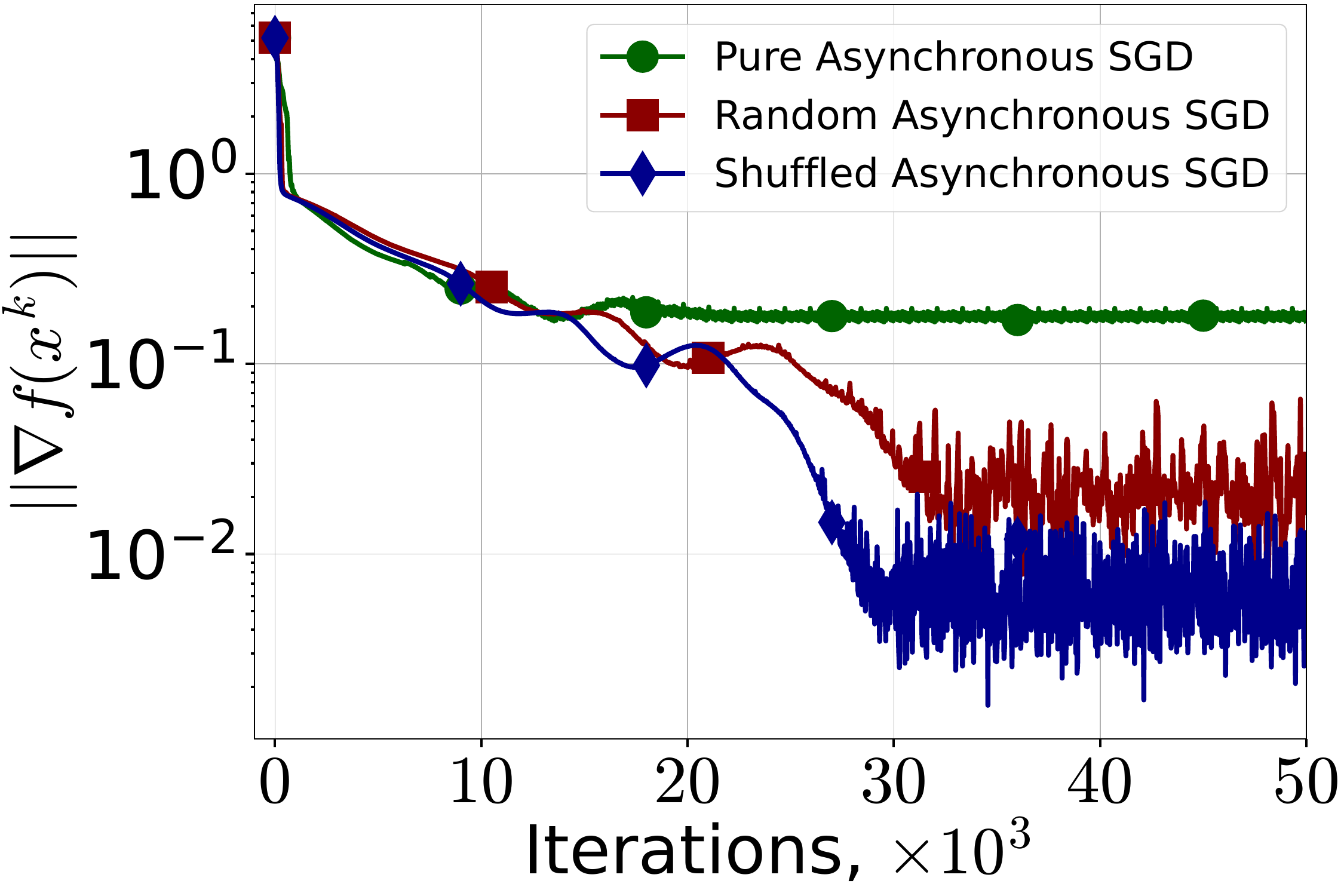} & \hspace{-5mm}
            \includegraphics[width=0.3\linewidth]{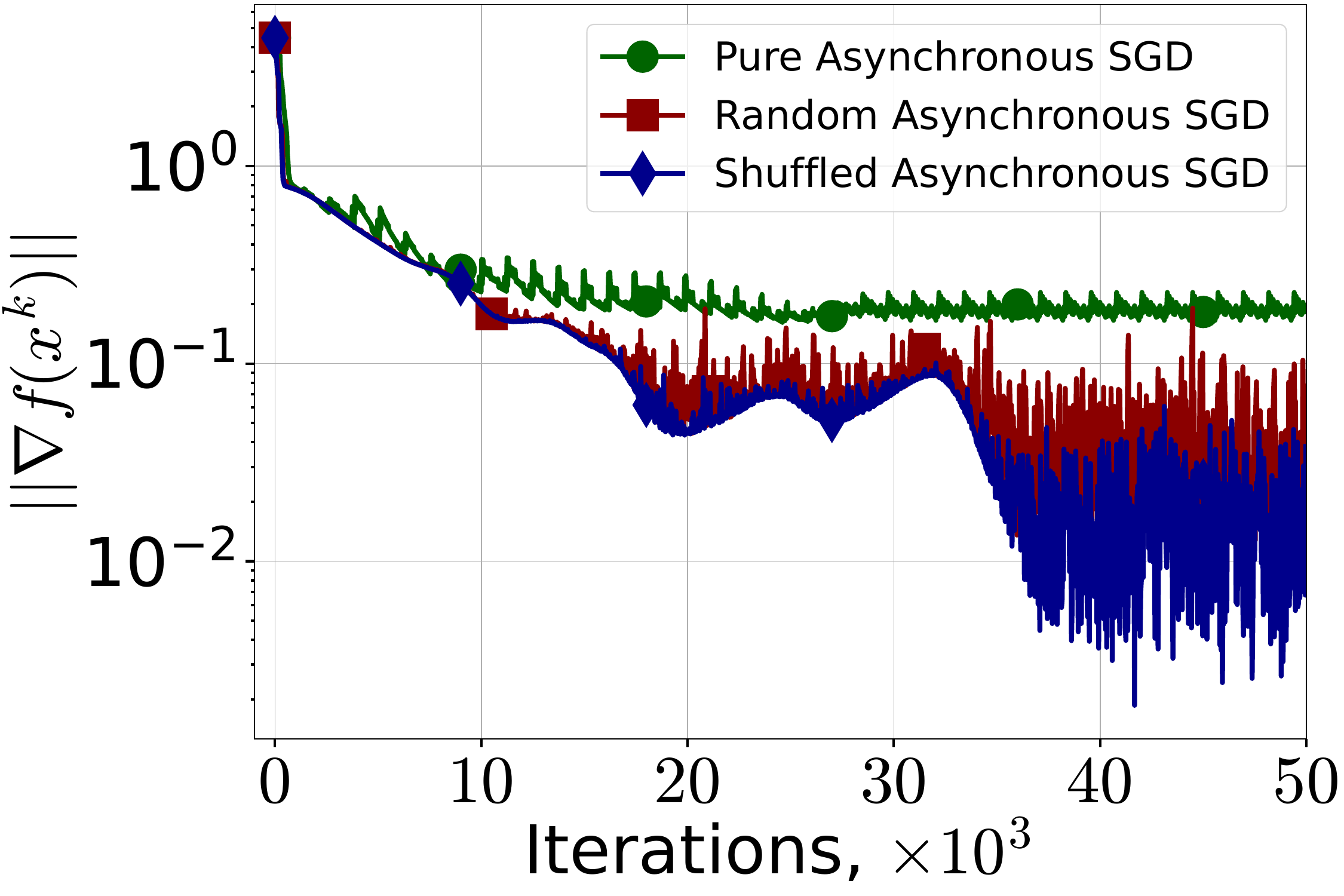}\\
            \qquad {\tiny $\text{Syn}(0.5,0.5)$}  &
            \hspace{5mm} {\tiny $\text{Syn}(1,1)$} &
            \hspace{5mm} {\tiny $\text{Syn}(1.5,1.5)$}\\
            \includegraphics[width=0.3\linewidth]{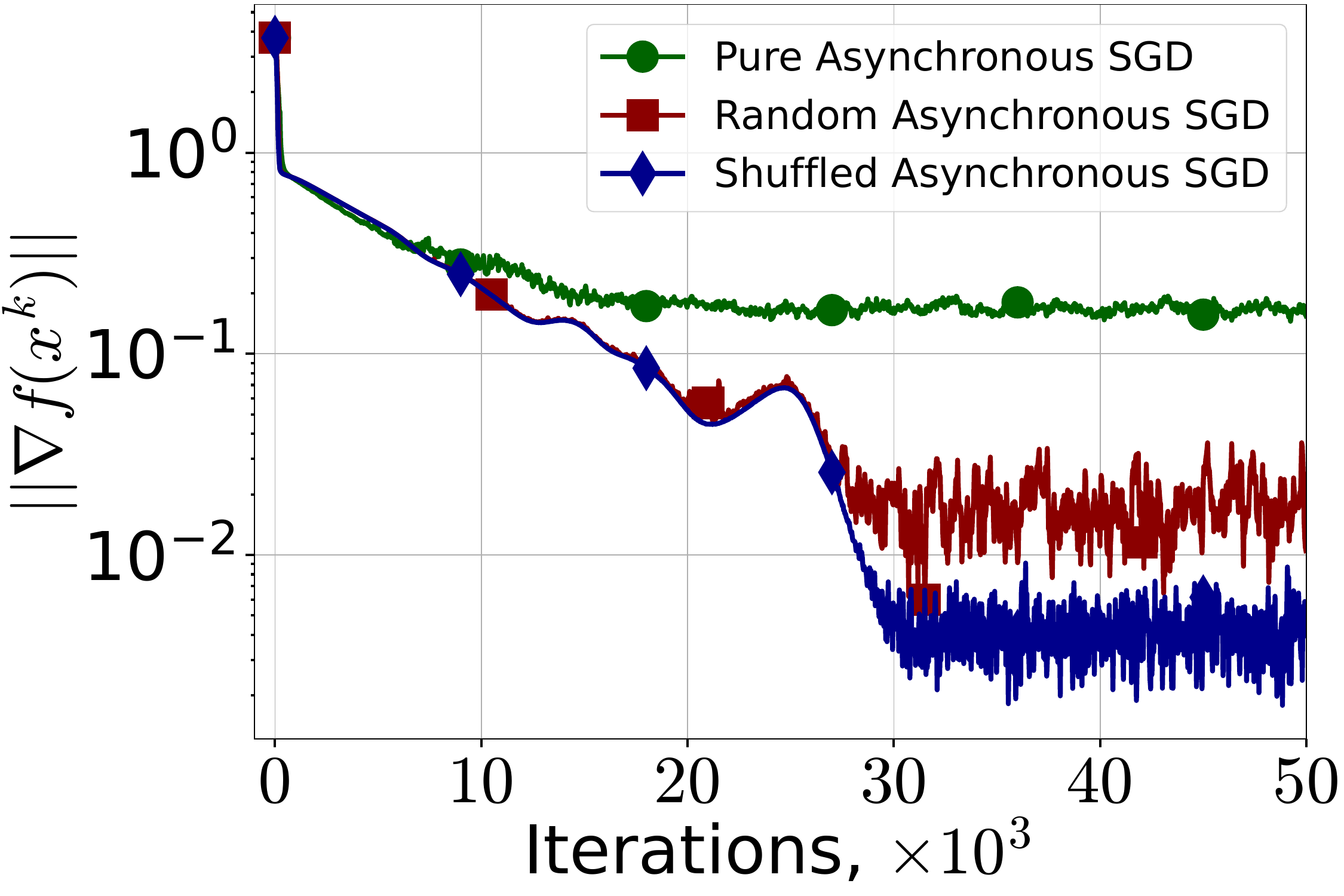} & 
            \includegraphics[width=0.3\linewidth]{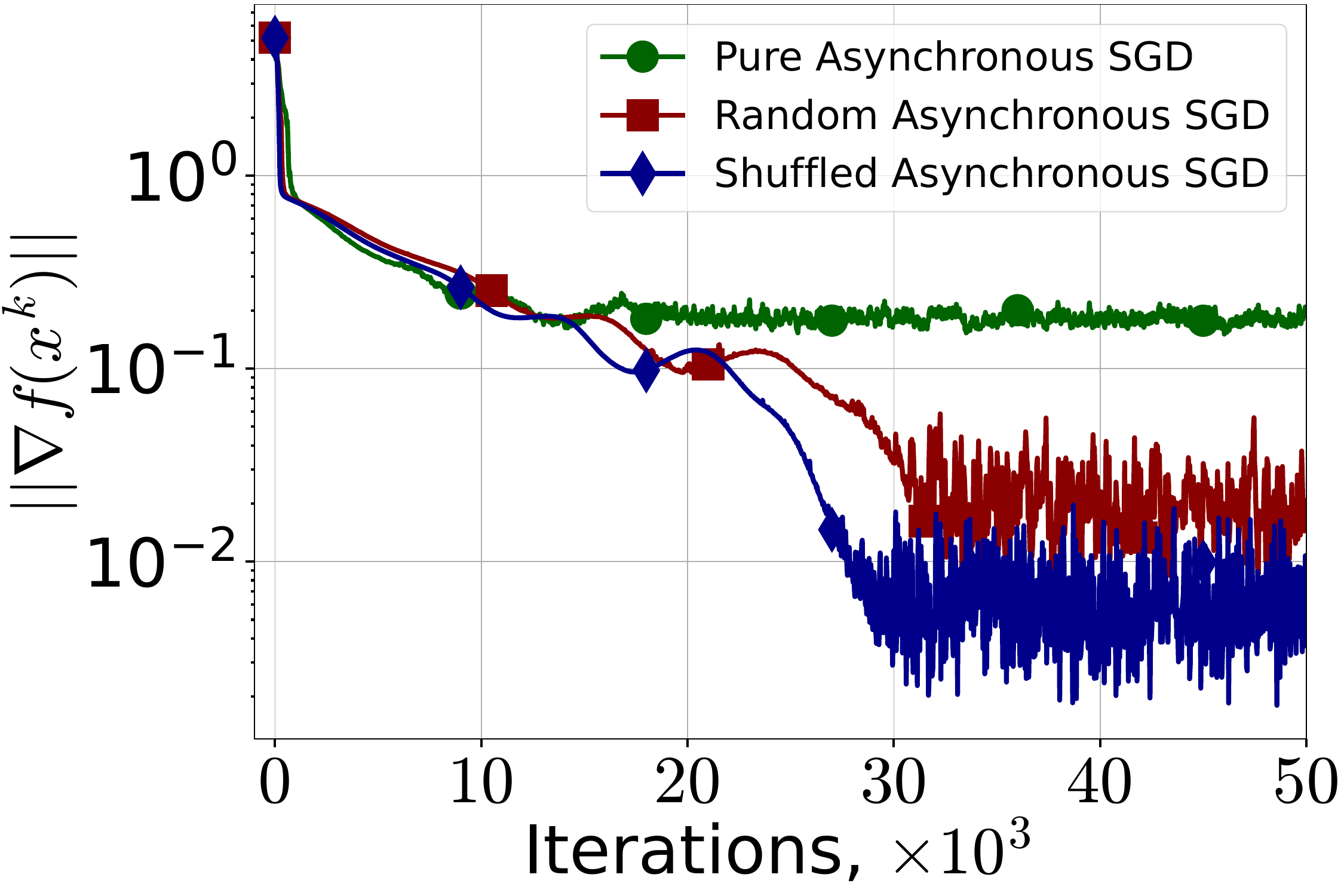} & 
            \includegraphics[width=0.3\linewidth]{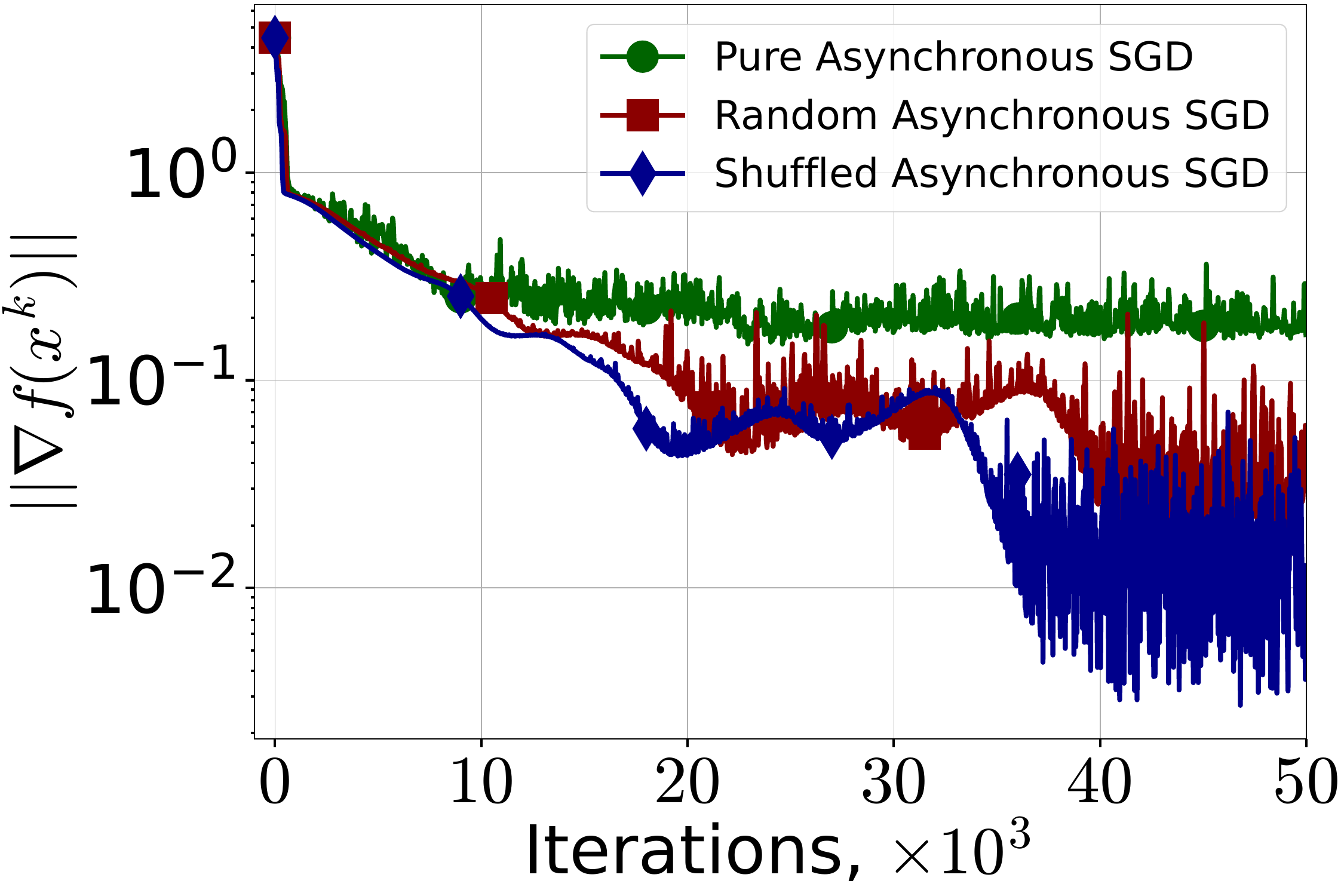}\\
            \hspace{10mm} {\tiny normal, $\text{Syn}(0.5,0.5)$}  &
            \hspace{5mm}{\tiny normal, $\text{Syn}(1,1)$}&
            \hspace{5mm}{\tiny normal, $\text{Syn}(1.5,1.5)$}\\
            \includegraphics[width=0.3\linewidth]{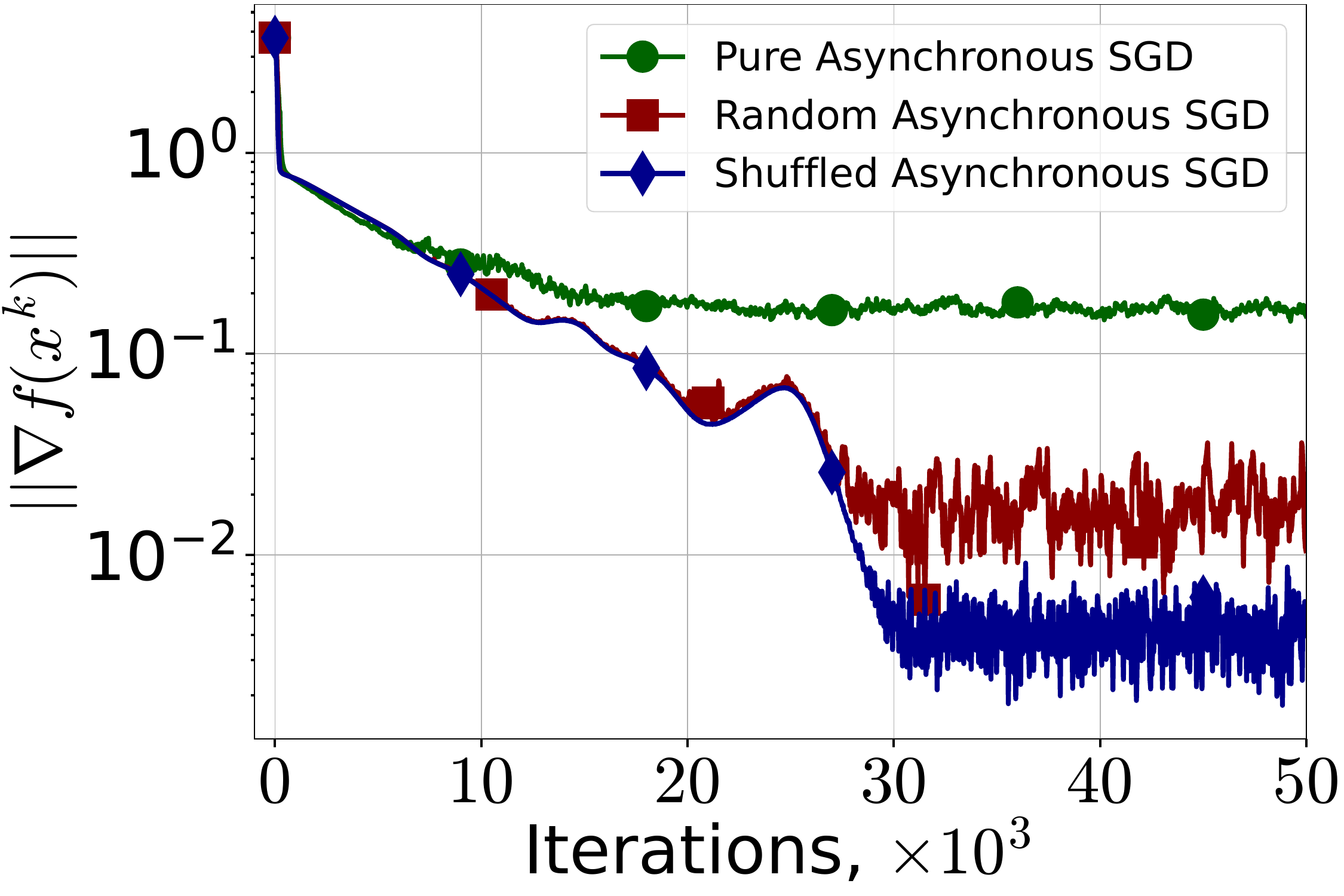} & \hspace{-5mm}
            \includegraphics[width=0.3\linewidth]{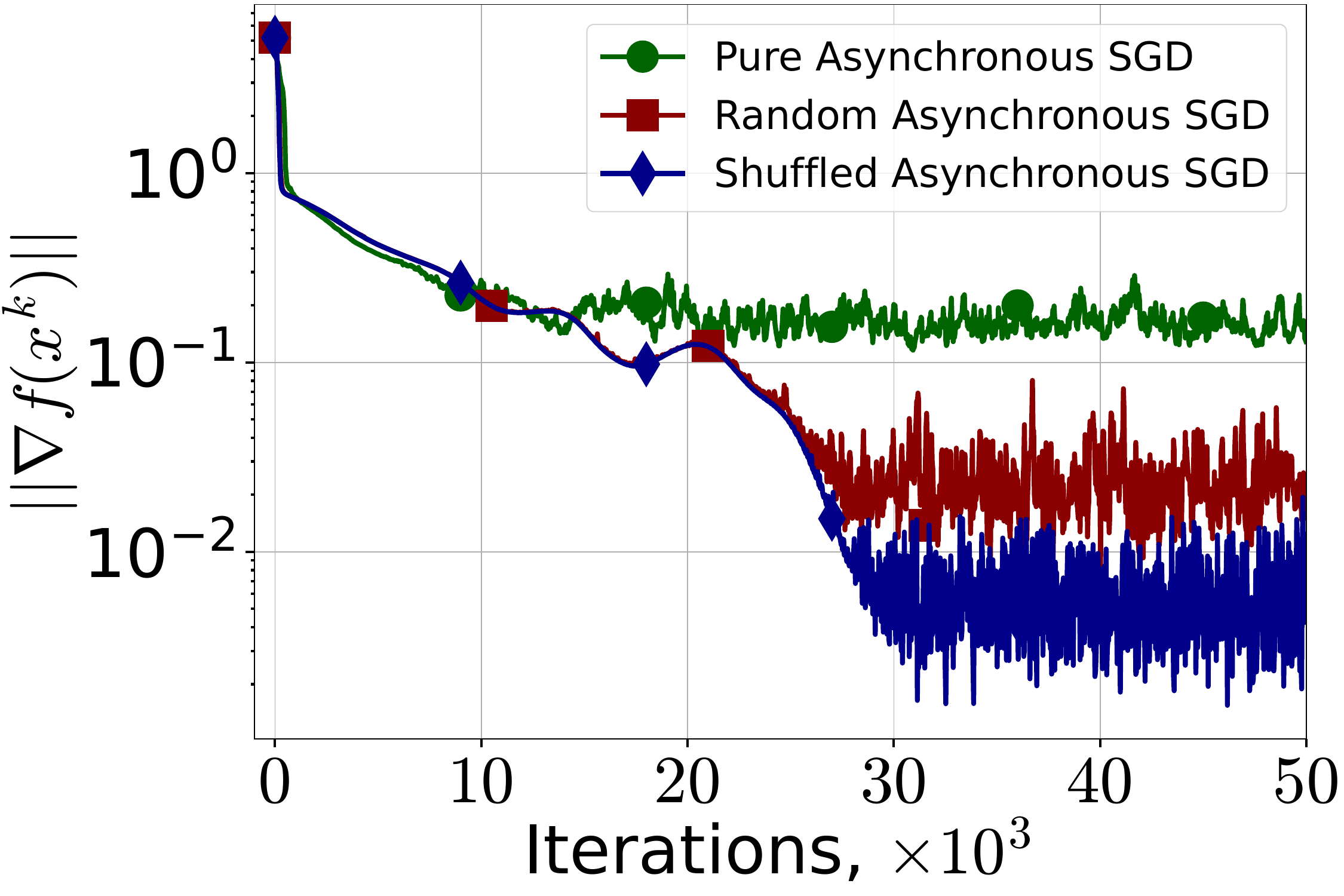} & \hspace{-5mm}
            \includegraphics[width=0.3\linewidth]{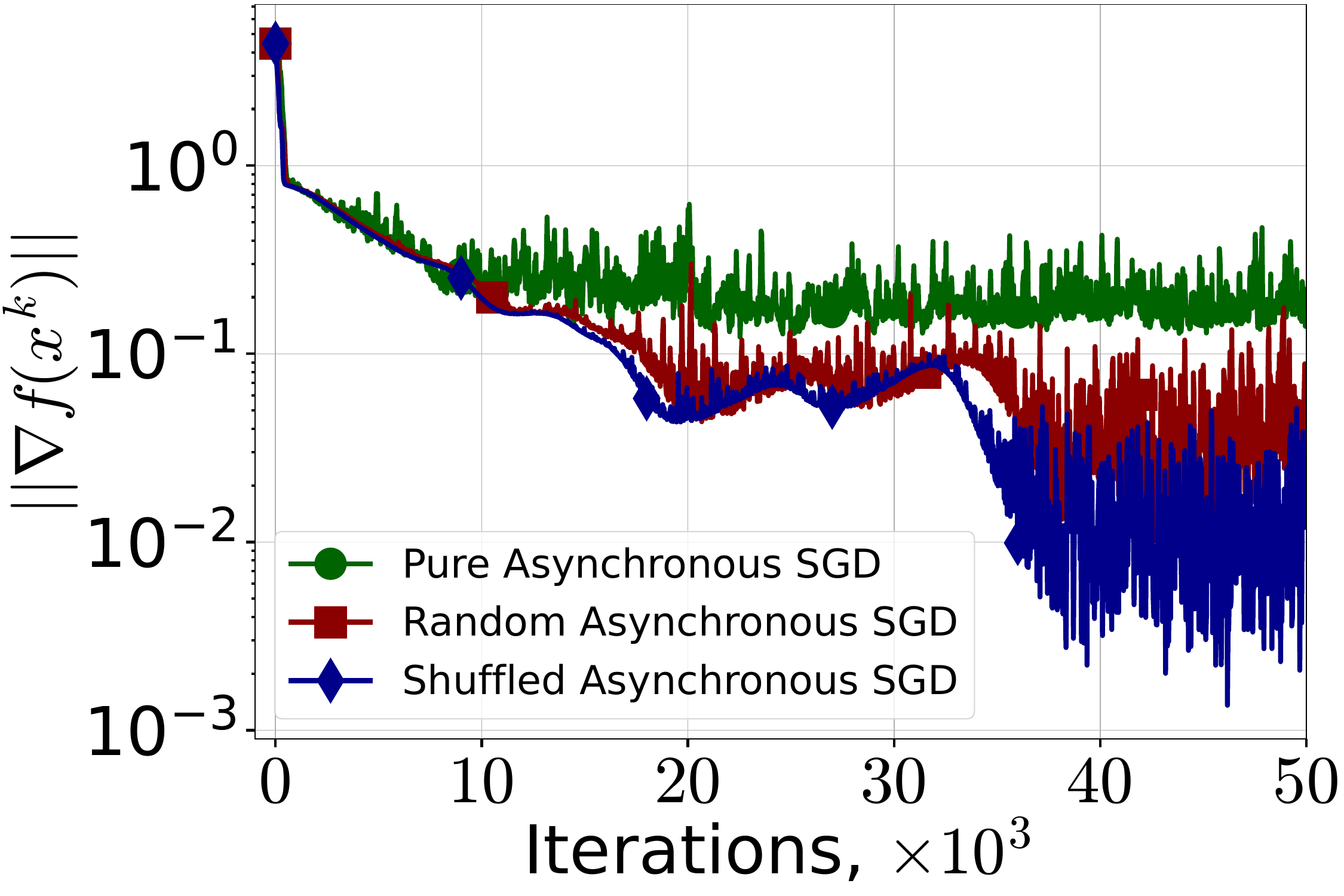}
        \end{tabular}             
    \caption{Comparison of pure, random, and shuffled asynchronous SGD with tuned stepsizes and stochastic gradient computation on synthetic datasets with different levels of heterogeneity and poisson delay pattern. Here $n=10$, $\lambda=0.1$, $d=300,$ $m=200,$ and batch size $\frac{m}{10}.$}
    \label{fig:main_stochastic_grad}
\end{figure*}

We conduct experiments on logistic regression with nonconvex regularization, namely, we consider the problem \eqref{eq:problem} with local functions
\[
    \squeeze f_i(x) = \frac{1}{m}\sum\limits_{j=1}^m \log\left(1+e^{-b_{ij}a_{ij}^\top x}\right) + \lambda\sum\limits_{j=1}^d \frac{x_j^2}{1+x_j^2}.
\]

We test the performance of pure, random, and shuffled versions of asynchronous SGD methods. We use the \texttt{w7a} and \texttt{phishing} datasets from the LibSVM library \cite{chang2011libsvm}, and additionally construct synthetic datasets following \cite{li2018federated, safaryan2022FedNL} with control of statistical heterogeneity through parameters $\alpha$ and $\beta.$\footnote{The corresponding dataset is denoted as $\text{Syn}(\alpha, \beta)$.} The detailed dataset generation and parameters setup are given in Appendix~\ref{apx:experiments}.

We assume that all workers start computing gradients in the beginning, i.e. $\tau_C = n.$ We fine-tune the stepsize and illustrate the performance with the fine-tuned stepsize only choosing the best one that achieves smaller gradient norm and fluctuations.

We conducted experiments in four different settings. In particular, each worker has a positive parameter $s_i$. The way these parameters are used is as follows

\begin{itemize}
\item  {\bf Fixed:} in this case, $r \equiv s_i$. Such fixed timing implies that the delay pattern is fixed as well.
    
\item  {\bf Poisson:} for each worker we sample $r \sim \text{Po}(s_i)$. 

\item {\bf Normal:} similarly, we sample $s \sim \cN(s_i, s_i),$ and then set $r = |s| + 1$.

\item {\bf Uniform:} in this case, we sample $r \sim \text{Uni}(0, s_i).$ 
\end{itemize}
The number $r$  indicates how many seconds worker $i$ spends to compute one gradient. Such timing patterns simulate possible workers' behavior in practice. 

First, we present the practical performance of the three aforesaid methods with full gradient computation in Figure~\ref{fig:w7a}. We observe that pure asynchronous SGD gets stuck when the gradient norm becomes about $10^{-1},$ while two other methods achieve better stationary points with smaller gradient norms. Moreover, shuffled asynchronous SGD starts converging faster than its random counterpart around $10^{-1.5}.$ Further, shuffled asynchronous SGD achieves a stationary point with roughly $10$ times smaller gradient norm. We observe similar results for {\tt phishing} dataset.

Next, we conduct the same set of experiments with stochastic gradients on synthetic datasets changing parameters $\alpha$ and $\beta$; see results in Figure~\ref{fig:main_stochastic_grad}.  We observe similar performance in stochastic setting. Random and shuffled asynchronous SGD are superior to purxe asynchronous SGD. Our proposed method always finds the stationary point with the smallest error, and in several cases, the convergence is faster than that of random asynchronous SGD.

These numerical results support our theoretical findings. First, pure asynchronous SGD does not converge to the optimum even when choosing small stepsize---it always gets stuck at the heterogeneity level. Moreover, random job assigning allows to get rid of this issue, and converge to a better stationary point. Finally, more accurate job assigning via worker shuffling allows for improving further the performance 

\section{Conclusion}

Our framework provides a deeper understanding of asynchronous SGD-type algorithms. In contrast to previous works, we derive a theoretical analysis for few method simultaneously with improved dependency on $\tau_{\max}.$ In addition, our theory provides the intuition of what affects the convergence of asynchronous-type algorithms, and how we can improve the convergence by balancing the workers' jobs. The analysis highlights two terms appearing in the rate due to delays and data ordering caused by asynchronicity. We show that different prior techniques and our new proposed approach, in fact, make those two terms in the rate smaller, and as a consequence, lead to better practical performance.  Moreover, we do not impose any assumption on the delay pattern. 
Finally, experiments support our theoretical findings. We both empirically and theoretically show that pure asynchronous SGD converges up to  heterogeneity level. Besides, we demonstrate that the proposed shuffled job assigning allows to balance more carefully the workers' contributions.

\section*{Acknowledgements}

Mher Safaryan has received funding from the European Union's Horizon 2020 research and innovation program under the Marie Skłodowska-Curie grant agreement No 101034413.

\bibliography{references}
\bibliographystyle{plain}

\clearpage
\appendix
\onecolumn
{
  \tableofcontents
}
\newpage

\section{Additional Experiments}\label{apx:experiments}

\subsection{Experimental Setup}

\begin{figure}[!t]
\centering
        \begin{tabular}{ccc}
            \hspace{8mm} {\tiny fixed, $\text{Syn}(0.5,0.5)$}  &
            \hspace{5mm}{\tiny fixed, $\text{Syn}(1,1)$}&
            \hspace{5mm}{\tiny fixed, $\text{Syn}(1.5,1.5)$}\\
            \includegraphics[width=0.3\linewidth]{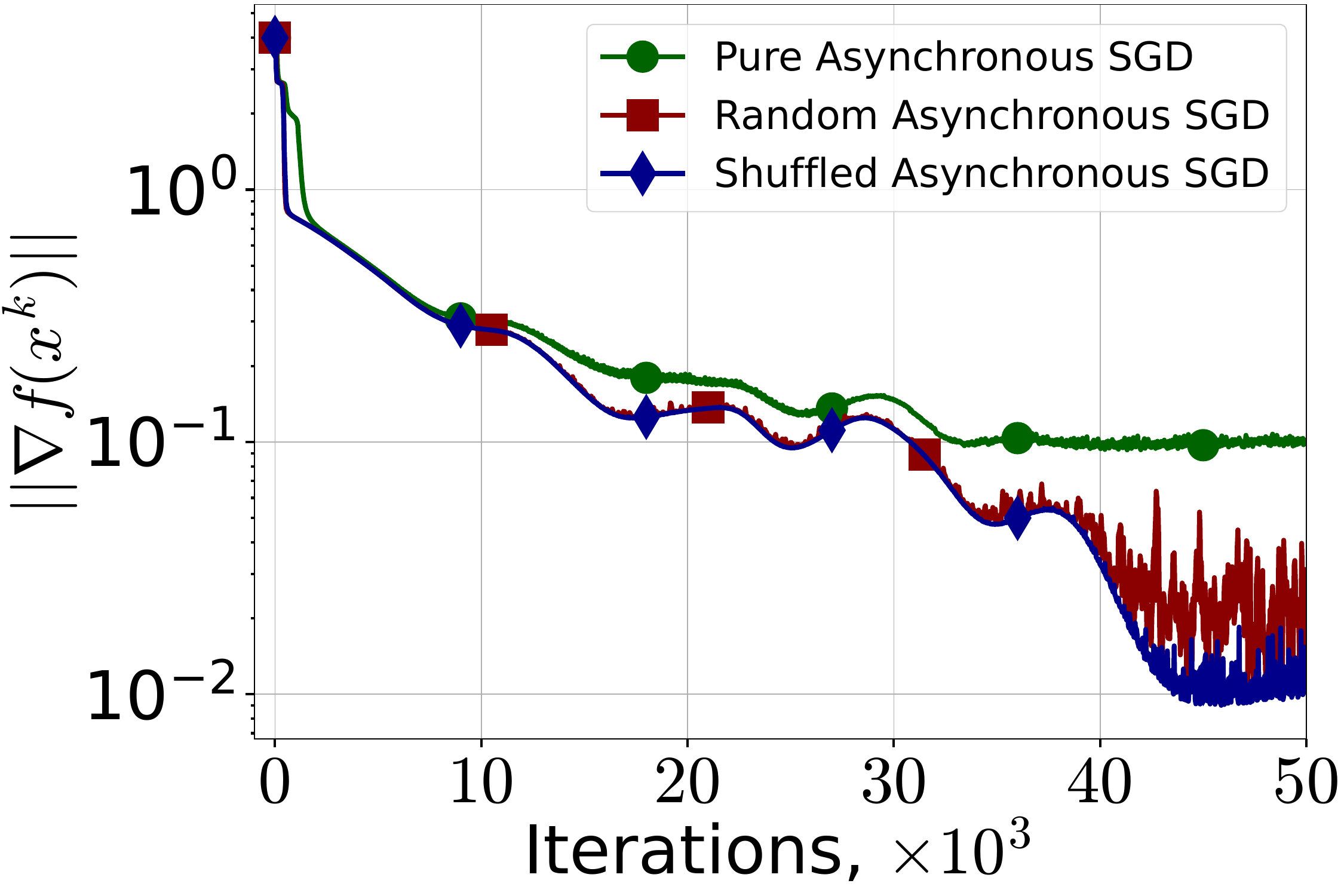} & \hspace{-5mm}
            \includegraphics[width=0.3\linewidth]{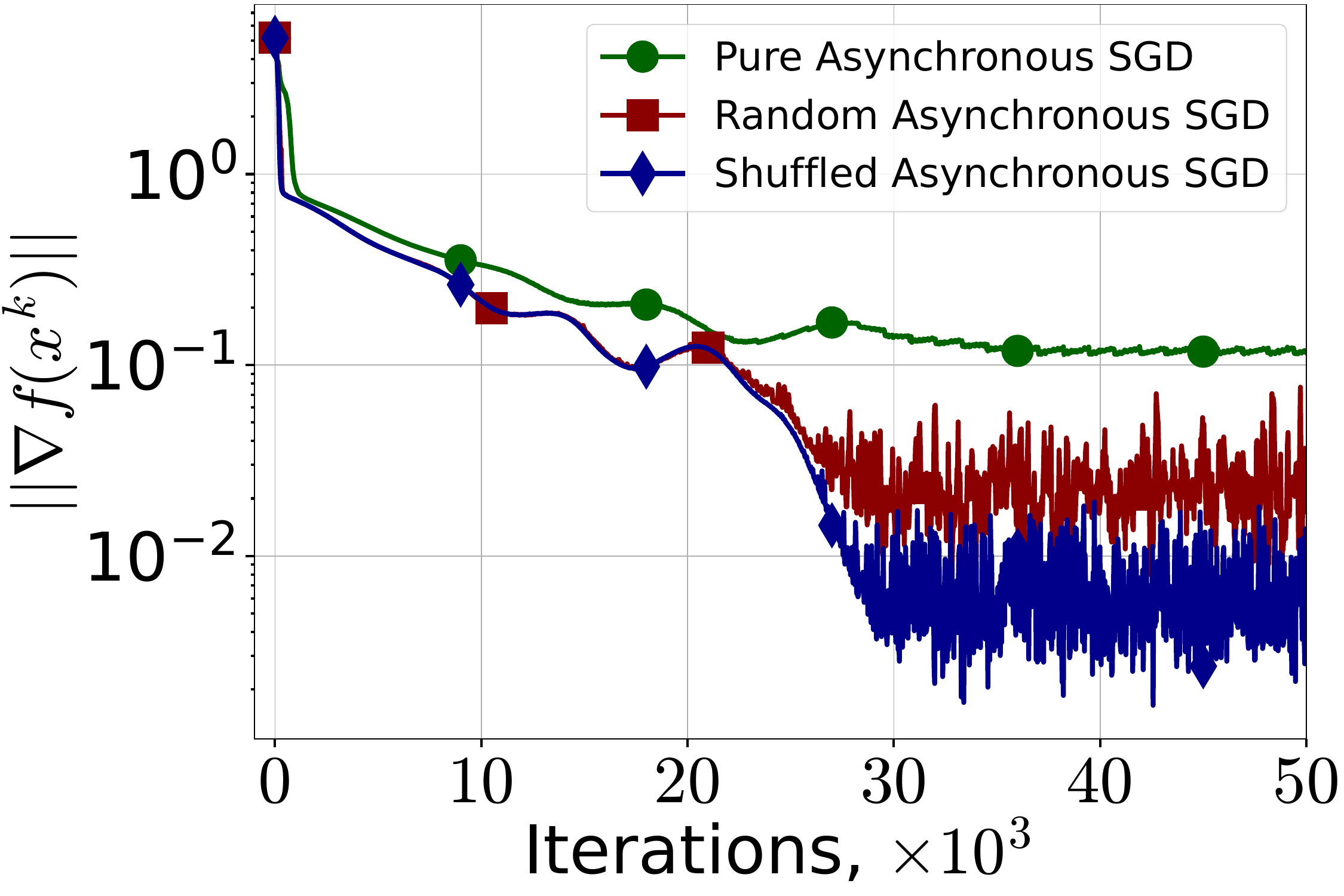} & \hspace{-5mm}
            \includegraphics[width=0.3\linewidth]
            {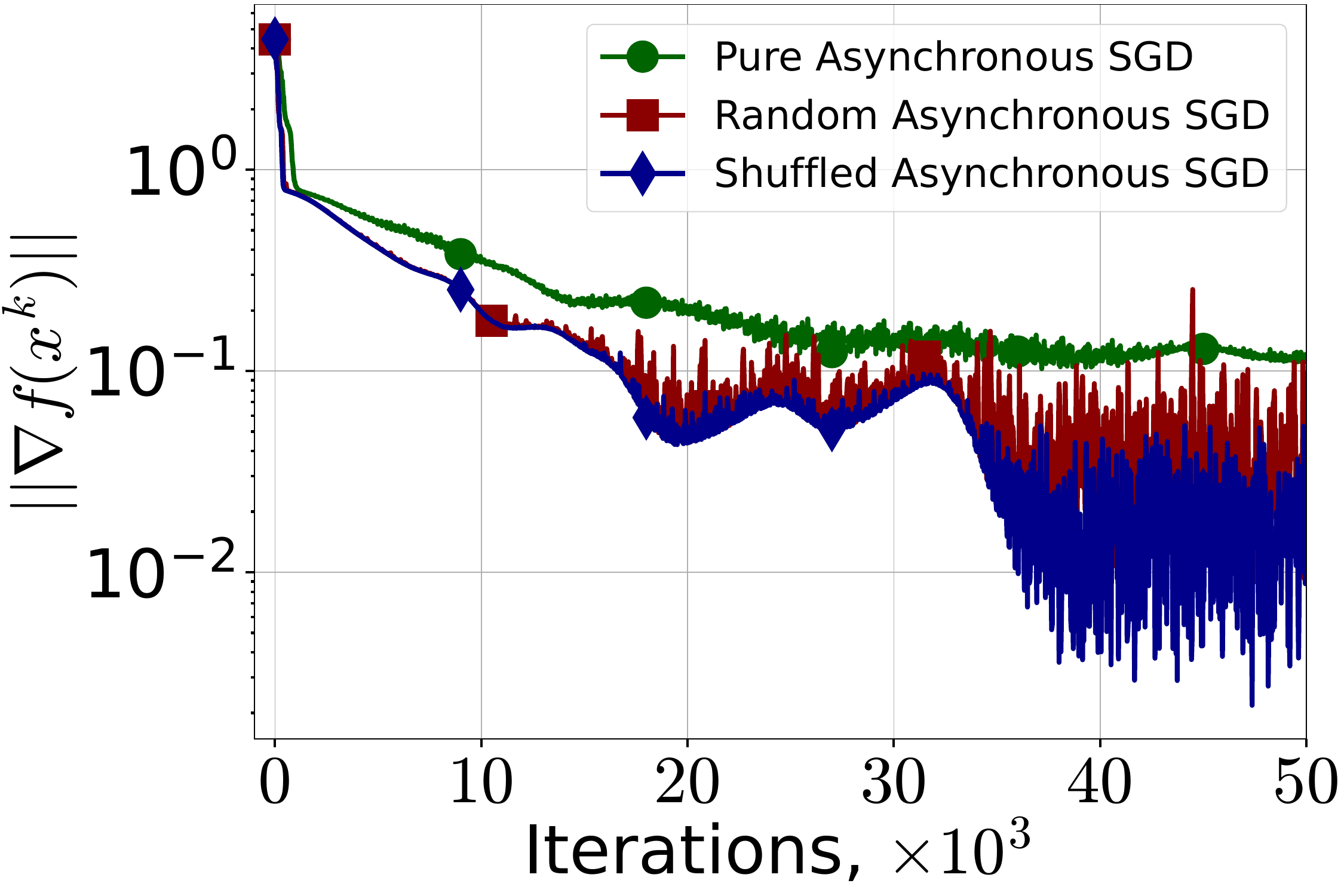}\\
            \hspace{8mm} {\tiny poisson, $\text{Syn}(0.5,0.5)$}  &
            \hspace{5mm}{\tiny poisson, $\text{Syn}(1,1)$}&
            \hspace{5mm}{\tiny poisson, $\text{Syn}(1.5,1.5)$}\\
            \includegraphics[width=0.3\linewidth]{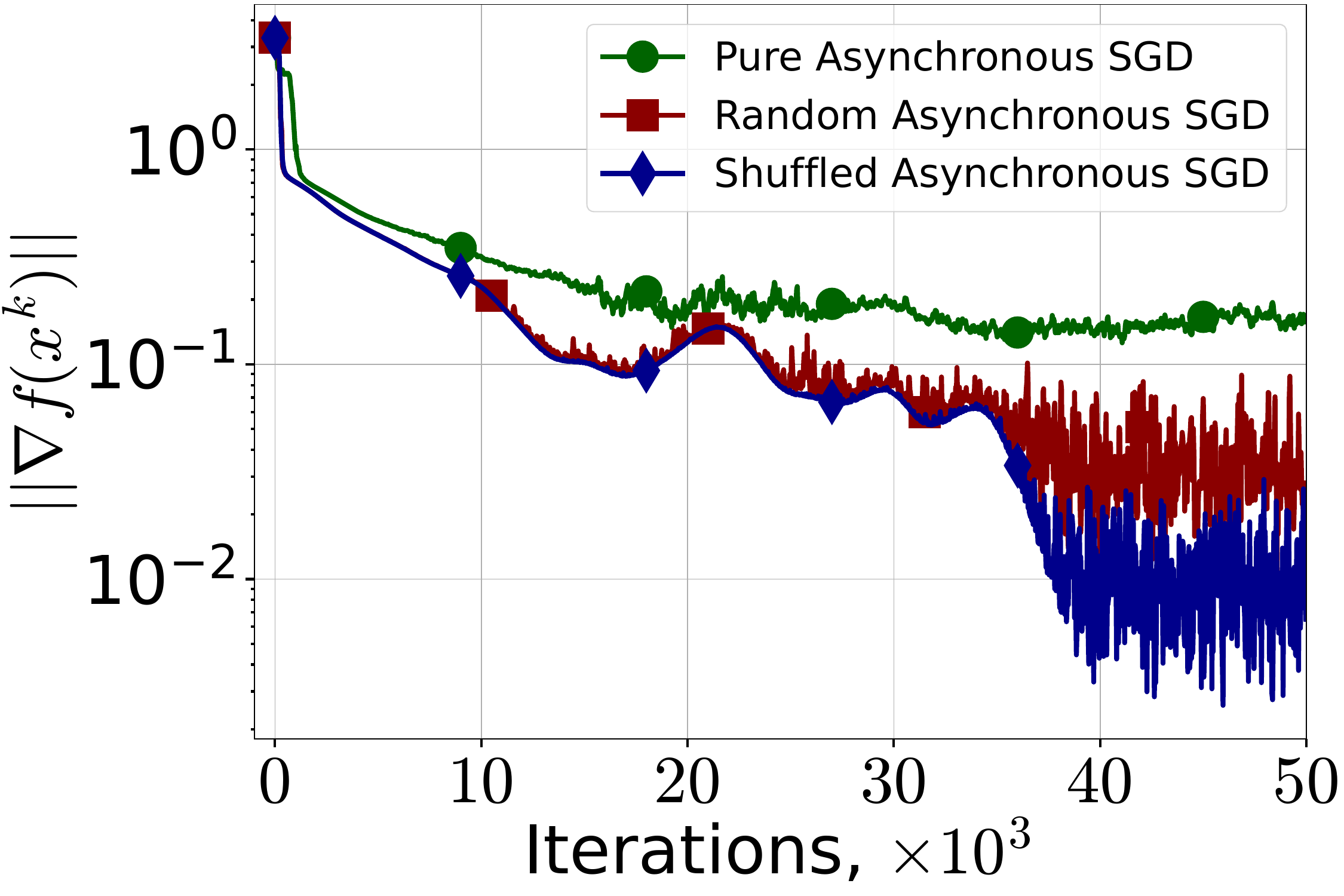} & \hspace{-5mm}
            \includegraphics[width=0.3\linewidth]{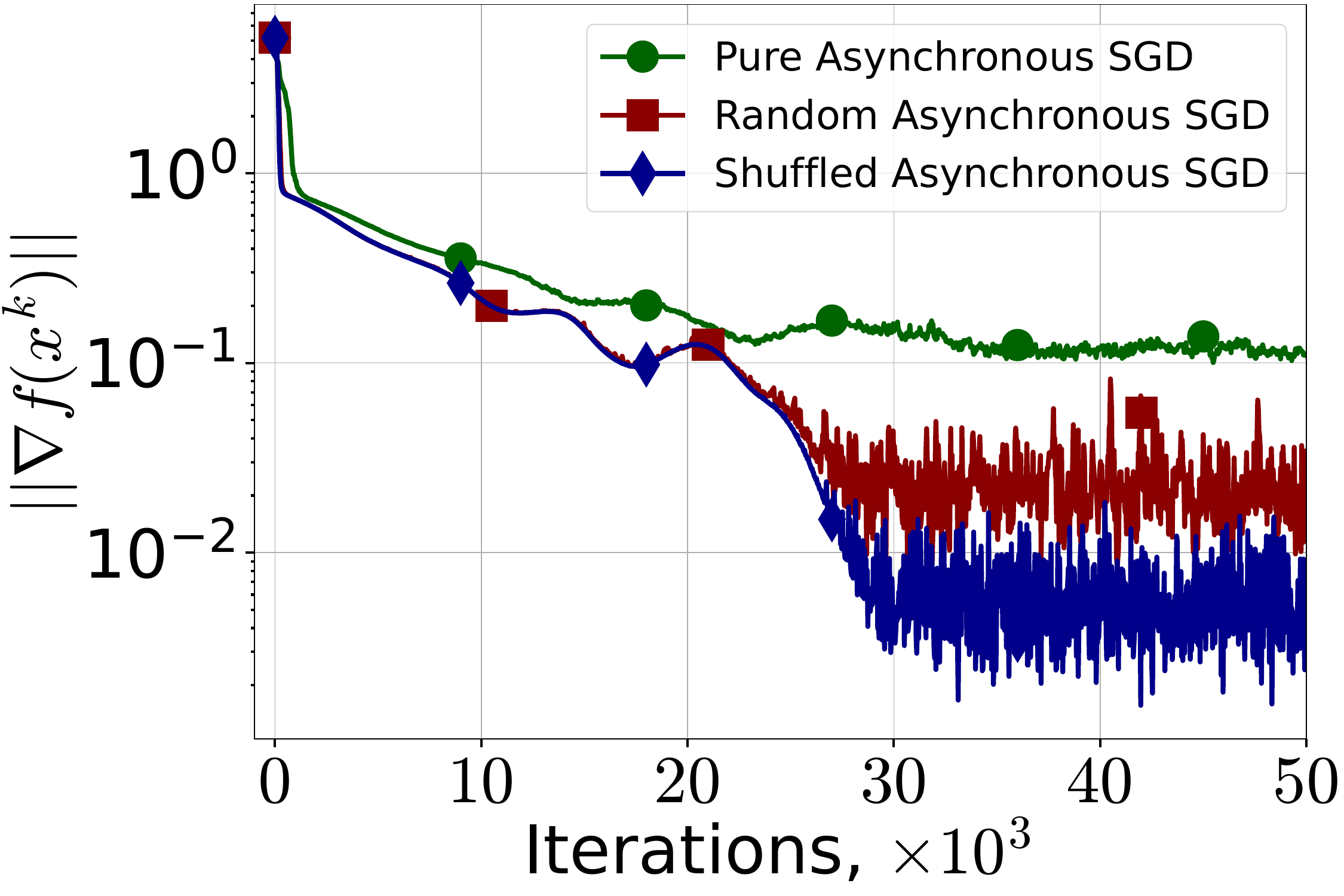} & \hspace{-5mm}
            \includegraphics[width=0.3\linewidth]{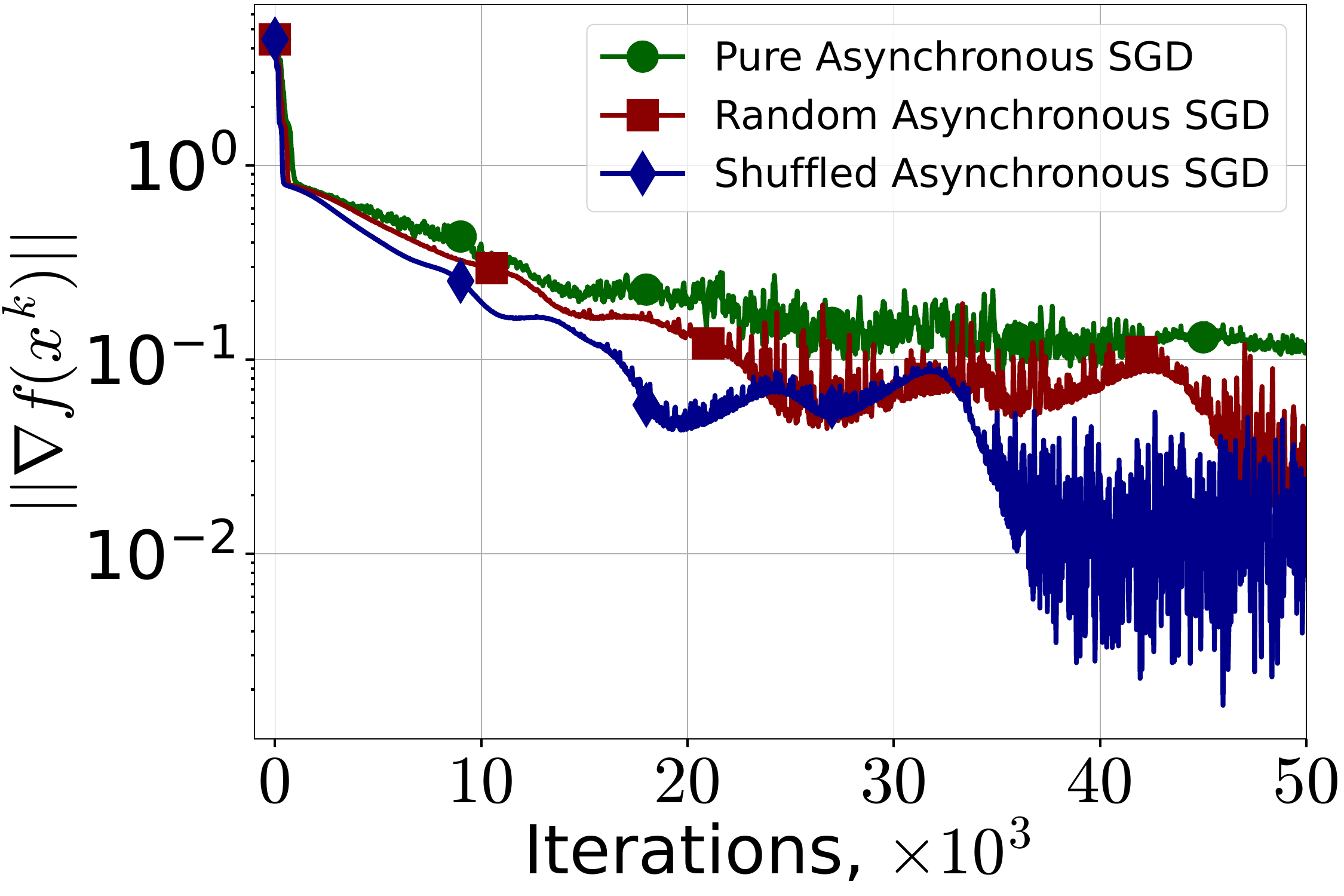}\\
            \hspace{8mm} {\tiny normal, $\text{Syn}(0.5,0.5)$}  &
            \hspace{5mm}{\tiny normal, $\text{Syn}(1,1)$}&
            \hspace{5mm}{\tiny normal, $\text{Syn}(1.5,1.5)$}\\
            \includegraphics[width=0.3\linewidth]{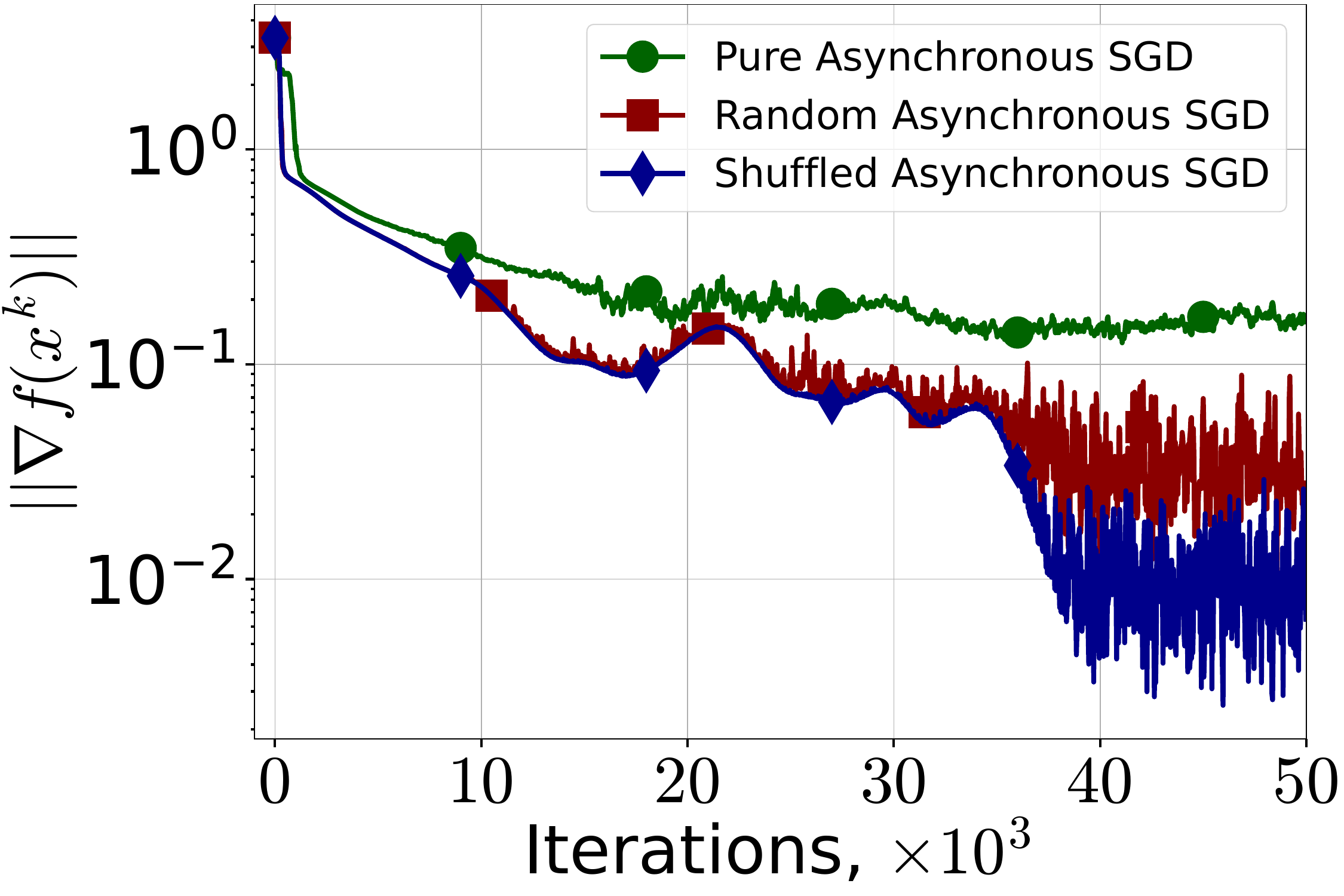} & \hspace{-5mm}
            \includegraphics[width=0.3\linewidth]{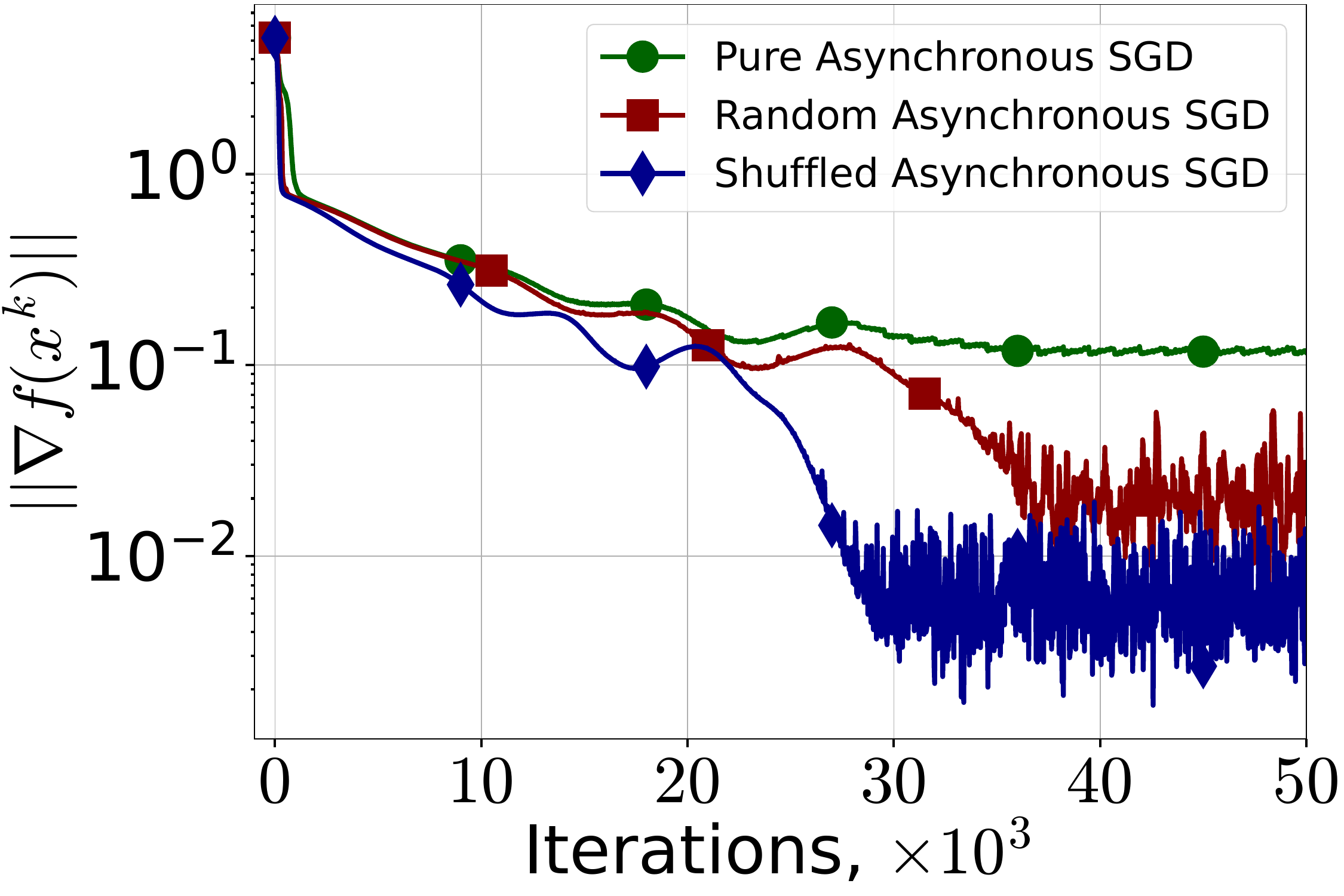} & \hspace{-5mm}
            \includegraphics[width=0.3\linewidth]{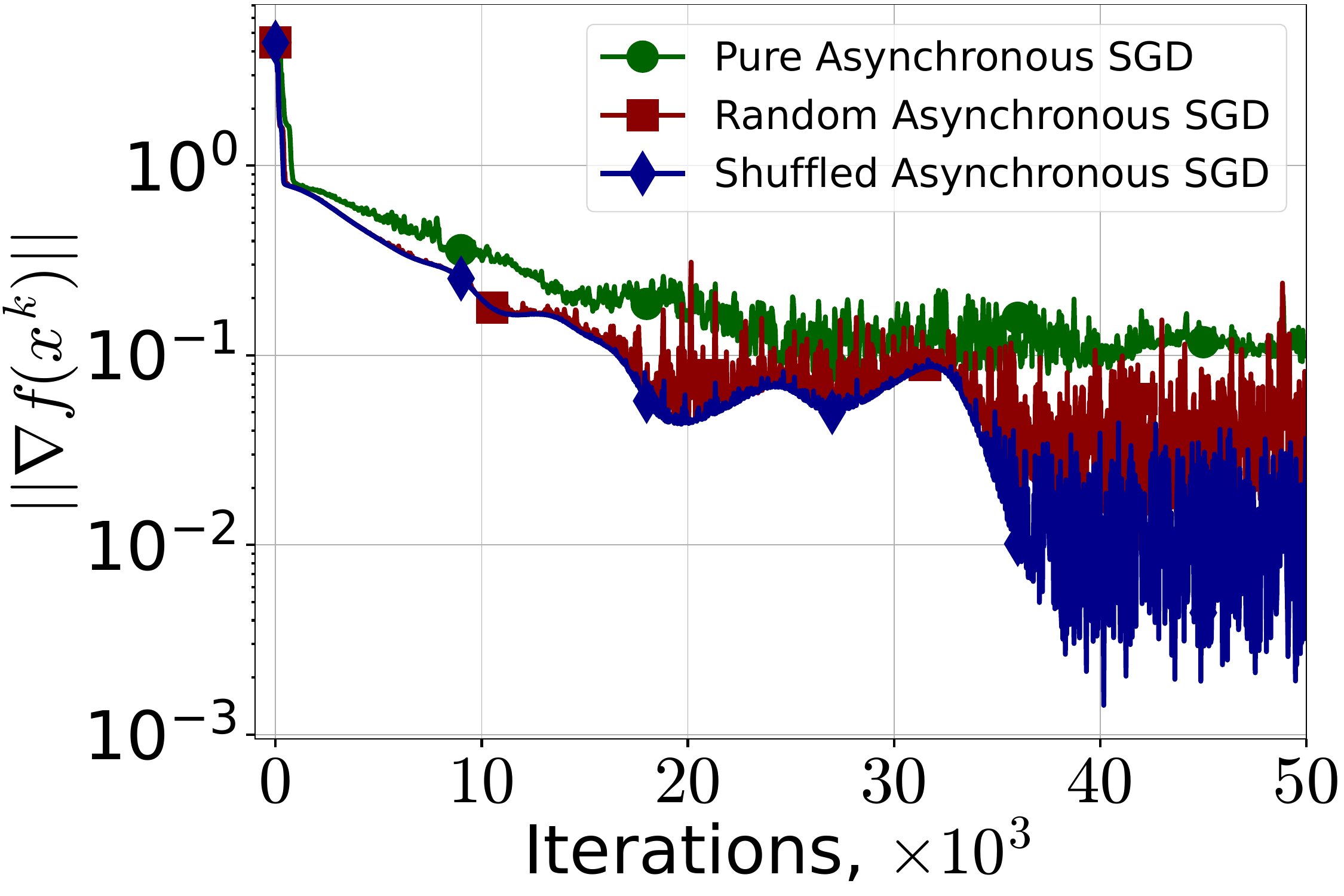}
        \end{tabular}             
    \caption{Comparison of pure, random, and shuffled asynchronous SGD with tuned stepsizes and full gradient computation on synthetic datasets with different levels of heterogeneity and various delay patterns.}
    \label{fig:full_grad}
\end{figure}

In this section, we provide additional numerical results as well as the detailed experiment setup. We consider logistic regression problem with nonconvex regularization, namely,
\begin{equation*}
    f(x) = \frac{1}{n}\sum_{i=1}^n f_i(x), \quad \text{ where } \quad  f_i(x) = \frac{1}{m}\sum\limits_{j=1}^m \log\left(1+e^{-b_{ij}a_{ij}^\top x}\right) + \lambda\sum\limits_{j=1}^d \frac{x_j^2}{1+x_j^2}.
\end{equation*}
For synthetic datasets, we set $n=10, m = 200, d=300$, and $\lambda = 0.1$ in all experiments. The initial point $x_0\in \R^d$ is sampled from standard Gaussian distribution. 

We compare the practical performance of pure asynchronous SGD, random asynchronous SGD, and shuffled asynchronous SGD. We assume that all workers are active from the beginning which implies that $\tau_C = n.$ We perform a grid search for the stepsize over the set $\{0.005, 0.004, 0.003, 0.002,$ $ 0.001, 0.0005, 0.0001\},$ and illustrate the  performance with the fine-tuned stepsize only choosing the best one that achieves smaller gradient norm and fluctuations.

\subsection{Synthetic Dataset Generation}

The generation function of datasets has two parameters $\alpha$ and $\beta.$ Below we detail the rules of dataset generation:
\begin{enumerate}
    \item Sample $n$ numbers $B_i \sim \cN(0, \beta)$.
    \item Sample $n$ vectors $v_i \in \R^d$ such that each component $[v_i]_j \sim \cN(B_i, 1).$
    \item For all $i \in [n]$ sample $m$ vectors $a_{ij}\in \R^d$ such that $a_{ij} \sim \cN(v_i, \Sigma)$ where $\Sigma$ is a diagonal matrix with $\Sigma_{jj} = j^{-1.2}.$ A set $\{a_{ij}\}_{j=1}^m$ will be utilized feature vectors of $i$-th worker.
    \item Sample $n$ pairs of numbers $u_i \sim \cN(0, \alpha)$ and $c_i \sim \cN(u_i, 1).$
    \item Sample $n$ vectors $w_i\in \R^d$ such that each component $[w_i]_j \sim \cN(u_i, 1).$
    \item Let $p_{ij} = \sigma(w_i^\top a_{ij} + c_i)$ where $\sigma(\cdot)$ is a sigmoid function. 
    \item For all $i \in [n]$ sample $m$ numbers $b_{ij}$ such that $b_{ij}$ equals to $-1$ with probability $p_{ij}$ and to $1$ otherwise.
\end{enumerate}
The larger $\alpha$ and $\beta$ are, the more heterogeneous local datasets are. In our experiments, we choose these parameters from the set $\{0.5, 1, 1.5\}$.

\subsection{Comparison on Synthetic Dataset}

First, we test the aforementioned algorithms in the case of full local gradient computation in order to avoid the effect of stochasticity and illustrate the impact of delay patterns only; see Figure~\ref{fig:full_grad}. We change the heterogeneity level controlling parameters $\alpha$ and $\beta$, and delay patterns. We observe that in all cases pure asynchronous SGD gets stuck much earlier than its counterparts regardless of the delay pattern. However, due to that, we can choose a larger stepsize which leads to faster convergence. 

Besides, we notice that the shuffled asynchronous SGD method finds a better stationary point than the random version which happens because of the more careful job assigning. Moreover, the shuffled version sometimes can be even faster (e.g., normal Syn(1,1) and poison Syn(1.5, 1.5)).

\section{Useful Lemmas}

We frequently use several well-known results.

\begin{lemmasec}\label{lem:lemmaA1}
    Let $\{v_i\}_{i=1}^m$ be a finite set of vectors in $\R^d.$ Then we have
    \begin{equation}
        \left\|\sum_{i=1}^mv_i\right\|^2 \le m\sum_{i=1}^m\|v_i\|^2.
    \end{equation}
\end{lemmasec}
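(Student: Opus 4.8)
The plan is to reduce the claim to two elementary facts: the triangle inequality for the norm and the Cauchy--Schwarz inequality for real numbers. First I would apply the triangle inequality to obtain $\left\|\sum_{i=1}^m v_i\right\| \le \sum_{i=1}^m \|v_i\|$, and then square both sides to get $\left\|\sum_{i=1}^m v_i\right\|^2 \le \left(\sum_{i=1}^m \|v_i\|\right)^2$.

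Next I would bound the right-hand side by applying the Cauchy--Schwarz inequality in $\R^m$ to the all-ones vector $(1,\dots,1)$ and the vector of norms $(\|v_1\|,\dots,\|v_m\|)$. This gives $\left(\sum_{i=1}^m 1\cdot\|v_i\|\right)^2 \le \left(\sum_{i=1}^m 1^2\right)\left(\sum_{i=1}^m \|v_i\|^2\right) = m\sum_{i=1}^m \|v_i\|^2$. Chaining the two bounds yields the stated inequality.

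As an alternative route that avoids invoking the triangle inequality, I could expand the squared norm directly as $\left\|\sum_{i=1}^m v_i\right\|^2 = \sum_{i=1}^m\sum_{j=1}^m \langle v_i, v_j\rangle$ and then use the Young-type bound $\langle v_i, v_j\rangle \le \tfrac12\left(\|v_i\|^2 + \|v_j\|^2\right)$, which itself follows from expanding $0 \le \|v_i - v_j\|^2$. Summing this over all $m^2$ pairs $(i,j)$ produces exactly $m\sum_{i=1}^m \|v_i\|^2$.

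There is essentially no obstacle here: the statement is a standard consequence of the convexity of $\|\cdot\|^2$ (equivalently Jensen's inequality applied to the average $\tfrac1m\sum_{i=1}^m v_i$), and each of the routes above is only a few lines. The only thing worth stating carefully is that the inequality is sharp, with equality when all the $v_i$ coincide.
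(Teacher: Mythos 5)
Your proof is correct: both the triangle-inequality-plus-Cauchy--Schwarz route and the direct expansion via $\langle v_i,v_j\rangle \le \tfrac12(\|v_i\|^2+\|v_j\|^2)$ are valid, and the Jensen/convexity remark is an accurate characterization of why the bound holds. The paper itself states this lemma as a well-known fact and gives no proof, so there is nothing to compare against; any of your routes would serve as the omitted argument.
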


\begin{lemmasec}\label{lem:lemmaA2}
    Let $a, b$ be any two vectors in $\R^d$, and $\alpha$ be any positive number. Then we have
    \begin{equation}
        \<a, b> \le \frac{\alpha}{2}\|a\|^2 + \frac{1}{2\alpha}\|b\|^2.
    \end{equation}
\end{lemmasec}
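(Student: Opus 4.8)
The plan is to derive the inequality from the nonnegativity of a squared norm, which is the standard route to Young's inequality in an inner-product space. The key observation is that for any positive $\alpha$, the vector $\sqrt{\alpha}\,a - \tfrac{1}{\sqrt{\alpha}}\,b$ has nonnegative squared norm, and expanding this quantity produces exactly the three terms appearing in the claim.

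First I would write $\left\|\sqrt{\alpha}\,a - \tfrac{1}{\sqrt{\alpha}}\,b\right\|^2 \ge 0$ and expand the left-hand side using bilinearity and symmetry of the inner product. This gives $\alpha\|a\|^2 - 2\<a, b> + \tfrac{1}{\alpha}\|b\|^2 \ge 0$, where the cross term carries a factor $2$ because $\<\sqrt{\alpha}\,a, \tfrac{1}{\sqrt{\alpha}}\,b> = \<a, b>$. Rearranging to isolate $2\<a,b>$ on one side and dividing by $2$ then yields precisely the stated bound $\<a, b> \le \tfrac{\alpha}{2}\|a\|^2 + \tfrac{1}{2\alpha}\|b\|^2$.

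There is essentially no obstacle here. The only point requiring any care is that $\alpha > 0$ is needed for $\tfrac{1}{\sqrt{\alpha}}$ and $\tfrac{1}{\alpha}$ to be well defined, which is exactly the hypothesis of the lemma, and for the inequality direction to be preserved when dividing by the positive constant $2$. An equivalent argument would chain the Cauchy--Schwarz inequality $\<a, b> \le \|a\|\,\|b\|$ with the scalar AM--GM inequality applied to the nonnegative reals $\sqrt{\alpha}\,\|a\|$ and $\tfrac{1}{\sqrt{\alpha}}\,\|b\|$; I prefer the completing-the-square route since it is fully self-contained and avoids invoking Cauchy--Schwarz as a black box. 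For completeness one may note that equality holds exactly when $b = \alpha a$, although this characterization is not needed for the statement as used in the subsequent analysis.
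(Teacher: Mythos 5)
Your proof is correct: expanding $\left\|\sqrt{\alpha}\,a - \tfrac{1}{\sqrt{\alpha}}\,b\right\|^2 \ge 0$ gives $\alpha\|a\|^2 - 2\<a,b> + \tfrac{1}{\alpha}\|b\|^2 \ge 0$, and rearranging yields the claim, with $\alpha>0$ used exactly where you say it is. The paper states this lemma as a well-known result without providing its own proof, so there is nothing to compare against; your completing-the-square argument is the standard, fully self-contained derivation of this Young-type inequality.
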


The next lemma is useful when we work we double sums in the proofs. Let $\beta(t, \pi_t)$ be any non-negative function with two iterate-dependent arguments $t$ and $\pi_t$.  For example, $\beta(t, \pi_t) = \|\nabla f_{i_t}(x_{\pi_t})\|^2$ or $\beta(t, \pi_t) = \|x_t - x_{\pi_t}\|^2$. 

\begin{lemmasec}\label{lem:lemmaA3} Let $\beta(t, \pi_t)$ be defined as above. If $\tau_C$ is defined in \eqref{def:max_active_jobs}, then we have
\begin{equation}
    \sum_{t=0}^{T-1}\sum_{j=\pi_t}^{t-1}\beta(j, \pi_j) \le (\tau_C-1) \sum_{t=0}^{T-1}\beta(t, \pi_t).
\end{equation}
\begin{proof}
    Each term $\beta(t, \pi_t)$ appears for all $t^\prime$ such that $t \in [\pi_{t^\prime}, t^\prime).$ We need to understand how many such $t^\prime$ might exist at most.  Indeed, such term might appear only for $t^\prime$ before $t$, thus, that job should have been started somewhere in the past but still not been  applied. The number of such it is bounded by $|\cA_t\setminus \cR_t| = |\cA_{t+1} \setminus \cR_t|-1 \le \tau_C-1.$ 
\end{proof}
    
\end{lemmasec}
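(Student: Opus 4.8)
The plan is to prove the bound by exchanging the order of summation and then applying a counting argument governed by the concurrency $\tau_C$. First I would rewrite the left-hand side as a single sum over the inner index. For a fixed index $j$, the term $\beta(j,\pi_j)$ is produced by the double sum exactly once for every outer index $t$ satisfying $\pi_t \le j \le t-1$, i.e. $\pi_t \le j < t$. Writing $c_j \eqdef |\{\, t : \pi_t \le j < t \,\}|$ for this multiplicity, exchanging the order of summation gives
\begin{equation*}
    \sum_{t=0}^{T-1}\sum_{j=\pi_t}^{t-1}\beta(j,\pi_j) = \sum_{j=0}^{T-1} c_j\, \beta(j,\pi_j).
\end{equation*}
Since all terms are non-negative, it then suffices to show $c_j \le \tau_C - 1$ uniformly in $j$: the right-hand side is at most $(\tau_C-1)\sum_{j}\beta(j,\pi_j)$, and renaming $j$ back to $t$ gives exactly the claimed inequality.

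The heart of the argument is identifying the indexing set $\{t : \pi_t \le j < t\}$ with a collection of jobs that are simultaneously \emph{active} at iteration $j$. Each such $t$ labels the job $(i_t,\pi_t)$ that is received at iteration $t$, and the two inequalities read off cleanly: from $\pi_t \le j$ the model $x_{\pi_t}$ at which the job is computed already exists by iteration $j$, so the job has been assigned no later than iteration $j$, i.e. $(i_t,\pi_t)\in\cA_{j+1}$; from $j < t$ the job is applied strictly after iteration $j$, so it has not yet been received, i.e. $(i_t,\pi_t)\notin\cR_j$. Because each iteration applies a distinct job, the map $t\mapsto(i_t,\pi_t)$ is injective on the counted set, so distinct outer indices give distinct elements of $\cA_{j+1}\setminus\cR_j$, whence $c_j \le |\cA_{j+1}\setminus\cR_j|$.

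To sharpen the constant from $\tau_C$ to $\tau_C-1$, I would observe that the job $(i_j,\pi_j)$ applied at iteration $j$ itself also lies in $\cA_{j+1}\setminus\cR_j$ (it has $\pi_j \le j$ and is not yet in $\cR_j$), yet it is \emph{not} counted by $c_j$ since it fails the strict inequality $j<t$. Excluding it yields $c_j \le |\cA_{j+1}\setminus\cR_j|-1 = |\cA_j\setminus\cR_j|$, and Definition~\ref{def:max_active_jobs} bounds $|\cA_{j+1}\setminus\cR_j|\le\tau_C$. Combining these gives $c_j\le\tau_C-1$, which substituted into the displayed identity completes the proof.

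The main obstacle I anticipate is not the summation exchange but the careful bookkeeping of the sets $\cA_j,\cA_{j+1},\cR_j$: one must pin down the convention relating ``assigned/received before iteration $t$'' to the moment a job becomes active, verify injectivity of $t\mapsto(i_t,\pi_t)$ so that distinct outer indices really do correspond to distinct active jobs, and handle the off-by-one so that the job applied at iteration $j$ is correctly excluded. Getting this constant exactly right is where the argument is most delicate, whereas the rest is routine rearrangement.
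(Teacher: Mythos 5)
Your proof is correct and follows essentially the same counting argument as the paper: each term $\beta(j,\pi_j)$ is counted once per job that is still in flight at iteration $j$, and the number of such jobs is bounded by $|\cA_{j+1}\setminus\cR_j|-1\le\tau_C-1$. The only cosmetic difference is how the $-1$ is obtained --- you exclude the job received at iteration $j$ itself, whereas the paper passes from $\cA_{j+1}\setminus\cR_j$ to $\cA_j\setminus\cR_j$ by removing the job newly assigned at that step; both versions rest on the same implicit fact (valid in every special case where the lemma is invoked) that a job computing a gradient at $x_{\pi}$ has been assigned by iteration $\pi$.
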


\section{Proofs for Analysis of Gradient Receiving Process}\label{sec:proof_theorem3}

For $r(t) \le m < r(t) + \tau$, denote
$$
\phi_t^m(x) \eqdef \E{\left\| \sum_{j=r(t)}^{m} (\nabla f(x) - \nabla f_{i_j}(x)) \right\|^2}.
$$

Moreover, for shortness we use the following notation 
\begin{eqnarray*}
    A  &\eqdef& \sum_{t=0}^{T-1}\E{\|x_t - x_{\pi_t}\|^2}, \quad B \eqdef \sum_{t=0}^{T-1}\E{\|\nabla f(x_t)\|^2}, \quad \Phi \eqdef \sum_{t=0}^{T-1}\phi_t^{t-1}(x_{r(t)})\\
    \Psi &\eqdef& \sum_{t=0}^{T-1}\E{\left\|\sum_{j=\pi_t}^{t-1}\nabla f_{i_j}(x_{\pi_j}) - \nabla f(x_{\pi_j})\right\|^2}.
\end{eqnarray*}

The key part of the proof is bounding the distance between virtual and real iterates. Based on \eqref{eq:real_iterates_update} and \eqref{eq:real_virtual_seq}, we have
\begin{eqnarray*}
    x_t = x_{r(t)} - \gamma\sum_{j=r(t)}^{t-1} g_{i_j}(x_{\pi_j}), \qquad
    \wtilde{x}_t = x_{r(t)} - \gamma\sum_{j=r(t)}^{t-1} \nabla f(x_j).
\end{eqnarray*}

Also, denote
\begin{eqnarray}\label{eq:def_delta}
    \Delta_t^m \eqdef \sum_{j=r(t)}^m (\nabla f(x_j) - g_{i_j}(x_{\pi_j})).
\end{eqnarray}

\subsection{Key Lemmas}

\begin{lemmasec}\label{lem:lemmaD1}
If $20 \gamma L\tau \le 1$, then
\begin{eqnarray*}
    \E{\|\Delta_t^m\|^2}
    &\le& 4\E{\phi_t^m(x_{r(t)})}
    + \frac{1}{6\tau} \sum_{j=r(t)}^{m} \E{\phi_t^{j-1}(x_{r(t)})}
     +\; \frac{25}{6}L^2\tau\sum_{j=r(t)}^{m} \E{\|x_j - x_{\pi_j}\|^2}\\
    && +\;  \frac{\tau}{24} \sum_{j=r(t)}^{m} \E{\left\| \nabla f(x_{j}) \right\|^2}+
     \frac{\tau}{24}\sigma^2.
\end{eqnarray*}
\end{lemmasec}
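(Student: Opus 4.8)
## Proof Proposal for Lemma~\ref{lem:lemmaD1}

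The plan is to bound $\E{\|\Delta_t^m\|^2}$ by unrolling the definition \eqref{eq:def_delta} and carefully separating the various sources of error: the function-ordering mismatch, the staleness of the points at which gradients are evaluated, and the stochastic noise. First I would decompose each summand $\nabla f(x_j) - g_{i_j}(x_{\pi_j})$ into three pieces that we can control independently. Writing $g_{i_j}(x_{\pi_j}) = \nabla f_{i_j}(x_{\pi_j}) + \bigl(g_{i_j}(x_{\pi_j}) - \nabla f_{i_j}(x_{\pi_j})\bigr)$, the last bracket is the zero-mean stochastic noise, and I would peel it off first. Because the noise terms are conditionally unbiased (Assumption~\ref{asmp:bound_var}) and the index $i_j$ together with the point $x_{\pi_j}$ is determined by the past, the cross terms vanish in expectation and the noise contributes exactly $\sum_{j=r(t)}^m \EE[\|g_{i_j}(x_{\pi_j}) - \nabla f_{i_j}(x_{\pi_j})\|^2] \le (m - r(t) + 1)\sigma^2 \le \tau\sigma^2$. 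This already explains the isolated $\tfrac{\tau}{24}\sigma^2$ term once a suitable constant is absorbed.

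Next I would handle the deterministic remainder $\sum_{j=r(t)}^m \bigl(\nabla f(x_j) - \nabla f_{i_j}(x_{\pi_j})\bigr)$. The idea is to introduce the anchor point $x_{r(t)}$ and split via the telescoping identity
\begin{equation*}
    \nabla f(x_j) - \nabla f_{i_j}(x_{\pi_j})
    = \bigl(\nabla f(x_{r(t)}) - \nabla f_{i_j}(x_{r(t)})\bigr)
    + \bigl(\nabla f(x_j) - \nabla f(x_{r(t)})\bigr)
    + \bigl(\nabla f_{i_j}(x_{r(t)}) - \nabla f_{i_j}(x_{\pi_j})\bigr).
\end{equation*}
Summing the first group over $j$ yields precisely the quantity inside $\phi_t^m(x_{r(t)})$, which is why the term $4\,\EE[\phi_t^m(x_{r(t)})]$ appears. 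The second and third groups are smoothness-controlled: by Assumption~\ref{asmp:smoothness}, $\|\nabla f(x_j) - \nabla f(x_{r(t)})\| \le L\|x_j - x_{r(t)}\|$ and $\|\nabla f_{i_j}(x_{r(t)}) - \nabla f_{i_j}(x_{\pi_j})\| \le L\|x_{r(t)} - x_{\pi_j}\|$. I would then split $\|\cdot\|^2$ of the sum using Lemma~\ref{lem:lemmaA1} (paying a factor bounded by $\tau$ since there are at most $\tau$ summands) and bound each displacement $\|x_j - x_{r(t)}\|^2$ and $\|x_{r(t)} - x_{\pi_j}\|^2$ by telescoping back through the iterate update \eqref{eq:real_iterates_update}. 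This is where the condition $20\gamma L\tau \le 1$ is spent: the $L^2$-factors produced by smoothness are converted into contributions of order $L^2\tau$, and the stepsize bound guarantees that the self-referential displacement terms (which themselves depend on $\Delta$-type quantities) can be absorbed with small constants like $\tfrac{1}{6\tau}$ and $\tfrac{25}{6}L^2\tau$ rather than blowing up.

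The main obstacle, as I see it, is the bookkeeping in the third group, where the displacement $\|x_{r(t)} - x_{\pi_j}\|$ can involve both forward iterates (up to $x_j$) and the delayed point $x_{\pi_j}$ lying possibly \emph{before} the restart $r(t)$. I would rewrite each displacement as a telescoping sum of single-step increments $\gamma g_{i_\ell}(x_{\pi_\ell})$, then recognize that these increments reintroduce noise, full-gradient, and staleness pieces. The delicate point is to route the staleness contribution into the $\|x_j - x_{\pi_j}\|^2$ terms (giving the $\tfrac{25}{6}L^2\tau \sum \EE[\|x_j - x_{\pi_j}\|^2]$ coefficient) and the smoothness-of-progress contribution into the $\|\nabla f(x_j)\|^2$ terms (giving $\tfrac{\tau}{24}\sum\EE[\|\nabla f(x_j)\|^2]$), without double-counting and while keeping every constant below the targeted thresholds. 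Tracking which restart interval $[r(t), r(t)+\tau)$ each index belongs to, and invoking Young's inequality (Lemma~\ref{lem:lemmaA2}) with carefully chosen weights $\alpha$ so that the $\phi_t^{j-1}$ recursion closes with coefficient $\tfrac{1}{6\tau}$, is the genuinely technical part; everything else is a matter of collecting terms and applying the stepsize restriction to ensure the recursion is contractive.
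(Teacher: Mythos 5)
Your plan follows essentially the same route as the paper's proof: peel off the stochastic noise to get the $\tau\sigma^2$ contribution, decompose the remaining sum around the anchor $x_{r(t)}$ so that one group yields $\phi_t^m(x_{r(t)})$ while the others are controlled by smoothness through the displacements $\|x_j - x_{r(t)}\|$ and $\|x_j - x_{\pi_j}\|$ (your three-way split becomes the paper's four-way split after one triangle inequality on $\|x_{r(t)} - x_{\pi_j}\|$), and then bound $\sum_j\|x_j - x_{r(t)}\|^2$ recursively via $\Delta_t^{j-1}$, absorbing the self-referential term using $20\gamma L\tau \le 1$. The only cosmetic difference is that the small coefficients such as $\tfrac{1}{6\tau}$ arise in the paper directly from the stepsize restriction $\gamma^2L^2\tau^2 \le \tfrac{1}{400}$ rather than from a tuned Young's inequality, but the argument is the same.
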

\begin{proof}
Fix any $m$ such that $r(t) \le m < r(t) + \tau$, so that the iterates are within one block.
\begin{eqnarray}
    &&\E{\|\Delta_t^m\|^2}
    \le  \E{\left\|\sum_{j=r(t)}^{m} \nabla f_{i_j}(x_{\pi_j}) - \nabla f(x_{j}) \right\|^2} + \tau\sigma^2\notag \\
    &\le& 4 \E{\left\|\sum_{j=r(t)}^{m} \nabla f_{i_j}(x_{r(t)}) - \nabla f(x_{r(t)}) \right\|^2} 
    + 4\E{\left\|\sum_{j=r(t)}^{m} \nabla f_{i_j}(x_{r(t)}) - \nabla f_{i_j}(x_{j}) \right\|^2} \notag \\
    &&\quad +\;  4\E{\left\|\sum_{j=r(t)}^{m} \nabla f_{i_j}(x_j) - \nabla f_{i_j}(x_{\pi_j}) \right\|^2}
    + 4 \E{\left\|\sum_{j=r(t)}^{m} \nabla f(x_{r(t)}) - \nabla f(x_{j}) \right\|^2} + \tau\sigma^2 \notag\\
    &\le& 4 \E{\phi_t^m(x_{r(t)})} 
    + 4L^2\tau\sum_{j=r(t)}^{m} \E{\|x_{j} - x_{\pi_j}\|^2}
    + 8L^2\tau\sum_{j=r(t)}^{m} \E{\|x_{j} - x_{r(t)}\|^2} + \tau\sigma^2.\label{eq:lemmaD1_1}
\end{eqnarray}

Next, we bound the third term involving $\E{\|x_j - x_{r(t)}\|^2}$.

\begin{eqnarray*}
    && \sum_{j=r(t)}^{m} \E{\|x_j - x_{r(t)}\|^2}
    = \gamma^2 \sum_{j=r(t)}^{m} \E{\left\| \sum_{l=r(t)}^{j-1}  g_{i_l}(x_{\pi_l}) \right\|^2} \\
    &\le& 2\gamma^2 \sum_{j=r(t)}^{m} \E{\left\| \sum_{l=r(t)}^{j-1} g_{i_l}(x_{\pi_l}) - \nabla f(x_{l}) \right\|^2}
    + 2\gamma^2 \sum_{j=r(t)}^{m} \E{\left\| \sum_{l=r(t)}^{j-1} \nabla f(x_{l}) \right\|^2}
    \\
    &=& 2\gamma^2 \sum_{j=r(t)}^{m} \E{\left\| \Delta_t^{j-1} \right\|^2}
    + 2\gamma^2 \sum_{j=r(t)}^{m} \E{\left\| \sum_{l=r(t)}^{j-1} \nabla f(x_{l}) \right\|^2}
    \\
    &\overset{\eqref{eq:lemmaD1_1}}{\le}&
    8\gamma^2 \sum_{j=r(t)}^{m} \E{\phi_t^{j-1}(x_{r(t)})}
    + 16\gamma^2L^2\tau \sum_{j=r(t)}^{m}\sum_{l=r(t)}^{j-1} \E{\|x_l - x_{r(t)}\|^2}\\
    && +\; 8\gamma^2L^2\tau \sum_{j=r(t)}^{m}\sum_{l=r(t)}^{j-1} \E{\|x_l - x_{\pi_l}\|^2} + 2\gamma^2\tau \sum_{j=r(t)}^{m}\sum_{l=r(t)}^{j-1} \E{\left\| \nabla f(x_{l}) \right\|^2} + 2\gamma^2\tau^2\sigma^2
    \\
    &\le&
    8\gamma^2 \sum_{j=r(t)}^{m} \E{\phi_t^{j-1}(x_{r(t)})}
    + 16\gamma^2L^2\tau^2 \sum_{j=r(t)}^{m} \E{\|x_j - x_{r(t)}\|^2}
    + 8\gamma^2L^2\tau^2 \sum_{j=r(t)}^{m} \E{\|x_j - x_{\pi_j}\|^2} \\
    && +\; 2\gamma^2\tau^2 \sum_{j=r(t)}^{m} \E{\left\| \nabla f(x_{j}) \right\|^2}
    +2\gamma^2\tau^2\sigma^2,
\end{eqnarray*}
where we use the fact that $\tau_C \ge 1$. Choosing $\gamma L\tau \le \frac{1}{20}$, we cancel the term $\|x_j - x_{r(t)}\|^2$ from the right hand side:
\begin{eqnarray*}
    \sum_{j=r(t)}^{m} \E{\|x_j - x_{r(t)}\|^2}
    &\le&
    \frac{25}{3}\gamma^2 \sum_{j=r(t)}^{m} \E{\phi_t^{j-1}(x_{r(t)})}
    + \frac{25}{3}\gamma^2L^2\tau^2 \sum_{j=r(t)}^{m} \E{\|x_j - x_{\pi_j}\|^2}\\
    && +\;  \frac{25}{12}\gamma^2\tau^2 \sum_{j=r(t)}^{m} \E{\left\| \nabla f(x_{j}) \right\|^2} + \frac{25}{12}\gamma^2\tau^2\sigma^2.
\end{eqnarray*}

Plugging this inequality back to \eqref{eq:lemmaD1_1}, we get
\begin{eqnarray*}
    \E{\|\Delta_t^m\|^2}
    &\le& 4 \E{\phi_t^m(x_{r(t)})} + 4L^2\tau\sum_{j=r(t)}^{m} \E{\|x_{j} - x_{\pi_j}\|^2}
    + 8L^2\tau\sum_{j=r(t)}^{m} \E{\|x_{j} - x_{r(t)}\|^2}\\
    &\le& 4\E{\phi_t^m(x_{r(t)})}
    + \frac{200}{3}\gamma^2L^2\tau \sum_{j=r(t)}^{m} \E{\phi_t^{j-1}(x_{r(t)})} \\
    && +\; \frac{25}{6}L^2\tau\sum_{j=r(t)}^{m} \E{\|x_j - x_{\pi_j}\|^2}
    + \frac{50}{3}\gamma^2L^2\tau^3 \sum_{j=r(t)}^{m} \E{\left\| \nabla f(x_{j}) \right\|^2} 
    + \frac{50}{3}\gamma^2L^2\tau^3\sigma^2.
\end{eqnarray*}
Using the relation $\gamma^2L^2\tau^2\le\frac{1}{400}$ further, we can simplify the above bound as
\begin{eqnarray*}
    \E{\|\Delta_t^m\|^2}
    &\le& 4\E{\phi_t^m(x_{r(t)})}
    + \frac{1}{6\tau} \sum_{j=r(t)}^{m} \E{\phi_t^{j-1}(x_{r(t)})}
     +\; \frac{25}{6}L^2\tau\sum_{j=r(t)}^{m} \E{\|x_j - x_{\pi_j}\|^2}\\
    && +\;  \frac{\tau}{24} \sum_{j=r(t)}^{m} \E{\left\| \nabla f(x_{j}) \right\|^2}+
     \frac{\tau}{24}\sigma^2.
\end{eqnarray*}
Since $\frac{1}{16} > \frac{1}{24}$ we conclude the proof.
\end{proof}

\begin{lemmasec}\label{lem:lemmaD2}
If $20 \gamma L\tau \le 1$, then
\begin{eqnarray}\label{eq:lemmaD2_1}
\sum_{t=0}^{T-1} \E{\|x_t - \wtilde{x}_t\|^2}
&\le&
    \frac{25}{6}\gamma^2 \underbrace{\sum_{t=0}^{T-1} \E{\phi_t^{t-1}(x_{r(t)})}}_{\eqdef \Phi}
    + \frac{1}{96} \underbrace{\sum_{t=0}^{T-1} \E{\|x_t - x_{\pi_t}\|^2}}_{\eqdef A}\notag\\
    &&+\;\frac{1}{9600L^2} \underbrace{\sum_{t=0}^{T-1} \E{\left\| \nabla f(x_{t}) \right\|^2}}_{\eqdef B} 
    +  \frac{\gamma^2\tau}{24}\sigma^2T.
\end{eqnarray}
\end{lemmasec}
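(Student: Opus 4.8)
The plan is to reduce everything to a single application of Lemma~\ref{lem:lemmaD1} followed by a block-counting argument. First I would use the two block-decomposition identities stated just above the lemma, namely $x_t = x_{r(t)} - \gamma\sum_{j=r(t)}^{t-1} g_{i_j}(x_{\pi_j})$ and $\wtilde{x}_t = x_{r(t)} - \gamma\sum_{j=r(t)}^{t-1}\nabla f(x_j)$, to obtain the exact identity $x_t - \wtilde{x}_t = \gamma\sum_{j=r(t)}^{t-1}(\nabla f(x_j) - g_{i_j}(x_{\pi_j})) = \gamma\Delta_t^{t-1}$ by the definition~\eqref{eq:def_delta} of $\Delta_t^m$. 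Taking squared norms and expectations gives $\E{\|x_t - \wtilde{x}_t\|^2} = \gamma^2\E{\|\Delta_t^{t-1}\|^2}$, so the whole statement reduces to bounding $\gamma^2\sum_{t=0}^{T-1}\E{\|\Delta_t^{t-1}\|^2}$.

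The second step is to invoke Lemma~\ref{lem:lemmaD1} with $m = t-1$ (legitimate since $r(t)\le t-1 < r(t)+\tau$) and sum over $t=0,\dots,T-1$. The leading term contributes $4\sum_t \E{\phi_t^{t-1}(x_{r(t)})} = 4\Phi$ directly. The remaining terms all carry an inner sum $\sum_{j=r(t)}^{t-1}(\cdots)$, and the crux is to control the double sums $\sum_{t=0}^{T-1}\sum_{j=r(t)}^{t-1}\beta(j)$ for $\beta(j)\in\{\phi_j^{j-1}(x_{r(j)}),\ \E{\|x_j - x_{\pi_j}\|^2},\ \E{\|\nabla f(x_j)\|^2}\}$. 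Here I would use the block structure (distinct from the delay-based counting of Lemma~\ref{lem:lemmaA3}): both $t$ and every inner index $j$ lie in the same chunk of length $\tau$, and a fixed $j$ is counted once for each $t>j$ in its block, i.e.\ at most $\tau-1<\tau$ times. Hence each such double sum is at most $\tau\sum_{t=0}^{T-1}\beta(t)$, collapsing the inner sums at the cost of a single factor $\tau$.

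Applying this to the $\phi$-term turns $\frac{1}{6\tau}\sum_t\sum_{j=r(t)}^{t-1}\E{\phi_t^{j-1}(x_{r(t)})}$ into at most $\frac{1}{6}\Phi$, which combines with the $4\Phi$ above into $\frac{25}{6}\Phi$; the $\E{\|x_j-x_{\pi_j}\|^2}$-term becomes $\frac{25}{6}L^2\tau^2 A$, the gradient term becomes $\frac{\tau^2}{24}B$, and the noise term is simply $\frac{\tau}{24}\sigma^2 T$. Restoring the factor $\gamma^2$ and substituting the stepsize constraint $20\gamma L\tau\le 1$, equivalently $\gamma^2 L^2\tau^2\le \tfrac{1}{400}$, collapses the constants: $\frac{25}{6}\gamma^2 L^2\tau^2 A \le \frac{1}{96}A$ and $\frac{\gamma^2\tau^2}{24}B = \frac{\gamma^2 L^2\tau^2}{24 L^2}B \le \frac{1}{9600 L^2}B$, yielding exactly~\eqref{eq:lemmaD2_1}.

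The main obstacle I anticipate is the bookkeeping in the second step rather than any genuine analytic difficulty. One has to notice that $\phi_t^{j-1}(x_{r(t)})$ depends on $t$ only through $r(t)$, so within a block it equals $\phi_j^{j-1}(x_{r(j)})$; only with this identification does the double sum telescope cleanly into $\Phi$. Care is also needed at the final (possibly incomplete) block and with the boundary convention at $\pi_0$, but these do not affect the counting bound $\tau-1<\tau$ and hence leave the stated constants intact.
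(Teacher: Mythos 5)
Your proposal is correct and follows essentially the same route as the paper's proof: reduce $\|x_t-\wtilde{x}_t\|^2$ to $\gamma^2\|\Delta_t^{t-1}\|^2$, apply Lemma~\ref{lem:lemmaD1} with $m=t-1$, collapse the double sums via the within-block counting (using $r(t)=r(j)$ to identify $\phi_t^{j-1}$ with $\phi_j^{j-1}$), and absorb constants with $\gamma^2L^2\tau^2\le\tfrac{1}{400}$. The constants you obtain match \eqref{eq:lemmaD2_1} exactly.
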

\begin{proof}
Notice that for $r(t) < j \le t$ it holds $r(t)=r(j)$. Hence, we can replace $\phi_t^{j-1}(x_{r(t)})$ by $\phi_j^{j-1}(x_{r(j)})$ in the summation below. If $j\le r(t)$, then $\phi_t^{j-1}(x_{r(t)})=0$.
\begin{eqnarray*}
    \sum_{t=0}^{T-1} \E{\|x_t - \wtilde{x}_t\|^2}
    &=& \gamma^2 \sum_{t=0}^{T-1}\E{\left\|\sum_{j=r(t)}^{t-1}  g_{i_j}(x_{\pi_j}) - \nabla f(x_{j}) \right\|^2}\\
    &=&  \gamma^2 \sum_{t=0}^{T-1}\E{\|\Delta_t^{t-1}\|^2}\\
    &\overset{\textrm{Lemma}~\ref{lem:lemmaD1}}{\le}& 4\gamma^2 \sum_{t=0}^{T-1} \E{\phi_t^{t-1}(x_{r(t)})}
    + \frac{\gamma^2}{6\tau} \sum_{t=0}^{T-1}\sum_{j=r(t)}^{t-1} \E{\phi_t^{j-1}(x_{r(t)})} \\
    && +\; \frac{1}{96\tau} \sum_{t=0}^{T-1}\sum_{j=r(t)}^{t-1} \E{\|x_j - x_{\pi_j}\|^2}
    + \frac{\gamma^2\tau}{24}\sum_{t=0}^{T-1}\sum_{j=r(t)}^{t-1} \E{\left\| \nabla f(x_{j}) \right\|^2} \\
    && +\; \frac{\gamma^2\tau}{24}\sigma^2T\\
    &\le& \frac{25}{6}\gamma^2 \sum_{t=0}^{T-1} \E{\phi_t^{t-1}(x_{r(t)})}
    + \frac{1}{96} \sum_{t=0}^{T-1} \E{\|x_t - x_{\pi_t}\|^2}\\
    && +\;\frac{\gamma^2\tau^2}{24} \sum_{t=0}^{T-1} \E{\left\| \nabla f(x_{t}) \right\|^2} 
    + \frac{\gamma^2\tau}{24}\sigma^2T.
\end{eqnarray*}
Using the bound on the learning rate we conclude the proof.
\end{proof}

\begin{lemmasec}\label{lem:lemmaD3}
    If $20\gamma L\sqrt{\tau_{\max}\tau_C}\le 1$, then
\begin{eqnarray*}
\underbrace{\sum_{t=0}^{T-1} \E{\|x_t - x_{\pi_t}\|^2}}_{= A}
&\le&
\frac{1}{132L^2} \underbrace{\sum_{t=0}^{T-1} \E{\left\| \nabla f(x_{t}) \right\|^2}}_{= B}
+ \frac{\gamma}{5L}T\sigma^2\\
&& \;+\; \frac{100\gamma^2}{33}\underbrace{\sum_{t=0}^{T-1}\E{\left\|\sum_{j=\pi_t}^{t-1} \nabla f_{i_j}(x_{\pi_j}) - \nabla f(x_{\pi_j})\right\|^2}}_{= \Psi}.
\end{eqnarray*}
\end{lemmasec}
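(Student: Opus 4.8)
The plan is to start directly from the real-iterate recursion \eqref{eq:real_iterates_update}, which telescopes over the delay window to give $x_t - x_{\pi_t} = -\gamma\sum_{j=\pi_t}^{t-1} g_{i_j}(x_{\pi_j})$, so that
\[
\E{\|x_t - x_{\pi_t}\|^2} = \gamma^2\,\E{\left\|\sum_{j=\pi_t}^{t-1} g_{i_j}(x_{\pi_j})\right\|^2},
\]
where the inner sum has $t-\pi_t = \tau_t \le \tau_{\max}$ terms. I would then split each applied gradient as $g_{i_j}(x_{\pi_j}) = \bigl(g_{i_j}(x_{\pi_j}) - \nabla f_{i_j}(x_{\pi_j})\bigr) + \bigl(\nabla f_{i_j}(x_{\pi_j}) - \nabla f(x_{\pi_j})\bigr) + \nabla f(x_{\pi_j})$, i.e. noise plus heterogeneity plus the mean gradient at the stale point. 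The second group, summed over the window and then over $t$, is exactly the quantity defining $\Psi$, so it requires no further work. The noise group I isolate using Assumption~\ref{asmp:bound_var} and the independence of fresh samples across applied jobs, keeping it \emph{linear} in the delay; the mean group is handled by Cauchy--Schwarz (Lemma~\ref{lem:lemmaA1}).

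Next I would sum over $t$ and process the mean term. By Lemma~\ref{lem:lemmaA1}, $\left\|\sum_{j=\pi_t}^{t-1}\nabla f(x_{\pi_j})\right\|^2 \le \tau_{\max}\sum_{j=\pi_t}^{t-1}\|\nabla f(x_{\pi_j})\|^2$, and the resulting double sum collapses through the concurrency bound of Lemma~\ref{lem:lemmaA3} (with $\beta(j,\pi_j)=\|\nabla f(x_{\pi_j})\|^2$) into $(\tau_C-1)\sum_{t=0}^{T-1}\|\nabla f(x_{\pi_t})\|^2$. To turn the stale gradients into $B$, I apply $L$-smoothness (Assumption~\ref{asmp:smoothness}): $\|\nabla f(x_{\pi_t})\|^2 \le 2\|\nabla f(x_t)\|^2 + 2L^2\|x_t - x_{\pi_t}\|^2$, which reintroduces $B$ and, crucially, $A$ itself. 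The coefficient on this reintroduced $A$ is of order $\gamma^2 L^2 \tau_{\max}\tau_C$, which the stepsize condition $20L\gamma\sqrt{\tau_{\max}\tau_C}\le 1$ (equivalently $400\gamma^2L^2\tau_{\max}\tau_C\le 1$) forces strictly below one, so it can be absorbed into the left-hand side. Note how $\tau_{\max}$ enters through Cauchy--Schwarz over the window and $\tau_C$ through Lemma~\ref{lem:lemmaA3}, producing precisely the product $\tau_{\max}\tau_C$ that the stepsize constraint is tailored to control.

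For the noise term I would use two complementary bounds: within a block, orthogonality of the martingale-difference noise gives $\le \tau_t\sigma^2 \le \tau_{\max}\sigma^2$, summing to $T\tau_{\max}\sigma^2$, while Lemma~\ref{lem:lemmaA3} across blocks gives $\le (\tau_C-1)T\sigma^2$; taking their geometric mean yields $T\sqrt{\tau_{\max}\tau_C}\,\sigma^2$, and then $\gamma^2\sqrt{\tau_{\max}\tau_C}\le \gamma/(20L)$ places this term comfortably inside the $\tfrac{\gamma}{5L}T\sigma^2$ budget. Collecting the three groups, absorbing the $A$-term, and dividing by the $1-O(\gamma^2L^2\tau_{\max}\tau_C)$ factor produces the stated inequality; matching the explicit constants $\tfrac{1}{132L^2}$, $\tfrac{\gamma}{5L}$, $\tfrac{100\gamma^2}{33}$ is a matter of tracking the Young-inequality parameters in the splits above.

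I expect the main obstacle to be the clean separation of the noise. Because gradients are stale, the deterministic part $\nabla f_{i_j}(x_{\pi_j})$ of a \emph{late}-applied job inside a window can depend on the fresh sample of an \emph{earlier}-applied job in the same window, so the orthogonality that makes $\E{\|\sum_j N_j\|^2}=\sum_j\E{\|N_j\|^2}$ is not automatic. Resolving this requires arguing with the correct filtration consistent with the block/correlation-period construction (the same care flagged in the footnote on \cite{koloskova2023shuffle}) so that the cross terms genuinely vanish, and it is essential that the noise bound stays linear in the delay: a crude $\tau_{\max}^2$ bound via Cauchy--Schwarz would violate the $\sigma^2$ budget. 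Everything else is routine but careful constant bookkeeping.
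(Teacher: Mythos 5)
Your proposal is correct and follows essentially the same route as the paper's proof: expand $x_t-x_{\pi_t}=-\gamma\sum_{j=\pi_t}^{t-1}g_{i_j}(x_{\pi_j})$, peel off the stochastic noise with an orthogonality bound that is linear in $\tau_t$, split the remainder into the heterogeneity sum (which is exactly the summand of $\Psi$) and the mean-gradient sum, collapse the double sums via Lemma~\ref{lem:lemmaA1} (contributing $\tau_{\max}$) and Lemma~\ref{lem:lemmaA3} (contributing $\tau_C$), and absorb the reappearing $A$ using $400\gamma^2L^2\tau_{\max}\tau_C\le1$. The only cosmetic deviations are that the paper inserts the extra group $\sum_j\bigl(\nabla f(x_{\pi_j})-\nabla f(x_j)\bigr)$ and bounds it by smoothness \emph{before} the collapse, which is what produces the constant $\tfrac{1}{132L^2}$ exactly (your pointwise bound $\|\nabla f(x_{\pi_t})\|^2\le2\|\nabla f(x_t)\|^2+2L^2\|x_t-x_{\pi_t}\|^2$ after the collapse costs roughly a factor of two there), and that the paper converts $\sum_t\tau_t\,\gamma^2\sigma^2$ to the $\sqrt{\tau_{\max}\tau_C}$ scale via $\tau_{\avg}\le\min\{\tau_{\max},2\tau_C\}$ (citing an external remark) rather than your self-contained geometric-mean argument --- both land inside the $\tfrac{\gamma}{5L}T\sigma^2$ budget.
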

\begin{proof}
Recall that $\pi_t = t - \tau_t$.
\begin{eqnarray*}
    && \E{\|x_t - x_{\pi_t}\|^2}
    = \gamma^2 \E{\left\|\sum_{j=\pi_t}^{t-1} g_{i_j}(x_{\pi_j})\right\|^2} \\
    &\le& \gamma^2 \E{\left\|\sum_{j=\pi_t}^{t-1} \nabla f_{i_j}(x_{\pi_j})\right\|^2} +\tau_t\gamma^2\sigma^2\\
    &\le& 3\gamma^2 \E{\left\|\sum_{j=\pi_t}^{t-1} \nabla f_{i_j}(x_{\pi_j}) - \nabla f(x_{\pi_j})\right\|^2}
    + 3\gamma^2 \E{\left\|\sum_{j=\pi_t}^{t-1} \nabla f(x_{\pi_j}) - \nabla f(x_j)\right\|^2}\\
    && + \; 3\gamma^2 \E{\left\|\sum_{j=\pi_t}^{t-1} \nabla f(x_{j})\right\|^2} +\tau_t\gamma^2\sigma^2 \\
    &\le& 3\gamma^2 L^2 \tau_t \sum_{j=\pi_t}^{t-1} \E{\|x_{\pi_j} - x_j\|^2}
    + 3\gamma^2 \E{\left\|\sum_{j=\pi_t}^{t-1} \nabla f_{i_j}(x_{\pi_j}) - \nabla f(x_{\pi_j})\right\|^2}\\
    && + \;3\gamma^2\tau_t \sum_{j=\pi_t}^{t-1} \left\| \nabla f(x_{j})\right\|^2 + \tau_t\gamma^2\sigma^2.
\end{eqnarray*}

Then we add summation over the entire iterates and count the number of times (which is $\le\tau_C$) each term appears in the sum:
\begin{eqnarray*}
    \sum_{t=0}^{T-1} \E{\|x_{\pi_t}- x_t\|^2}
    &\le& 3\gamma^2 L^2 \tau_{\max} \sum_{t=0}^{T-1}\sum_{j=\pi_t}^{t-1} \E{\|x_{\pi_j} - x_j\|^2} 
    + 3\gamma^2\tau_{\max}\sum_{t=0}^{T-1}\sum_{j=\pi_t}^{t-1} \E{\left\| \nabla f(x_{j})\right\|^2}\\
    && +\;3\gamma^2 \sum_{t=0}^{T-1}\E{\left\|\sum_{j=\pi_t}^{t-1} \nabla f_{i_j}(x_{\pi_j}) - \nabla f(x_{\pi_j})\right\|^2} 
    + \tau_{\text{avg}}T\gamma^2\sigma^2 \\
    &\le& 
    3\gamma^2 L^2 \tau_{\max}\tau_C \sum_{t=0}^{T-1}\E{\|x_{\pi_t} - x_t\|^2}
    + 3\gamma^2\tau_{\max}\tau_C \sum_{t=0}^{T-1} \E{\left\| \nabla f(x_{t})\right\|^2} \\
    && +\; 3\gamma^2 \sum_{t=0}^{T-1}\E{\left\|\sum_{j=\pi_t}^{t-1} \nabla f_{i_j}(x_j) - \nabla f(x_j)\right\|^2} +\tau_{\text{avg}}T\gamma^2\sigma^2\\
    &\le& \frac{3}{400} \sum_{t=0}^{T-1} \E{\|x_{\pi_t} - x_t\|^2}
    + \frac{3}{400L^2} \sum_{t=0}^{T-1} \E{\left\| \nabla f(x_{t})\right\|^2}\\
    &&+\; 3\gamma^2 \sum_{t=0}^{T-1}\E{\left\|\sum_{j=\pi_t}^{t-1} \nabla f_{i_j}(x_{\pi_j}) - \nabla f(x_{\pi_j})\right\|^2} +\tau_{\text{avg}}T\gamma^2\sigma^2,
\end{eqnarray*}
provided that $\gamma^2 L^2 \tau_{\max}\tau_C \le \frac{1}{400}$. After cancellation, we get
\begin{eqnarray*}
\sum_{t=0}^{T-1} \E{\|x_t - x_{\pi_t}\|^2}
&\le&
    \frac{1}{132L^2} \sum_{t=0}^{T-1} \E{\left\| \nabla f(x_{t})\right\|^2}
    +  2\tau_{\text{avg}}T\gamma^2\sigma^2 \\
    && +\;\frac{100\gamma^2}{33}\sum_{t=0}^{T-1}\E{\left\|\sum_{j=\pi_t}^{t-1} \nabla f_{i_j}(x_{\pi_j}) - \nabla f(x_{\pi_j})\right\|^2}.
\end{eqnarray*}
Since $\tau_{\avg} \le \tau_{\max}$ and $\tau_{\avg} \le 2\tau_C$ (the latter holds because of the Remark $5$ in \cite{koloskova2022sharper}), then we finally derive 
\begin{eqnarray*}
\sum_{t=0}^{T-1} \E{\|x_t - x_{\pi_t}\|^2}
    &\le&
    \frac{1}{132L^2} \sum_{t=0}^{T-1} \E{\left\| \nabla f(x_{t})\right\|^2}
    +  \frac{2\tau_{\text{avg}}}{20L\sqrt{\tau_{\max}\tau_C}}T\gamma\sigma^2 \\
    && +\;\frac{100\gamma^2}{33}\sum_{t=0}^{T-1}\E{\left\|\sum_{j=\pi_t}^{t-1} \nabla f_{i_j}(x_{\pi_j}) - \nabla f(x_{\pi_j})\right\|^2}\\
    &\le& \frac{1}{132L^2} \sum_{t=0}^{T-1} \E{\left\| \nabla f(x_{t})\right\|^2}
    +  \frac{\gamma}{5L}T\sigma^2\\
    && +\;\frac{100\gamma^2}{33}\sum_{t=0}^{T-1}\E{\left\|\sum_{j=\pi_t}^{t-1} \nabla f_{i_j}(x_{\pi_j}) - \nabla f(x_{\pi_j})\right\|^2}.
\end{eqnarray*}
\end{proof}

We can combine all the previous lemmas and bound $\E{\|x_t - \wtilde{x}_t\|^2}$ in the following lemma:

\begin{lemmasec}\label{lem:lemmaD5}
	If $20 \gamma L\tau \le 1$ and $20\gamma L \sqrt{\tau_{\max}\tau_C} \le 1$ hold, then
	\begin{eqnarray}
		\sum_{t=0}^{T-1} \E{\|x_t - \wtilde{x}_t\|^2}
		&\le&
		5\gamma^2 \Phi 
        + \gamma^2 \Psi
        + \frac{1}{ 5460 L^2} B 
        + \frac{1}{240L}\gamma\sigma^2T.\label{eq:lemmaD5_1}
	\end{eqnarray}
\end{lemmasec}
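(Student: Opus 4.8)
The plan is to chain Lemmas~\ref{lem:lemmaD2} and~\ref{lem:lemmaD3} and then reconcile the numerical constants. The two hypotheses $20\gamma L\tau\le 1$ and $20\gamma L\sqrt{\tau_{\max}\tau_C}\le 1$ are precisely the conditions under which those two lemmas apply, so no new restriction on $\gamma$ is introduced. The argument is therefore pure bookkeeping, with a single genuinely tight comparison.

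First I would write down the conclusion of Lemma~\ref{lem:lemmaD2},
\[
\sum_{t=0}^{T-1}\E{\|x_t-\wtilde{x}_t\|^2}
\le \tfrac{25}{6}\gamma^2\Phi + \tfrac{1}{96}A + \tfrac{1}{9600L^2}B + \tfrac{\gamma^2\tau}{24}\sigma^2T,
\]
and observe that every term on the right already matches the target form except $\tfrac{1}{96}A$. The natural move is to substitute the estimate of Lemma~\ref{lem:lemmaD3}, $A\le \tfrac{1}{132L^2}B + \tfrac{\gamma}{5L}T\sigma^2 + \tfrac{100\gamma^2}{33}\Psi$, which redistributes the $\tfrac{1}{96}A$ contribution into a fresh $\Psi$-term $\tfrac{25}{792}\gamma^2\Psi$, an additional $B$-term $\tfrac{1}{12672L^2}B$, and an additional noise term $\tfrac{\gamma}{480L}T\sigma^2$.

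It then remains to check the four resulting coefficients against the claimed bound. The $\Phi$-coefficient is $\tfrac{25}{6}\le 5$ and the $\Psi$-coefficient is $\tfrac{25}{792}\le 1$, both comfortably slack. For the noise term I would use $\tau\le\tfrac{1}{20L\gamma}$ (since $\tau=\lfloor\tfrac{1}{20L\gamma}\rfloor$) to bound $\tfrac{\gamma^2\tau}{24}\le\tfrac{\gamma}{480L}$; adding the second noise contribution $\tfrac{\gamma}{480L}T\sigma^2$ yields exactly $\tfrac{\gamma}{240L}T\sigma^2$, matching the target. The only delicate point, and the one I expect to be the main obstacle, is the $B$-coefficient: one must verify $\tfrac{1}{9600}+\tfrac{1}{12672}=\tfrac{29}{158400}\le\tfrac{1}{5460}$, which holds because $29\cdot 5460=158340<158400$. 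This inequality is tight by design (the constant $5460$ is essentially $\lfloor 158400/29\rfloor$), so $\tfrac{1}{5460L^2}$ is precisely the coefficient that survives on $B$, and the stated bound follows.
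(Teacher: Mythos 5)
Your proposal is correct and follows exactly the paper's own route: substitute Lemma~\ref{lem:lemmaD3} into the $\tfrac{1}{96}A$ term of Lemma~\ref{lem:lemmaD2} and verify the four coefficients, with the key check being $\tfrac{1}{9600}+\tfrac{1}{12672}=\tfrac{29}{158400}\le\tfrac{1}{5460}$ and the noise terms summing to $\tfrac{\gamma}{240L}T\sigma^2$. Your constant bookkeeping is in fact tidier than the paper's own intermediate display (which records the noise contribution loosely before landing on the same final bound), so there is nothing to fix.
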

\begin{proof}
	Summary of obtained inequalities 
	\begin{eqnarray*}
		\sum_{t=0}^{T-1} \E{\|x_t - \wtilde{x}_t\|^2}
		&\overset{{\rm Lemma}~\ref{lem:lemmaD2}}{\le}&
		\frac{25}{6}\gamma^2 \Phi 
        + \frac{1}{96}A 
		+ \frac{1}{9600L^2} B 
		+ \frac{\gamma^2\tau}{24}\sigma^2T\\
		A &\overset{{\rm Lemma}~\ref{lem:lemmaD3}}{\le}& 
        \frac{1}{132L^2} B
        + \frac{100}{33}\gamma^2 \Psi 
        + \frac{\gamma}{5L}T\sigma^2.
	\end{eqnarray*}

Hence, using the restriction $20\gamma L\tau \le 1$ we get 
\begin{eqnarray*}
	\sum_{t=0}^{T-1} \E{\|x_t - \wtilde{x}_t\|^2}
	&\le& \frac{25}{6}\gamma^2 \Phi 
    + \frac{1}{96}A 
    + \frac{1}{9600L^2} B 
    + \frac{\gamma }{480L}\sigma^2T\\
	&\le&  \frac{25}{6}\gamma^2 \Phi
    + \frac{1}{96\cdot 132L^2} B 
    + \frac{1}{ 9600L^2} B 
    + \frac{1}{30}\gamma^2 \Psi 
    + \frac{11\gamma T \sigma^2}{100L}\\
    &=& 5\gamma^2 \Phi
    + \frac{1}{5460L^2} B 
    + \gamma^2 \Psi 
    + \frac{\gamma}{240L}T \sigma^2.
\end{eqnarray*}
\end{proof}

\subsection{Proof of Theorem~\ref{th:theorem3}}

\theoremthird*
\begin{proof}
The analysis starts from deriving a descent inequality for the virtual iterates $\wtilde{x}_t$ defined in \eqref{eq:real_virtual_seq}. Consider two cases for virtual iterates: depending on whether restarts happen or not.

{\bf Iterations without restart:} If restarts do not happen, namely $(t+1)\mod\tau\ne 0$ then from the smoothness assumption of $f$ and \eqref{eq:real_virtual_seq}, we have
\begin{eqnarray*}
	\E{f(\wtilde{x}_{t+1})}
	&\le& \E{f(\wtilde{x}_t)}
	- \gamma\E{\langle \nabla f(\wtilde{x}_t), \nabla f(x_t) \rangle}
	+ \frac{L\gamma^2}{2}\E{\|\nabla f(x_t)\|^2} \\
	&=&   \E{f(\wtilde{x}_t)}
	- \frac{\gamma}{2}\E{\|\nabla f(\wtilde{x}_t)\|^2} - \E{\frac{\gamma}{2}\|\nabla f(x_t)\|^2} + \frac{\gamma}{2}\E{\|\nabla f(\wtilde{x}_t) - \nabla f(x_t)\|^2}\\
	&& + \; \frac{L\gamma^2}{2}\E{\|\nabla f(x_t)\|^2} \\
	&\le& \E{f(\wtilde{x}_t)}
	- \frac{\gamma}{2}\E{\|\nabla f(\wtilde{x}_t)\|^2 }- \frac{\gamma}{2}\E{\|\nabla f(x_t)\|^2} + \frac{L^2\gamma}{2}\E{\|\wtilde{x}_t - x_t\|^2}\\
	&& + \; \frac{L\gamma^2}{2}\E{\|\nabla f(x_t)\|^2} \\
	&\le& \E{f(\wtilde{x}_t)}
	- \frac{\gamma}{2}\E{\|\nabla f(\wtilde{x}_t)\|^2} - \frac{\gamma}{3}\E{\|\nabla f(x_t)\|^2} + \frac{L^2\gamma}{2}\E{\|\wtilde{x}_t - x_t\|^2}.
\end{eqnarray*}

{\bf Iterations with restart:} If a restart happens, namely $(t+1)\mod\tau = 0$ then
\begin{eqnarray*}
	\wtilde{x}_{t+1}
	&=& x_{t+1} = x_t - \gamma g_{i_t}(x_{\pi_t}) \\
	&=& \wtilde{x}_t + (x_t - \wtilde{x}_t) -\gamma\nabla f(x_t) + (\gamma\nabla f(x_t) - \gamma g_{i_t}(x_{\pi_t})) \\
	&=& \wtilde{x}_t -\gamma\nabla f(x_t) + \gamma \underbrace{\sum_{j=r(t)}^t \nabla f(x_j) -  g_{i_j}(x_{\pi_j})}_{= \Delta_t^t}.
\end{eqnarray*}

Then we use smoothness of $f$ to get
\begin{eqnarray*}
	&& \E{f(\wtilde{x}_{t+1})} \\
	&\le& \E{f(\wtilde{x}_t)} - \gamma\E{\langle \nabla f(\wtilde{x}_t), \nabla f(x_t) - \wtilde{\Delta}_t^t \rangle} + \frac{L\gamma^2}{2}\E{\|\nabla f(x_t) - \wtilde\Delta_t^t\|^2} \\
	&\le& \E{f(\wtilde{x}_t)}
	- \gamma\E{\langle \nabla f(\wtilde{x}_t), \nabla f(x_t) \rangle}
	+ \gamma\E{\langle \nabla f(\wtilde{x}_t), \wtilde{\Delta}_t^t \rangle}
	+ L\gamma^2\E{\|\nabla f(x_t)\|^2}
	+ L\gamma^2\E{\|\wtilde{\Delta}_t^t\|^2}\\
	&\le& \E{f(\wtilde{x}_t)}
	-\frac{\gamma}{2}\E{\|\nabla f(\wtilde{x}_t)\|^2}
	-\frac{\gamma}{2}\E{\|\nabla f(x_t)\|^2}
	+\frac{\gamma}{2}\E{\|\nabla f(\wtilde{x}_t) - \nabla f(x_t)\|^2} \\
	&& \;+\; \frac{1}{160L}\E{\|\nabla f(\wtilde{x}_t)\|^2} +  40L\gamma^2\E{\|\wtilde{\Delta}_t^t\|^2}
	+ L\gamma^2\E{\|\nabla f(x_t)\|^2}
	+ L\gamma^2\E{\|\wtilde{\Delta}_t^t\|^2}\\
	&\le& \E{f(\wtilde{x}_t)}
	- \frac{\gamma}{2}\E{\|\nabla f(\wtilde{x}_t)\|^2}
	- \frac{\gamma}{3}\E{\|\nabla f(x_t)\|^2}
	+ \frac{L^2\gamma}{2}\E{\|\wtilde{x}_t - x_t\|^2}
	+ \frac{1}{ 160L}\E{\|\nabla f(\wtilde{x}_t)\|^2},
\end{eqnarray*}
where in the third inequality Young's inequality is used. 

If we denote by $\xi_t$ the indicator function of restart event at $t+1$, namely $\xi_{k\tau-1}=1$ for all $k\ge1$ and is $0$ otherwise, then we can unify the descent inequality for both cases as follows:
\begin{eqnarray}
	\E{f(\wtilde{x}_{t+1})}
	&\le& \E{f(\wtilde{x}_t)}
	- \frac{\gamma}{2}\E{\|\nabla f(\wtilde{x}_t)\|^2}
	- \frac{\gamma}{3}\E{\|\nabla f(x_t)\|^2}
	+ \frac{L^2\gamma}{2}\E{\|\wtilde{x}_t - x_t\|^2} \notag\\
	&& \;+\; \left(
	\frac{1}{ 160L}\E{\|\nabla f(\wtilde{x}_t)\|^2}
	+ 41L\gamma^2\E{\|\Delta_t^t\|^2}\right)\xi_t,
	\quad \forall~t\ge0. \label{eq:theorem3_1}
\end{eqnarray}

Next, we apply summation over the entire iterates and bound the terms that appear only in every $\tau$ iteration.

\begin{eqnarray*}
	\frac{1}{L}\E{\|\nabla f(\wtilde{x}_t)\|^2}
	&=& \frac{1}{L\tau}\sum_{j=0}^{\tau-1} \E{\|\nabla f(\wtilde{x}_t)\|^2}\\
	&\le& \frac{2}{L\tau}\sum_{j=0}^{\tau-1} \E{\|\nabla f(\wtilde{x}_t) - \nabla f(\wtilde{x}_{t-j})\|^2}
	+ \frac{2}{L\tau}\sum_{j=0}^{\tau-1} \E{\|\nabla f(\wtilde{x}_{t-j})\|^2} \\
	&\le& \frac{2L}{\tau}\sum_{j=0}^{\tau-1} \E{\|\wtilde{x}_t - \wtilde{x}_{t-j}\|^2}
	+ \frac{2}{L\tau}\sum_{j=0}^{\tau-1} \E{\|\nabla f(\wtilde{x}_{t-j})\|^2 }\\
	&\le& \frac{2L\gamma^2}{\tau}\sum_{j=0}^{\tau-1} \E{\left\| \sum_{l=t-j}^{t-1} \nabla f(x_l) \right\|^2}
	+ \frac{2}{L\tau}\sum_{j=0}^{\tau-1} \E{\|\nabla f(\wtilde{x}_{t-j})\|^2} \\
	&\le& 2L\gamma^2 \sum_{j=0}^{\tau-1}\sum_{l=t-j}^{t-1} \E{\left\| \nabla f(x_l) \right\|^2}
	+ \frac{2}{L\tau}\sum_{j=0}^{\tau-1} \E{\|\nabla f(\wtilde{x}_{t-j})\|^2} \\
	&\le& 2L\gamma^2\tau \sum_{j=0}^{\tau-1} \E{\left\| \nabla f(x_{t-j}) \right\|^2}
	+ \frac{2}{L\tau}\sum_{j=0}^{\tau-1}\E{\|\nabla f(\wtilde{x}_{t-j})\|^2} \\
	&\le& \frac{\gamma}{10} \sum_{j=0}^{\tau-1} \E{\left\| \nabla f(x_{t-j}) \right\|^2}
	+ 80\gamma \sum_{j=0}^{\tau-1} \E{\|\nabla f(\wtilde{x}_{t-j})\|^2}
\end{eqnarray*}
provided that $\frac{1}{40} \le L\gamma\tau \le \frac{1}{20}$ (e.g., $\tau=\lfloor \frac{1}{20L\gamma}\rfloor$). Then we can use this bound to derive
\begin{eqnarray}\label{eq:theorem3_2}
	\sum_{t=0}^{T-1} \frac{1}{160L}\E{\|\nabla f(\wtilde{x}_t)\|^2 }\xi_t
	\le \frac{\gamma}{1600} \sum_{t=0}^{T-1}\E{\left\| \nabla f(x_{t}) \right\|^2}
	+ \frac{\gamma}{2} \sum_{t=0}^{T-1}\E{\|\nabla f(\wtilde{x}_{t})\|^2}.
\end{eqnarray}

Then we use Lemma \ref{lem:lemmaD1} to bound $\Delta_t^t$:
\begin{eqnarray*}
	L\gamma^2\sum_{t=0}^{T-1} \E{\|\Delta_t^t\|^2}\xi_t
	&\le& 4L\gamma^2 \sum_{t=0}^{T-1}\E{\phi_t^t(x_{r(t)})}\xi_t
	+ \frac{L\gamma^2}{6\tau} \sum_{t=0}^{T-1}\sum_{j=r(t)}^{t} \E{\phi_t^{j-1}(x_{r(t)})}\xi_t \\
	&& \;+\; \frac{25}{6}L^3\gamma^2\tau \sum_{t=0}^{T-1}\sum_{j=r(t)}^{t} \E{\|x_j - x_{\pi_j}\|^2}\xi_t\\
	&& +\; \frac{L\gamma^2\tau}{24}\sum_{t=0}^{T-1}\sum_{j=r(t)}^{t} \E{\left\| \nabla f(x_{j}) \right\|^2}\xi_t 
    + \frac{L\gamma^2\tau}{24}\sigma^2\sum_{t=0}^{T-1}\xi_t.
\end{eqnarray*}
Notice that summation over the entire iterates with weights $\xi_t$ is equivalent to division by $\tau$.
$$
\sum_{t=0}^{T-1}\sum_{j=r(t)}^{t} \phi_t^{j-1}(x_{r(t)})\xi_t
= \sum_{t=0}^{T-1}\sum_{j=r(t)}^{t-1} \phi_t^{j}(x_{r(t)})\xi_t
= \sum_{t=0}^{T-1}\sum_{j=r(t)}^{t-1} \phi_j^{j}(x_{r(j)})\xi_t
\le \sum_{t=0}^{T-1} \phi_t^{t}(x_{r(t)}).
$$
Hence
\begin{eqnarray*}
	41L\gamma^2\sum_{t=0}^{T-1} \E{\|\wtilde{\Delta}_t^t\|^2}\xi_t
	&\le& 164L\gamma^2 \sum_{t=0}^{T-1}\E{\phi_t^t(x_{r(t)})}\xi_t
	+ \frac{41L\gamma^2}{6\tau} \sum_{t=0}^{T-1} \E{\phi_t^{t}(x_{r(t)})} \notag \\
	&& \;+\; 171L^3\gamma^2\tau \underbrace{\sum_{t=0}^{T-1} \E{\|x_t - x_{\pi_t}\|^2}}_{= A}\notag \\
	&& \; + \; \frac{41}{24}L\gamma^2\tau \sum_{t=0}^{T-1} \E{\left\| \nabla f(x_{t}) \right\|^2} 
    + \frac{41}{24}L\gamma^2\sigma^2T.
\end{eqnarray*}
where $A \le \frac{1}{132L^2}B 
+ \frac{100}{33}\gamma^2\Psi 
+ \frac{\gamma T\sigma^2}{5L} $ due to Lemma \ref{lem:lemmaD3}. Therefore,
\begin{eqnarray}
    &&41L\gamma^2\sum_{t=0}^{T-1} \E{\|\Delta_t^t\|^2}\xi_t
	\le 164L\gamma^2 \sum_{t=0}^{T-1}\E{\phi_t^t(x_{r(t)})}\xi_t
	+ \frac{41L\gamma^2}{6\tau} \sum_{t=0}^{T-1} \E{\phi_t^{t}(x_{r(t)})} \notag \\
	&& \;+\; \frac{171}{132}L\gamma^2\tau B 
    + 520L^3\gamma^4\tau \Psi 
    + \frac{171}{100}L\gamma^2\sigma^2 T \notag \\
	&& 
    \; + \; \frac{41}{24}L\gamma^2\tau \sum_{t=0}^{T-1} \E{\left\| \nabla f(x_{t}) \right\|^2} 
    + \frac{41}{24}L\gamma^2\sigma^2T\notag\\
    && \le 
    164L\gamma^2 \sum_{t=0}^{T-1}\E{\phi_t^t(x_{r(t)})}\xi_t
	+ \frac{41L\gamma^2}{6\tau} \sum_{t=0}^{T-1} \E{\phi_t^{t}(x_{r(t)})} 
    + \frac{171}{132}L\gamma^2\tau B 
    + 26L^2\gamma^3 \Psi\notag \\
    && 
    \;+\; \frac{171}{100}L\gamma^2\sigma^2 T 
    +  \frac{41}{24}L\gamma^2\tau B
    + \frac{41}{16}L\gamma^2\sigma^2T\notag\\
    &&\le  6950L^2\gamma^3 \sum_{t=0}^{T-1}\E{\phi_t^t(x_{r(t)})} 
    + \frac{793}{5280}\gamma B 
    + 26L^2\gamma^3 \Psi 
    + 4L\gamma^2\sigma^2T.\label{eq:theorem3_3}
\end{eqnarray}

Plugging \eqref{eq:theorem3_2} and \eqref{eq:theorem3_3} (and lemmas as well) into \eqref{eq:theorem3_1} and adding summation, we have 
\begin{eqnarray*}
&&\E{f(\wtilde{x}_{T}) - f(\wtilde{x}_0)}
\le
    -\frac{\gamma}{2} \sum_{t=0}^{T-1}\E{\|\nabla f(\wtilde{x}_t)\|^2}
    - \frac{\gamma}{3} \sum_{t=0}^{T-1}\E{\|\nabla f(x_t)\|^2}
    + \frac{L^2\gamma}{2} \sum_{t=0}^{T-1}\E{\|\wtilde{x}_t - x_t\|^2 }\notag\\
    && \;+\;
        \frac{1}{160L} \sum_{t=0}^{T-1}\xi_t\E{\|\nabla f(\wtilde{x}_t)\|^2}
        + 41L\gamma^2 \sum_{t=0}^{T-1}\xi_t\E{\|\Delta_t^t\|^2}\\
    &\le&
    - \frac{\gamma}{2} \sum_{t=0}^{T-1}\E{\|\nabla f(\wtilde{x}_t)\|^2}
    - \frac{\gamma}{3} B\\
    &&\;+\; \frac{L^2\gamma}{2} \left(
        5\gamma^2 \sum_{t=0}^{T-1} \E{\phi_t^{t-1}(x_{r(t)})}
        + \gamma^2 \Psi
        + \frac{1}{ 5460 L^2} B 
        + \frac{1}{240L}\gamma\sigma^2T\right) \notag\\
    && 
    \;+\; \frac{\gamma}{1600} B
    + \frac{\gamma}{2} \sum_{t=0}^{T-1}\E{\|\nabla f(\wtilde{x}_t)\|^2} \\
    && \;+\; 6950L^2\gamma^3 \sum_{t=0}^{T-1}\E{\phi_t^t(x_{r(t)})} 
    + \frac{793}{5280}\gamma B 
    + 26L^2\gamma^3 \Psi 
    + 4L\gamma^2\sigma^2T\\
    &\le&
    \left( -\frac{\gamma}{2} + \frac{\gamma}{2} \right) \sum_{t=0}^{T-1}\E{\|\nabla f(\wtilde{x}_t)\|^2} + \left( -\frac{\gamma}{3} 
    + \frac{\gamma}{10920} 
    + \frac{\gamma}{1600} 
    + \frac{793}{5280}\gamma \right) B \\
    && \;+\; \frac{5}{2}L^2\gamma^3 \sum_{t=0}^{T-1} \E{\phi_t^{t-1}(x_{r(t)})}
    + 6950L^2\gamma^3 \sum_{t=0}^{T-1}\E{\phi_t^t(x_{r(t)})}
    + 27L^2\gamma^3\Psi\\
    && \;+\;   5L\gamma^2\sigma^2T\\
    &\le& -\frac{\gamma}{5}B + \frac{5}{2}L^2\gamma^3 \sum_{t=0}^{T-1} \E{\phi_t^{t-1}(x_{r(t)})}
    + 6950L^2\gamma^3 \sum_{t=0}^{T-1}\E{\phi_t^t(x_{r(t)})}
    + 27L^2\gamma^3\Psi\\
    && \;+\;   5L\gamma^2\sigma^2T
\end{eqnarray*}
According to the statement of the theorem, we assume that
$$\sigma_{k,\tau}^2 \eqdef \sup\limits_{x\in\R^d}\max_{0 \le j < \tau}\E{\left\|\sum_{t=k\tau}^{\min\{k\tau+j, T\}}(\nabla f_{i_t}(x_{k\tau})-\nabla f(x_{k\tau}))\right\|^2},$$
and
$$\nu^2 \eqdef \sum_{t=0}^{T-1}\E{\left\|\sum_{j=\pi_t}^{t-1}(\nabla f_{i_j}(x_{\pi_j})-\nabla f(x_{\pi_j}))\right\|^2}$$
are bounded. So $\nu^2$ upper bounds $\Psi$ term in the above inequality while $\sigma_{k,\tau}^2$ upper bounds $\phi_t^{t-1}$ and $\phi_t^t$ terms. Then, by averaging we get

\begin{equation}\label{eq:theorem3_4}
\frac{1}{T}\sum_{t=0}^{T-1} \E{\|\nabla f(x_t)\|^2}
\le
\frac{5(f(x_0) - f^*)}{\gamma T}
+ PL^2\gamma^2 \left( \frac{1}{\lfloor T/\tau \rfloor}\sum_{k=0}^{\lfloor \frac{T}{\tau} \rfloor-1} \sigma^2_{k,\tau}
+ \frac{1}{T}\nu^2\right) 
+ 25L\gamma\sigma^2,
\end{equation}
where $P\eqdef 35000,$ and we use the fact that $\wtilde x_0 = x_0$ by the definition~\eqref{eq:real_virtual_seq}. 

\end{proof}

\subsection{Convergence Guarantees in Special Cases}\label{sec:special_cases_real_theorem1}

We emphasize that in the choice of stepsizes, we omit numerical constants for simplicity, and show the dependency on the parameters of the problem only.

\subsubsection{Pure Asynchronous SGD}

\begin{algorithm*}[t]
\caption{Pure Asynchronous SGD}
\label{alg:pure_asynchronous}
\begin{algorithmic}[1]
\State \textbf{Input:} $x_0\in \R^{d}$, stepsize $\gamma > 0$, set of assigned jobs $\cA_0 = \empty$, set of received jobs $\cR_{0} = \empty$
\State \textbf{Initialization:} for all jobs $(i, 0) \in \cA_1$, server assigns worker $i$ to compute a stochastic gradient $g_i(x_0)$ 
    \For{$t = 0,1,2,\dots, T-1$}
		\State once worker $i_t$ finishes a job $(i_t,\pi_t) \in \cA_{t+1}$, it sends $g_{i_t}(x_{\pi_t})$ to the server
        \State server updates the current model $x_{t+1} = x_t - \gamma g_{i_t}(x_{\pi_t})$ and the set $\cR_{t+1} = \cR_{t} \cup \{(i_t, \pi_t)\}$
        \State server assigns worker $i_t$ to compute a gradient $g_{i_t}(x_{t+1})$
        \State server updates the set $\cA_{t+2} = \cA_{t+1} \cup \{(i_{t}, t+1)\}$
    \EndFor
\end{algorithmic}	
\end{algorithm*}

At the beginning of the algorithm, the server assigns each worker to compute a gradient at $x_0$, i.e. $\cA_1 = \{(i,0)\colon i\in[n]\}$. As soon as one of them (denoted by $i_t$) finishes its job, the server assigns him back new job at the updated point, i.e. $k_t \equiv i_t$. We present the convergence for pure asynchronous SGD.

\begin{propositionsec}\label{prop:pure_async_sgd}
    Let Assumptions~\ref{asmp:smoothness}, \ref{asmp:bound_var}, and \ref{asmp:grad_sim} hold. Let the stepsize $\gamma$ satisfy inequalities $20L\gamma\sqrt{\tau_{\max}\tau_C} \le 1$ and $6L\gamma \le 1$. Let $\tau = \lfloor\frac{1}{20L\gamma} \rfloor$. Then the iterates of Algorithm~\ref{alg:pure_asynchronous} satisfy
    \begin{equation}\label{eq:propB1_1}
        \E{\|\nabla f(\hat{x}_t)\|^2} \le \cO\left(\frac{F_0}{\gamma T} + L\gamma\sigma^2+ \zeta^2\right),
    \end{equation}
    where $\hat{x}_t$ is chosen uniformly at random from $\{x_0, \dots, x_{T-1}\}$ and $F_0 \eqdef f(x_0) - f^*.$ Moreover, if we tune the stepsize, then the iterates of pure asynchronous SGD satisfy
    \begin{equation}
        \E{\|\nabla f(\hat{x}_t)\|^2} \le \cO\left(\frac{LF_0\sqrt{\tau_{\max}\tau_C}}{T} + \left(\frac{LF_0\sigma^2}{T}\right)^{1/2}+ \zeta^2\right),
    \end{equation}
\end{propositionsec}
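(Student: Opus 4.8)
The plan is to instantiate Theorem~\ref{th:theorem3} for this special case and then bound the problem-dependent quantity $\Phi$ using only the bounded-heterogeneity Assumption~\ref{asmp:grad_sim}. The pure-asynchronous rule $k_t\equiv i_t$, $\alpha_{t+1}=t+1$ merely fixes how the received order $\{i_t\}$ and the delays $\tau_t$ arise and guarantees that $\tau_{\max}$ and $\tau_C$ are well defined, so Theorem~\ref{th:theorem3} applies verbatim under the stated restrictions $6L\gamma\le 1$, $20L\gamma\sqrt{\tau_{\max}\tau_C}\le 1$ with $\tau=\lfloor\frac{1}{20L\gamma}\rfloor$, giving
\begin{equation*}
    \E{\|\nabla f(\hat x_T)\|^2}\le\cO\left(\frac{F_0}{\gamma T}+L\gamma\sigma^2+L^2\gamma^2\Phi\right),\qquad \Phi=\frac{1}{\lfloor T/\tau\rfloor}\sum_{k=0}^{\lfloor T/\tau\rfloor}\sigma_{k,\tau}^2+\frac{1}{T}\nu^2.
\end{equation*}
Thus the entire task reduces to showing $L^2\gamma^2\Phi=\cO(\zeta^2)$.

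For the first part of $\Phi$, I would bound each sequence-correlation term directly: the sum defining $\sigma_{k,\tau}^2$ contains at most $\tau$ summands of the form $\nabla f_{i_t}(x_{k\tau})-\nabla f(x_{k\tau})$, each of squared norm at most $\zeta^2$ by Assumption~\ref{asmp:grad_sim}. Lemma~\ref{lem:lemmaA1} then yields $\sigma_{k,\tau}^2\le\tau^2\zeta^2$, so the average over $k$ is $\cO(\tau^2\zeta^2)$. Multiplying by $L^2\gamma^2$ and using $L\gamma\tau\le\frac{1}{20}$ gives a contribution $(L\gamma\tau)^2\zeta^2\le\frac{1}{400}\zeta^2=\cO(\zeta^2)$.

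The more delicate part is $\frac{1}{T}\nu^2$, and this is where I expect the main (though still routine) work to lie. I would first apply Lemma~\ref{lem:lemmaA1} inside the delay window, which has $\tau_t=t-\pi_t\le\tau_{\max}$ terms, to obtain $\nu^2\le\tau_{\max}\sum_{t=0}^{T-1}\sum_{j=\pi_t}^{t-1}\E{\|\nabla f_{i_j}(x_{\pi_j})-\nabla f(x_{\pi_j})\|^2}$. The resulting double sum is exactly of the form handled by the counting Lemma~\ref{lem:lemmaA3}, hence it is bounded by $(\tau_C-1)\sum_{t=0}^{T-1}\E{\|\nabla f_{i_t}(x_{\pi_t})-\nabla f(x_{\pi_t})\|^2}\le(\tau_C-1)T\zeta^2$, again using Assumption~\ref{asmp:grad_sim}. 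Altogether $\frac{1}{T}\nu^2\le\tau_{\max}(\tau_C-1)\zeta^2\le\tau_{\max}\tau_C\zeta^2$, and multiplying by $L^2\gamma^2$ and invoking $L^2\gamma^2\tau_{\max}\tau_C\le\frac{1}{400}$ gives $\cO(\zeta^2)$. Combining the two contributions proves $L^2\gamma^2\Phi=\cO(\zeta^2)$ and hence~\eqref{eq:propB1_1}. The key point worth emphasizing is that the delay factors $\tau_{\max}$ and $\tau_C$ introduced by the two counting lemmas are precisely the product controlled by the stepsize constraint, so no residual dependence on the delays survives in front of $\zeta^2$.

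Finally, for the tuned rate I would optimize $\gamma$ in $\frac{F_0}{\gamma T}+L\gamma\sigma^2$ over the feasible range $\gamma\le\frac{1}{20L\sqrt{\tau_{\max}\tau_C}}$ (the binding constraint, since $20\sqrt{\tau_{\max}\tau_C}\ge 6$) via the standard stepsize-tuning lemma: balancing the two terms gives $\bigl(\tfrac{LF_0\sigma^2}{T}\bigr)^{1/2}$ whenever the unconstrained minimizer is feasible, and otherwise the cap contributes $\tfrac{F_0}{\gamma_{\max}T}=\tfrac{20LF_0\sqrt{\tau_{\max}\tau_C}}{T}$. Adding back the $\zeta^2$ term then yields $\cO\!\left(\tfrac{LF_0\sqrt{\tau_{\max}\tau_C}}{T}+\bigl(\tfrac{LF_0\sigma^2}{T}\bigr)^{1/2}+\zeta^2\right)$, as claimed.
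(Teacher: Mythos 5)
Your proposal is correct and follows essentially the same route as the paper: instantiate Theorem~\ref{th:theorem3}, bound $\sigma_{k,\tau}^2\le\tau^2\zeta^2$ via Lemma~\ref{lem:lemmaA1} and Assumption~\ref{asmp:grad_sim}, bound $\nu^2\le\tau_{\max}\tau_C\zeta^2 T$ via Lemmas~\ref{lem:lemmaA1} and~\ref{lem:lemmaA3}, and absorb both into $\cO(\zeta^2)$ using the stepsize restrictions before tuning $\gamma$ over the two standard cases. The only cosmetic difference is that the paper additionally remarks that complete cycles of the $n$ workers cancel inside the sequence-correlation sum, an observation that is not needed for the $\tau^2\zeta^2$ bound you (and the paper) actually use.
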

\begin{proof}
    We need to bound quantities $\sigma_{k,\tau}^2$ and $\nu^2.$ If within interval $[k\tau, k\tau+j]$ all indices from $[n]$ appear, then we do not have to consider them since
    \begin{align*}
        \sum_{i=1}^{n} (\nabla f_{i}(x) - \nabla f(x)) = 0.
    \end{align*}
    Thus, we have 
    \begin{align*}
        \sum_{t=k\tau}^{k\tau+j} (\nabla f_{i_t}(x) - \nabla f(x)) &=  \sum_{i \in \cS} (\nabla f_{i}(x) - \nabla f(x)),
    \end{align*}

    where $|S|\le \tau,$ and hence, we continue as follows
    \begin{eqnarray*}
        \left\|\sum_{t=k\tau}^{k\tau+j} (\nabla f_{i_t}(x) - \nabla f(x)) \right\|^2 &=&  \left\|\sum_{i \in \cS} (\nabla f_{i}(x) - \nabla f(x))\right\|^2\\
        &\le& |\cS|\sum_{i \in \cS} \left\|\nabla f_{i}(x) - \nabla f(x)\right\|^2\\
        &\overset{\text{As.} \ref{asmp:grad_sim}}{\le} &|S|^2\zeta^2 \le \tau^2\zeta^2.
    \end{eqnarray*}
    Next, we bound the term with $\nu^2$. We have
    \begin{eqnarray*}
        \nu^2 &=& \sum_{t=0}^{T-1}\left\|\sum_{j=\pi_t}^{t-1}(\nabla f_{i_j}(x_{\pi_t}) - \nabla f(x_{\pi_t}))\right\|^2\\
        &\overset{\text{ Lemma } \ref{lem:lemmaA1}}{\le}& \sum_{t=0}^{T-1}\tau_t\sum_{j=\pi_t}^{t-1}\left\|\nabla f_{i_j}(x_{\pi_t}) - \nabla f(x_{\pi_t})\right\|^2\\
        &\overset{\text{Lemma } \ref{lem:lemmaA3}}{\le}& 
        \tau_{C}\tau_{\max}\zeta^2T.
    \end{eqnarray*}

Thus, the final rate is  
\begin{eqnarray*}
        \E{\|\nabla f(\hat{x}_t)\|^2} &\le& \cO\left(\frac{F_0}{\gamma T} 
        + L\gamma\sigma^2
        + L^2\gamma^2\left(\frac{1}{\lfloor T/\tau\rfloor}\sum_{k=1}^{\lfloor \frac{T}{\tau}\rfloor}\tau^2\zeta^2 
        + \frac{1}{T}\tau_C\tau_{\max}\zeta^2T\right)\right)\\
        &=& \cO\left(\frac{F_0}{\gamma T} 
        + L\gamma\sigma^2
        + L^2\gamma^2\tau^2\zeta^2 
        + L^2\gamma^2\tau_C\tau_{\max}\zeta^2\right).
    \end{eqnarray*}
    Using the stepsize restriction and the value of $\tau$, we get
    \begin{eqnarray*}
         \E{\|\nabla f(\hat{x}_t)\|^2} &\le& \cO\left(\frac{F_0}{\gamma T} 
         + L\gamma\sigma^2
         + \zeta^2\right).
    \end{eqnarray*}
    Now we need to tune the stepsize. If we choose $\gamma = \Theta\left(\min\left\{\frac{1}{L\sqrt{\tau_{\max}\tau_C}}, \left(\frac{F_0}{L\sigma^2 T}\right)^{1/2}\right\}\right)$, then we have two cases
    \begin{itemize}
        \item if $\gamma = \Theta\left(\frac{1}{L\sqrt{\tau_{\max}\tau_C}}\right)$, then 
        \begin{eqnarray*}
             \E{\|\nabla f(\hat{x}_t)\|^2} &\le& \cO\left(\frac{F_0}{T} \sqrt{L\tau_{\max}\tau_C}
         + L\sigma^2\left(\frac{F_0}{L\sigma^2T}\right)^{1/2}
         + \zeta^2\right)\\
         &=& \cO\left(\frac{LF_0\sqrt{\tau_{\max}\tau_C}}{T} 
         + \left(\frac{LF_0\sigma^2}{T}\right)^{1/2}
         + \zeta^2\right).
        \end{eqnarray*}
        
        \item $\gamma = \Theta\left(\left(\frac{F_0}{L\sigma^2 T}\right)^{1/2}\right),$ then 
        \begin{eqnarray*}
             \E{\|\nabla f(\hat{x}_t)\|^2} &\le& \cO\left(\frac{F_0}{T} \left(\frac{L\sigma^2T}{F_0}\right)^{1/2}
         + L\sigma^2\left(\frac{F_0}{L\sigma^2T}\right)^{1/2}
         + \zeta^2\right)\\
         &=& \cO\left(\left(\frac{LF_0\sigma^2}{T}\right)^{1/2}
         + \zeta^2\right).
        \end{eqnarray*}
    \end{itemize}
    It is left to choose the stepsize to be the minimum over two cases.
\end{proof}

\subsubsection{Mini-batch SGD}
First, we show that the iterates of mini-batch SGD suit the update rule \eqref{eq:real_iterates_update}. The standard iteration of mini-batch SGD has the following form
\begin{equation*}
    z_{q+1} = z_q - \frac{\gamma}{b}\sum_{i\in B_q}\nabla f_i(z_q).\footnote{Here we denote iterates by $z_q$ in order not to confuse them with the notation of the paper}
\end{equation*}
Let $B_q = \{i_{q,1}, \dots, i_{q,b}\}$ be the batch at iteration $q,$ i.e. $i_{q,l} \in [n]$ for all $q$ and $l$, and $B_q$ is sampled from $[n]$ uniformly at random without replacement. We initialize $x_0 = z_0,$ and the stepsize $\wtilde{\gamma} = \frac{\gamma}{b}.$ Then let us consider the following chain of updates
\begin{align*}
    x_{qb+l} &= x_{qb+l-1} - \wtilde{\gamma} \nabla f_{i_{q,l}}(x_{qb}) \quad \text{for } l = 1, \dots, b.
\end{align*}
Then,
\begin{eqnarray}
    x_{qb+b} &=& x_{qb + b-1} - \wtilde{\gamma}\nabla f_{i_{k,b-1}}(x_{qb})\notag\\
    &\vdots&\notag\\
    &=& x_{qb} - \wtilde{\gamma}\sum_{i \in B_q}\nabla f_i(x_{qb}),\label{eq:mini-batch-SGD-update}
\end{eqnarray}
which is exactly the mini-batch update. Besides, we have that $x_{qb} \equiv z_q.$ 

Let us give now the values for $\tau_{\max}$ and $\tau_C.$ At the beginning of each step of mini-batch SGD the server selects uniformly at random $b$ workers and sends them current model $x_{qb}$. Then, workers send one-by-one evaluated gradients while the server does not assign new jobs until the last worker finishes its job. Observing \eqref{eq:mini-batch-SGD-update} we conclude that the maximum delay is $\tau_{\max} = b - 1 <b$ for the slowest worker while $\tau_C = b-1 < b$ which is attained at the beginning of the batch. 

Now we apply Theorem~\ref{th:theorem3}. For that, we need $\tau$ to be a multiple of $b$. This is achieved by carefully choosing the stepsize and $\tau$ restrictions.

\begin{propositionsec}\label{prop:minibatch_sgd}
Let Assumptions~\ref{asmp:smoothness} and \ref{asmp:grad_sim} hold. Let the stepsize $\gamma$ satisfy $20L\gamma \le 1.$ Let $\tau = b\lfloor\frac{1}{20L\gamma}\rfloor$. Then the iterates of mini-batch SGD with batch size $b$ satisfy
\begin{equation}\label{prop:prop-mini-batch}
    \E{\|\nabla f(\hat{x}_{t})\|^2} \le \cO\left(\frac{F_0}{\gamma T} + \frac{L\gamma\zeta^2}{b}\right),
\end{equation}
where $\hat{x}_t$ is chosen uniformly at random from $\{x_0, \dots, x_{T-1}\}$ and $F_0 \eqdef f(x_0) - f^*.$ Moreover, if we tune the stepsize, then the iterates of mini-batch SGD with batch size $b$ satisfy
\begin{equation}\label{prop:prop-mini-batch-tuned}
    \E{\|\nabla f(\hat{x}_{t})\|^2} \le \cO\left(\frac{LF_0}{T} + \sqrt{\frac{LF_0\zeta^2}{Tb}}\right).
\end{equation}
\end{propositionsec}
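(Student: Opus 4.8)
The plan is to invoke Theorem~\ref{th:theorem3} on the inner iterates $\{x_{qb+l}\}$ produced by the reformulation \eqref{eq:mini-batch-SGD-update}, which run with per-step stepsize $\wtilde{\gamma} = \gamma/b$ over $Tb$ inner iterations and satisfy $\tau_{\max} = \tau_C = b-1$. Since only Assumptions~\ref{asmp:smoothness} and \ref{asmp:grad_sim} are imposed, the updates use exact local gradients, so I would set $\sigma^2 = 0$ and drop the middle term $L\wtilde{\gamma}\sigma^2$ of \eqref{eq:theorem3}. First I would check that the theorem's hypotheses transfer: with $\wtilde{\gamma} = \gamma/b$ and $\tau_{\max}\tau_C = (b-1)^2 < b^2$ one has $20L\wtilde{\gamma}\sqrt{\tau_{\max}\tau_C} < 20L\gamma \le 1$ and $6L\wtilde{\gamma}\le 6L\gamma \le 1$, both guaranteed by $20L\gamma \le 1$. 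The only delicate point is the correlation period: choosing $\tau = b\lfloor\tfrac{1}{20L\gamma}\rfloor$ makes $\tau$ an exact multiple of $b$ while keeping $L\wtilde{\gamma}\tau = L\gamma\lfloor\tfrac{1}{20L\gamma}\rfloor \in [\tfrac{1}{40},\tfrac{1}{20}]$, so the descent estimate behind \eqref{eq:theorem3} stays valid. The multiple-of-$b$ choice is precisely what lets each correlation window begin and end on a mini-batch boundary.

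The main work is to evaluate $\Phi = \frac{1}{\lfloor Tb/\tau\rfloor}\sum_k \sigma_{k,\tau}^2 + \frac{1}{Tb}\nu^2$. For $\sigma_{k,\tau}^2$, because $k\tau$ is a batch boundary, each window $[k\tau, k\tau+j]$ decomposes into complete batches $B_q$ plus at most one partial batch, and, conditioning on $x_{k\tau}$, the involved batches are i.i.d.\ uniform without-replacement samples of $[n]$, each summand $\sum_{i\in B_q}(\nabla f_i(x_{k\tau}) - \nabla f(x_{k\tau}))$ being zero-mean and independent across $q$. Hence the cross terms vanish and each block contributes, by the standard sampling-without-replacement second-moment identity, at most (block size)$\cdot\zeta^2$; summing over the at-most-$\tau/b$ blocks gives $\sigma_{k,\tau}^2 \le \tau\zeta^2$, a factor-$b$ gain over the $\tau^2\zeta^2$ of Proposition~\ref{prop:pure_async_sgd}. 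I expect this conditioning/independence step to be the main obstacle, since it is exactly the subtlety flagged in the footnote on \cite{koloskova2023shuffle}: the bound is legitimate only because $\tau$ is a multiple of $b$, which aligns $x_{k\tau}$ with the block structure so that the future batches are genuinely independent of the reference point.

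For the delay term, note that for every inner index $t$ in block $q$ the whole back-sum in $\nu^2$ lives inside that block, since $\pi_j = qb$ for all such $j$; it is therefore a partial without-replacement sum of at most $b-1$ deviations, bounded by $b\zeta^2$, and summing over all $Tb$ indices gives $\nu^2 \le Tb\cdot b\zeta^2$, so $\tfrac{1}{Tb}\nu^2 \le b\zeta^2$. Plugging both bounds into \eqref{eq:theorem3}, using $\tau = \Theta(1/(L\wtilde{\gamma}))$ so that $L^2\wtilde{\gamma}^2\tau\zeta^2 = \cO(L\wtilde{\gamma}\zeta^2) = \cO(L\gamma\zeta^2/b)$ and $L^2\wtilde{\gamma}^2 b\zeta^2 = L^2\gamma^2\zeta^2/b \le L\gamma\zeta^2/b$, and recalling $\frac{F_0}{\wtilde{\gamma}\cdot Tb} = \frac{F_0}{\gamma T}$, yields \eqref{prop:prop-mini-batch}. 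The tuned bound \eqref{prop:prop-mini-batch-tuned} then follows from the same two-case balancing as in Proposition~\ref{prop:pure_async_sgd}: take $\gamma = \Theta(\min\{\tfrac{1}{L}, (\tfrac{F_0 b}{L\zeta^2 T})^{1/2}\})$, so that either the optimization term $\tfrac{LF_0}{T}$ dominates or the remaining terms balance at $\sqrt{LF_0\zeta^2/(Tb)}$.
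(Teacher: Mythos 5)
Your proposal follows essentially the same route as the paper's proof: the same reformulation of mini-batch SGD with effective stepsize $\gamma/b$ and $\tau_{\max}=\tau_C=b-1$, the same batch-aligned choice $\tau = b\lfloor\tfrac{1}{20L\gamma}\rfloor$ to make the conditional expectations legitimate, the same bounds $\sigma_{k,\tau}^2 \le \tau\zeta^2$ and $\nu^2 = \cO(Tb\,\zeta^2)$ from the without-replacement variance of independent batches, and the same two-case stepsize tuning. The only point you gloss over is that the resulting bound a priori averages over all intermediate iterates $x_{qb+l}$ rather than only the mini-batch iterates $x_{qb}$, which the paper handles by a small modification of the restarted virtual iterates; this is cosmetic and does not change the rate.
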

\begin{proof}

    We run mini-bathc SGD from Algorithm~\ref{alg:pseudocode} point of view with stepsize $\wtilde{\gamma} = \frac{\gamma}{b}.$ Then we should choose $\tau$ such that $20L\wtilde{\gamma}\tau \le 1.$ So we can choose $\tau = b\lfloor\frac{1}{20L\gamma}\rfloor.$ Now we have two restricitons on $\wtilde{\gamma}$
    \begin{eqnarray*}
        \wtilde{\gamma} &\le& \frac{1}{6L} \Rightarrow \gamma \le \frac{b}{6L},\\
        \wtilde{\gamma} &\le& \frac{1}{20L\sqrt{\tau_{\max}\tau_C}} \Rightarrow \gamma \le \frac{1}{20L}.
    \end{eqnarray*}
    That is why we should assume that $\gamma \le \frac{1}{20L}$ with previously chosen $\tau.$

    With the choice $\tau$ we have, every chunk of size $\tau$ consists of several full batches (each chunk has the same number of batches). This choice of $\tau$ is needed in order to apply conditional expectation correctly. Indeed, let us take the conditional expectation depending on all the events before iteration $(k-1)\tau.$ This means that $x_{(k-1)\tau}$ is fixed as well since it is computed using the gradients before iteration $(k-1)\tau.$

    Let us assume that each period of size $\tau$ consists of $m \ge 1$ batches. Then we can compute the conditional expectation from the definition of $\sigma_{k,\tau}^2$ splitting it into $m$ independent conditional expectations for each batch. Note that in this case, the maximum over $j$ in \eqref{eq:received_seq_cor_variance} is always attained for $j=\tau-1$. Thus, we have
    \begin{eqnarray*}
        \sigma_{k,\tau}^2 &=& \EE_{(k-1)\tau}\left\|\sum_{j=k\tau}^{k\tau+\tau-1}(\nabla f_{i_j}(x_{(k-1)\tau}) - \nabla f(x_{(k-1)\tau})\right\|^2\\
        &=&m\cdot b^2\frac{\zeta^2}{b} = mb\zeta^2 = \tau\zeta^2.
    \end{eqnarray*}
    where in the last step we use the variance bound of the mini-batch estimator. Note that in one node setting  $\zeta^2$ plays the role of standard variance bound of stochastic gradient estimator.

    Taking conditional expectations one by one we can bound each $\sigma_{k,\tau}^2$ term by $\tau\zeta^2.$ 

    Now we switch to the $\nu^2$ term. Note that in this case, the delayed iterate for gradients within one batch is the same, i.e. for $j\in [qb, (q+1)b)$ we have $\pi_j = qb,$ thus we can compute expectations correctly. We split $\nu^2$ is split into $\frac{T}{b}$ terms $\nu^2_q$ of the form
    \begin{eqnarray*}
    \nu^2_q &\eqdef& \sum_{t=qb}^{(q+1)b-1}\EE_{qb}\left[\left\|\sum_{j=\pi_t}^{t-1}(\nabla f_{i_j}(x_{\pi_j}) - \nabla f(x_{\pi_j}))\right\|^2\right]\\
    &=& \sum_{t=qb}^{(q+1)b-1}\EE_{qb}\left[\left\|\sum_{j=qb}^{t-1}(\nabla f_{i_j}(x_{qb}) - \nabla f(x_{qb}))\right\|^2\right]\\
    &\le& \sum_{t=qb}^{(q+1)b-1}(t-qb)^2\frac{\zeta^2}{(t-qb)}\\
    &=& \sum_{t=qb}^{(q+1)b-1}(t-qb)\zeta^2\\
    &\le& \frac{b^2}{2}\zeta^2.
    \end{eqnarray*}
    Here we again need to take conditional expectation w.r.t $x_{qb}$, and afterwards compute bounds for $\nu_q^2$ one by one. Saying that, we have $\nu^2 \le \frac{T}{b}\frac{b^2}{2}\zeta^2 \le Tb\zeta^2.$
    Now we apply Theorem~\ref{th:theorem3}.
    \begin{eqnarray}
        \E{\|\nabla f(\hat{x}_{t})\|^2} &\le& \cO\left(\frac{F_0}{\wtilde{\gamma} T} + L^2\wtilde{\gamma}^2\tau\zeta^2 + L^2\wtilde{\gamma}^2b\zeta^2\right)\notag\\
        &\le& \cO\left(\frac{F_0}{\gamma \frac{T}{b}} + \frac{L\gamma\zeta^2}{b}\right),\label{prop:propB2_1}
    \end{eqnarray}
    where we use $L\wtilde{\gamma}\tau = \Theta(1),$ $\wtilde{\gamma} = \frac{\gamma}{b},$ and $\gamma\le \frac{1}{20L}.$ Note that $T$ iterations of Algorithm~\ref{alg:pseudocode} are equivalent to $\frac{T}{b}$ iterations of mini-batch SGD. That is why the right-hand side of \eqref{prop:propB2_1} is a standard rate of mini-batch SGD. However, we observe that the left-hand side of \eqref{prop:propB2_1} is slightly different from what we expect; there we get convergence for all intermediate iterates as well. In order to get the standard rate we need to modify restarting virtual iterates in the following way
    \[
    \wtilde{x}_{t+1} = 
    \begin{cases}
        \wtilde{x}_t - \gamma\nabla f(x_{\pi_t}) & \text{ if } t+1 \ne \tau k \text{ for any } k \ge 1, \\
        x_{t+1} & \text{ if } t+1 = \tau k \text{ for some } k \ge 1,
\end{cases}
    \]
    then the left-hand side of \eqref{prop:propB2_1} will be transformed to 
    \[
    \frac{1}{T}\sum_{q=0}^{T/b-1}\sum_{l=0}^{b-1}\E{\|\nabla f(x_{\pi_{qb+l})}\|^2} = \frac{1}{T}\sum_{q=0}^{T/b-1}\sum_{l=0}^{b-1}\E{\|\nabla f(x_{qb})\|^2} = \frac{1}{T/b}\sum_{q=0}^{T/b-1}\E{\|\nabla f(x_{qb})\|^2},
    \]
    which is exactly what we want. We only need to change $T$ to $Tb$ if we want to present the rate w.r.t. the number of mini-batch SGD steps.

    Now, if we choose stepsize $\gamma = \Theta\left(\min\left\{\frac{1}{L}, \sqrt{\frac{F_0b}{LT\zeta^2}}\right\}\right)$, then we have two possible options
    \begin{itemize}
        \item if $\gamma = \Theta(\frac{1}{L})$ we have 
        \begin{eqnarray*}
            \frac{1}{T}\sum_{t=0}^{T-1}\E{\|\nabla f(x_t)\|^2} &\le& \cO\left(\frac{F_0}{\frac{1}{L}T} + \frac{L\zeta^2}{b}\sqrt{\frac{F_0b}{LT\zeta^2}}\right)\\
            &\le& \cO\left(\frac{LF_0}{T} + \sqrt{\frac{LF_0\zeta^2}{Tb}}\right).
        \end{eqnarray*}
        \item if $\gamma = \Theta\left(\sqrt{\frac{F_0b}{LT\zeta^2}}\right)$ we have
        \begin{eqnarray*}
            \frac{1}{T}\sum_{t=0}^{T-1}\E{\|\nabla f(x_t)\|^2} &\le& \cO\left(\frac{F_0}{T}\sqrt{\frac{LT\zeta^2}{F_0b}} + \frac{L\zeta^2}{b}\sqrt{\frac{F_0b}{LT\zeta^2}}\right)\\
            &\le& \cO\left(\sqrt{\frac{LF_0\zeta^2}{Tb}}\right).
        \end{eqnarray*}
    \end{itemize}
    It is left to set the stepsize to be the minimum over two cases.
\end{proof}

\subsubsection{Pure Asynchronous SGD with waiting}

\begin{algorithm*}[t]
\caption{Pure Asynchronous SGD with waiting}
\label{alg:pure_asynchronous_waiting}
\begin{algorithmic}[1]
\State \textbf{Input:} $x_0\in \R^{d}$, stepsize $\gamma > 0$, set of assigned jobs $\cA_0 = \empty$, set of received jobs $\cR_{0} = \empty$, batch size $b \ge 1$, gradient estimator $g = 0$ and number of received gradients $r = 0$
\State \textbf{Initialization:} for all jobs $(i, 0) \in \cA_1$, server assigns worker $i$ to compute a stochastic gradient $g_i(x_0)$ 
    \For{$t = 0,1,2,\dots, T-1$}
        \State server sets $g_{t, 0} = 0$ and $\cR_{t,0} = \cR_t$
        \For{$j = 1, \dots, b$}
            \State once worker $i_{t,j}$ finishes a job $(i_{t,j},\pi_{t,j}) \in \cA_{t+1}$, it sends $g_{i_{t,j}}(x_{\pi_{t,j}})$ to the server
            \State server updates $g_{t,j} = g_{t,j-1} + g_{i_{t,j}}(x_{\pi_{t,j}})$
            \State server updates the set $\cR_{t, j} = \cR_{t, j-1} \cup \{(i_{t,j}, \pi_{t,j})\}$
        \EndFor
        \State server updates the current model $x_{t+1} = x_t - \frac{\gamma}{b} g_{t, b}$  and set $\cR_{t+1} = \cR_{t,b}$
        \State server assigns worker $i_{t, j}$ to compute a gradient $g_{i_{t,j}}(x_{t+1})$ for all $j\in[b]$
        \State server updates the set $\cA_{t+2} = \cA_{t+1} \cup \{(i_{t,1}, t+1)\} \cup \dots \cup \{(i_{t,b}, t+1)\}$
    \EndFor
\end{algorithmic}	
\end{algorithm*}

This case is similar to the previous one; the only change is that the server waits for the first $b$ fastest workers and assigns them new jobs back. Formally, the update has the following form
\begin{equation}\label{eq:pure_async_sgd_waiting}
z_{q+1} = z_q - \gamma\sum_{i \in B_q}g_i(z_{\pi_{q,i}}),
\end{equation}
where $\pi_{q,i}$ is the iteration counter where worker $i$ evaluated its gradient (possible delayed).\footnote{Here we again use $z_t$ iterates notation in order not to confound with the updates of Algorithm~\ref{alg:pseudocode}}. Let $B_q \eqdef \{i_{q,1}, \dots, i_{q,b}\}$ be the set of $b$ fastest workers at iteration $q$. Then, we can rewrite this update in the following form with $x_0 = z_0$.

\begin{eqnarray*}
    x_{qb+b} &=& x_{qb+b-1} - \gamma g_{i_{q,b-1}}(x_{qb-\tau_{q,b-1}} )\\
    &\vdots&\\
    &=& x_{qb} - \gamma\sum_{l=0}^{b-1} g_{i_{q,l}}(x_{qb-\tau_{q,l}}).
\end{eqnarray*}
We extend one iteration of \eqref{eq:pure_async_sgd_waiting} to $b$ intermediate iterations of Algorithm~\ref{alg:pseudocode}, i.e. the total number of iterations increases by a factor $b$. We also highlight the fact that the server does not assign new jobs before all the workers from the batch send the gradients. This means that workers always send gradients at points $x_{qb}$. Note that $\tau_C$ remains the same while $\tau_{\max}$ might increase $b$ times in the worst case. We are ready to apply Theorem~\ref{th:theorem3}.

\begin{propositionsec}\label{prop:pure_async_sgd_waiting}
    Let Assumptions~\ref{asmp:smoothness}, \ref{asmp:bound_var}, and  \ref{asmp:grad_sim} hold. Let the stepsize $\gamma$ satisfies $20L\sqrt{b\tau_{\max}\tau_C} \gamma \le b$ and $6L\gamma\le 1$. Then the iterates of Algorithm~\ref{alg:pure_asynchronous_waiting} satisfy 
    \begin{align}
        \E{\|\nabla f(\hat{x}_{t})\|^2} 
        \le \cO\left(\frac{F_0}{\gamma T} 
        + L\gamma\frac{\sigma^2}{b}
        + \zeta^2\right),
    \end{align}
    where $\hat{x}_t$ is chosen uniformly at random from $\{x_0, \dots, x_{b-1}, x_b, \dots, x_{2b-1}, \dots, x_{Tb-1}\}$ and $F_0 = f(x_0) - f^*$. Moreover, if we tune the stepsize, then the iterates of pure asynchronous SGD with waiting satisfy 
    \begin{align}
        \E{\|\nabla f(\hat{x}_{t})\|^2} 
        \le \cO\left(\frac{F_0\sqrt{\tau_{\max}\tau_C}}{ T\sqrt{b}}
        + \left(\frac{LF_0\sigma^2}{Tb}\right)^{1/2}
        + \zeta^2\right).
    \end{align}
\end{propositionsec}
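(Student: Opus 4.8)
The plan is to reduce pure asynchronous SGD with waiting to the generic receiving process analyzed in Theorem~\ref{th:theorem3}, exactly as was done for mini-batch SGD in Proposition~\ref{prop:minibatch_sgd}, and then to estimate the two problem-dependent quantities $\sigma_{k,\tau}^2$ and $\nu^2$. First I would unroll each waiting iteration \eqref{eq:pure_async_sgd_waiting} into $b$ consecutive steps of Algorithm~\ref{alg:pseudocode} with effective stepsize $\wtilde\gamma = \gamma/b$, so that $T$ waiting iterations correspond to $Tb$ iterations of the abstract framework and $x_{qb}\equiv z_q$; this also means $\hat x_t$ ranges over all intermediate iterates, so the output of Theorem~\ref{th:theorem3} can be used directly. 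Because all $b$ gradients inside one batch are evaluated at the common point $x_{qb}$, in the expanded indexing the concurrency is unchanged, $\tau_C^{\text{int}}=\tau_C$, whereas each delay is inflated by at most a factor $b$, giving $\tau_{\max}^{\text{int}}\le b\tau_{\max}$. I would then pick the correlation period $\tau = b\lfloor\frac{1}{20L\gamma}\rfloor$ to be a multiple of $b$, so that every chunk consists of whole batches and the virtual-iterate restarts align with batch boundaries; this also yields $L\wtilde\gamma\tau = \Theta(1)$.

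Next I would check that the hypotheses of Theorem~\ref{th:theorem3} follow from the stated restrictions. The condition $20L\wtilde\gamma\sqrt{\tau_{\max}^{\text{int}}\tau_C^{\text{int}}}\le 1$ becomes $20L\tfrac{\gamma}{b}\sqrt{b\tau_{\max}\tau_C}\le 1$, i.e. $20L\sqrt{b\tau_{\max}\tau_C}\,\gamma\le b$, matching the proposition; the auxiliary bound $6L\wtilde\gamma\le1$ is implied by $6L\gamma\le1$ since $b\ge1$.

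The heart of the argument is bounding the two correlation quantities. For $\sigma_{k,\tau}^2$ I would proceed as in Proposition~\ref{prop:pure_async_sgd}: a chunk contains at most $\tau$ worker indices, so applying Lemma~\ref{lem:lemmaA1} together with Assumption~\ref{asmp:grad_sim} gives $\sigma_{k,\tau}^2\le\tau^2\zeta^2$, whence $L^2\wtilde\gamma^2\sigma_{k,\tau}^2 = (L\wtilde\gamma\tau)^2\zeta^2 = \Theta(\zeta^2)$. For the delay variance I would bound $\|\nabla f_{i_j}(x_{\pi_j})-\nabla f(x_{\pi_j})\|^2\le\zeta^2$, apply Lemma~\ref{lem:lemmaA1} to the inner sum of at most $\tau_{\max}^{\text{int}}$ terms, and then Lemma~\ref{lem:lemmaA3} to count each term at most $\tau_C^{\text{int}}$ times, obtaining $\nu^2\le \tau_{\max}^{\text{int}}\tau_C^{\text{int}}(Tb)\zeta^2 = b\tau_{\max}\tau_C (Tb)\zeta^2$. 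Dividing by $Tb$ and multiplying by $L^2\wtilde\gamma^2$ gives $\frac{L^2\gamma^2\tau_{\max}\tau_C}{b}\zeta^2\le\frac{\zeta^2}{400}$ by squaring the stepsize restriction, so the entire delay-dependent part of $\Phi$ in \eqref{eq:theorem3} collapses to $\Theta(\zeta^2)$.

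Feeding these into \eqref{eq:theorem3} gives $\frac{F_0}{\wtilde\gamma\,(Tb)}=\frac{F_0}{\gamma T}$ and $L\wtilde\gamma\sigma^2=\frac{L\gamma\sigma^2}{b}$, establishing the untuned bound. Finally I would optimize $\gamma=\Theta\!\big(\min\{\tfrac{\sqrt b}{L\sqrt{\tau_{\max}\tau_C}},\,(\tfrac{F_0 b}{L\sigma^2 T})^{1/2}\}\big)$; splitting into the two cases according to which term is active produces $\frac{LF_0\sqrt{\tau_{\max}\tau_C}}{T\sqrt b}+\big(\frac{LF_0\sigma^2}{Tb}\big)^{1/2}+\zeta^2$. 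I expect the main obstacle to be the bookkeeping of how $\tau_{\max}$ and $\tau_C$ transform under the batch-expansion---establishing $\tau_{\max}^{\text{int}}\le b\tau_{\max}$ and $\tau_C^{\text{int}}=\tau_C$, and verifying that the multiple-of-$b$ choice of $\tau$ legitimately aligns restarts with batch boundaries---since a mis-tracked factor of $b$ here would corrupt both the stepsize restriction and the final dependence on $b$.
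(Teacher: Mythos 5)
Your proposal is correct and follows essentially the same route as the paper: unroll each waiting step into $b$ abstract iterations with effective stepsize $\gamma/b$, note that $\tau_C$ is unchanged while $\tau_{\max}$ inflates by at most $b$, bound $\sigma_{k,\tau}^2\le\tau^2\zeta^2$ and $\nu^2\le b\tau_{\max}\tau_C\,(Tb)\,\zeta^2$ exactly as for pure asynchronous SGD, and then apply Theorem~\ref{th:theorem3} and tune. The only cosmetic difference is that the paper takes $\tau=\lfloor\tfrac{1}{20L\gamma}\rfloor$ rather than your multiple-of-$b$ choice $\tau=b\lfloor\tfrac{1}{20L\gamma}\rfloor$; both satisfy $20L\wtilde{\gamma}\tau\le1$ and yield the same $\Theta(\zeta^2)$ bound, so this changes nothing.
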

\begin{proof}
    We again need to find restrictions on the stepsize. We have 
    \begin{eqnarray*}
        6L\wtilde{\gamma} \le 1 &\Rightarrow &\gamma \le \frac{b}{6L},\\
        20L\wtilde{\gamma}\sqrt{b\tau_{\max}\tau_C} \le 1 &\Rightarrow& \gamma \le \frac{b}{20L\sqrt{b\tau_{\max}\tau_C}}.
    \end{eqnarray*}
    Then we choose $\tau = \lfloor\frac{1}{20L\gamma} \rfloor,$ and similarly to the case of pure asynchronous SGD we bound $\sigma^2_{k,\tau} \le \tau^2\zeta^2$ while $\nu^2 \le b\tau_C\tau_{\max}\zeta^2T.$ Then the rate is
    \begin{align*}
        \frac{1}{Tb}\sum_{q=0}^{T-1}\sum_{l=0}^{b-1}\E{\|\nabla f(x_{qb+l})\|^2} &\le \cO\left(\frac{F_0}{\wtilde{\gamma} Tb} 
        + L\wtilde{\gamma}\sigma^2
        + L^2\wtilde{\gamma}^2\tau^2\zeta^2 
        + L^2\wtilde{\gamma}^2b\tau_C\tau_{\max}\zeta^2\right)\\
        &\le \cO\left(\frac{F_0}{\gamma T} 
        + L\gamma\frac{\sigma^2}{b}
        + \zeta^2\right).
    \end{align*}
    It is left to set the stepsize choice in a similar way as for pure asynchronous SGD. 
\end{proof}

\subsubsection{SGD with Random Reshuffling}

In the beginning of each epoch the server sample a random permutation $\chi_q$ of $[n]$, and then gradients come following that order. We initialize $x_0 = z_0$. The chain of updates in this case has the form as follows
\begin{eqnarray*}
    x_{qn+n} &=& x_{qn+n-1} - \gamma\nabla f_{\chi_q(n-1)} (x_{qn+n-1})\\
    &\vdots& \\
    &=& x_{qn} - \gamma \sum_{l=0}^{n-1}\nabla f_{\chi_q(l)}(x_{qn+l}).
\end{eqnarray*}
Note that in this case, the server receives gradients from the workers one by one without delays, i.e. $\tau_{\max} = 0$. This means that we have the only stepsize restriction $\gamma \le \frac{1}{6L}$ since the other one $20L\gamma\sqrt{\tau_{\max}\tau_C} \le 1$ holds for any choice of $\gamma$. 

Moreover, since we take expectation only once at the end of the proof, we cover the case of SGD shuffle as well without any additional changes. Now we apply Theorem~\ref{th:theorem3} to derive convergence guarantees in this case.

\begin{propositionsec}\label{prop:random_reshuffle} Let Assumptions \ref{asmp:smoothness}, \ref{asmp:grad_sim} hold. Let the the stepsize $\gamma$ satisfy $20nL\gamma \le 1.$ Let $\tau = n\lfloor\frac{1}{20Ln\gamma} \rfloor.$ Then the iterates of SGD with random reshuffling satisfy
\begin{equation}
    \E{\|\nabla f(\hat{x}_{t})\|^2 } \le \cO\left(\frac{F_0}{\gamma T} + L^2n\gamma^2\zeta^2\right),
\end{equation}
where $\hat{x}_t$ is chosen uniformly at random from $\{x_0, \dots, x_{n-1}, x_n, \dots, x_{2n-1}, \dots, x_{T-1}\}$ and $F_0 \eqdef f(x_0) - f^*,$ and $T$ is the total number of gradient evaluations. If we tune the stepsize, then the iterates of SGD with random reshuffling satisfy
\begin{eqnarray}
            \E{\|\nabla f(\hat{x}_t)\|^2}
            \le \cO\left(\frac{LF_0n}{T} + \left(\frac{LF_0\sqrt{n}\zeta}{T}\right)^{2/3}\right).
\end{eqnarray}
\begin{proof}
    We need $\tau$ to be a multiple of $n$ in order to be able to correctly apply conditional expectations (the same trick as for mini-batch SGD). We force the stepsize $\gamma \le \frac{1}{20Ln}.$ Then we automatically satisfy the restriction $6L\gamma \le 1.$ Now we need to choose $\tau$. We need $20L\gamma \tau \le 1,$ so let us choose $\tau = n\lfloor\frac{1}{20Ln\gamma}\rfloor,$ then $20L\gamma\tau \le 20L\gamma n\frac{1}{20Ln\gamma} \le 1.$

    We use the computations from \cite{koloskova2022sharper}, Section C.$2$. Thus, we have $\sigma_{k,\tau}^2 \le \min\{\tau, n\}\zeta^2.$ The fact that $\tau \ge n$ implies $\sigma_{k,\tau}^2 \le n\zeta^2.$ Since we do not have delays in this case, then $\nu^2 = 0.$ This gives us the following result applying Theorem~\ref{th:theorem3}.
    \[
    \frac{1}{T}\sum_{q=0}^{T/n-1}\sum_{l=0}^{n-1}\E{\|\nabla f(x_{qn+l})\|^2} \le \cO\left(\frac{F_0}{\gamma T} + L^2n\gamma^2\zeta^2\right).
    \]
    Now, if we need to tune the stepsize. Similarly to mini-batch SGD we should consider two cases.
    \begin{itemize}
        \item if $\gamma = \Theta(\frac{1}{Ln})$ we have 
        \begin{eqnarray*}
            \frac{1}{T}\sum_{t=0}^{T-1}\E{\|\nabla f(x_t)\|^2} &\le& \cO\left(\frac{F_0}{\frac{1}{Ln}T} + L^2n\zeta^2\left(\frac{F_0}{L^2nT\zeta^2}\right)^{2/3}\right)\\
            &\le& \cO\left(\frac{LF_0n}{T} + \left(\frac{L^2F_0^2n\zeta^2}{T^2}\right)^{1/3}\right).
        \end{eqnarray*}
        \item if $\gamma = \Theta\left(\left(\frac{F_0}{L^2nT\zeta^2}\right)^{1/3}\right)$ we have
        \begin{eqnarray*}
            \frac{1}{T}\sum_{t=0}^{T-1}\E{\|\nabla f(x_t)\|^2} &\le& \cO\left(\frac{F_0}{T}\left(\frac{L^2nT\zeta^2}{F_0}\right)^{1/3} + L^2n\zeta^2\left(\frac{F_0}{L^2nT\zeta^2}\right)^{2/3}\right)\\
            &\le& \cO\left(\left(\frac{L^2F_0^2n\zeta^2}{T^2}\right)^{1/3}\right).
        \end{eqnarray*}
    \end{itemize}
    To get the final rate after tunning we need to set the stepsize as the minimum over two cases.
\end{proof}

\end{propositionsec}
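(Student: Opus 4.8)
The plan is to realize SGD with random reshuffling as the degenerate single-node instance of the {\tt AsGrad} framework and then invoke Theorem~\ref{th:theorem3}. First I would observe that here each $f_i$ is the loss of a single data point, exactly one gradient is outstanding at any moment (so $\tau_C = 1$), and gradients are consumed in the order they are produced with no staleness, so every delay vanishes: $\tau_{\max} = 0$ and consequently $\nu^2 = 0$ by Definition~\ref{def:variance_delay}. Because the per-sample gradients $\nabla f_{\chi_q(l)}$ are computed exactly, there is no stochastic noise, i.e. $\sigma^2 = 0$. Under these simplifications the bound of Theorem~\ref{th:theorem3} collapses to $\E{\|\nabla f(\hat x_T)\|^2} \le \cO\!\left(\frac{F_0}{\gamma T} + L^2\gamma^2\Phi\right)$ with $\Phi$ reducing to the averaged sequence correlation $\frac{1}{\lfloor T/\tau\rfloor}\sum_k \sigma_{k,\tau}^2$.

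Second, I would fix the correlation period so that it respects the epoch structure, exactly as in the mini-batch argument of Proposition~\ref{prop:minibatch_sgd}. Forcing $\gamma \le \frac{1}{20Ln}$ and taking $\tau = n\lfloor\frac{1}{20Ln\gamma}\rfloor$ makes $\tau$ a multiple of $n$, so each chunk of size $\tau$ consists of a whole number of epochs; this is what lets me peel off conditional expectations one epoch at a time with the epoch-start iterate $x_{k\tau}$ held fixed. The choice automatically gives $6L\gamma \le 1$ and $20L\gamma\tau \le 1$, and since $\tau_{\max}=0$ the remaining restriction $20L\gamma\sqrt{\tau_{\max}\tau_C}\le 1$ is vacuous.

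The heart of the argument, and the step I expect to be the main obstacle, is bounding $\sigma_{k,\tau}^2$ from Definition~\ref{def:variance_order}. Writing $a_i \eqdef \nabla f_i(x_{k\tau}) - \nabla f(x_{k\tau})$, Assumption~\ref{asmp:grad_sim} gives $\|a_i\|^2 \le \zeta^2$ and by construction $\sum_{i=1}^n a_i = 0$. Within a chunk any partial sum over completed epochs cancels, so only a trailing incomplete epoch of length $r < n$ survives, and the task reduces to controlling $\E{\|\sum_{l=1}^{r} a_{\chi(l)}\|^2}$ for a uniformly random permutation $\chi$ of $[n]$. The key estimate is the sampling-without-replacement inequality $\E{\|\sum_{l=1}^{r} a_{\chi(l)}\|^2} \le \frac{r(n-r)}{n-1}\cdot\frac{1}{n}\sum_{i=1}^n\|a_i\|^2 \le \min\{r,n\}\zeta^2$, whose derivation uses the centering $\sum_i a_i = 0$ together with the negative correlation of distinct without-replacement draws. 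Taking the maximum over $r$ yields $\sigma_{k,\tau}^2 \le n\zeta^2$; this is precisely the computation of \cite{koloskova2022sharper}, which I would cite rather than reprove.

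Finally I would assemble the pieces: substituting $\Phi \le n\zeta^2$ and $\nu^2 = 0$ into Theorem~\ref{th:theorem3} gives $\E{\|\nabla f(\hat x_T)\|^2} \le \cO\!\left(\frac{F_0}{\gamma T} + L^2 n\gamma^2\zeta^2\right)$, the advertised constant-stepsize bound. To obtain the tuned rate I would balance the two terms with the standard two-regime choice $\gamma = \Theta\!\left(\min\left\{\frac{1}{Ln},\, \left(\frac{F_0}{L^2 n T\zeta^2}\right)^{1/3}\right\}\right)$; in the first regime the optimization term dominates and in the second the ordering term does, and in either case the bound is $\cO\!\left(\frac{LF_0 n}{T} + \left(\frac{LF_0\sqrt{n}\zeta}{T}\right)^{2/3}\right)$. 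One bookkeeping subtlety, inherited from the mini-batch case, is that Theorem~\ref{th:theorem3} certifies a stationarity bound averaged over all intermediate iterates $x_{qn+l}$; to present the result against the number of epochs I would redefine the restarting virtual sequence to use $\nabla f(x_{\pi_t})$ so that the guarantee is stated at the epoch boundaries $x_{qn}$.
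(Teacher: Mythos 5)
Your proposal is correct and follows essentially the same route as the paper: cast random reshuffling as the single-node, delay-free instance of the framework ($\tau_C=1$, $\tau_{\max}=0$, hence $\nu^2=0$), pick $\tau$ as a multiple of $n$ so conditional expectations can be taken epoch-by-epoch, bound $\sigma_{k,\tau}^2\le n\zeta^2$ via the without-replacement computation of \cite{koloskova2022sharper}, and then apply Theorem~\ref{th:theorem3} with the same two-regime stepsize tuning. The only additions are cosmetic: you spell out the $\frac{r(n-r)}{n-1}$ variance estimate that the paper delegates entirely to the citation, and you make explicit the restart-sequence bookkeeping that the paper handles in the mini-batch proposition and merely references here.
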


\section{Proofs for Analysis of Gradient Assigning Process}\label{sec:proof_theorem4}

In this section the iteration counter starts with $1$, i.e. we consider $y_1$ as an initial point. Following~\eqref{eq:virtual_iterates_update} and \eqref{eq:virtual_virtual_seq}, we have
\[
y_t = y_{r(t)} - \gamma \sum_{j=r(t)}^{t-1}g_{k_j}(x_{\alpha_j}), \quad \wtilde y_t = y_{r(t)} - \gamma\sum_{j=r(t)}^{t-1} \nabla f(x_j),
\]
where $g_{k_j}(x_{\alpha_j})$ is an unbiased stochastic estimator of $\nabla f_{k_j}(x_{\alpha_j})$ with variance bounded by $\sigma^2$. 

Analogously to the proofs of Theorem~\ref{th:theorem4} we define
\begin{align*}
\wtilde{\Delta}_t^m &= \sum_{j=r(t)}^{m}\nabla f(x_j) - g_{k_j}(x_{\alpha_j}), \quad \wtilde{\phi}_t^m(x) = \left\|\sum_{j=r(t)}^m\nabla f_{k_j}(x)-\nabla f(x)\right\|^2.
\end{align*}
In this section we use shortcuts 
\begin{eqnarray*}
\wtilde{A}  &\eqdef& \sum_{t=0}^{T-1}
 \E{\|\wtilde{y}_t - y_{\alpha_t}\|^2}, \quad B \eqdef \sum_{t=0}^{T-1}\E{\|\nabla f(x_t)\|^2}, \quad 
\wtilde{\Phi} \eqdef \sum_{t=0}^{T-1}\E{\wtilde{\phi}_t^{t-1}(x_{r(t)})}\\
\wtilde{\Psi} &\eqdef& \sum_{t=0}^{T-1}\E{\left\|\sum_{j=\pi_t}^{t-1}\nabla f_{k_j}(x_{\alpha_j}) - \nabla f(x_{\alpha_j})\right\|^2}.
\end{eqnarray*}

\subsection{Key Lemmas}

\begin{lemmasec}\label{lem:x_t_y_t}
    Let real $\{x_t\}_{t=0}^T$ and virtual $\{y_t\}_{t=0}^T$ iterates be defined in \eqref{eq:real_iterates_update} and \eqref{eq:virtual_iterates_update} respectively. Then
    \begin{equation*}
        x_t - y_t = \gamma\sum_{(i,j) \in \cA_t\setminus \cR_t} g_i(x_j).
    \end{equation*}
\end{lemmasec}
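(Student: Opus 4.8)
The plan is to obtain telescoped closed forms for both sequences, writing each iterate as the initial point $x_0$ minus an accumulated sum of gradients, and then subtract. First I would unroll the real recursion \eqref{eq:real_iterates_update}. Since $x_{s+1} = x_s - \gamma g_{i_s}(x_{\pi_s})$ together with $\cR_{s+1} = \cR_s\cup\{(i_s,\pi_s)\}$ and $\cR_0=\empty$, summing from $s=0$ to $t-1$ gives
\[
x_t = x_0 - \gamma\sum_{s=0}^{t-1} g_{i_s}(x_{\pi_s}) = x_0 - \gamma\sum_{(i,j)\in\cR_t} g_i(x_j),
\]
because $\cR_t = \{(i_s,\pi_s)\colon 0\le s\le t-1\}$ is exactly the set of received jobs before iteration $t$ and these pairs are distinct.

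Next I would unroll the virtual recursion in its set-valued form $y_{s+1} = y_s - \gamma\sum_{(i,j)\in\cA_{s+1}\setminus\cA_s}g_i(x_j)$ from \eqref{eq:virtual_iterates_update}, with $y_0 = x_0$. Telescoping from $s=0$ to $t-1$ yields
\[
y_t = x_0 - \gamma\sum_{s=0}^{t-1}\sum_{(i,j)\in\cA_{s+1}\setminus\cA_s}g_i(x_j) = x_0 - \gamma\sum_{(i,j)\in\cA_t}g_i(x_j),
\]
where the last equality uses that the sets $\cA_0\subseteq\cA_1\subseteq\cdots$ are nested and increasing, so the consecutive differences $\cA_{s+1}\setminus\cA_s$ are pairwise disjoint and their union over $s=0,\dots,t-1$ equals $\cA_t\setminus\cA_0=\cA_t$ (recall $\cA_0=\empty$). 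It is worth noting that it is precisely this set-valued form that correctly absorbs the initial block of jobs, since the $s=0$ term contributes all of $\cA_1$ at once.

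Finally, subtracting the two displays, the $x_0$ cancels and I obtain
\[
x_t - y_t = \gamma\left(\sum_{(i,j)\in\cA_t}g_i(x_j) - \sum_{(i,j)\in\cR_t}g_i(x_j)\right) = \gamma\sum_{(i,j)\in\cA_t\setminus\cR_t}g_i(x_j),
\]
where the last step uses $\cR_t\subseteq\cA_t$ (only assigned jobs can be received), so the two sums combine into a single sum over the set difference. This is the claim.

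As for difficulty, there is no analytic obstacle here: the entire content is bookkeeping over the sets $\cR_t$ and $\cA_t$. The only points requiring care are confirming $\cR_t\subseteq\cA_t$ together with the distinctness of received and assigned jobs, so that the identity $\sum_{\cA_t}-\sum_{\cR_t}=\sum_{\cA_t\setminus\cR_t}$ is valid, and handling the initialization, i.e. that the $s=0$ term of the virtual telescope accounts for the whole initial assignment $\cA_1$. An equivalent route is a short induction on $t$: the base case $t=0$ gives $x_0-y_0=0=\gamma\sum_{(i,j)\in\empty}g_i(x_j)$, and the inductive step follows by subtracting the two update rules and checking that deleting the received job $(i_t,\pi_t)$ from the in-flight set while adjoining the newly recorded assignments $\cA_{t+1}\setminus\cA_t$ turns $\cA_t\setminus\cR_t$ into $\cA_{t+1}\setminus\cR_{t+1}$; I would nonetheless favor the telescoping argument above, as it sidesteps all case distinctions about which particular job is received.
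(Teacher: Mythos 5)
Your proof is correct, but it takes a different route from the paper: the paper proves this lemma by induction on $t$, starting from $x_0 - y_0 = 0$ with $\cA_0 = \cR_0 = \varnothing$ and, in the inductive step, subtracting the two one-step updates and verifying directly that adjoining the newly assigned job $(k_t,\alpha_t)$ to $\cA_t\setminus\cR_t$ while deleting the received job $(i_t,\pi_t)$ produces exactly $\cA_{t+1}\setminus\cR_{t+1}$ --- precisely the inductive argument you sketch as an alternative at the end. Your primary argument instead telescopes both recursions into closed forms $x_t = x_0 - \gamma\sum_{(i,j)\in\cR_t}g_i(x_j)$ and $y_t = x_0 - \gamma\sum_{(i,j)\in\cA_t}g_i(x_j)$ and subtracts, using $\cR_t\subseteq\cA_t$. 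The telescoping version is arguably cleaner: it makes the roles of $\cR_t$ and $\cA_t$ as cumulative records completely explicit, handles the initial block $\cA_1$ (which may contain several jobs) without special-casing, and isolates the only substantive facts needed --- that the received and assigned pairs are distinct and that $\cR_t\subseteq\cA_t$. The paper's induction buys a slightly more local verification (one never needs the global identification $\cR_t = \{(i_s,\pi_s)\colon s<t\}$), at the cost of the set-manipulation bookkeeping in the inductive step. Both arguments rest on the same implicit assumption that no job pair is assigned twice, so neither is more or less rigorous on that point.
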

\begin{proof}
    We prove the statement by induction. The base is trivial $x_0 - y_0 = 0$ by definition while $\cA_0 = \cR_0  = \empty$ as well. Let the statement hold at iteration $t,$ now we show that it is true at iteration $t+1$ as well. Indeed,
    \begin{eqnarray}
        x_{t+1} - y_{t+1} &=& (x_t - \nabla f_{i_t}(x_{\pi_t})) - (y_t - \gamma \nabla f_{k_t}(x_{\alpha_t}))\notag \\
        &\overset{\text{Base}}{=}& \gamma\sum_{(i,j)\in \cA_t\setminus\cR_t}\nabla f_i(x_j) + \gamma (\nabla f_{k_t}(x_{\alpha_t}) -  f_{i_t}(x_{\pi_t}).\label{eq:x_t_y_t}
    \end{eqnarray}
    Now we note that the connection between sets is updating as follows 
    \begin{eqnarray*}
        \cA_{t+1} \setminus \cR_{t+1} &=& (\cA_t \cup \{(k_t, \alpha_t)\}) \setminus (\cR_t \cup \{(i_t, \pi_t)\})\\
        &=& (\cA_t \setminus \cR_t) \cup \{(k_t, \alpha_t)\} \setminus  \{(i_t, \pi_t)\}.
    \end{eqnarray*}
    Plugging this into \eqref{eq:x_t_y_t}  we get $x_{t+1} - y_{t+1} = \gamma\sum_{(i,j)\in \cA_{t+1}\setminus\cR_{t+1}}\nabla f_i(x_j).$ 
\end{proof}

\begin{lemmasec}\label{lem:lemmaE1} At time step $t$ the following inequality holds
    \begin{align}\label{eq:lemmaE1_1}
        \E{\|y_t-x_t\|^2} \le \gamma^2(\tau_C-1)^2G^2 + (\tau_C-1)\gamma^2\sigma^2.
    \end{align}
\end{lemmasec}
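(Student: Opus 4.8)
The plan is to start directly from the closed-form expression for the gap between real and virtual iterates supplied by Lemma~\ref{lem:x_t_y_t}, namely $x_t - y_t = \gamma\sum_{(i,j)\in\cA_t\setminus\cR_t} g_i(x_j)$, and to control the squared norm of this sum via a bias--variance split. The first ingredient is the cardinality bound: the in-flight set $\cA_t\setminus\cR_t$ satisfies $|\cA_t\setminus\cR_t| = |\cA_{t+1}\setminus\cR_t| - 1 \le \tau_C - 1$, exactly as argued inside the proof of Lemma~\ref{lem:lemmaA3}. So the sum has at most $\tau_C-1$ terms.

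Next I would decompose each summand as $g_i(x_j) = \nabla f_i(x_j) + (g_i(x_j) - \nabla f_i(x_j))$ and condition on the natural filtration $\mathcal{F}_t$ generated by all randomness up to iteration $t$. For the in-flight jobs, the points $x_j$ and the population gradients $\nabla f_i(x_j)$ are $\mathcal{F}_t$-measurable, whereas the stochastic perturbations $g_i(x_j)-\nabla f_i(x_j)$ are computed from fresh samples that are, conditionally on $\mathcal{F}_t$, unbiased (Assumption~\ref{asmp:bound_var}) and mutually independent across distinct jobs. Consequently the cross term between the deterministic part and the noise part vanishes under $\E{\cdot\mid\mathcal{F}_t}$, and the conditional second moment splits as
\begin{equation*}
    \E{\Big\|\sum_{(i,j)\in\cA_t\setminus\cR_t} g_i(x_j)\Big\|^2 \;\Big|\; \mathcal{F}_t}
    = \Big\|\sum_{(i,j)\in\cA_t\setminus\cR_t} \nabla f_i(x_j)\Big\|^2
    + \sum_{(i,j)\in\cA_t\setminus\cR_t} \E{\|g_i(x_j)-\nabla f_i(x_j)\|^2 \mid \mathcal{F}_t}.
\end{equation*}

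It remains to bound the two pieces. For the deterministic piece I would apply Lemma~\ref{lem:lemmaA1} to pull the $|\cA_t\setminus\cR_t|$ factor out and then invoke $\|\nabla f_i(x_j)\| \le G$ (the bounded-gradient consequence of Assumption~\ref{asmp:bound_grad}), giving $\|\sum \nabla f_i(x_j)\|^2 \le |\cA_t\setminus\cR_t|^2 G^2 \le (\tau_C-1)^2 G^2$. For the stochastic piece, each conditional variance is at most $\sigma^2$ by Assumption~\ref{asmp:bound_var}, so the sum of at most $\tau_C-1$ of them is at most $(\tau_C-1)\sigma^2$. Multiplying by $\gamma^2$ and taking total expectation yields the claimed bound $\E{\|y_t-x_t\|^2} \le \gamma^2(\tau_C-1)^2 G^2 + (\tau_C-1)\gamma^2\sigma^2$.

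The only genuinely delicate point is the conditional independence argument that makes the variance term scale linearly in $\tau_C-1$ rather than quadratically: it is essential that the perturbations of the in-flight jobs are fresh and uncorrelated with one another and with the past, since otherwise a crude application of Lemma~\ref{lem:lemmaA1} to the whole sum would force a $(\tau_C-1)^2\sigma^2$ bound. Everything else is a routine application of the triangle/Jensen inequality together with the standing assumptions.
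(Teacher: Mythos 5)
Your proposal is correct and follows essentially the same route as the paper: the closed form from Lemma~\ref{lem:x_t_y_t}, a bias--variance split giving the additive $(\tau_C-1)\sigma^2$ term, Lemma~\ref{lem:lemmaA1} plus the bounded-gradient consequence of Assumption~\ref{asmp:bound_grad} for the deterministic part, and the cardinality bound $|\cA_t\setminus\cR_t|\le\tau_C-1$. The only difference is that you spell out the conditional-independence argument that the paper leaves implicit in its first inequality, which is a welcome clarification rather than a deviation.
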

\begin{proof}
    Loaded with Lemma~\ref{lem:x_t_y_t}
    \begin{eqnarray*}
        \E{\|y_t-x_t\|^2} &=& \gamma^2\E{\left\|\sum_{(i,j)\in \cA_t\setminus\cR_t}  g_i(x_{j})\right\|^2} \le \gamma^2\E{\left\|\sum_{(i,j)\in \cA_t\setminus\cR_t}  \nabla f_i(x_{j})\right\|^2} + \gamma^2\sigma^2(\tau_C-1)\\
        &\le& \gamma^2(\tau_C-1)\sum_{(i,j)\in \cA_t\setminus\cR_t} \E{\| \nabla f_i(x_{j})\|^2} + \gamma^2\sigma^2(\tau_C-1)\\
        &\le& \gamma^2(\tau_C-1)^2G^2 + \gamma^2\sigma^2(\tau_C-1),
    \end{eqnarray*}
    since $|\cA_{t+1}| = |\cA_t| + 1.$

\end{proof}

\begin{lemmasec}\label{lem:lemmaE2} If $30L\gamma\tau \le 1$, then
    \begin{align}
        \E{\|\wtilde{\Delta}_t^m\|^2} &\le 
        4\E{\wtilde{\phi}_t^m(x_{r(t)})}
        + \frac{48}{7}L\gamma\sum_{j=r(t)}^{m}\E{\wtilde{\phi}_t^{j-1}(x_{r(t)})}
        + \frac{6}{71}(\tau_C-1)^2G^2
        + \frac{6}{71}(\tau_C-1)\sigma^2\notag\\
        &\quad
        + \frac{90}{7}L^2\tau\sum_{j=r(t)}^m\E{\|y_{j}-y_{\alpha_j}\|^2}
        + \frac{4}{71}\tau\sum_{j=r(t)}^{m}\E{\|\nabla f(x_j)\|^2}
        + \frac{2}{35}\tau\sigma^2
        . \label{eq:lemmaE2_1}
    \end{align}

\end{lemmasec}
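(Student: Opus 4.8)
The plan is to mirror the proof of Lemma~\ref{lem:lemmaD1} for the receiving process, but now tracking the \emph{assigning} order $\{k_j\}$, the assigning delays $\alpha_j = j-\wtau_j$, and the virtual iterates $\{y_t\}$ in place of $\{i_j\}$, $\pi_j$, and $\{x_t\}$. The one genuinely new ingredient is that the stochastic gradients $g_{k_j}(x_{\alpha_j})$ are still evaluated at the \emph{real} iterates, so every application of smoothness will produce distances between real iterates that must be converted into distances between $y$-iterates via Lemma~\ref{lem:lemmaE1}. This conversion is precisely what injects the $(\tau_C-1)^2G^2$ and $(\tau_C-1)\sigma^2$ contributions that are absent from the receiving bound.

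First I would peel off the stochastic noise. Writing $g_{k_j}(x_{\alpha_j}) = \nabla f_{k_j}(x_{\alpha_j}) + \xi_j$ with $\E{\xi_j}=0$ and $\E{\|\xi_j\|^2}\le\sigma^2$, and using that the $\xi_j$ are mutually independent and independent of $\nabla f(x_j)$, all cross terms vanish and I obtain
\[
\E{\|\wtilde\Delta_t^m\|^2} \le \E{\Big\|\sum_{j=r(t)}^m \big(\nabla f_{k_j}(x_{\alpha_j}) - \nabla f(x_j)\big)\Big\|^2} + \tau\sigma^2 .
\]
I would then apply the four-way split, via Lemma~\ref{lem:lemmaA1},
\[
\nabla f_{k_j}(x_{\alpha_j}) - \nabla f(x_j) = \big[\nabla f_{k_j}(x_{\alpha_j}) - \nabla f_{k_j}(x_j)\big] + \big[\nabla f_{k_j}(x_j) - \nabla f_{k_j}(x_{r(t)})\big] + \big[\nabla f_{k_j}(x_{r(t)}) - \nabla f(x_{r(t)})\big] + \big[\nabla f(x_{r(t)}) - \nabla f(x_j)\big],
\]
which yields the leading $4\,\wtilde\phi_t^m(x_{r(t)})$ term together with $L^2\tau$ times the delay drift $\sum_j\|x_{\alpha_j}-x_j\|^2$ (first bracket) and $L^2\tau$ times the within-block drift $\sum_j\|x_j-x_{r(t)}\|^2$ (second and fourth brackets), each via $L$-smoothness.

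Next I would replace every real-iterate distance by a $y$-iterate distance through the triangle inequality and Lemma~\ref{lem:lemmaE1}, e.g. $\|x_{\alpha_j}-x_j\|^2 \le 3\|y_{\alpha_j}-y_j\|^2 + 3\gamma^2(\tau_C-1)^2G^2 + 3\gamma^2(\tau_C-1)\sigma^2$ and similarly for $\|x_j-x_{r(t)}\|^2$. The delay brackets then produce the desired $\tfrac{90}{7}L^2\tau\sum_j\|y_j-y_{\alpha_j}\|^2$ term (up to the final constant) and deposit the $(\tau_C-1)^2G^2$ and $(\tau_C-1)\sigma^2$ pieces, while the within-block brackets leave $\sum_j\|y_j-y_{r(t)}\|^2$ to be eliminated. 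For this last quantity I would run the same self-referential argument as in Lemma~\ref{lem:lemmaD1}: expand $y_j - y_{r(t)} = -\gamma\sum_{l=r(t)}^{j-1} g_{k_l}(x_{\alpha_l})$, split it as $-\gamma\wtilde\Delta_t^{j-1} + \gamma\sum_l\nabla f(x_l)$, bound $\|\wtilde\Delta_t^{j-1}\|^2$ by the inequality just derived, and cancel the resulting $\sum_j\|y_j-y_{r(t)}\|^2$ from both sides using $30L\gamma\tau\le 1$ (so that $L^2\gamma^2\tau^2\le\tfrac{1}{900}$). This recursion generates the $\tfrac{48}{7}L\gamma\sum_j\wtilde\phi_t^{j-1}(x_{r(t)})$ and $\tfrac{4}{71}\tau\sum_j\|\nabla f(x_j)\|^2$ terms.

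The main obstacle, exactly as in the receiving case, is this self-referential within-block drift: all the bookkeeping — the two independent $x\!\to\!y$ conversions feeding $G^2$ and $\sigma^2$, the accumulated noise, and the gradient-norm contribution — must be arranged so that the coefficient of $\sum_j\|y_j-y_{r(t)}\|^2$ on the right stays strictly below one after invoking $30L\gamma\tau\le1$, allowing the cancellation. A final collection of terms under $L\gamma\tau\le\tfrac{1}{30}$ then produces the stated constants.
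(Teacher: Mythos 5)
Your proposal matches the paper's proof essentially step for step: the same noise-peeling to extract $\tau\sigma^2$, the same four-way decomposition yielding $4\wtilde{\phi}_t^m(x_{r(t)})$ plus the delay and within-block drifts, the same conversion of real-iterate distances to $y$-iterate distances via Lemma~\ref{lem:lemmaE1} (which is exactly where the $(\tau_C-1)^2G^2$ and $(\tau_C-1)\sigma^2$ terms enter), and the same self-referential expansion and cancellation of $\sum_j\|y_j-y_{r(t)}\|^2$ under $30L\gamma\tau\le 1$. You also correctly identify that keeping the coefficient of that drift term below one is the only delicate point; the argument is sound and the constants fall out as stated.
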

\begin{proof}
Using Young's inequality we have
    \begin{align*}
        &\E{\|\wtilde{\Delta}_t^m\|^2} \le \E{\left\|\sum_{j=r(t)}^m \nabla f_{k_j}(x_{\alpha_j}) - \nabla f(x_j)\right\|^2} + \tau\sigma^2\\
        &= \E{\left\|\sum_{j=r(t)}^m \nabla f_{k_j}(x_{\alpha_j})\pm \nabla f_{k_j}(x_j) \pm \nabla f_{k_j}(x_{r(t)}) \pm \nabla f(x_{r(t)})- \nabla f(x_j)\right\|^2} 
        + \tau\sigma^2\\
        &\le 
        4\E{\left\|\sum_{j=r(t)}^m  \nabla f_{k_j}(x_{\alpha_j})- \nabla f_{k_j}(x_j)\right\|^2}
        + 4\E{\left\|\sum_{j=r(t)}^m \nabla f_{k_j}(x_j) - \nabla f_{k_j}(x_{r(t)}) \right\|^2}\\
        &\quad 
        + 4\E{\left\|\sum_{j=r(t)}^m  \nabla f_{k_j}(x_{r(t)}) - \nabla f(x_{r(t)})\right\|^2}
        + 4\E{\left\|\sum_{j=r(t)}^m  \nabla f(x_{r(t)})- \nabla f(x_j)\right\|^2} 
        + \tau\sigma^2
\end{align*}
We continue as follows
\begin{eqnarray*}
        \E{\|\wtilde{\Delta}_t^m\|^2}
        &\le& 4\E{\wtilde{\phi}_t^t(x_{r(t)})}
        + 4L^2\tau\sum_{j=r(t)}^{m}\E{\|x_j-x_{\alpha_j}\|^2}
        + 8L^2\tau\sum_{j=r(t)}^m\E{\|x_j-x_{r(t)}\|^2}\\
        && \;+\; \tau\sigma^2\\
        &\le& 
        4\E{\wtilde{\phi}_t^m(x_{r(t)}) }
        + 12L^2\tau\sum_{j=r(t)}^m\E{\|x_j-y_j\|^2+\|y_j-y_{\alpha_j}\|^2+\|y_{\alpha_j}-x_{\alpha_j}\|^2} \\
        &&  \;+\;
        24L^2\tau\sum_{j=r(t)}^m\E{\|x_j-y_j\|^2 + \|y_j-y_{r(t)}\|^2+\|y_{r(t)}-x_{r(t)}\|^2} 
        + \tau\sigma^2.
    \end{eqnarray*}
    For the terms of the form $\|x_a-y_a\|^2$ for some $a$ we use Lemma~\ref{lem:lemmaD1}. Using the above and the stepsize restriction, we have
    \begin{eqnarray}\label{eq:lemmaE2_2}
        \E{\|\wtilde{\Delta}_t^m\|^2 }&\le& 
         4\E{\wtilde{\phi}_t^m(x_{r(t)})} 
         + \frac{2}{25}(\tau_C-1)^2G^2+12L^2\tau\sum_{j=r(t)}^m\E{\|y_{j}-y_{\alpha_j}\|^2} 
         + \tau\sigma^2 \notag \\
         &&  \;+\; 24L^2\tau\sum_{j=r(t)}^m\E{\|y_{r(t)}-y_j\|^2}
         + \frac{2}{25}(\tau_C-1)\sigma^2,
    \end{eqnarray}
    For the last term, we have
    \begin{align*}
        &\sum_{j=r(t)}^m\E{\|y_{r(t)}-y_j\|^2} = \sum_{j=r(t)}^m\gamma^2\E{\left\|\sum_{l=r(t)}^{j-1}g_{k_l}(x_{\alpha_l})\right\|^2}\\
        &\le 
        2\gamma^2\sum_{j=r(t)}^m\E{\left\|\sum_{l=r(t)}^{j-1} g_{k_l}(x_{\alpha_l}) - \nabla f(x_l)\right\|^2}
        + 2\gamma^2\sum_{j=r(t)}^m\E{\left\|\sum_{l=r(t)}^{j-1}\nabla f(x_l)\right\|^2}\\
        &= 
        2\gamma^2\sum_{j=r(t)}^m\E{\|\wtilde{\Delta}_t^{j-1}\|^2}
        + 2\gamma^2\tau\sum_{j=r(t)}^m\sum_{l=r(t)}^{j-1}\E{\|\nabla f(x_l)\|^2}\\
        &\overset{\eqref{eq:lemmaE2_2}}{\le} 
        8\gamma^2\sum_{j=r(t)}^m\E{\wtilde{\phi}_t^{j-1}(x_{r(t)})}
        + \frac{4}{25}\gamma^2(\tau_C-1)^2G^2\tau+ 24\gamma^2L^2\tau \sum_{j=r(t)}^{m}\E{\sum_{l=r(t)}^{j-1}\|y_{l}-y_{\alpha_l}\|^2}\\
        &\quad 
        + 48\gamma^2L^2\tau \sum_{j=r(t)}^{m}\sum_{l=r(t)}^{j-1}\E{\|y_{r(t)}-y_l\|^2}
        + 2\gamma^2\tau\sum_{j=r(t)}^m\sum_{l=r(t)}^{j-1}\E{\|\nabla f(x_l)\|^2} 
        + 2\gamma^2\tau^2\sigma^2\\
        &\quad 
        + \frac{4}{25}\gamma^2(\tau_C-1)\sigma^2\\
        &\le 
        8\gamma^2\sum_{j=r(t)}^m\wtilde{\phi}_t^{j-1}(x_{r(t)}) 
        + \frac{4}{25}\gamma^2(\tau_C-1)^2G^2\tau + 24\gamma^2L^2\tau^2 \sum_{j=r(t)}^{m}\|y_{j}-y_{\alpha_j}\|^2\\
        &\quad 
        + 48\gamma^2L^2\tau^2 \sum_{j=r(t)}^{m}\|y_{r(t)}-y_j\|^2
        + 2\gamma^2\tau^2\sum_{j=r(t)}^m\|\nabla f(x_j)\|^2
        + 2\gamma^2\tau^2\sigma^2
        + \frac{4}{25}\gamma^2(\tau_C-1)\sigma^2\\
        &\le 8\gamma^2\sum_{j=r(t)}^m\wtilde{\phi}_t^{j-1}(x_{r(t)}) + \frac{4}{25}\gamma^2(\tau_C-1)^2G^2\tau + \frac{2}{75}\sum_{j=r(t)}^{m}\|y_{j}-y_{\alpha_j}\|^2\\
        &\quad 
        + \frac{4}{75}\sum_{j=r(t)}^{m}\|y_{r(t)}-y_j\|^2
        + 2\gamma^2\tau^2\sum_{j=r(t)}^m\|\nabla f(x_j)\|^2
        + 2\gamma^2\tau^2\sigma^2
        + \frac{4}{25}\gamma^2(\tau_C-1)\sigma^2.
    \end{align*}
    Hence, 
    \begin{align}
        \sum_{j=r(t)}^m\|y_{r(t)}-y_j\|^2  &\le 
        \frac{60}{7}\gamma^2\sum_{j=r(t)}^m\wtilde{\phi}_t^{j-1}(x_{r(t)}) 
        + \frac{12}{71}\gamma^2(\tau_C-1)^2G^2\tau + \frac{2}{71}\sum_{j=r(t)}^m\|y_j-y_{\alpha_j}\|^2\notag\\
        &
        + \frac{150}{71}\gamma^2\tau^2\sum_{j=r(t)}^m\|\nabla f(x_j)\|^2
        + \frac{150}{71}\gamma^2\tau^2\sigma^2
        + \frac{12}{71}\gamma^2(\tau_C-1)\sigma^2.\label{eq:lemmaE2_3}
    \end{align}
    Plugging \eqref{eq:lemmaE2_3} in \eqref{eq:lemmaE2_2} we get
    \begin{align*}
        \E{\|\wtilde{\Delta}_t^m\|^2} &\le 
        4\E{\wtilde{\phi}_t^m(x_{r(t)})}
        + \frac{2}{25}(\tau_C-1)^2G^2+12L^2\tau\sum_{j=r(t)}^m\E{\|y_{j}-y_{\alpha_j}\|^2}\\
        &\quad 
        + 24L^2\tau\left(\frac{60}{7}\gamma^2\sum_{j=r(t)}^m\E{\wtilde{\phi}_t^{j-1}(x_{r(t)})}
        + \frac{12}{71}\gamma^2(\tau_C-1)^2G^2\tau 
        + \frac{2}{71}\sum_{j=r(t)}^m\E{\|y_j-y_{\alpha_j}\|^2}\right.\\
        &\quad \left.
        + \frac{150}{71}\gamma^2\tau^2\sum_{j=r(t)}^m\E{\|\nabla f(x_j)\|^2}
        + \frac{150}{71}\gamma^2\tau^2\sigma^2
        + \frac{12}{71}\gamma^2(\tau_C-1)\sigma^2\right)\\
        &= 
        4\E{\wtilde{\phi}_t^m(x_{r(t)})}
        +\frac{48}{7}L\gamma\sum_{j=r(t)}^{m}\E{\wtilde{\phi}_t^{j-1}(x_{r(t)})} 
        + \frac{6}{71}(\tau_C-1)^2G^2 \\
        &\quad 
        + \frac{90}{7}L^2\tau\sum_{j=r(t)}^m\E{\|y_{j}-y_{\alpha_j}\|^2}
        + \frac{4}{71}\tau\sum_{j=r(t)}^{m}\|\nabla f(x_j)\|^2 
        + \frac{2}{35}\tau\sigma^2
        + \frac{6}{71}(\tau_C-1)\sigma^2,
    \end{align*}
    where we again use the fact that $L\gamma\tau \le \frac{1}{30}.$
\end{proof}

\begin{lemmasec}\label{lem:lemmaE3} If $30L\gamma\tau \le 1$ and $30L\gamma\tau_C \le 1$, then
    \begin{align}
    \sum_{t=1}^{T}\E{\|y_t-\wtilde y_t\|^2} &\le 
    \frac{44}{7}\gamma^2\sum_{t=1}^{T}\E{\wtilde{\phi}_t^{t-1}(x_{r(t)})} + \frac{6}{71}\gamma^2(\tau_C-1)^2G^2T 
    + \frac{1}{70}\sum_{t=1}^{T}\E{\|y_t-y_{\alpha_t}\|^2}\notag\\
    &\quad 
    + \frac{1}{15975L^2}\sum_{t=1}^{T}\E{\|\nabla f(x_t)\|^2}
    + \frac{2}{35}\gamma^2\tau \sigma^2T
    + \frac{6}{71}\gamma^2(\tau_C-1)\sigma^2.\label{eq:lemmaE3_1}
    \end{align}
\end{lemmasec}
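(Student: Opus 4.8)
The proof parallels that of Lemma~\ref{lem:lemmaD2} in the receiving-process analysis, with Lemma~\ref{lem:lemmaE2} playing the role of Lemma~\ref{lem:lemmaD1}. The plan is as follows.

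First I would observe, directly from the definitions $y_t = y_{r(t)} - \gamma\sum_{j=r(t)}^{t-1} g_{k_j}(x_{\alpha_j})$ and $\wtilde y_t = y_{r(t)} - \gamma\sum_{j=r(t)}^{t-1}\nabla f(x_j)$, that the two sequences share the same restart anchor $y_{r(t)}$, so their difference collapses into a single $\wtilde\Delta$ term:
\begin{equation*}
    y_t - \wtilde y_t = \gamma\sum_{j=r(t)}^{t-1}\bigl(\nabla f(x_j) - g_{k_j}(x_{\alpha_j})\bigr) = \gamma\,\wtilde\Delta_t^{t-1}.
\end{equation*}
Taking norms, squaring, and summing gives $\sum_{t=1}^{T}\E{\|y_t - \wtilde y_t\|^2} = \gamma^2\sum_{t=1}^{T}\E{\|\wtilde\Delta_t^{t-1}\|^2}$, which reduces the problem to bounding the accumulated $\wtilde\Delta$ terms.

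Next I would substitute $m = t-1$ into Lemma~\ref{lem:lemmaE2} and sum over $t$. The single-index terms ($4\E{\wtilde\phi_t^{t-1}(x_{r(t)})}$, the $(\tau_C-1)^2 G^2$ and $(\tau_C-1)\sigma^2$ constants, and the $\tau\sigma^2$ term) pass through directly, producing the $\wtilde\Phi$, $G^2 T$, and $\sigma^2 T$ pieces. The remaining contributions are inner double sums $\sum_{t}\sum_{j=r(t)}^{t-1}(\cdot)$ over $\wtilde\phi_t^{j-1}(x_{r(t)})$, $\|y_j - y_{\alpha_j}\|^2$, and $\|\nabla f(x_j)\|^2$. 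Here I would reuse the period-counting observation from Lemma~\ref{lem:lemmaD2}: since $r(t)=r(j)$ whenever $r(t)<j\le t$, each summand can be re-indexed to its own block and then appears at most $\tau$ times across the outer sum, so every such double sum is bounded by $\tau$ times the corresponding single sum. This converts them into $\tau\wtilde\Phi$, $\tau\sum_t\E{\|y_t-y_{\alpha_t}\|^2}$, and $\tau B$ respectively.

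Finally I would collect coefficients and absorb the numerical constants using the two stepsize restrictions $L\gamma\tau\le\frac{1}{30}$ and $L\gamma\tau_C\le\frac{1}{30}$. Concretely, the $\wtilde\Phi$ coefficient becomes $4 + \frac{48}{7}L\gamma\tau \le \frac{44}{7}$ (times $\gamma^2$); the $\|y_t - y_{\alpha_t}\|^2$ coefficient is $\frac{90}{7}L^2\gamma^2\tau^2 \le \frac{1}{70}$; and the $B$ coefficient is $\frac{4}{71}\gamma^2\tau^2 = \frac{4}{71}\frac{(L\gamma\tau)^2}{L^2} \le \frac{1}{15975 L^2}$, matching the claimed bound. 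I expect the main obstacle to be the careful bookkeeping of the double sums — in particular, verifying that the period-counting argument applies verbatim to the new $\|y_j - y_{\alpha_j}\|^2$ terms (whose $\alpha_j$ refer to the assigning delays rather than the receiving delays) and tracking how the bounded-gradient contributions inherited from Lemma~\ref{lem:lemmaE1} accumulate without picking up a spurious extra $\tau_C$ factor.
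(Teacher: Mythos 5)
Your proposal matches the paper's proof essentially step for step: write $y_t-\wtilde y_t=\gamma\wtilde\Delta_t^{t-1}$, apply Lemma~\ref{lem:lemmaE2} with $m=t-1$, reduce the double sums to single sums via the within-block re-indexing (each term appearing at most $\tau$ times), and absorb constants using $L\gamma\tau\le\frac{1}{30}$, with exactly the coefficient computations ($4+\frac{48}{7}L\gamma\tau\le\frac{44}{7}$, $\frac{90}{7}L^2\gamma^2\tau^2\le\frac{1}{70}$, $\frac{4}{71}\gamma^2\tau^2\le\frac{1}{15975L^2}$) that the paper uses. This is correct and is the same argument.
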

\begin{proof}
Using $L\gamma\tau \le \frac{1}{30}$
    \begin{align*}
        &\sum_{t=1}^{T}\E{\|y_t-\wtilde y_t\|^2} = 
        \gamma^2\sum_{t=1}^{T}\E{\left\|\sum_{j=r(t)}^{t-1}g_{k_j}(x_{\alpha_j})- \nabla f(x_j)\right\|^2} 
        = \gamma^2\sum_{t=1}^{T}\E{\|\wtilde{\Delta}_t^{t-1}\|^2}\\
        &\overset{\eqref{eq:lemmaE2_1}}{\le} 
        4\gamma^2\sum_{t=1}^{T}\E{\wtilde{\phi}_t^{t-1}(x_{r(t)})}
        + \frac{6}{71}\gamma^2(\tau_C-1)^2G^2T 
        + \frac{48}{7}L\gamma^3\sum_{t=1}^{T}\sum_{j=r(t)}^{t-1}\E{\wtilde{\phi}_t^{j-1}(x_{r(t)})}\\
        &\quad
        +\frac{90}{7}L^2\gamma^2\tau\sum_{t=1}^{T}\sum_{j=r(t)}^m\E{\|y_j-y_{\alpha_j}\|^2}
        + \frac{4\gamma^2\tau}{71}\sum_{t=1}^{T}\sum_{j=r(t)}^{t-1}\E{\|\nabla f(x_j)\|^2}
        + \frac{2}{35}\gamma^2\tau \sigma^2T\\
        &\quad 
        + \frac{6}{71}\gamma^2(\tau_C-1)\sigma^2\\
        &\le \frac{44}{7}\gamma^2\sum_{t=1}^{T}\E{\wtilde{\phi}_t^{t-1}(x_{r(t)})}
        + \frac{6}{71}\gamma^2(\tau_C-1)^2G^2T 
        +\frac{90}{7}L^2\gamma^2\tau\sum_{t=1}^{T}\sum_{j=r(t)}^m\E{\|y_j-y_{\alpha_j}\|^2}\\
        &\quad 
        + \frac{4\gamma^2\tau}{71}\sum_{t=1}^{T}\sum_{j=r(t)}^{t-1}\E{\|\nabla f(x_j)\|^2}
        + \frac{2}{35}\gamma^2\tau \sigma^2T
        + \frac{6}{71}\gamma^2(\tau_C-1)\sigma^2.
        \end{align*}
        We continue as follows
        \begin{align*}
        &\sum_{t=1}^{T}\E{\|y_t-\wtilde y_t\|^2}
        \le 
        \frac{44}{7}\gamma^2\sum_{t=1}^{T}\E{\wtilde{\phi}_t^{t-1}(x_{r(t)})}
        + \frac{6}{71}\gamma^2(\tau_C-1)^2G^2T
        +\frac{1}{70}\sum_{t=1}^{T}\E{\|y_t-y_{\alpha_t}\|^2}\\
        &\quad 
        + \frac{1}{15975L^2}\sum_{t=1}^{T}\E{\|\nabla f(x_t)\|^2}
        + \frac{2}{35}\gamma^2\tau \sigma^2T
        + \frac{6}{71}\gamma^2(\tau_C-1)\sigma^2.
    \end{align*}    
\end{proof}

\begin{lemmasec}\label{lem:lemmaE4} If $30L\gamma\wtau_{\max} \le 1$, then
    \begin{align}\label{eq:lemmaE4_1}
        \sum_{t=1}^{T}\E{\|y_t-y_{\alpha_t}\|^2} &\le
        \frac{2}{99}\gamma^2(\tau_C-1)^2G^2T 
        + \frac{100}{33}\gamma^2\sum_{t=1}^{T}\E{\left\|\sum_{j=\alpha_t}^{t-1}\nabla f_{k_t}(x_{\alpha_j}) - \nabla f(x_{\alpha_j})\right\|^2}\notag\\
        & \quad 
        + \frac{1}{297L^2}\sum_{t=1}^{T}\E{\|\nabla f(x_t)\|^2}
        + \frac{\gamma}{15L}\sigma^2T
        + \frac{2}{99}\gamma^2(\tau_C-1)\sigma^2T.
    \end{align}
\end{lemmasec}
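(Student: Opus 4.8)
The plan is to mirror the argument of Lemma~\ref{lem:lemmaD3}, now applied to the assigning virtual iterates. From the update \eqref{eq:virtual_iterates_update} one has $y_t - y_{\alpha_t} = \gamma\sum_{j=\alpha_t}^{t-1} g_{k_j}(x_{\alpha_j})$, so I would start from $\E{\|y_t-y_{\alpha_t}\|^2} = \gamma^2\E{\|\sum_{j=\alpha_t}^{t-1} g_{k_j}(x_{\alpha_j})\|^2}$. Peeling off the conditionally unbiased stochastic noise (as in the first step of Lemma~\ref{lem:lemmaD3}) costs an additive $\wtau_t\gamma^2\sigma^2$ and replaces each $g_{k_j}$ by $\nabla f_{k_j}$. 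A Young split with factor $3$ then decomposes the squared norm of $\sum_{j=\alpha_t}^{t-1}\nabla f_{k_j}(x_{\alpha_j})$ into three pieces: the sequence-correlation piece $\|\sum_{j=\alpha_t}^{t-1}\nabla f_{k_j}(x_{\alpha_j})-\nabla f(x_{\alpha_j})\|^2$ (exactly the quantity on the right-hand side of the claim), a displacement piece $\|\sum_{j=\alpha_t}^{t-1}\nabla f(x_{\alpha_j})-\nabla f(x_j)\|^2$, and a full-gradient piece $\|\sum_{j=\alpha_t}^{t-1}\nabla f(x_j)\|^2$. On the last two I would apply Lemma~\ref{lem:lemmaA1} (pulling out a factor $\wtau_t\le\wtau_{\max}$) together with $L$-smoothness, turning the displacement piece into $L^2\wtau_t\sum_{j=\alpha_t}^{t-1}\E{\|x_{\alpha_j}-x_j\|^2}$.

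The step with no analogue in Lemma~\ref{lem:lemmaD3}, and which I expect to be the crux, is controlling $\|x_{\alpha_j}-x_j\|^2$: the real iterates appear, whereas the whole recursion lives on the virtual iterates $y$. I would bridge them by $\|x_a-x_b\|^2\le 3\|x_a-y_a\|^2+3\|y_a-y_b\|^2+3\|y_b-x_b\|^2$ and invoke Lemma~\ref{lem:lemmaE1} on the two $\|x-y\|^2$ terms, so that $\E{\|x_{\alpha_j}-x_j\|^2}\le 6\gamma^2(\tau_C-1)^2G^2+6(\tau_C-1)\gamma^2\sigma^2+3\E{\|y_{\alpha_j}-y_j\|^2}$. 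This is precisely where the $(\tau_C-1)^2G^2$ and $(\tau_C-1)\sigma^2$ terms of the statement are born, and it also produces a self-referential $\|y_{\alpha_j}-y_j\|^2$ term that must later be absorbed.

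Finally I would sum over $t=1,\dots,T$ and bound the resulting double sums $\sum_t\sum_{j=\alpha_t}^{t-1}(\cdot)$ by the multiplicity argument of Lemma~\ref{lem:lemmaA3}: because the $y$-recursion is a clean one-gradient-per-step update, each index $j$ lies in at most $\wtau_{\max}$ windows $[\alpha_t,t)$ (since $t-\alpha_t=\wtau_t\le\wtau_{\max}$), so every such double sum costs a factor $\wtau_{\max}$. Combined with the inner factor $\wtau_t\le\wtau_{\max}$ from Lemma~\ref{lem:lemmaA1}, the displacement, full-gradient and self-referential contributions each acquire $\gamma^2 L^2\wtau_{\max}^2\le\tfrac{1}{900}$. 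In particular the self-referential piece becomes $\le\tfrac{1}{100}\sum_t\E{\|y_t-y_{\alpha_t}\|^2}$, which I would move to the left and divide through by $99/100$; the $B$-piece collapses to $\tfrac{1}{300L^2}B$ and, after the $\tfrac{100}{99}$ factor, to $\tfrac{1}{297L^2}B$, while the $(\tau_C-1)^2G^2$ and $(\tau_C-1)\sigma^2$ pieces become $\tfrac{2}{99}\gamma^2(\cdot)T$. The leftover noise $\gamma^2\sigma^2\sum_t\wtau_t=\gamma^2\sigma^2\wtau_{\avg}T$ I would bound using $\wtau_{\avg}\le\wtau_{\max}$ and $\gamma L\wtau_{\max}\le\tfrac{1}{30}$ to reach $\tfrac{\gamma}{15L}\sigma^2T$, yielding \eqref{eq:lemmaE4_1}. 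The only genuine care points are keeping the stochastic-noise and smoothness bookkeeping separate, and checking that the clean $\wtau_{\max}$ multiplicity (rather than $\tau_C$) is the correct count for the assigning double sums.
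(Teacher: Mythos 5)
Your proposal follows the paper's proof essentially step for step: the same identity $y_t-y_{\alpha_t}=\gamma\sum_{j=\alpha_t}^{t-1}g_{k_j}(x_{\alpha_j})$, the same noise peeling, the same three-way Young split, Lemma~\ref{lem:lemmaA1} plus smoothness on the displacement and full-gradient pieces, the $\wtau_{\max}$ multiplicity count for the double sums, the bridge $\|x_{\alpha_t}-x_t\|^2\le 3\|x_{\alpha_t}-y_{\alpha_t}\|^2+3\|y_{\alpha_t}-y_t\|^2+3\|x_t-y_t\|^2$ with Lemma~\ref{lem:lemmaE1}, and the final absorption of the $\tfrac{1}{100}\sum_t\E{\|y_t-y_{\alpha_t}\|^2}$ term. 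The constants you track ($\tfrac{2}{99}$, $\tfrac{100}{33}$, $\tfrac{1}{297L^2}$, $\tfrac{\gamma}{15L}$) all match the paper's derivation.
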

\begin{proof}
    We have 
    \begin{align*}
        \E{\|y_t-y_{\alpha_t}\|^2} &=  \gamma^2\E{\left\|\sum_{j=\alpha_t}^{t-1}g_{k_j}(x_{\alpha_j})\right\|^2}\\
        &\le \gamma^2\E{\left\|\sum_{j=\alpha_t}^{t-1}\nabla f_{k_j}(x_{\alpha_j}) \pm \nabla f(x_{\alpha_j}) \pm \nabla f(x_j)\right\|^2}
        + \gamma^2\tau_t\sigma^2\\
        &\le 
        3\gamma^2\E{\left\|\sum_{j=\alpha_t}^{t-1}\nabla f_{k_j}(x_{\alpha_j}) - \nabla f (x_{\alpha_j})\right\|^2} 
        + 3\gamma^2\E{\left\|\sum_{j=\alpha_t}^{t-1}\nabla f(x_{\alpha_j}) - \nabla f(x_j)\right\|^2}\\
        &\quad  
        + 3\gamma^2\E{\left\|\sum_{j=\alpha_t}^{t-1}\nabla f(x_{j})\right\|^2}
        + \gamma^2\tau_t\sigma^2\\
        &\le 3L^2\gamma^2\wtau_t\sum_{j=\alpha_t}^{t-1}\E{\|x_{\alpha_j} - x_j\|^2}
        + 3\gamma^2\E{\left\|\sum_{j=\alpha_t}^{t-1}\nabla f_{k_j}(x_{\alpha_j}) - \nabla f(x_{\alpha_j})\right\|^2}\\
        &\quad
        + 3\gamma^2\wtau_t\sum_{j=\alpha_t}^{t-1}\E{\|\nabla f(x_j)\|^2}
        + \gamma^2\tau_t\sigma^2.
    \end{align*}
    Then we add summation over the entire iterates and count the number of times each term appears 
    \begin{align*}
        \sum_{t=1}^{T}\E{\|y_t-y_{\alpha_t}\|^2}
        &\le 
        3L^2\gamma^2\wtau_{\max}\sum_{t=1}^{T}\sum_{j=\alpha_t}^{t-1}\E{\|x_{\alpha_j} - x_j\|^2}
        + 3\gamma^2\sum_{t=1}^{T}\E{\left\|\sum_{j=\alpha_t}^{t-1}\nabla f_{k_t}(x_{\alpha_j}) - \nabla f(x_{\alpha_j})\right\|^2}\\
        &\quad 
        + 3\gamma^2\wtau_{\max}\sum_{t=1}^{T}\sum_{j=\alpha_t}^{t-1}\E{\|\nabla f(x_j)\|^2}
        + \wtau_{\avg}\gamma^2\sigma^2T\\
        &\le 
        3L^2\gamma^2\wtau_{\max}^2\sum_{t=1}^{T}\E{\|x_{\alpha_t} - x_t\|^2}  
        + 3\gamma^2\sum_{t=1}^{T}\E{\left\|\sum_{j=\alpha_t}^{t-1}\nabla f_{k_t}(x_{\alpha_j}) - \nabla f(x_{\alpha_j})\right\|^2}\\
        &\quad 
        + 3\gamma^2\wtau_{\max}^2\sum_{t=1}^{T}\E{\|\nabla f(x_t)\|^2}
        + \wtau_{\avg}\gamma^2\sigma^2T\\
        &\le 
        \frac{1}{300}\sum_{t=1}^{T}\E{\|x_{\alpha_t} - x_t\|^2}
        + 3\gamma^2\sum_{t=1}^{T}\E{\left\|\sum_{j=\alpha_t}^{t-1}\nabla f_{k_t}(x_{\alpha_j}) - \nabla f(x_{\alpha_j})\right\|^2}\\
        &\quad 
        + \frac{1}{300L^2}\sum_{t=1}^{T}\E{\|\nabla f(x_t)\|^2}
        + \wtau_{\avg}\gamma^2\sigma^2T,
    \end{align*}
    where in the second inequality each term in the double sum appears at most $\wtau_{\max}$ times. Using Young's inequality we continue as follows
    \begin{align*}
    	\sum_{t=1}^{T}\E{\|y_t-y_{\alpha_t}\|^2}
        &\le 
        \frac{1}{100}\sum_{t=1}^{T}\E{\|x_{\alpha_t} - y_{\alpha_t}\|^2+\|y_{\alpha_t} - y_{t}\|^2 + \|x_{t} - y_{t}\|^2}
        + \wtau_{\avg}\gamma^2\sigma^2T\\
        &\quad 
        + 3\gamma^2\sum_{t=1}^{T}\E{\left\|\sum_{j=\alpha_t}^{t-1}\nabla f_{k_t}(x_{\alpha_j}) - \nabla f(x_{\alpha_j})\right\|^2}
        + \frac{1}{300L^2}\sum_{t=1}^{T}\E{\|\nabla f(x_t)\|^2}\\
        &\le 
        \frac{1}{50}\gamma^2T((\tau_C-1)^2G^2T 
        + (\tau_C-1)\sigma^2)
        + \frac{1}{100}\sum_{t=1}^{T}\E{\|y_{\alpha_t}-y_t\|^2} 
        + \wtau_{\avg}\gamma^2\sigma^2T\\
        &\quad 
        + 3\gamma^2\sum_{t=1}^{T}\E{\left\|\sum_{j=\alpha_t}^{t-1}\nabla f_{k_t}(x_{\alpha_j}) - \nabla f(x_{\alpha_j})\right\|^2}
        + \frac{1}{300L^2}\sum_{t=1}^{T}\E{\|\nabla f(x_t)\|^2},
    \end{align*}
    where we again use Lemma~\ref{lem:lemmaE1}. After cancellation, we get
    \begin{align*}
        \sum_{t=1}^{T}\E{\|y_t-y_{\alpha_t}\|^2} 
        &\le  
        \frac{2}{99}\gamma^2(\tau_C-1)^2G^2T 
        + \frac{100}{33}\gamma^2\sum_{t=1}^{T}\E{\left\|\sum_{j=\alpha_t}^{t-1}\nabla f_{k_t}(x_{\alpha_j}) - \nabla f(x_{\alpha_j})\right\|^2}\\
        &\quad 
        + \frac{1}{297L^2}\sum_{t=1}^{T}\E{\|\nabla f(x_t)\|^2}
        + \frac{\gamma}{15L}\sigma^2T
        + \frac{2}{99}\gamma^2(\tau_C-1)\sigma^2T,
    \end{align*}
    where we use the stepsize restriction.
\end{proof}

We can combine all previous lemmas into one.
\begin{lemmasec}\label{lem:lemmaC6} 
If $30\gamma L\tau \le 1$ and $30\gamma L \max\{\wtau_{\max},\tau_C\} \le 1$ hold, then
\begin{align}\label{eq:lemmaC6_1}
    \sum_{t=1}^{T}\E{\|y_t-\wtilde{y}_{t}\|^2} &\le 
     8\gamma^2 \wtilde{\Phi}
        + \gamma^2\wtilde{\Psi}
        + \frac{1}{10}\gamma^2(\tau_C-1)^2G^2T
        + \frac{1}{10}\gamma^2(\tau_C-1)^2G^2T
        + \frac{1}{9033L^2}B\notag \\
        &\quad + \frac{1}{525L}\gamma\sigma^2T.
\end{align}
\end{lemmasec}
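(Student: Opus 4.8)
The plan is to obtain Lemma~\ref{lem:lemmaC6} by a direct substitution, with no new estimates needed: Lemma~\ref{lem:lemmaE3} already bounds $\sum_{t=1}^T\E{\|y_t-\wtilde y_t\|^2}$ in terms of $\wtilde{\Phi}$, the auxiliary sum $\sum_{t=1}^T\E{\|y_t-y_{\alpha_t}\|^2}$, the quantity $B$, and several $\sigma^2$- and $G^2$-type residuals, while Lemma~\ref{lem:lemmaE4} controls exactly that auxiliary sum. So first I would take the right-hand side of \eqref{eq:lemmaE3_1} and replace the $\tfrac{1}{70}\sum_{t=1}^T\E{\|y_t-y_{\alpha_t}\|^2}$ term by $\tfrac{1}{70}$ times the right-hand side of \eqref{eq:lemmaE4_1}. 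After identifying $\sum_{t=1}^T\E{\|\sum_{j=\alpha_t}^{t-1}\nabla f_{k_j}(x_{\alpha_j})-\nabla f(x_{\alpha_j})\|^2}$ with $\wtilde{\Psi}$, every remaining quantity is one of $\wtilde{\Phi}$, $\wtilde{\Psi}$, $(\tau_C-1)^2G^2T$, $B$, and pure noise terms, so the inequality becomes self-contained and matches the structure of \eqref{eq:lemmaC6_1}.

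The second step is routine collection of coefficients. For $\wtilde{\Phi}$ the coefficient remains $\tfrac{44}{7}\gamma^2$, which I bound by $8\gamma^2$; for $\wtilde{\Psi}$ the coefficient is $\tfrac{1}{70}\cdot\tfrac{100}{33}\gamma^2=\tfrac{10}{231}\gamma^2\le\gamma^2$; for $(\tau_C-1)^2G^2T$ one adds the $\tfrac{6}{71}\gamma^2$ from \eqref{eq:lemmaE3_1} to $\tfrac{1}{70}\cdot\tfrac{2}{99}\gamma^2$ from \eqref{eq:lemmaE4_1}, a sum that sits comfortably inside the allowance provided by the two $\tfrac{1}{10}\gamma^2$ terms; and for $B$ one adds $\tfrac{1}{15975L^2}$ and $\tfrac{1}{70}\cdot\tfrac{1}{297L^2}$, whose total is at most $\tfrac{1}{9033L^2}$.

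The delicate part, and the main obstacle, is the third step: homogenizing all the noise contributions into the single term $\tfrac{1}{525L}\gamma\sigma^2T$. Here I would invoke the stepsize restrictions $30L\gamma\tau\le1$ and $30L\gamma\tau_C\le1$, which yield $\gamma^2\tau\le\tfrac{\gamma}{30L}$ and $\gamma^2(\tau_C-1)\le\tfrac{\gamma}{30L}$, and use $T\ge1$ to upgrade the $T$-free residuals (notably the $\tfrac{6}{71}\gamma^2(\tau_C-1)\sigma^2$ carried over from \eqref{eq:lemmaE3_1}) to contributions proportional to $\gamma\sigma^2T/L$. Converting $\tfrac{2}{35}\gamma^2\tau\sigma^2T$, $\tfrac{1}{70}\cdot\tfrac{\gamma}{15L}\sigma^2T$, and the remaining $\gamma^2(\tau_C-1)\sigma^2$ pieces this way and summing their numerical constants is what pins down the final coefficient. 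The care required is that Lemmas~\ref{lem:lemmaE2}--\ref{lem:lemmaE4} each inject their own $\sigma^2$ residual and the stepsize bounds for $\tau$, $\tau_C$, and (through Lemma~\ref{lem:lemmaE4}) $\wtau_{\max}$ must all be applied consistently, so that no stray $(\tau_C-1)\sigma^2$ or $\gamma^2\tau\sigma^2T$ term survives outside the single target noise term.
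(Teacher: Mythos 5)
Your proposal follows the paper's proof essentially verbatim: substitute the bound of Lemma~\ref{lem:lemmaE4} for the term $\tfrac{1}{70}\sum_{t=1}^{T}\E{\|y_t-y_{\alpha_t}\|^2}$ appearing in Lemma~\ref{lem:lemmaE3}, identify the remaining double sum with $\wtilde{\Psi}$, and collect coefficients; your bookkeeping for $\wtilde{\Phi}$, $\wtilde{\Psi}$, $B$, and the $(\tau_C-1)^2G^2T$ contributions coincides with the paper's. The only deviation is in the noise residuals: the paper keeps the $(\tau_C-1)\sigma^2T$ pieces as a separate $\tfrac{1}{10}\gamma^2(\tau_C-1)\sigma^2T$ term (which the lemma statement then misprints as a second $G^2$ term), whereas you propose to fold them into $\tfrac{1}{525L}\gamma\sigma^2T$ via $\gamma\tau_C\le\tfrac{1}{30L}$ --- a legitimate conversion, but the single contribution $\tfrac{6}{71}\gamma^2(\tau_C-1)\sigma^2T\le\tfrac{1}{355}\cdot\tfrac{\gamma}{L}\sigma^2T$ already exceeds the $\tfrac{1}{525}$ budget, so your route produces a somewhat larger (still $\cO(\gamma\sigma^2T/L)$) constant than the one stated; this is a constants-only discrepancy with no effect on any downstream result.
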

\begin{proof}
    Summary of obtained inequalities
    \begin{eqnarray*}
        \sum_{t=1}^{T}\E{\|y_t-\wtilde{y}_t\|^2} &
        \overset{\text{Lemma}~\ref{lem:lemmaE3}}{\le}& 
        \frac{44}{7}\gamma^2 \wtilde{\Phi}
        + \frac{6}{71}\gamma^2(\tau_C-1)^2G^2T 
        + \frac{1}{70}\wtilde{A}+ \frac{1}{15975L^2}B
        + \frac{2}{35}\gamma^2\tau\sigma^2T\\
        && \;+\;
        \frac{6}{71}\gamma^2(\tau_C-1)\sigma^2T,\\
        \wtilde{A} 
        &\overset{\text{Lemma}~\ref{lem:lemmaE4}}{\le}& 
        \frac{2}{99}\gamma^2(\tau_C-1)^2G^2T
        + \frac{2}{99}\gamma^2(\tau_C-1)\sigma^2T
        + \frac{100}{33}\gamma^2\wtilde{\Psi} + \frac{1}{297L^2}B\\
        && \;+\;
        \frac{\gamma}{15L}\sigma^2T.
    \end{eqnarray*}
    Hence, using the stepsize restriction $30L\gamma L \tau_C \le 1$
    \begin{align*}
        &\sum_{t=1}^{T}\E{\|y_t-\wtilde{y}_t\|^2} \le \frac{44}{7}\gamma^2 \wtilde{\Phi}
        + \frac{6}{71}\gamma^2(\tau_C-1)^2G^2T 
        + \frac{1}{15975L^2}B
        + \frac{1}{525L}\gamma\sigma^2T
        + \frac{6}{71}\gamma^2(\tau_C-1)\sigma^2T\\
        &\quad + \frac{1}{70}\left(
            \frac{2}{99}\gamma^2(\tau_C-1)^2G^2T 
            + \frac{2}{99}\gamma^2(\tau_C-1)\sigma^2T 
            + \frac{100}{33}\gamma^2\wtilde{\Psi} 
            + \frac{1}{297L^2}B
            + \frac{\gamma}{15L}\sigma^2T
            \right)\\
        &\le 
        8\gamma^2 \wtilde{\Phi}
        + \frac{1}{10}\gamma^2(\tau_C-1)^2G^2T
        + \frac{1}{10}\gamma^2(\tau_C-1)\sigma^2T
        + \frac{1}{9033L^2}B
        + \frac{1}{525L}\gamma\sigma^2T
        + \frac{10}{231}\gamma^2\wtilde{\Psi}.
    \end{align*}
\end{proof}

\subsection{Proof of Theorem~\ref{th:theorem4}}

\theoremfourth*
\begin{proof}
Again, we analyze separately iterations of restarts and without.

{\bf Iterations without restart:} If restarts do not happen, namely $(t+1)\mod\tau\ne 0$ then from the smoothness assumption of $f$ and $\wtilde{y}_t$ update rule, we have
\begin{eqnarray*}
f(\wtilde{y}_{t+1})
&\le& f(\wtilde{y}_t)
      - \gamma\langle \nabla f(\wtilde{y}_t), \nabla f(x_t) \rangle
      + \frac{L\gamma^2}{2}\|\nabla f(x_t)\|^2 \\
&=&   f(\wtilde{y}_t)
      - \frac{\gamma}{2}\|\nabla f(\wtilde{y}_t)\|^2 - \frac{\gamma}{2}\|\nabla f(x_t)\|^2 + \frac{\gamma}{2}\|\nabla f(\wtilde{y}_t) - \nabla f(x_t)\|^2
      + \frac{L\gamma^2}{2}\|\nabla f(x_t)\|^2 \\
&\le& f(\wtilde{y}_t)
      - \frac{\gamma}{2}\|\nabla f(\wtilde{y}_t)\|^2 - \frac{\gamma}{2}\|\nabla f(x_t)\|^2 + \frac{L^2\gamma}{2}\|\wtilde{y}_t - x_t\|^2
      + \frac{L\gamma^2}{2}\|\nabla f(x_t)\|^2 \\
&\le& f(\wtilde{y}_t)
      - \frac{\gamma}{2}\|\nabla f(\wtilde{y}_t)\|^2 - \frac{\gamma}{3}\|\nabla f(x_t)\|^2 + L^2\gamma\|\wtilde{y}_t - y_t\|^2 + L^2\gamma\|x_t - y_t\|^2.
\end{eqnarray*}

{\bf Iterations with restart:} If a restart happens, namely $(t+1)\mod\tau = 0$ then
\begin{eqnarray*}
    \wtilde{y}_{t+1}
    &=& y_{t+1} = y_t - \gamma g_{k_t}(x_{\alpha_t}) \\
    &=& \wtilde{y}_t + (y_t - \wtilde{y}_t) - \gamma\nabla f(x_t) + (\gamma\nabla f(x_t) - \gamma g_{k_t}(x_{\alpha_t})) \\
    &=& \wtilde{y}_t -\gamma\nabla f(x_t) + \gamma \underbrace{\sum_{j=r(t)}^t \nabla f(x_j) - g_{k_j}(x_{\alpha_j})}_{= \wtilde{\Delta}_t^t}.
\end{eqnarray*}

Then we use smoothness of $f$ to get
\begin{eqnarray*}
&& f(\wtilde{y}_{t+1}) \\
&\le& f(\wtilde{y}_t) - \gamma\langle \nabla f(\wtilde{y}_t), \nabla f(x_t) - \wtilde{\Delta}_t^t \rangle + \frac{L\gamma^2}{2}\|\nabla f(x_t) - \wtilde{\Delta}_t^t\|^2 \\
&\le& f(\wtilde{y}_t)
    - \gamma\langle \nabla f(\wtilde{y}_t), \nabla f(x_t) \rangle
    + \gamma\langle \nabla f(\wtilde{y}_t), \wtilde{\Delta}_t^t \rangle
    + L\gamma^2\|\nabla f(x_t)\|^2
    + L\gamma^2\|\wtilde{\Delta}_t^t\|^2 \\
&\le& f(\wtilde{y}_t)
    -\frac{\gamma}{2}\|\nabla f(\wtilde{y}_t)\|^2
    -\frac{\gamma}{2}\|\nabla f(x_t)\|^2
    +\frac{\gamma}{2}\|\nabla f(\wtilde{y}_t) - \nabla f(x_t)\|^2 \\
    && \;+\; \frac{1}{240L}\|\nabla f(\wtilde{y}_t)\|^2 +  60L\gamma^2\|\wtilde{\Delta}_t^t\|^2
    + L\gamma^2\|\nabla f(x_t)\|^2
    + L\gamma^2\|\wtilde{\Delta}_t^t\|^2 \\
&\le& f(\wtilde{y}_t)
    - \frac{\gamma}{2}\|\nabla f(\wtilde{y}_t)\|^2
    - \frac{\gamma}{3}\|\nabla f(x_t)\|^2
    + \frac{L^2\gamma}{2}\|\wtilde{y}_t - x_t\|^2
    + \frac{1}{ 240L}\|\nabla f(\wtilde{y}_t)\|^2
    + 61L\gamma^2\|\wtilde{\Delta}_t^t\|^2\\
&\le& f(\wtilde{y}_t)
    - \frac{\gamma}{2}\|\nabla f(\wtilde{y}_t)\|^2
    - \frac{\gamma}{3}\|\nabla f(x_t)\|^2
    + L^2\gamma\|\wtilde{y}_t - y_t\|^2 + L^2\gamma\|x_t - y_t\|^2
    + \frac{1}{ 160L}\|\nabla f(\wtilde{y}_t)\|^2\\
    && \quad + 61L\gamma^2\|\wtilde{\Delta}_t^t\|^2,
\end{eqnarray*}
where in the third and fourth inequality Young's inequality is used. 

If we denote by $\xi_t$ the indicator function of restart event at $t+1$, namely $\xi_{k\tau-1}=1$ for all $k\ge1$ and is $0$ otherwise, then we can take expectation and  unify the descent inequality for both cases as follows:
\begin{eqnarray}
\E{f(\wtilde{y}_{t+1})}
&\le& \E{f(\wtilde{y}_t)}
    - \frac{\gamma}{2}\E{\|\nabla f(\wtilde{y}_t)\|^2}
    - \frac{\gamma}{3}\E{\|\nabla f(x_t)\|^2}
    + L^2\gamma\E{\|\wtilde{y}_t - y_t\|^2} \notag\\
    && \;+\; L^2\gamma\E{\|y_t - x_t\|^2}
    + \left(
        \frac{1}{ 240L}\E{\|\nabla f(\wtilde{y}_t)\|^2}
        +  61L\gamma^2\E{\|\wtilde{\Delta}_t^t\|^2} \right)\xi_t\label{eq:theorem4_1}
\end{eqnarray}
for all $t \ge 0.$

Next, we apply summation over the entire iterates and bound the terms that appear only in every $\tau$ iteration.

\begin{eqnarray*}
\frac{1}{L}\E{\|\nabla f(\wtilde{y}_t)\|^2}
&=& \frac{1}{L\tau}\sum_{j=0}^{\tau-1} \E{\|\nabla f(\wtilde{y}_t)\|^2} \\
&\le& \frac{2}{L\tau}\sum_{j=0}^{\tau-1} \E{\|\nabla f(\wtilde{y}_t) - \nabla f(\wtilde{y}_{t-j})\|^2}
    + \frac{2}{L\tau}\sum_{j=0}^{\tau-1}\E{\|\nabla f(\wtilde{y}_{t-j})\|^2} \\
&\le& \frac{2L}{\tau}\sum_{j=0}^{\tau-1} \E{\|\wtilde{y}_t - \wtilde{y}_{t-j}\|^2}
    + \frac{2}{L\tau}\sum_{j=0}^{\tau-1} \E{\|\nabla f(\wtilde{y}_{t-j})\|^2 }\\
&\le& \frac{2L\gamma^2}{\tau}\sum_{j=0}^{\tau-1} \E{\left\| \sum_{l=t-j}^{t-1} \nabla f(x_l) \right\|^2}
    + \frac{2}{L\tau}\sum_{j=0}^{\tau-1} \E{\|\nabla f(\wtilde{y}_{t-j})\|^2} \\
&\le& 2L\gamma^2 \sum_{j=0}^{\tau-1}\sum_{l=t-j}^{t-1} \E{\left\| \nabla f(x_l) \right\|^2}
    + \frac{2}{L\tau}\sum_{j=0}^{\tau-1} \E{\|\nabla f(\wtilde{y}_{t-j})\|^2} \\
&\le& 2L\gamma^2\tau \sum_{j=0}^{\tau-1} \E{\left\| \nabla f(x_{t-j}) \right\|^2}
    + \frac{2}{L\tau}\sum_{j=0}^{\tau-1} \E{\|\nabla f(\wtilde{y}_{t-j})\|^2} \\
&\le& \frac{\gamma}{15} \sum_{j=0}^{\tau-1} \E{\left\| \nabla f(x_{t-j}) \right\|^2}
    + 120\gamma \sum_{j=0}^{\tau-1} \E{\|\nabla f(\wtilde{y}_{t-j})\|^2}
\end{eqnarray*}
provided that $\frac{1}{60} \le L\gamma\tau \le \frac{1}{30}$ (e.g., $\tau=\lfloor \frac{1}{30L\gamma}\rfloor$). Then we can use this bound to derive

\begin{eqnarray}\label{eq:theorem4_2}
\sum_{t=1}^{T} \frac{1}{240L}\E{\|\nabla f(\wtilde{y}_t)\|^2} \xi_t
\le \frac{\gamma}{3600} \sum_{t=1}^{T}\E{\left\| \nabla f(x_{t}) \right\|^2}
    + \frac{\gamma}{2} \sum_{t=1}^{T}\E{\|\nabla f(\wtilde{y}_{t})\|^2}.
\end{eqnarray}
Then we use \eqref{eq:lemmaE2_1} to bound $\E{\|\Delta_t^t\|^2}$:
\begin{eqnarray*}
    L\gamma^2\sum_{t=1}^{T} \E{\|\wtilde{\Delta}_t^t\|^2}\xi_t
    &\le& 
    4L\gamma^2 \sum_{t=1}^{T}\E{\wtilde{\phi}_t^t(x_{r(t)})}\xi_t
    + \sum_{t=1}^{T}\frac{6L\gamma^2}{71}(\tau_C-1)^2G^2\xi_t 
    \\
    && \;+\; 
    \sum_{t=1}^{T}\frac{6L\gamma^2}{71}(\tau_C-1)\sigma^2\xi_t
    + \frac{90}{7}L^3\gamma^2\tau\sum_{t=1}^{T}\sum_{j=r(t)}^{t}\E{\|y_j-y_{\alpha_j}\|^2}\xi_t\\
    && 
    \;+\; \frac{48}{7}L^2\gamma^3\sum_{t=1}^{T}\sum_{j=r(t)}^{t-1}\E{\wtilde{\phi}_t^{j-1}(x_{r(t)})}\xi_t \\ 
    && 
    \;+\; \frac{4L\gamma^2\tau}{71}\sum_{t=1}^{T}\sum_{j=r(t)}^{t}\E{\|\nabla f(x_t)\|^2}\xi_t
    + \frac{2}{35}L\gamma^2\tau\sigma^2\frac{T}{\tau}.
\end{eqnarray*}
Notice that summation over the entire iterates with weights $\xi_t$ is equivalent to division by $\tau$.
\begin{align*}
&\sum_{t=1}^{T}\sum_{j=r(t)}^{t} \E{\wtilde{\phi}_t^{j-1}(x_{r(t)})}\xi_t
= \sum_{t=1}^{T}\sum_{j=r(t)}^{t-1} \E{\wtilde{\phi}_t^{j}(x_{r(t)})}\xi_t
= \sum_{t=1}^{T}\sum_{j=r(t)}^{t-1} \E{\wtilde{\phi}_j^{j}(x_{r(j)})}\xi_t\\
&\le \sum_{t=1}^{T} \E{\wtilde{\phi}_t^{t}(x_{r(t)})}.
\end{align*}
Hence,
\begin{eqnarray}\label{eq:theorem4_3}
    &&61L\gamma^2\sum_{t=1}^{T} \E{\|\wtilde{\Delta}_t^t\|^2}\xi_t\notag \\
    &\le&
    15100L^2\gamma^3\sum_{t=1}^{T}\E{\wtilde{\phi}_t^t(x_{r(t)})}
    + 6L\gamma^2\frac{T}{\tau}(\tau_C-1)^2G^2
    + 6L\gamma^2\frac{T}{\tau}(\tau_C-1)\sigma^2
     \notag \\
    && \;+\; 
    \frac{2}{35}L\gamma^2\sigma^2T
    + \frac{183}{7}L^2\gamma\sum_{t=1}^{T}\E{\|y_t-y_{\alpha_t}\|^2}
    + \frac{244}{71}L\gamma^2\tau \sum_{t=1}^{T} \E{\left\| \nabla f(x_{t}) \right\|^2}
    \notag\\
    &\overset{\text{Lemma }\ref{lem:lemmaE4}}{\le}&  
    15100L^2\gamma^3\sum_{t=1}^{T}\E{\wtilde{\phi}_t^t(x_{r(t)})}
    + 360L^2\gamma^3(\tau_C-1)^2G^2T
    + 360L^2\gamma^3(\tau_C-1)\sigma^2T
    \notag\\
    && \;+\; \frac{2}{35}L\gamma^2\sigma^2T
    + \frac{183}{7}L^2\gamma\left( 
    \frac{2}{99}\gamma^2(\tau_C-1)^2G^2T 
    + \frac{2}{99}\gamma^2(\tau_C-1)\sigma^2T
    + \frac{100}{33}\gamma^2\wtilde{\Psi}\right.\notag\\
    &&  \;+\; \left.\frac{1}{297L^2}B
    + \frac{\gamma}{15L}\sigma^2T
        \right) 
    + \frac{122}{1065}\gamma B\notag\\
    &\le&  
    15100L^2\gamma^3\sum_{t=1}^{T}\E{\wtilde{\phi}_t^t(x_{r(t)})}
    + 361L^2\gamma^3(\tau_C-1)^2G^2T
    + 361L^2\gamma^3(\tau_C-1)\sigma^2T
    \notag\\
    && \;+\; \frac{9}{5}L\gamma^2\sigma^2T
    + \frac{49837}{246015}\gamma B
    + 80L^2\gamma^3\wtilde{\Psi}.
\end{eqnarray}

Plugging \eqref{eq:theorem4_2} and \eqref{eq:theorem4_3}, lemmas~\ref{lem:lemmaE1} and \ref{lem:lemmaE3} into \eqref{eq:theorem4_1} and adding summation, we have 
\begin{eqnarray*}
\E{f(\wtilde{y}_{T+1}) - f(\wtilde{y}_1)}
&\le& 
    - \frac{\gamma}{2} \sum_{t=1}^{T}\E{\|\nabla f(\wtilde{y}_t)\|^2}
    - \frac{\gamma}{3} \sum_{t=1}^{T}\E{\|\nabla f(x_t)\|^2}\\
   && \;+\; L^2\gamma \sum_{t=1}^{T}\E{\|\wtilde{y}_t - y_t\|^2}
        + L^2\gamma \sum_{t=1}^{T}\E{\|x_t - y_t\|^2} \notag\\
    && \;+\;
        \frac{1}{240L} \sum_{t=1}^{T}\xi_t\E{\|\nabla f(\wtilde{x}_t)\|^2}
        +  61L\gamma^2 \sum_{t=1}^{T}\xi_t\E{\|\wtilde{\Delta}_t^t\|^2}\\
    &\le& 
    - \frac{\gamma}{2} \sum_{t=1}^{T}\E{\|\nabla f(\wtilde{y}_t)\|^2}
    - \frac{\gamma}{3}B
    + \frac{\gamma}{3600} B
    + \frac{\gamma}{2} \sum_{t=1}^{T}\E{\|\nabla f(\wtilde{y}_{t})\|^2}\\
    && \;+\; L^2\gamma\left(
        \frac{44}{7}\gamma^2 \wtilde{\Phi}
        + \frac{6}{71}\gamma^2(\tau_C-1)^2G^2T 
        + \frac{1}{70}\wtilde{A}+ \frac{1}{15975L^2}B
        + \frac{2}{35}\gamma^2\tau\sigma^2T \right.\\
    && \;+\;
        \left. \frac{6}{71}\gamma^2(\tau_C-1)\sigma^2T\right)    
    + L^2\gamma^3(\tau_C-1)^2G^2T
    + L^2\gamma^3(\tau_C-1)\sigma^2T\\
    && \;+\; 
    15100L^2\gamma^3\sum_{t=1}^{T}\E{\wtilde{\phi}_t^t(x_{r(t)})}
    + 361L^2\gamma^3(\tau_C-1)^2G^2T
    \notag\\
    && \;+\; 361L^2\gamma^3(\tau_C-1)\sigma^2T
    + 2L\gamma^2\sigma^2T
    + 80L^2\gamma^3\wtilde{\Psi}
    + \frac{49837}{246015}\gamma B.
\end{eqnarray*}
Rearranging terms we have
\begin{eqnarray*}
\E{f(\wtilde{y}_{T+1}) - f(\wtilde{y}_1)}
    &\le& 
    -\frac{\gamma}{3}B 
    + \frac{\gamma}{3600}B  
    + \frac{\gamma}{15975} B
    + \frac{49837\gamma}{246015}B
    + 4L\gamma^2\sigma^2T\\
    && \;+\; 364L^2\gamma^3(\tau_C-1)^2G^2T
    + 364L^2\gamma^3(\tau_C-1)\sigma^2T\\
    && \;+\; 8L^2\gamma^3\wtilde{\Phi}
    + 81L^2\gamma^3\wtilde{\Psi} 
    + \frac{\gamma}{20790}B
    +  15100L^2\gamma^3\sum_{t=1}^{T}\E{\wtilde{\phi}_t^t(x_{r(t)})}\\
    &\le& -\frac{\gamma}{7}B
    + 4L\gamma^2\sigma^2T
    + 8L^2\gamma^3\wtilde{\Phi}
    + 81L^2\gamma^3\wtilde{\Psi}\\
    && \;+\; 15100L^2\gamma^3\sum_{t=1}^{T}\E{\wtilde{\phi}_t^t(x_{r(t)})}
    + 364L^2\gamma^3(\tau_C-1)^2G^2T\\
    && \;+\; 364L^2\gamma^3(\tau_C-1)\sigma^2T.
\end{eqnarray*}

Using the same argument as in the proof of Theorem~\ref{th:theorem3}, the above leads to
\begin{eqnarray*}
    \frac{1}{T}\sum_{t=1}^{T}\E{\|\nabla f(x_t)\|^2} &\le& 
    \frac{7(f(y_1) - f^*)}{\gamma T} 
    + 2600L^2\gamma^2(\tau_C-1)^2G^2
    + 2600L\gamma\sigma^2
    \\
    && \;+\; 106000L^2\gamma^2\left(\frac{1}{\lfloor T/\tau\rfloor}\sum_{k=0}^{\lfloor T/\tau\rfloor} \wtilde{\sigma}_{k,\tau}^2 + \frac{1}{T}\wtilde{\nu}^2\right).
\end{eqnarray*}
\end{proof}

\subsection{Convergence Guarantees in Special Cases}\label{sec:special_cases_real_theorem4}

\subsubsection{Random Asynchronous SGD}

\begin{algorithm*}[t]
\caption{Random Asynchronous SGD}
\label{alg:random_asynchronous}
\begin{algorithmic}[1]
\State \textbf{Input:} $x_0\in \R^{d}$, stepsize $\gamma > 0$, set of assigned jobs $\cA_0 = \empty$, set of received jobs $\cR_{0} = \empty$
\State \textbf{Initialization:} for all jobs $(i, 0) \in \cA_1$, server assigns worker $i$ to compute a stochastic gradient $g_i(x_0)$ 
    \For{$t = 0,1,2,\dots, T-1$}
		\State once worker $i_t$ finishes a job $(i_t,\pi_t) \in \cA_{t+1}$, it sends $g_{i_t}(x_{\pi_t})$ to the server
        \State server updates the current model $x_{t+1} = x_t - \gamma g_{i_t}(x_{\pi_t})$ and the set $\cR_{t+1} = \cR_{t} \cup \{(i_t, \pi_t)\}$
        \State server assigns worker $k_{t+1} \sim \text{Uni}[1, \dots, n]$ to compute a gradient $g_{k_{t+1}}(x_{t+1})$ \hfill $\triangleleft$ i.e., $\alpha_t \equiv t$
        \State server updates the set $\cA_{t+2} = \cA_{t+1} \cup \{(k_{t+1}, t+1)\}$
    \EndFor
\end{algorithmic}	
\end{algorithm*}

The algorithm \cite{koloskova2022sharper} is almost identical to pure asynchronous SGD. The difference in the assigning process; the server chooses a new worker uniformly at random from all the workers regardless it is busy or not. Such choice allows to equalize the contribution from all workers. In this case, we always assign a new job at the last available model, i.e. $\wtau_t \equiv 0.$

Now we derive the convergence guarantees for this algorithm. 

\begin{propositionsec}\label{prop:random_async_sgd}
	Let Assumptions ~\ref{asmp:smoothness}, \ref{asmp:bound_var}, \ref{asmp:grad_sim},  and \ref{asmp:bound_grad} hold. Let the stepsize satisfy $30L\tau_C\gamma \le 1,$ and $\tau = \lfloor \frac{1}{30L\gamma}\rfloor$. Then the iterates of Algorithm~\ref{alg:random_asynchronous} satisfy 
    \begin{equation}
			\E{\|\nabla f(\hat{x}_t)\|^2} \le  
			\cO\left(\frac{F_1}{\gamma T} 
            + L\gamma\sigma^2
            + L\gamma\zeta^2 
            + L^2\tau_C^2\gamma^2G^2 \right),
	\end{equation}
    where $\hat{x}_t$ is chosen uniformly at random from $\{x_1, \dots, x_T\}$ and $F_1 = f(y_1) - f^*$. Moreover, if we tune the stepsize, then the iterates of random asynchronous SGD satisfy 
    \begin{equation}
			\E{\|\nabla f(\hat{x}_t)\|^2} \le 
			\cO\left( \frac{LF_1\tau_C}{T} 
            + \left(\frac{LF_1\sigma^2}{T}\right)^{1/2}
            + \left(\frac{LF_1\zeta^2}{T}\right)^{1/2}
            + \left(\frac{F_1L\tau_CG}{T}\right)^{2/3}\right).
	\end{equation}
\end{propositionsec}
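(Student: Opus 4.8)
The plan is to apply Theorem~\ref{th:theorem4} directly, since Random Asynchronous SGD is precisely the assigning-process instance where the server hands out freshly sampled jobs evaluated at the current model: $k_{t+1}\sim\mathrm{Uni}[1,\dots,n]$ drawn independently, and $\alpha_t\equiv t$. First I would verify the hypotheses. Because $\alpha_t=t$, we have $\wtau_t\equiv 0$ and hence $\wtau_{\max}=0$, so the condition $30L\gamma\max\{\wtau_{\max},\tau_C\}\le 1$ reduces to $30L\gamma\tau_C\le1$, which is the assumed restriction, while $6L\gamma\le1$ follows automatically. Thus Theorem~\ref{th:theorem4} applies with $\tau=\lfloor\frac{1}{30L\gamma}\rfloor$, and it remains only to evaluate the two data-ordering quantities $\{\wtilde{\sigma}_{k,\tau}^2\}$ and $\wtilde{\nu}^2$ for this assignment rule.

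The delay term is immediate: since every assigned job uses the most recent model ($\alpha_t=t$), the inner sum $\sum_{j=\alpha_t}^{t-1}(\cdot)$ is empty for each $t$, so the delay variance vanishes, $\wtilde{\nu}^2=0$ (equivalently the $\wtilde{\Psi}$ contribution through Lemma~\ref{lem:lemmaE4} is zero). For the sequence correlation I would fix the block anchor $x_{k\tau}$ and observe that the assigned indices $k_t$ with $t\ge k\tau$ are sampled i.i.d.\ uniformly \emph{after} $x_{k\tau}$ has been determined, hence are independent of $x_{k\tau}$ and of one another. Uniform sampling gives the unbiasedness $\mathbb{E}_{k_t}[\nabla f_{k_t}(x)-\nabla f(x)]=\frac{1}{n}\sum_{i=1}^n\nabla f_i(x)-\nabla f(x)=0$, so conditioning on $x_{k\tau}$ kills every cross term and leaves a sum of at most $\tau$ variances, each bounded by $\zeta^2$ via Assumption~\ref{asmp:grad_sim}; therefore $\wtilde{\sigma}_{k,\tau}^2\le\tau\zeta^2$ and $\wtilde{\Phi}\le\tau\zeta^2$. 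Since $L\gamma\tau=\Theta(1)$, the ordering term satisfies $L^2\gamma^2\wtilde{\Phi}=\Theta(L\gamma\zeta^2)$, and substituting into \eqref{eq:theorem4} yields the implicit bound $\mathcal{O}\!\left(\frac{F_1}{\gamma T}+L\gamma\sigma^2+L\gamma\zeta^2+L^2\tau_C^2\gamma^2 G^2\right)$.

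It then remains to optimize over $\gamma$ under the constraint $\gamma\le\Theta(1/(L\tau_C))$. I would balance the four terms by choosing
\[
\gamma=\Theta\!\left(\min\left\{\frac{1}{L\tau_C},\ \left(\frac{F_1}{L\sigma^2 T}\right)^{1/2},\ \left(\frac{F_1}{L\zeta^2 T}\right)^{1/2},\ \left(\frac{F_1}{L^2\tau_C^2 G^2 T}\right)^{1/3}\right\}\right),
\]
and run the same case split as in Proposition~\ref{prop:pure_async_sgd}: in each regime the optimization term $\frac{F_1}{\gamma T}$ matches the active statistical term, producing $\frac{LF_1\tau_C}{T}$, $(\frac{LF_1\sigma^2}{T})^{1/2}$, $(\frac{LF_1\zeta^2}{T})^{1/2}$, and $(\frac{F_1 L\tau_C G}{T})^{2/3}$ respectively, whose sum is the tuned rate.

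I do not expect a serious obstacle, as the argument is a routine specialization of Theorem~\ref{th:theorem4}; the only delicate point is the independence justification for $\wtilde{\sigma}_{k,\tau}^2$. One must argue carefully that the freshly drawn assignment indices inside a correlation block are independent of the block anchor $x_{k\tau}$---this is exactly the ``proper choice of $\tau$'' subtlety flagged in the footnote concerning \cite{koloskova2023shuffle}---so that the conditional expectation splits into independent single-index variances. Everything else is the standard stepsize bookkeeping already carried out for pure asynchronous SGD.
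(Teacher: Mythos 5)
Your proposal is correct and follows essentially the same route as the paper: both specialize Theorem~\ref{th:theorem4} with $\wtau_t\equiv 0$, bound $\wtilde{\sigma}_{k,\tau}^2\le\tau\zeta^2$ by exploiting the independence of the uniformly sampled assignment indices to drop cross terms, set $\wtilde{\nu}^2=0$, and then run the same four-way stepsize tuning. No substantive differences.
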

\begin{proof}
	The proof is similar to that of pure asynchronous SGD. In this case we have
	\[
	\sum_{t=k\tau}^{k\tau +j} (\nabla f_{k_t}(x) =  \sum_{i\in \cS} (\nabla f_i(x) - \nabla f(x)),\]
	where $|\cS| \le \tau$. Therefore, we can continue in the following way using the fact that $k_t$ is always sampled independently
	\begin{eqnarray*}
		\E{\left\|\sum_{t=k\tau}^{k\tau+j} (\nabla f_{k_t}(x)  - \nabla f(x))\right\|^2} &=& \E{\left\|\sum_{i\in \cS } (\nabla f_i(x) - \nabla f(x))\right\|^2}\\
		&=& \sum_{i\in \cS} \E{\|\nabla f_i(x) -\nabla f(x)\|^2}\\
		&\overset{\text{Asm. }\ref{asmp:grad_sim}}{\le}& |\cS|\zeta^2 \le \tau\zeta^2.
	\end{eqnarray*}
	This gives $\sigma_{k,\tau}^2 \le \tau\zeta^2$ bound. Since $\alpha_t\equiv t$ in this case, then $\nu^2 =0.$ This means that the rate is given by
    \begin{eqnarray*}
			\E{\|\nabla f(\hat{x}_t)\|^2} \le 
			\cO\left( \frac{F_1}{\gamma T}
            + L\gamma\sigma^2
            + L\gamma\zeta^2 
            + L^2\tau_C^2\gamma^2G^2 \right).
		\end{eqnarray*}
    Now we need to tune the stepsize. Now we have three cases to consider. Indeed,
  \begin{itemize}
      \item if $\gamma = \Theta\left(\frac{1}{L\tau_C}\right)$, then 
      \begin{eqnarray*}
            \E{\|\nabla f(\hat{x}_t)\|^2} &\le& 
			\cO\left( \frac{F_1}{\frac{1}{L\tau_C}T} 
            + L(\sigma^2+\zeta^2)\left(\frac{F_1}{LT(\sigma^2+\zeta^2)}\right)^{1/2} \right.\\
            && \;+\; \left. L^2\tau_C^2G^2 \left(\frac{F_1}{L^2\tau_C^2TG^2}\right)^{2/3}\right)\\
           &\le&  \cO\left( \frac{LF_1\tau_C}{T} 
           + \left(\frac{LF_1(\sigma^2+\zeta^2)}{T}\right)^{1/2} + \left(\frac{F_1L\tau_CG}{T}\right)^{2/3}\right).
      \end{eqnarray*}
      \item if $\gamma = \Theta\left(\left(\frac{F_1}{LT\sigma^2}\right)^{1/2}\right)$, then 
      \begin{eqnarray*}
          \E{\|\nabla f(\hat{x}_t)\|^2} &\le& 
			\cO\left( \frac{F_1}{T}\left(\frac{LT\sigma^2}{F_1}\right)^{1/2} 
            + L\sigma^2\left(\frac{F_1}{LT\sigma^2}\right)^{1/2}\right.\\
            && \;+\; \left.  L\zeta^2\left(\frac{F_1}{LT\zeta^2}\right)^{1/2}
            + L^2\tau_C^2G^2 \left(\frac{F_1}{L^2\tau_C^2TG^2}\right)^{2/3}\right)\\
           &\le&  \cO\left(  \left(\frac{LF_1\sigma^2}{T}\right)^{1/2} 
           + \left(\frac{LF_1\zeta^2}{T}\right)^{1/2} 
           + \left(\frac{F_1L\tau_CG}{T}\right)^{2/3}\right).
      \end{eqnarray*}
      \item if $\gamma = \Theta\left(\left(\frac{F_1}{LT\zeta^2}\right)^{1/2}\right)$, then 
      \begin{eqnarray*}
          \E{\|\nabla f(\hat{x}_t)\|^2} &\le& 
			\cO\left( \frac{F_1}{T}\left(\frac{LT\zeta^2}{F_1}\right)^{1/2} 
            + L\sigma^2\left(\frac{F_1}{LT\sigma^2}\right)^{1/2}\right.\\
            && \;+\; \left. L\zeta^2\left(\frac{F_1}{LT\zeta^2}\right)^{1/2}
            + L^2\tau_C^2G^2 \left(\frac{F_1}{L^2\tau_C^2TG^2}\right)^{2/3}\right)\\
           &\le&  \cO\left(  \left(\frac{LF_1\sigma^2}{T}\right)^{1/2} 
           + \left(\frac{LF_1\zeta^2}{T}\right)^{1/2} 
           + \left(\frac{F_1L\tau_CG}{T}\right)^{2/3}\right).
      \end{eqnarray*}
      \item if $\gamma = \Theta\left(\left(\frac{F_1}{L^2\tau_C^2TG^2}\right)^{1/3}\right)$, then 
      \begin{eqnarray*}
            \E{\|\nabla f(\hat{x}_t)\|^2} &\le& 
			\cO\left( \frac{F_1}{T}\left(\frac{L^2\tau_C^2TG^2}{F_1}\right)^{1/3} 
            + L\sigma^2\left(\frac{F_1}{LT\sigma^2}\right)^{1/2}\right.\\
            && \;+\; L\zeta^2\left(\frac{F_1}{LT\zeta^2}\right)^{1/2}
            + \left.L^2\tau_C^2G^2 \left(\frac{F_1}{L^2\tau_C^2TG^2}\right)^{2/3}\right)\\
           &\le&  \cO\left(  \left(\frac{LF_1\sigma^2}{T}\right)^{1/2} 
           + \left(\frac{LF_1\zeta^2}{T}\right)^{1/2} 
           + \left(\frac{F_1L\tau_CG}{T}\right)^{2/3}\right).
      \end{eqnarray*}
  \end{itemize}
  To get the final rate after stepsize tunning we need to set the minimal stepsize over all cases.
\end{proof}

\subsubsection{Random asynchronous SGD with waiting}

\begin{algorithm*}[t]
\caption{Random Asynchronous SGD with waiting}
\label{alg:random_asynchronous_waiting}
\begin{algorithmic}[1]
\State \textbf{Input:} $x_0\in \R^{d}$, stepsize $\gamma > 0$, set of assigned jobs $\cA_0 = \empty$, set of received jobs $\cR_{0} = \empty$, batch size $b \ge 1$, gradient estimator $g = 0$ and number of received gradients $r = 0$
\State \textbf{Initialization:} for all jobs $(i, 0) \in \cA_1$, server assigns worker $i$ to compute a stochastic gradient $g_i(x_0)$ 
    \For{$t = 0,1,2,\dots, T-1$}
        \State server sets $g_{t, 0} = 0$ and $\cR_{t,0} = \cR_t$
        \For{$j = 1, \dots, b$}
            \State once worker $i_{t,j}$ finishes a job $(i_{t,j},\pi_{t,j}) \in \cA_{t+1}$, it sends $g_{i_{t,j}}(x_{\pi_{t,j}})$ to the server
            \State server updates $g_{t,j} = g_{t,j-1} + g_{i_{t,j}}(x_{\pi_{t,j}})$
            \State server updates the set $\cR_{t, j} = \cR_{t, j-1} \cup \{(i_{t,j}, \pi_{t,j})\}$
        \EndFor
        \State server updates the current model $x_{t+1} = x_t - \frac{\gamma}{b} g_{t, b}$ and set $\cR_{t+1} = \cR_{t,b}$
        \State server samples new batch of workers $\{k_{t+1, 1}, \dots, k_{t+1,b}\}$ of size $b$
        \State server assigns worker $k_{t+1, j}$ to compute a gradient $g_{k_{t+1,j}}(x_{t+1})$ for all $j\in[b]$ \hfill $\triangleleft$ i.e., $\alpha_{t,j} = t$
        \State server updates the set $\cA_{t+2} = \cA_{t+1} \cup \{(k_{t+1,1}, t+1)\} \cup \dots \cup \{(k_{t+1,b}, t+1)\}$
    \EndFor
\end{algorithmic}	
\end{algorithm*}

The algorithm we consider in this paragraph is a special case of FedBuff \cite{nguyen2022fedbuff} when we perform only one local step (because this is out of the scope of our work). The difference with the previous algorithm is that the server waits for the first $b$ workers before performing one step, i.e. one step has a form
\begin{equation}\label{eq:step_rand_async_sgd_waiting}
    z_{q+1} = z_q - \wtilde{\gamma}\sum_{k\in B_q}\nabla f_i(z_{\pi_{q,i}}),
\end{equation}
where $\wtilde{\gamma} = \frac{\gamma}{b}$. Next the server uniformly at random chooses new $b$ workers $B_q = \{k_{q,0}, \dots, k_{q,b-1}\}$, and send them new jobs. The update of $\{y_t\}_{t=0}^{T-1}$
\begin{eqnarray*}
    y_{qb+b} &=& y_{qb+b-1} - \wtilde{\gamma}\nabla f_{k_{q, b-1}}(x_{qb} )\\
    &\vdots&\\
    &=& y_{qb} - \wtilde{\gamma}\sum_{k \in B_q} \nabla f_{k}(x_{qb}),
\end{eqnarray*}
so that now it satisfies \eqref{eq:virtual_iterates_update}. As we can see, $\wtilde\tau_t$ changes from $0$ to $b-1$ within one batch, i.e. $\wtau_{\max} \le b.$ That is why we choose stepsize such that $30L\wtilde{\gamma} \max\{b,\tau_C\}\le 1$ and $6L\wtilde{\gamma} \le 1$, then all stepsize constraints are satisfied. Now we set $\tau = b\lfloor\frac{1}{30L\gamma}\rfloor$, so that $\tau$ is a multiple of $b$ which is needed to take correctly conditional expectations. This gives the bound $30L\frac{\gamma}{b}\tau\le 1$ we need in the proofs as well. 

\begin{propositionsec}\label{prop:random_async_sgd_waiting}
    Let Assumptions~\ref{asmp:smoothness}, \ref{asmp:bound_var}, \ref{asmp:grad_sim}, and \ref{asmp:bound_grad} hold. Let the stepsize satisfy $30L\gamma \le 1,$ and $\tau = b\lfloor\frac{1}{30L\gamma}\rfloor.$ Then the iterates of Algoritm~\ref{alg:random_asynchronous_waiting} with waiting satisfy
    \begin{equation}
            \E{\|\nabla f(\hat{x}_t)\|^2} \le 
			\cO\left( \frac{F_1}{\gamma T} 
            + L\gamma\frac{\zeta^2+\sigma^2}{b} 
            + L^2\gamma^2G^2\frac{\tau_C^2}{b^2} \right),
    \end{equation}
    where $\hat{x}_t$ is chosen uniformly at random from $\{x_1, \dots, x_b, \dots, x_{2b}, \dots, x_{Tb}\}$ and $F_1 = f(y_1) - f^*$. Moreover, if we tune the stepsize, then the iterates of random asynchronous SGD with waiting satisfy
    \begin{equation}
			\E{\|\nabla f(\hat{x}_t)\|^2} \le 
			\cO\left( \frac{LF_1\tau_C}{T} 
            + \left(\frac{LF_1\sigma^2}{bT}\right)^{1/2} 
            + \left(\frac{LF_1\zeta^2}{bT}\right)^{1/2} 
            + \left(\frac{F_1L\tau_CG}{Tb}\right)^{2/3}\right).
	\end{equation}
\end{propositionsec}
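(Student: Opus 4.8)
The plan is to recognize this method as a hybrid of the mini-batch (waiting) and random-assignment special cases already treated, and to instantiate Theorem~\ref{th:theorem4} for the virtual assigning sequence $\{y_t\}$. Exactly as in the mini-batch and pure-asynchronous-with-waiting proofs, I first rescale the stepsize by setting $\wtilde\gamma = \gamma/b$, and I choose the correlation period $\tau = b\lfloor \tfrac{1}{30L\gamma}\rfloor$ so that $\tau$ is an exact multiple of $b$ and $30L\wtilde\gamma\tau \le 1$. Together with $\wtau_{\max} \le b$ and $6L\wtilde\gamma \le 1$, this verifies all hypotheses of Theorem~\ref{th:theorem4}. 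The divisibility of $\tau$ by $b$ is precisely what lets me take the conditional expectations block-by-block (one block per full batch), as in Proposition~\ref{prop:minibatch_sgd}.

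The two quantities to estimate are $\wtilde\sigma_{k,\tau}^2$ and $\wtilde\nu^2$. For the sequence correlation, within one period of length $\tau$ there are $m = \tau/b$ batches, each formed by sampling $b$ workers without replacement and, crucially, \emph{independently} across batches. Since $\EE[\nabla f_{k}(x) - \nabla f(x)] = 0$ for a uniformly chosen $k$, the cross-batch terms vanish and the variance is additive over batches; each batch contributes at most $b\zeta^2$ by the without-replacement sampling bound and Assumption~\ref{asmp:grad_sim}, so $\wtilde\sigma_{k,\tau}^2 \le m\cdot b\zeta^2 = \tau\zeta^2$, with the maximum over $j$ attained at the full period. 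For the delay variance, the key observation—inherited from the waiting structure—is that inside batch $q$ every assigned gradient is evaluated at the frozen model $x_{qb}$ (i.e.\ $\alpha_j = qb$ for $j \in [qb,(q+1)b)$, with $\wtau_{\max}\le b$), in contrast to the no-waiting case where $\alpha_t \equiv t$ forced $\wtilde\nu^2=0$. Splitting $\wtilde\nu^2$ into the $T$ per-batch pieces and bounding each inner sum of $t-qb$ independent mean-zero samples by $(t-qb)\zeta^2$ gives $\wtilde\nu_q^2 \le \sum_{l=0}^{b-1} l\,\zeta^2 \le \tfrac{b^2}{2}\zeta^2$, hence $\wtilde\nu^2 \le Tb\zeta^2$.

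Substituting $\wtilde\sigma_{k,\tau}^2 \le \tau\zeta^2$ and $\tfrac{1}{Tb}\wtilde\nu^2 \le \zeta^2$ into Theorem~\ref{th:theorem4}, and using $L\wtilde\gamma\tau = \Theta(1)$ to collapse $L^2\wtilde\gamma^2\tau\zeta^2$ into $\Theta(L\wtilde\gamma\zeta^2) = \Theta(L\gamma\zeta^2/b)$, the $\wtilde\Phi$ contribution becomes $\cO(L\gamma(\sigma^2+\zeta^2)/b)$, the stochastic term becomes $L\wtilde\gamma\sigma^2 = L\gamma\sigma^2/b$, and the extra $L^2\wtilde\gamma^2(\tau_C-1)^2G^2 = L^2\gamma^2\tau_C^2 G^2/b^2$ term survives from the real–virtual gap (Lemma~\ref{lem:lemmaE1}). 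Relabelling the $Tb$ inner iterations into $T$ waiting steps (as in the mini-batch proof) yields the first displayed bound $\cO\!\big(\tfrac{F_1}{\gamma T} + \tfrac{L\gamma(\sigma^2+\zeta^2)}{b} + \tfrac{L^2\gamma^2\tau_C^2 G^2}{b^2}\big)$.

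Finally I tune $\gamma$ by balancing the four summands against each other and against the constraint $\gamma = \Theta(\min\{b/(L\tau_C),\dots\})$, mirroring the four-case analysis of Proposition~\ref{prop:random_async_sgd}: the constraint term gives (up to a conservative factor) $\tfrac{LF_1\tau_C}{T}$, balancing $\tfrac{F_1}{\gamma T}$ against $\tfrac{L\gamma\sigma^2}{b}$ and $\tfrac{L\gamma\zeta^2}{b}$ gives $(\tfrac{LF_1\sigma^2}{bT})^{1/2}$ and $(\tfrac{LF_1\zeta^2}{bT})^{1/2}$, and balancing against $\tfrac{L^2\gamma^2\tau_C^2 G^2}{b^2}$ gives $(\tfrac{F_1 L\tau_C G}{Tb})^{2/3}$; taking $\gamma$ as the minimum across cases produces the claimed tuned rate. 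I expect the main obstacle to be the bookkeeping in $\wtilde\nu^2$: one must take the conditional expectation batch-by-batch—which is exactly why $\tau$ has to be a multiple of $b$—and correctly account for the fact that, unlike the pure random case, the within-batch assignment delays are nonzero, so this term does not vanish but instead contributes the $\zeta^2/b$ heterogeneity floor that distinguishes the waiting variant from its non-waiting counterpart.
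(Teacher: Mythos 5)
Your proposal follows essentially the same route as the paper's proof: rescale to $\wtilde\gamma=\gamma/b$, pick $\tau=b\lfloor\frac{1}{30L\gamma}\rfloor$ so conditional expectations can be taken batch-by-batch, bound $\wtilde\sigma_{k,\tau}^2\le\tau\zeta^2$ via independence across batches and the without-replacement variance bound, bound $\wtilde\nu^2\le Tb\zeta^2$ using that $\alpha_j$ is frozen within each batch, and then apply Theorem~\ref{th:theorem4} and tune as in Proposition~\ref{prop:random_async_sgd}. The argument and all key estimates match the paper's; no gaps.
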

\begin{proof}
    We again start with stepsize restriction. The effective stepsize $\wtilde{\gamma} = \frac{\gamma}{b}$ should satisfy $6L\wtilde{\gamma} \le 1$ and $30L\wtilde{\gamma} \max\{b,\tau_C\} \le 1$. This is implied if $\gamma$  satisfies $30L\gamma\max\{b,\tau_C\} \le b$. Next, we choose $\tau = b\lfloor\frac{1}{30L\gamma}\rfloor$. First, this choice allows to apply conditional expectation correctly. Second, it satisfies the restriction $30L\wtilde{\gamma}\tau = 30L\frac{\gamma}{b}b\lfloor\frac{1}{30L\gamma}\rfloor \le 1.$

    Now we need to compute $\wtilde{\sigma}^2_{k,\tau}$ and $\wtilde{\nu}^2$ quantities. We start with the first one. Each chunk of size $\tau$ in this case consists of $m\ge 1$ full batches of size $b.$ Every batch is independent of others, therefore we need to compute sequence correlation within one batch and then sum altogether. The sequence correlation within one batch is the same as in the case of mini-batch SGD since a batch is chosen uniformly at random, thus, $\wtilde{\sigma}^2_{k,\tau} \le \tau\zeta^2.$ $\wtilde{\nu}^2$ term is similarly bounded by $Tb\zeta^2$ since for all delayed gradients within one batch the argument $\alpha_j$ is the same, and therefore we are able to take conditional expectation correctly. Thus, the rate we derive is
    \begin{eqnarray*}
        \frac{1}{Tb}\sum_{q=0}^{T-1}\sum_{l=0}^{b-1}\E{\|\nabla f(x_{qb+l})\|^2} &\le& 
			\cO\left( \frac{F_1}{\wtilde{\gamma} Tb}
            + L\wtilde{\gamma}\sigma^2
            + L^2\wtilde{\gamma}^2\tau\zeta^2 
            + L^2\wtilde{\gamma}^2b\zeta^2
            + L^2\tau_C^2\wtilde{\gamma}^2G^2 \right)\\
            &\le& \cO\left( \frac{F_1}{\gamma T} 
            + L\gamma\frac{\sigma^2+\zeta^2}{b} 
            + L^2\gamma^2G^2\frac{\tau_C^2}{b^2} \right).
    \end{eqnarray*} 
    It is left  to tune the stepsize. We have three cases similar to the previous case, therefore we skip computations for that part. 
    
\end{proof}

\noindent{\bf Remark.} We observe that waiting for $b$ workers improves the second and third terms. However, we pay for that by waiting time. In practice, there is a trade-off between the number of workers we need to wait and the convergence speed.

\subsubsection{Shuffled Asynchronous SGD {\bf [NEW]}}\label{sec:shuffled_acynchronous_SGD}

\begin{algorithm*}[t]
\caption{Shuffled Asynchronous SGD}
\label{alg:shuffled_asynchronous}
\begin{algorithmic}[1]
\State \textbf{Input:} $x_0\in \R^{d}$, stepsize $\gamma > 0$, set of assigned jobs $\cA_0 = \empty$, set of received jobs $\cR_{0} = \empty$, random perturbation of workers $\chi$, worker counter $r = 0$
\State \textbf{Initialization:} for all jobs $(i, 0) \in \cA_1$, server assigns worker $i$ to compute a stochastic gradient $g_i(x_0)$ 
    \For{$t = 0,1,2,\dots, T-1$}
        \State once worker $i_{t}$ finishes a job $(i_{t},\pi_{t}) \in \cA_{t+1}$, it sends $g_{i_{t}}(x_{\pi_{t}})$ to the server
        \State server updates the current model $x_{t+1} = x_t - \gamma g_{i_{t}}(x_{\pi_{t}})$ and the set $\cR_{t+1} = \cR_{t} \cup \{(i_{t}, \pi_{t})\}$
        \State server assigns worker $\chi(r)$ to compute a gradient $g_{\chi(r)}(x_{t+1})$ and updates $r \gets r+1$ \hfill $\triangleleft$ i.e., $\alpha_t \equiv t$
        \State server updates the set $\cA_{t+2} = \cA_{t+1} \cup \{(\chi(r), t+1)\}$
        \If{$r = n$}
        \State server samples new perturbation of workers $\chi$ and set $r = 0$
        \EndIf
    \EndFor
\end{algorithmic}	
\end{algorithm*}
This method is inspired by the superiority of SGD with random reshuffling and shuffle once. In random asynchronous SGD workers might have different amounts of work depending on the random seed. However, we want to ensure that all workers have the same number of jobs in order to have a balance between all of them, but we still want to do it in random order. Therefore, we sample a permutation $\chi_q$ of $[n]$ at epoch $q$ and then assign new jobs according to $\chi_q.$ We can sample permutation before every epoch, or just once in the beginning. Here we want to utilize all available resources, i.e. $\tau_C = n.$ 

Now we present the method formally. Before epoch $q$ we sample a permutation $\chi_q$ of $[n].$ Then one epoch of shuffled asynchronous SGD has the following form
\begin{eqnarray}
        x_{qn+l+1} &=& x_{qn+l} - \gamma\nabla f_{i_{qn+l}}(x_{\pi_{qn+l}}) \quad \forall l \in [0, n-1],
\end{eqnarray}
where $\pi_{qn+l}$ is the iteration counter of the model where the corresponding gradient has started to be computed. The sequence of virtual iterates $\{y_t\}_{t=0}^{T}$ is computed as follows
\begin{eqnarray}
    y_{qn+l+1} = y_{qn+l} - \gamma\nabla f_{\chi_q(l)}(x_{qn}).
\end{eqnarray}
Again, $\wtau_t \equiv 0,$ therefore, the stepsize restrictions are $30L\gamma n \le 1$ and $6L\gamma \le 1.$ We choose $\tau = n\lfloor\frac{1}{30Ln\gamma }\rfloor,$ so that $\tau$ is a multiple of $n$ in order to take conditional expectations correctly. Note that we satisfy the condition $30L\gamma\tau \le 30L\gamma n\frac{1}{30Ln\gamma} = 1$ as well. We are ready to apply Theorem~\ref{th:theorem4}.

\begin{propositionsec}\label{prop:shuffled_async_sgd}
    Let Assumptions~\ref{asmp:smoothness}, \ref{asmp:bound_var}, \ref{asmp:grad_sim}, and \ref{asmp:bound_grad} hold. Let the stepsize satisfy $30L\gamma n \le 1,$ and $\tau = n\lfloor\frac{1}{30Ln\gamma}\rfloor.$ Then the iterates of shuffled asynchronous SGD satisfy
    \begin{equation}
            \E{\|\nabla f(\hat{x}_t)\|^2} \le 
			\cO\left(\frac{F_1}{\gamma T}
            + L\gamma\sigma^2
            + L^2\gamma^2n\zeta^2 
            + L^2\gamma^2G^2\tau_C^2\right),
    \end{equation}
    where $\hat{x}_t$ is chosen uniformly at random from $\{x_1, \dots, x_n, \dots, x_{2n}, \dots, x_{Tn}\}$ and $F_1 = f(y_1) - f^*$. Moreover, if the stepsize is set as $\gamma = \min\left\{\frac{1}{30Ln}, \left(\frac{F_1}{L^2n\zeta^2T}\right)^{1/3}, \left(\frac{F_1}{L^2n^2G^2\zeta^2T}\right)^{1/3}\right\},$ then
    \begin{equation}
        \E{\|\nabla f(\hat{x}_t)\|^2} \le 
			\cO\left(\frac{LnF_1}{T} 
            + \left(\frac{LF_1\sigma^2}{T}\right)^{1/2}
            + \left(\frac{LF_1\sqrt{n}\zeta}{T}\right)^{2/3}+\left(\frac{LGnF_1}{T}\right)^{2/3}\right).
    \end{equation}
\end{propositionsec}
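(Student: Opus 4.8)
The plan is to instantiate Theorem~\ref{th:theorem4} for Algorithm~\ref{alg:shuffled_asynchronous}, which reduces to evaluating the two data-ordering quantities $\wtilde{\sigma}_{k,\tau}^2$ and $\wtilde{\nu}^2$ entering $\wtilde{\Phi}$, and then optimizing the stepsize. The structural facts are already fixed by the description preceding the proposition: all $n$ workers are kept busy, so $\tau_C=n$; new jobs are always for the freshest model, so $\alpha_t\equiv t$ and hence $\wtau_t\equiv0$; and $\tau=n\lfloor\tfrac{1}{30Ln\gamma}\rfloor$ is chosen as a multiple of $n$, so that each correlation block consists of $m=\tau/n$ full shuffling epochs. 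This alignment is exactly what lets me take conditional expectations correctly (conditioning on the history up to $x_{k\tau}$, which is measurable before the block, and then using the freshly drawn permutation inside the block); it is the point flagged in the footnote about \cite{koloskova2023shuffle}.

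First I would dispatch $\wtilde{\nu}^2$. Since $\alpha_t=t$, the inner sum $\sum_{j=\alpha_t}^{t-1}$ in the assigning delay variance is empty, whence $\wtilde{\nu}^2=0$: the assigning process introduces no delay. Next I would bound $\wtilde{\sigma}_{k,\tau}^2$ by the same sampling-without-replacement argument used for SGD with random reshuffling (Proposition~\ref{prop:random_reshuffle}, following \cite{koloskova2022sharper}). The assigned indices $\{k_t\}$ inside one block form $m$ independent (or once-sampled) random permutations of $[n]$. Writing $d_i=\nabla f_i(x_{k\tau})-\nabla f(x_{k\tau})$, the identity $\sum_{i=1}^n d_i=0$ annihilates every complete epoch inside the block, so the partial sum defining $\wtilde{\sigma}_{k,\tau}^2$ collapses to a partial sum over a single permutation; the without-replacement variance estimate together with $\|d_i\|^2\le\zeta^2$ (Assumption~\ref{asmp:grad_sim}) then yields $\wtilde{\sigma}_{k,\tau}^2\le\min\{\tau,n\}\zeta^2=n\zeta^2$ because $\tau\ge n$. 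Averaging over $k$ gives $\wtilde{\Phi}\le\cO(n\zeta^2)$.

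Substituting $\wtilde{\nu}^2=0$, $\wtilde{\Phi}\le\cO(n\zeta^2)$, and $\tau_C=n$ into \eqref{eq:theorem4}, and using $(\tau_C-1)^2\le\tau_C^2=n^2$, produces the first displayed bound
\[
\E{\|\nabla f(\hat{x}_t)\|^2}\le\cO\!\left(\frac{F_1}{\gamma T}+L\gamma\sigma^2+L^2\gamma^2 n\zeta^2+L^2\gamma^2 n^2 G^2\right).
\]
The remaining step is to tune $\gamma$ subject to the admissibility constraint $\gamma\le\tfrac{1}{30Ln}$ by balancing the decreasing term $F_1/(\gamma T)$ against each increasing term in turn: against $L\gamma\sigma^2$ this gives $(LF_1\sigma^2/T)^{1/2}$; against $L^2\gamma^2 n\zeta^2$ it gives $(LF_1\sqrt{n}\,\zeta/T)^{2/3}$; against $L^2\gamma^2 n^2 G^2$ it gives $(LGnF_1/T)^{2/3}$; and saturating the constraint contributes $LnF_1/T$. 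Taking $\gamma$ to be the minimum of the corresponding candidate stepsizes and summing the resulting bounds yields the stated rate (the passage to the epoch-indexed iterate $\hat{x}_t$ is the same restarting-virtual-iterate relabeling used for mini-batch SGD and random reshuffling, so I treat it as bookkeeping).

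The main obstacle is the computation of $\wtilde{\sigma}_{k,\tau}^2$: one must make the conditioning rigorous (which is precisely why $\tau$ must be a multiple of $n$) and invoke the correct without-replacement variance bound, so that the partial-epoch sum is controlled by $n\zeta^2$ rather than the vacuous $\tau^2\zeta^2$ that a crude Cauchy--Schwarz (Lemma~\ref{lem:lemmaA1}) estimate would deliver; this $\tau^2$ versus $n$ gap is exactly what separates shuffled from pure asynchronous SGD. Everything else — the vanishing delay term and the stepsize balancing — is routine once this bound is secured.
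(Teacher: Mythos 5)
Your proposal is correct and follows essentially the same route as the paper: instantiate Theorem~\ref{th:theorem4} with $\tau_C=n$ and $\alpha_t\equiv t$ (so $\wtilde{\nu}^2=0$), bound $\wtilde{\sigma}_{k,\tau}^2\le n\zeta^2$ via the without-replacement argument from the random-reshuffling case with $\tau$ a multiple of $n$ to condition correctly, and then balance the stepsize against each term. The paper's own proof is terser (it simply refers to the reshuffling computation and skips the tuning), but the content is identical.
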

\begin{proof}
    We have already shown that the choice of stepsize $\gamma$ and period $\tau$ satisfy the conditions of Theorem~\ref{th:theorem4}. Now we need to bound $\wtilde{\sigma}^2_{k,\tau}$ and $\wtilde{\nu}^2$. This is done analogously to SGD with random reshuffling. We have that $\wtilde{\sigma}^2_{k,\tau} \le n\zeta^2,$ and $\wtilde{\nu}^2 = 0$ since we do not have delays during assigning. Thus the convergence guarantees for this method are 
    \begin{eqnarray*}
        \E{\|\nabla f(\hat{x}_t)\|^2} &\le& \cO\left(\frac{F_1}{\gamma T} 
        + L\gamma\sigma^2
        + L^2\gamma^2n\zeta^2 + L^2\gamma^2G^2n^2\right),
    \end{eqnarray*}
    since $\tau_C = n$ in this case. We observe that both third term depends on $\gamma^2$ while the convergence of random asynchronous SGD the term with $\zeta^2$ depends on $\gamma$ only.

    Similarly to the previous two cases, we can tune the stepsize. We skip this part as it is almost exactly the same.
        
\end{proof}

\begin{remark}
    Let us set $\tau_C = n$ in random asynchronous SGD \cite{koloskova2022sharper} in order to compare it with shuffled asynchronous SGD proposed in this work. The difference in the rates comes from $\left(\frac{LF_1\zeta^2}{T}\right)^{1/2}$ for random asynchronous SGD and $\left(\frac{LF_1\sqrt{n}\zeta}{T}\right)^{2/3}$ for shuffled asynchronous SGD. Both terms become dominate in highly heterogeneous regime, i.e. when $\zeta^2$ is large. If we want to achieve $\E{\|\nabla f(\hat{x}_t)\|^2} \le \varepsilon$, then random asynchronous SGD requires $\cO\left(\frac{LF_1\zeta^2}{\varepsilon^2}\right)$ iterations while it is $\cO\left(\frac{LF_1\sqrt{n}\zeta}{\varepsilon^{3/2}}\right)$ for the shuffled asynchronous SGD. This means that shuffled asynchronous SGD needs less iterations if $\zeta \ge \sqrt{n}\varepsilon^{1/2}$, i.e. if we consider strongly heterogeneous regime which is typically the case in Federated Learning.
\end{remark}

\subsubsection{Pure Asynchronous SGD}
We consider exactly the same algorithm that was covered in Proposition~\ref{prop:pure_async_sgd} but with an additional assumption of bounded gradients. In this case $\alpha_t \equiv t,$ i.e. $\wtilde{\nu}^2 = 0.$ Similarly, we bound $\wtilde{\sigma}^2_{k,\tau} \le \tau^2\zeta^2.$ Applying Theorem~\ref{th:theorem4} we derive the following convergence guarantees.

\begin{propositionsec}\label{prop:pure_async_sgd_bounded}
    Let Assumption~\ref{asmp:smoothness}, \ref{asmp:bound_var}, \ref{asmp:grad_sim}, and \ref{asmp:bound_grad} hold. Let the stepsize $\gamma$ satisfy conditions $30L\gamma\tau_C \le 1$ and $6L\gamma \le 1.$ Let $\tau = \lfloor\frac{1}{30L\gamma}\rfloor.$ Then the iterates of Algorithm~\ref{alg:pure_asynchronous} satisfy
    \begin{equation}
        \E{\|\nabla f(\hat{x}_t)\|^2} \le \cO\left(\frac{F_1}{\gamma T} + L^2\gamma^2\tau_C^2G^2 +\zeta^2\right),
    \end{equation}
    where $\hat{x}_t$ is chosen uniformly at random from $\{x_1, \dots, x_T\}$. Moreover, if we tune the stepsize, then the convergence rate is 
    \begin{equation}
        \E{\|\nabla f(\hat{x}_t)\|^2} \le \cO\left(\frac{L\tau_CF_1}{T} + \left(\frac{L\tau_CGF_1}{T}\right)^{2/3}+\zeta^2\right).
    \end{equation}
\end{propositionsec}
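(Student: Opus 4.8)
The plan is to invoke Theorem~\ref{th:theorem4}, since this proposition differs from Proposition~\ref{prop:pure_async_sgd} only in that we now analyze the \emph{assigning} sequence $\{k_t\}$ rather than the receiving sequence and additionally assume bounded gradients (Assumption~\ref{asmp:bound_grad}). First I would record the two structural facts of pure asynchronous SGD that drive everything: a freshly completed worker is reassigned immediately, so $k_{t+1}\equiv i_t$, and the new job is always issued for the latest model, so $\alpha_t\equiv t$ and hence $\wtau_t\equiv 0$, giving $\wtau_{\max}=0$. Consequently the stepsize restriction $30L\gamma\max\{\wtau_{\max},\tau_C\}\le 1$ of Theorem~\ref{th:theorem4} collapses to $30L\gamma\tau_C\le 1$, and choosing $\tau=\lfloor\tfrac{1}{30L\gamma}\rfloor$ simultaneously satisfies $30L\gamma\tau\le 1$ and $6L\gamma\le 1$; these are exactly the hypotheses stated in the proposition.

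Next I would compute the two quantities $\wtilde{\nu}^2$ and $\{\wtilde{\sigma}_{k,\tau}^2\}$ entering $\wtilde{\Phi}$. Because $\alpha_t\equiv t$, the inner sum $\sum_{j=\alpha_t}^{t-1}(\cdot)$ defining $\wtilde{\nu}^2$ is empty, so $\wtilde{\nu}^2=0$: the assigning process introduces no delay. For $\wtilde{\sigma}_{k,\tau}^2$ I would repeat the heterogeneity bound from the proof of Proposition~\ref{prop:pure_async_sgd}: over any window of at most $\tau$ consecutive assigned indices the relevant index set $\cS$ has $|\cS|\le\tau$, so by Lemma~\ref{lem:lemmaA1} and Assumption~\ref{asmp:grad_sim},
\[
\Big\|\sum_{t}(\nabla f_{k_t}(x)-\nabla f(x))\Big\|^2 \le |\cS|\sum_{i\in\cS}\|\nabla f_i(x)-\nabla f(x)\|^2 \le |\cS|^2\zeta^2 \le \tau^2\zeta^2,
\]
which yields $\wtilde{\sigma}_{k,\tau}^2\le\tau^2\zeta^2$. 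A pleasant feature here is that this is a deterministic worst-case bound, so — unlike the random-reshuffling and mini-batch cases — no delicate conditional-expectation bookkeeping over full cycles is required.

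Then I would substitute these into the conclusion of Theorem~\ref{th:theorem4}. With $\wtilde{\nu}^2=0$ and $\wtilde{\sigma}_{k,\tau}^2\le\tau^2\zeta^2$ we get $\wtilde{\Phi}\le\tau^2\zeta^2$, so the rate reads $\cO\big(\tfrac{F_1}{\gamma T}+L\gamma\sigma^2+L^2\gamma^2\tau^2\zeta^2+L^2\gamma^2(\tau_C-1)^2G^2\big)$. Using $L\gamma\tau\le\tfrac{1}{30}$ turns $L^2\gamma^2\tau^2\zeta^2$ into an $\cO(\zeta^2)$ term, and $(\tau_C-1)^2\le\tau_C^2$ gives the claimed bound $\cO\big(\tfrac{F_1}{\gamma T}+L^2\gamma^2\tau_C^2 G^2+\zeta^2\big)$; the $L\gamma\sigma^2$ contribution is the one that tunes to the $\big(\tfrac{LF_1\sigma^2}{T}\big)^{1/2}$ entry listed in Table~\ref{tab:table2}. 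The crucial point, relative to Proposition~\ref{prop:pure_async_sgd}, is that routing through the assigning process has \emph{eliminated} any $\tau_{\max}$ dependence: the $\sqrt{\tau_{\max}\tau_C}$ factor is replaced by the single $\tau_C$ inside the $G^2$ term.

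Finally I would tune $\gamma$ as the minimum of $\tfrac{1}{L\tau_C}$ and the balancing choices $\big(\tfrac{F_1}{L\sigma^2 T}\big)^{1/2}$ and $\big(\tfrac{F_1}{L^2\tau_C^2 G^2 T}\big)^{1/3}$, checking each case exactly as in Proposition~\ref{prop:random_async_sgd}: the binding constraint $\gamma=\Theta(\tfrac{1}{L\tau_C})$ produces $\tfrac{L\tau_C F_1}{T}$, while balancing against the $G^2$ term produces $\big(\tfrac{L\tau_C G F_1}{T}\big)^{2/3}$, leaving the non-vanishing $\zeta^2$ untouched. I do not expect a genuine obstacle, since the whole argument is a reduction to Theorem~\ref{th:theorem4}. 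The only place demanding care is verifying the assigned-index quantities correctly — in particular confirming $\wtau_{\max}=0$ so the $\tau_{\max}$-free stepsize regime is admissible, and recognizing that the worst-case $\tau^2\zeta^2$ bound (rather than the sharper $\tau\zeta^2$ available only under independent sampling, as in random asynchronous SGD) is the right one for the deterministic pure-asynchronous order, which is precisely why the $\zeta^2$ term here does not vanish with $T$.
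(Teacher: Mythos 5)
Your proposal is correct and follows essentially the same route as the paper: reduce to Theorem~\ref{th:theorem4}, observe that $\alpha_t\equiv t$ gives $\wtau_t\equiv 0$ and hence $\wtilde{\nu}^2=0$, bound $\wtilde{\sigma}_{k,\tau}^2\le\tau^2\zeta^2$ deterministically via Assumption~\ref{asmp:grad_sim}, absorb $L^2\gamma^2\tau^2\zeta^2$ into $\cO(\zeta^2)$ using $L\gamma\tau\le\frac{1}{30}$, and tune $\gamma$ over the two cases $\Theta(\frac{1}{L\tau_C})$ and $\Theta((\frac{F_1}{L^2\tau_C^2G^2T})^{1/3})$. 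The only cosmetic difference is your explicit tracking of the $L\gamma\sigma^2$ term (which the proposition's displayed bound omits but Table~\ref{tab:table2} retains); this does not affect correctness.
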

\begin{proof}
    Since $\wtau_t \equiv 0,$ then we have to satisfy two stepsize conditions that are indicated in the proposition statement. In fact, we only need to satisfy $30L\gamma\tau_C \le 1$ as then $6L\gamma \le 1$ will be automatically satisfied. Since we have $\wtilde{\sigma}^2_{k,\tau} \le \tau^2\zeta^2$ and $\wtilde{\nu}^2 =0,$ then applying Theorem~\ref{th:theorem4} we get
    \begin{eqnarray}
       \E{\|\nabla f(\hat{x}_t)\|^2}&\le& 
       \cO\left(\frac{F_1}{\gamma T} + L^2\gamma^2\tau_C^2G^2 + L^2\gamma^2\tau^2\zeta^2\right)\\
       &\le& \cO\left(\frac{F_1}{\gamma T} + L^2\gamma^2\tau_C^2G^2 +\zeta^2\right).
    \end{eqnarray}
    Now we switch to the tunning of the stepsize. We have two cases.
    \begin{itemize}
        \item if $\gamma = \Theta\left(\frac{1}{L\tau_C}\right),$ then 
        \begin{eqnarray*}
            \E{\|\nabla f(\hat{x}_t)\|^2}
            &\le& \cO\left(\frac{F_1}{\frac{1}{30L\tau_C} T} + L^2\tau_C^2G^2\left(\frac{F_1}{L^2\tau_C^2G^2T}\right)^{2/3}+\zeta^2\right)\\
            &\le& \cO\left(\frac{L\tau_CF_1}{T} + \left(\frac{L\tau_CGF_1}{T}\right)^{2/3}+\zeta^2\right).
        \end{eqnarray*}
        \item if $\gamma = \Theta\left(\left(\frac{F_1}{L^2\tau_C^2G^2T}\right)^{1/3}\right)$, then 
        \begin{eqnarray*}
            \E{\|\nabla f(\hat{x}_t)\|^2}
            &\le& \cO\left(\frac{F_1}{T} \left(\frac{L^2\tau_C^2G^2T}{F_1}\right)^{1/3}+ L^2\tau_C^2G^2\left(\frac{F_1}{L^2\tau_C^2G^2T}\right)^{2/3}+\zeta^2\right)\\
            &\le& \cO\left(\left(\frac{L\tau_CGF_1}{T}\right)^{2/3}+\zeta^2\right).
        \end{eqnarray*}
    \end{itemize}
    It is left to take the minimum over two cases. We highlight that the rate does not depend on the maximum delay, it only depends on $\tau_C$ which is proportional to $\tau_{\avg}$ \citep{koloskova2022sharper}.
\end{proof}

\subsubsection{Pure Asynchronous SGD with waiting}
We consider the same algorithm from Proposition~\ref{prop:pure_async_sgd_waiting}, but with additional gradient bound assumption~\ref{asmp:bound_grad}. If we apply Theorem~\ref{th:theorem4} we get the following statement. 
\begin{propositionsec}\label{prop:pure_async_sgd_waiting_bounded}
    Let Assumption~\ref{asmp:smoothness}, \ref{asmp:bound_var},  \ref{asmp:grad_sim}, and \ref{asmp:bound_grad} hold. Let the stepsize $\gamma$ satisfy conditions $30L\gamma\tau_C \le 1$ and $6L\gamma \le 1.$ Let $\tau = \lfloor\frac{1}{30L\gamma}\rfloor.$ Then the iterates of Algorithm~\ref{alg:pure_asynchronous_waiting} satisfy
    \begin{equation}
        \E{\|\nabla f(\hat{x}_t)\|^2} \le  \cO\left(\frac{F_1}{\gamma Tb} + L^2\gamma^2\tau_C^2G^2 + \zeta^2\right),
    \end{equation}
    where $\hat{x}_t$ is chosen uniformly at random from $\{x_1, \dots, x_b, x_{b+1}, \dots, x_{2b}, \dots, x_{Tb}\}$. Moreover, if we tune the stepsize, then  the convergence rate is 
    \begin{equation}
        \E{\|\nabla f(\hat{x}_t)\|^2} \le \cO\left(\frac{L\tau_CF_1}{Tb} + \left(\frac{L\tau_CGF_1}{Tb}\right)^{2/3}+\zeta^2\right).
    \end{equation}
\end{propositionsec}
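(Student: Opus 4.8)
The plan is to treat this as the bounded-gradient counterpart of Proposition~\ref{prop:pure_async_sgd_waiting}, obtained by running the very same embedding of Algorithm~\ref{alg:pure_asynchronous_waiting} into the framework but invoking Theorem~\ref{th:theorem4} (the assigning-process bound) in place of Theorem~\ref{th:theorem3}, and recycling the quantity estimates already established for the non-waiting bounded case in Proposition~\ref{prop:pure_async_sgd_bounded}. First I would realize one waiting step as $b$ consecutive inner steps of Algorithm~\ref{alg:pseudocode} driven by the stepsize $\gamma$, so that $T$ waiting steps amount to $T'=Tb$ inner iterations and $\hat x_t$ is drawn from $\{x_1,\dots,x_{Tb}\}$. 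Because all $b$ workers of a batch are (re)assigned the same, freshly updated model, the assigning delay inside a batch ranges over $0,\dots,b-1$, so $\wtau_{\max}\le b$; since waiting for $b$ workers keeps at least $b$ jobs in flight we also have $\tau_C\ge b$, hence $\max\{\wtau_{\max},\tau_C\}=\tau_C$ and the hypotheses of Theorem~\ref{th:theorem4} collapse exactly onto the stated $30L\gamma\tau_C\le1$ and $6L\gamma\le1$ with $\tau=\lfloor\tfrac{1}{30L\gamma}\rfloor$.

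Next I would estimate the two ordering quantities feeding $\wtilde\Phi$. For the sequence correlation, a complete pass through $[n]$ contributes nothing because $\sum_{i=1}^n(\nabla f_i(x)-\nabla f(x))=0$, so only an incomplete residual block of size at most $\tau$ survives; applying Lemma~\ref{lem:lemmaA1} followed by Assumption~\ref{asmp:grad_sim} gives the deterministic, pointwise bound $\wtilde\sigma_{k,\tau}^2\le\tau^2\zeta^2$, exactly as in Proposition~\ref{prop:pure_async_sgd_bounded}. I stress that, unlike the random-assignment variants, this bound is pointwise and hence needs no conditioning argument, which is precisely why $\tau$ need not be taken a multiple of $b$ here. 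For the delay variance I would argue as for the receiving bound $\nu^2=\cO(b\tau_C\tau_{\max}\zeta^2T)$ of Proposition~\ref{prop:pure_async_sgd_waiting}: all gradients assigned within one batch share the same evaluation point, so Lemma~\ref{lem:lemmaA1} together with the counting Lemma~\ref{lem:lemmaA3} and Assumption~\ref{asmp:grad_sim} yield $\wtilde\nu^2=\cO(b\tau_C\wtau_{\max}\zeta^2T)$ with $\wtau_{\max}\le b$, i.e. $\tfrac1{T'}\wtilde\nu^2=\cO(\tau_C b\,\zeta^2)$ per inner iteration.

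Feeding $\wtilde\sigma_{k,\tau}^2\le\tau^2\zeta^2$ and the $\wtilde\nu^2$ estimate into Theorem~\ref{th:theorem4}, the key cancellations are $L^2\gamma^2\,\wtilde\sigma_{k,\tau}^2=(L\gamma\tau)^2\zeta^2=\cO(\zeta^2)$ from $L\gamma\tau=\Theta(1)$, and $L^2\gamma^2\cdot\tfrac1{T'}\wtilde\nu^2=\cO(L^2\gamma^2\tau_C b\,\zeta^2)=\cO(\zeta^2)$, where the last step uses $L^2\gamma^2\tau_C^2\le\tfrac1{900}$ and $b\le\tau_C$. Thus the whole $L^2\gamma^2\wtilde\Phi$ block collapses into a single $\zeta^2$ term with no surviving dependence on $\tau_{\max}$ or $\wtau_{\max}$, the bounded-gradient remainder $L^2\gamma^2(\tau_C-1)^2G^2$ of Theorem~\ref{th:theorem4} produces the $L^2\gamma^2\tau_C^2G^2$ term, and the optimization and noise terms are $\tfrac{F_1}{\gamma Tb}$ and $L\gamma\sigma^2$ after accounting for $T'=Tb$, which gives the first displayed bound (the $L\gamma\sigma^2$ term, suppressed in the statement, is what yields the $(\tfrac{LF_1\sigma^2}{Tb})^{1/2}$ entry of Table~\ref{tab:table2} after tuning). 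Finally I would tune $\gamma$ exactly as in Proposition~\ref{prop:pure_async_sgd_bounded}, balancing $\tfrac{F_1}{\gamma Tb}$ against $L^2\gamma^2\tau_C^2G^2$ with $\gamma=\Theta(\min\{\tfrac{1}{L\tau_C},(\tfrac{F_1}{L^2\tau_C^2G^2Tb})^{1/3}\})$ and splitting into the two corresponding cases to reach the tuned rate.

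The part I expect to require the most care is the delay-variance bookkeeping: unlike the non-waiting case of Proposition~\ref{prop:pure_async_sgd_bounded}, where $\wtau_t\equiv0$ forces $\wtilde\nu^2=0$, here $\wtilde\nu^2$ is genuinely nonzero, and one must verify that its crude bound, combined with the concurrency relation $b\le\tau_C$, is still absorbed cleanly into $\cO(\zeta^2)$ under the prescribed stepsize without leaving a spurious factor of $b$. Everything else — the pointwise $\wtilde\sigma_{k,\tau}^2$ estimate and the two-case tuning — is routine and mirrors the already-proved waiting and bounded-gradient propositions.
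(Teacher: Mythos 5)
Your proposal is correct and follows essentially the same route as the paper: embed one waiting round as $b$ inner iterations, invoke Theorem~\ref{th:theorem4} with $\wtilde{\sigma}_{k,\tau}^2\le\tau^2\zeta^2$ absorbed into $\zeta^2$ via $L\gamma\tau=\Theta(1)$, keep the $L^2\gamma^2\tau_C^2G^2$ concurrency remainder, and tune $\gamma=\Theta(\min\{\tfrac{1}{L\tau_C},(\tfrac{F_1}{L^2\tau_C^2G^2Tb})^{1/3}\})$. If anything you are more explicit than the paper, which silently omits the $\wtilde{\nu}^2$ bookkeeping that you carry out and correctly absorb into $\cO(\zeta^2)$ using $\wtau_{\max}\le b\le\tau_C$ and the stepsize restriction.
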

\begin{proof}
    First, we start with the stepsize conditions. Note that when we rewrite the iterations of pure asynchronous SGD with waiting so that they suit~\eqref{eq:real_iterates_update}, then we increase $\tau_t$ by $b$ at most, but $\tau_C$ remains unchanged. This gives us even better improvement w.r.t. $b.$ Indeed, we still need to satisfy $30L\gamma\tau_C\le 1$ and $6L\gamma \le 1.$ If the first one holds, then the second one as well. Now we apply Theorem~\ref{th:theorem4}. Note that $T$ iterations of pure asynchronous SGD with waiting are $Tb$ iterations of Algorithm~\ref{alg:pseudocode}. Then we have 
    \begin{eqnarray*}
       \E{\|\nabla f(\hat{x}_t)\|^2} &\le& \cO\left(\frac{F_1}{\gamma Tb} + L^2\gamma^2\tau_C^2G^2 + L^2\gamma^2\tau^2\zeta^2\right)\\
        &\le& \cO\left(\frac{F_1}{\gamma Tb} + L^2\gamma^2\tau_C^2G^2 + \zeta^2\right).
    \end{eqnarray*}
    If we choose the stepsize $\gamma = \Theta\left(\min\left\{\frac{1}{L\tau_C}, \left(\frac{F_1}{L^2\tau_C^2G^2Tb}\right)^{1/3}\right\}\right),$ then we get the second statement of the proposition.

\end{proof}

\end{document}